\documentclass{article}

\usepackage[preprint]{neurips_2026}

\usepackage[utf8]{inputenc} % allow utf-8 input
\usepackage[T1]{fontenc}    % use 8-bit T1 fonts
\usepackage{hyperref}       % hyperlinks
\usepackage{url}            % simple URL typesetting
\usepackage{booktabs}       % professional-quality tables
\usepackage{amsfonts}       % blackboard math symbols
\usepackage{nicefrac}       % compact symbols for 1/2, etc.
\usepackage{microtype}      % microtypography
\usepackage{xcolor}         % colors

\usepackage{graphicx}
\usepackage{overpic}
\usepackage{amsthm}
\usepackage{multirow} 
\usepackage{enumitem}
\usepackage{tabulary}
\usepackage{colortbl}
\usepackage{wrapfig}
\usepackage{amsmath, amssymb}
\usepackage{algorithm}
\usepackage{algpseudocodex}
\usepackage{diagbox}

\newcommand{\R}{\mathbb{R}}
\let \bs=\mathbf
\let \set=\mathcal

\def \prctile {\textup{prctile}}

\def \deform {\textup{def}}

\def \D {\mathcal{D}}

\def \diag {\mathrm{diag}}

\def \data {\textup{data}}
\def \reg {\textup{reg}}
\def \sync {\textup{sync}}

\def \path {\mathit{path}}

\def \exp {\textup{exp}}

\newtheorem{theorem}{Theorem}[section]

\newtheorem{assumption}[theorem]{Assumption}
\newtheorem{remark}[theorem]{Remark}

\let \set = \mathcal
\let \bs = \boldsymbol

\usepackage{tikz}
\usetikzlibrary{calc} % <--- 必须添加这一行

\newcommand{\redbox}[1]{%
\begin{tikzpicture}[inner sep=0pt, baseline=(img.south)]
    \node[anchor=south west] (img) at (0,0) {#1};
    \draw[red, line width=1.5pt] ($(img.south west)-(1.5pt,1.5pt)$) rectangle ($(img.north east)+(1.5pt,1.5pt)$);
\end{tikzpicture}%
}

\newcommand{\bluepart}[5]{%
\begin{tikzpicture}[inner sep=0pt, baseline=(img.south)]
    \node[anchor=south west] (img) at (0,0) {#1};
    \begin{scope}[x={(img.south east)}, y={(img.north west)}]
        \draw[blue, line width=1.5pt] (#2, #3) rectangle (#4, #5);
    \end{scope}
\end{tikzpicture}%
}

\newcommand{\metricdown}{\mkern-5mu\downarrow}

\newcommand{\gr}{\rowcolor{gray!10}}

\usepackage{arydshln}

\usepackage{subcaption}

\title{ARAPDiffusion: ARAP Regularization for Diffusion-Based Deformable Shape Space Learning}

\author{%
  Haibo Liu$^1$ \And
  Jinghan Ke$^1$ \And
  Haitao Yang$^1$ \And
  Xiangru Huang$^2$ \And
  Georgios Pavlakos$^1$ \And
  Qixing Huang$^1$ \\
  \\
  $^1$University of Texas at Austin \quad $^2$Westlake University
}

\begin{document}

\maketitle

\begin{abstract}
This paper introduces ARAPDiffusion, a latent diffusion model to learn the underlying continuous shape space of a deformation shape collection. The key innovation is in injecting the as-rigid-as-possible (ARAP) deformation model as regularization losses into latent diffusion (LD), releasing the requirement of having abundant 3D training data for learning generative models. In contrast to the standard LD, we show how the ARAP model can be used to improve both the encoder/decoder and the LD model. The training procedure alternates between using the synthetic distribution defined by the LD model to develop a regularization loss that enhances the shape encoder/decoder and using the shape decoder to develop a regularization loss to improve the LD model. We also show the benefit of the LD paradigm in combining a representation-free LD process and an implicit shape decoder that is applicable to unorganized point clouds. The experimental results of unconditional and conditional shape generation demonstrate the advantages of ARAPDiffusion over baseline approaches.
\end{abstract}
\section{Introduction}

In this paper, we study how to learn a generative model to encode the underlying continuous shape space of a deformable shape collection, e.g., humans, animals, and bones. Our goals are to enable a wide range of applications, such as shape space exploration, shape interpolation, and solving inverse problems. We have witnessed tremendous progress on 3D shape generative models in the deep learning era, starting from GAN~\cite{DBLP:conf/nips/0001ZXFT16}, VAE~\cite{Tan_2018_CVPR}, to recent diffusion models~\cite{10.1145/3592442}. In particular, recent models such as Hunyuan 2.5 ~\cite{DBLP:journals/corr/abs-2506-16504} and Trellis~\cite{DBLP:conf/cvpr/XiangLXDWZC0Y25}, which were trained from a huge amount of data, have shown impressive results in 3D generation conditioned on image or text inputs. However, the basis of these techniques is the use of transformers on discrete tokens obtained by quantization~\cite{10.1145/3592442}. The shape space they encode is discrete and cannot be directly used for tasks such as shape interpolation. 

A unique characteristic of deformable shapes is that there are rich deformation priors about the inter-shape deformations. Therefore, instead of relying on collecting large-scale training data (which is difficult, e.g., in the medical domain), we study how to learn a generative model by combining small-scale training data and suitable regularization losses that model inter-shape deformations. Generalizing the regularization losses in active contour models~\cite{kass1988snakes}, a common strategy~\cite{Huang_2021_ICCV,DBLP:conf/iclr/0005HSBH24} is to minimize the inter-shape deformations between adjacent synthetic shapes. However, a fundamental challenge is to determine which latent codes to enforce this regularization loss. One solution is to enforce that the latent distribution of the shape space follows a Gaussian, yet this objective makes the learning procedure difficult and competes against fitting the generative model to the training data. 

%The performance of generative models depends on three factors, the capacity of the network, the amount of training data, and the learning approach. We have made remarkable progress on the learning approach and the perspective of network capacity. Recent advances in diffusion models~\cite{DBLP:conf/nips/SongE19,DBLP:conf/iclr/0011SKKEP21,DBLP:conf/nips/HoJA20} and flow-based methods~\cite{DBLP:conf/iclr/LipmanCBNL23,DBLP:conf/iclr/LiuG023} have generated amazing results, primarily due to the abundance of image/video data. However, the amount of training data remains a fundamental challenge in some domains. In particular, we remain to have very limited 3D data, in which there are fundamental barriers in collecting data due to privacy and data cost issues. 

%One way to address the data issue is to enforce inductive bias of 3D data. For deformable shapes, we have strong prior knowledge about the inter-shape deformations, in which the as-rigid-as-possible (ARAP) model offers a good approximation. ARAPReg~\cite{Huang_2021_ICCV} and follow-up work~\cite{doi:10.1137/24M1644730,DBLP:conf/cvpr/AtzmonNFKL22,DBLP:conf/cvpr/MuralikrishnanC22,DBLP:journals/cgf/FotiKSC23,10.1145/3610548.3618192,DBLP:conf/iclr/AtzmonHWL24} have shown the effectiveness of enforcing this deformation model for the task of learning a GAN (or VAE)-based generative model. However, GAN-based generative models present many limitations, including instabilities in training and generalization behaviors. An open question is how to apply similar ideas to diffusion models that have offered state-of-the-art results.

In this paper, we introduce a novel deformable shape generative model called ARAPDiffusion. ARAPDiffusion is based on the latent diffusion paradigm (LD)~\cite{DBLP:conf/cvpr/RombachBLEO22}, in which the latent distribution is characterized by a diffusion model. The key contribution of ARAPDiffusion is an alternating optimization procedure that injects the ARAP deformation model into the training losses. Unlike merely fixing the encoder/decoder and training a diffusion model in the latent space, ARAPDiffusion improves both the encoder/decoder and the latent diffusion model. 

\begin{wrapfigure}{r}{0.43\textwidth}
\vspace{-0.2in}
\begin{overpic}[width=0.42\textwidth]{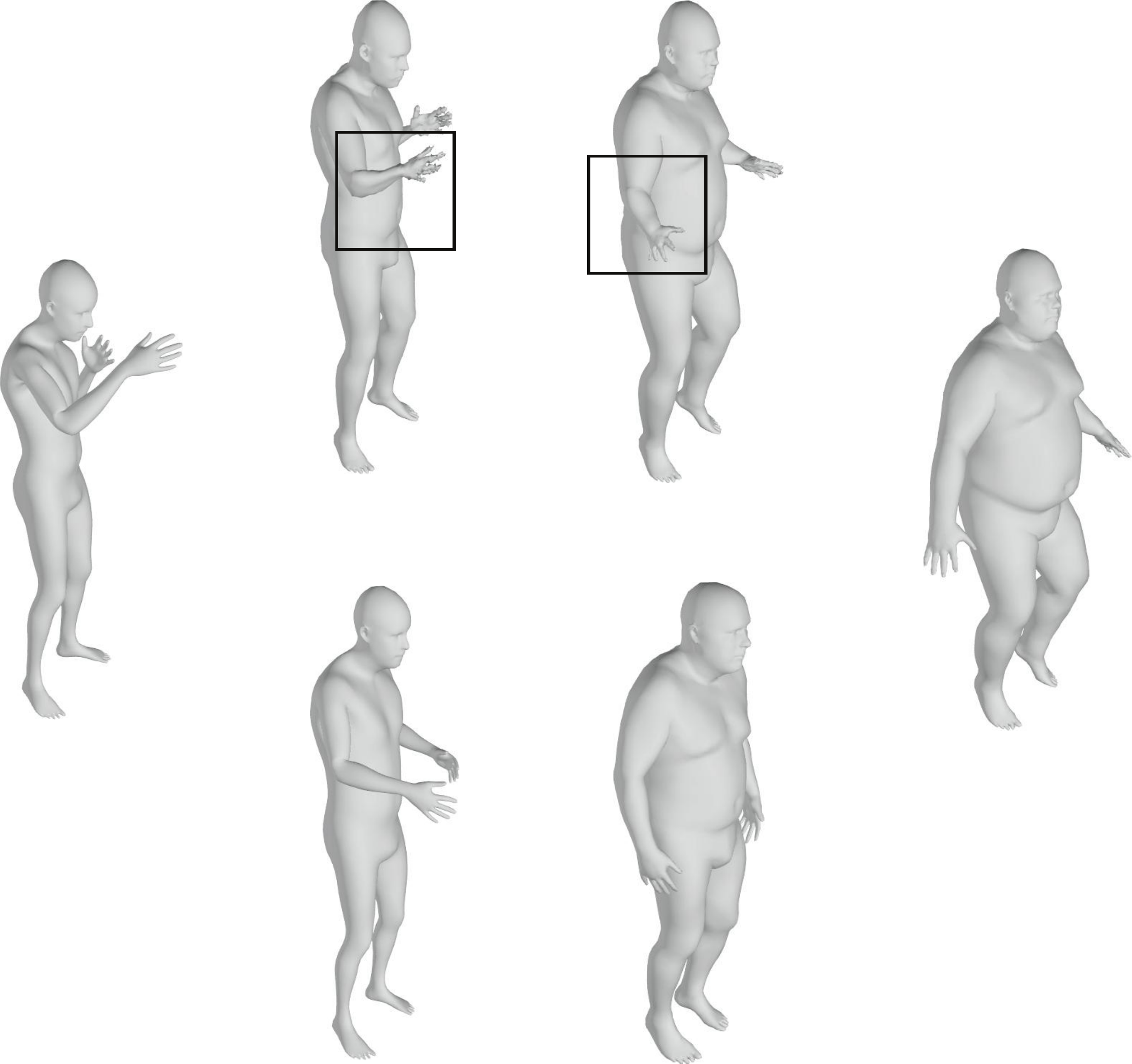}
\put(1, 23){Source}
\put(82, 23){Target}
\put(34, 49){ARAPReg}
\put(27, -5.5){ARAPDiffusion}
\end{overpic}
\vspace{0.05in}
\caption{(Top) Intermediate shapes from ARAPReg~\cite{Huang_2021_ICCV}, which exhibit geometric distortions. (Bottom) Intermediate shapes from ARAPDiffusion. }
\vspace{-0.4in}
\label{Fig:ARAPDiffusion:Teaser}
\end{wrapfigure}
ARAPDiffusion proceeds in three stages. The first stage pre-trains a generic LD model merely using the training data. The second stage fine-tunes the decoder by enforcing an ARAP regularization loss using the distribution of the latent space defined by the current LD model. The third stage fine-tunes the LD model regularized by a synthetic distribution of the latent space derived from the current shape decoder. Stage II and stage III are alternated.

We have evaluated ARAPDiffusion on benchmark datasets of consistent meshes and/or unorganized point clouds. In both settings, ARAPDiffusion offers improved results from state-of-the-art approaches both quantitatively and qualitatively (see Fig.~\ref{Fig:ARAPDiffusion:Teaser} and Fig.~\ref{Fig:Qualitative:Results}). The improvements are consistent with those for the mesh and implicit deformable generative models, as well as both human and animal datasets. 

\section{Related Work}

\noindent\textbf{Deformable shape generative models.} Early work of deformable shape generative models, such as COMA~\cite{DBLP:conf/eccv/RanjanBSB18}, Neural3DMM~\cite{DBLP:conf/iccv/BouritsasBPZB19}, SP-Disentangle~\cite{conf/eccv/ZhouBP20}, and MeshConv~\cite{DBLP:conf/nips/ZhouWLCYSLS20}, adopts graph neural networks to map a latent shape code into a mesh. People have also studied implicit shape-generative models~\cite{DBLP:conf/iclr/AtzmonL21,DBLP:journals/corr/abs-2108-08931} that do not require consistent meshes for training. The down-side of these models is that they are difficult to train. In particular, aligning the latent distribution of the underlying shape space with the Gaussian distribution can adversely degrade the generalization accuracy. ARAPDiffusion addresses this issue by learning a diffusion model in the latent space to map the Gaussian distribution into the latent distribution of training shapes.

For diffusion-based generation and reconstruction, existing approaches~\cite{DBLP:conf/iclr/TevetRGSCB23,DBLP:conf/cvpr/StathopoulosHM24,DBLP:conf/cvpr/ZhangBXWK0B24,DBLP:journals/corr/abs-2410-03665, lu2025dposer, ho2025phd} leverage the shape/pose latent space of existing models such as SMPL~\cite{loper2015smpl} and SMAL~\cite{Zuffi:CVPR:2017} and most of them perform diffusion in these shape/pose spaces. In contrast, ARAPDiffusion studies the joint problem of learning a shape decoder and the latent diffusion model, emphasizing the ARAP deformation model during the learning procedure.

\noindent\textbf{Geometric regularizations for shape generation.} We can draw an analogy between learning a shape generative model to optimizing an active contour (AC)~\cite{DBLP:journals/ijcv/KassWT88} to fit image observations. In the AC model, we have a data term that aligns the 1D curve with image observations. AC has additional critical regularization terms that control the smoothness of the AC to address incomplete and noisy image constraints. When learning a shape generative model, the training shapes define the data term. Therefore, it is important to develop suitable regularization terms in the presence of limited training data. ARAPReg~\cite{Huang_2021_ICCV} pioneered the introduction of a regularization loss based on the ARAP deformation model between adjacent synthetic shapes of a mesh generator. Several follow-up methods~\cite{doi:10.1137/24M1644730,DBLP:conf/cvpr/AtzmonNFKL22,DBLP:conf/cvpr/MuralikrishnanC22,DBLP:journals/cgf/FotiKSC23,10.1145/3610548.3618192,DBLP:conf/iclr/AtzmonHWL24} extend the formulations. GenCorres~\cite{DBLP:conf/iclr/0005HSBH24} extends the formulation to implicit shape generators. However, these formulations are based on GAN generators. ARAPDiffusion studies regularization losses for diffusion models, which have fundamentally different training paradigms. 

Computational studies of shape spaces have a rich history across many domains. Early work includes~\cite{DBLP:conf/cvpr/HuangM99,DBLP:journals/pami/KlassenSMJ03,JEMS_2006_8_1_a0,DBLP:journals/pami/SrivastavaKJJ11}. Some recent developments can be found in~\cite{DBLP:journals/ijcv/HartmanSKCB23,10.1145/3618371,DBLP:conf/iccv/HartmanP0CD23,hartman2024selfsupervisednetworkslearning,DBLP:journals/ijcv/HartmanPBDC25}. A major focus among these papers is on developing a Riemannian metric and computing geodesic interpolations for applications in shape matching and interpolation~\cite{10.1145/1276377.1276457,DBLP:conf/cvpr/EisenbergerNKLN21}. In this regard, GeoLatent~\cite{10.1145/3618371} studies a deformable shape generator from a differential geometry perspective by developing a Riemannian metric from the ARAP deformation model. The difference in our setting is how to develop a regularization loss suitable for diffusion-based shape generative models. 

\begin{figure}
\begin{overpic}[width=1.0\textwidth]{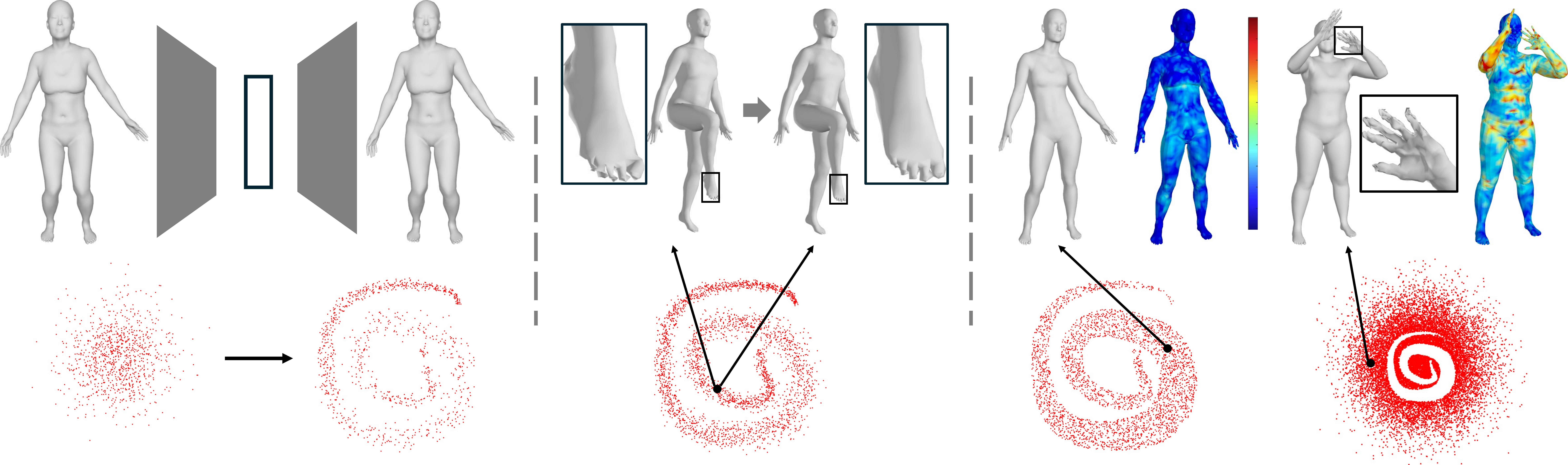}
\put(11,13.2){$e^{\psi}$}
\put(15.5,15.5){$\set{Z}$}
\put(20,13.2){$g^{\phi}$}
\put(15.5, 4){$\D^{\theta}$}
\put(4,-1){$\set{N}(0,I_d)$}
\put(22,-1){$\{e^{\psi}(S_i)\}$}
\put(46, -1) {$p_{\sync}$}
\put(63, -1) {$\{\bs{z}|r_{\deform}(\bs{z}) \leq 2\eta\}$}
\put(83, -1) {$\{\bs{z}|r_{\deform}(\bs{z}) > 2\eta\}$}
\put(73, 12.6) {nn in $\set{S}$}
\put(93.5, 12.6) {nn in $\set{S}$}
\put(80.5, 15){\small{$0$}}
\put(80.5, 27.5){\small{$1$}}
\end{overpic}
\vspace{0.01in}
\caption{ARAPDiffusion has three stages. where stage II and stage III are alternated. (Left) Stage I learns a pair of encoder $e^{\psi}:\overline{\set{S}}\rightarrow \set{Z}$ and decoder $g^{\phi}: \set{Z}\rightarrow \overline{\set{S}}$ from the training data as well as a diffusion model $\D^{\theta}:\set{Z}\rightarrow \set{Z}$ that maps the Gaussian distribution in the latent space to shape latent codes. (Middle) Stage II uses a regularization loss to improve the decoder using  the latent distribution derived from the current diffusion model . (Right) Stage III uses the current decoder $g^{\phi}(\bs{z})$ to score every latent code $\bs{z}$ and define a regularization loss to improve the diffusion model $\D^{\theta}$. This is achieved by measuring the deformation error of the closest training shape. }
\label{Figure:Overview}   
\vspace{-0.2in}
\end{figure}

\noindent\textbf{Diffusion models}~\cite{DBLP:conf/nips/SongE19,DBLP:conf/nips/HoJA20,DBLP:conf/iclr/0011SKKEP21,10.1145/3626235} train denoising networks to invert forward procedures that add noise to training data. During generation, a denoising network is applied progressively to remove noise from a random instance. Latent diffusion (LD)~\cite{DBLP:conf/cvpr/RombachBLEO22} is a game-changing paradigm that applies a pre-trained decoder and performs diffusion in the latent space. LD has been applied extensively in 3D shape generation~\cite{10.1145/3592442,cae613698fd84e4c8ec6c13b2d2e0f7c,Xiang_2025_CVPR}, allowing us to apply diffusion models under implicit shape representations that have produced state-of-the-art shape generation results. 

In contrast to the standard LD, the core novelty of ARAPDiffusion lies in the iterative refinement of both the auto-encoder and the latent diffusion model via alternating optimization using ARAP regularizations. In particular,  we innovate using the distribution defined by the current diffusion model to define the regularization loss, avoiding the issue of enforcing that the latent distribution of training shapes follows a Gaussian distribution. Moreover, we define a synthetic distribution by applying the ARAP model on the current shape decoder, i.e., low distortion corresponds to high probability to enhance the LD model. The idea of jointly optimizing the auto-encoder and the latent space distribution was explored in~\cite{DBLP:conf/nips/VahdatKK21}. The novelty of ARAPDiffusion lies in the geometry feedback and the alternating optimization procedure.

\section{Overview}

%We begin with the problem statement in Section~\ref{Subsec:Problem:Statement}. We then present an overview of our approach in Section~\ref{Subsec:Approach:Overview}.

%\subsection{Problem Statement}
%\label{Subsec:Problem:Statement}

\noindent\textbf{Problem statement.} Let $\overline{\set{S}}$ be a continuous deformable shape space. The input to ARAPDiffusion is a shape collection $\set{S} = \{S_1,\ldots, S_{N}\}\subset \overline{\set{S}}$ of $N$ samples in $\overline{\set{S}}$. Our goal is to learn a diffusion generative model from $\set{S}$ to generate new shapes in $\overline{\set{S}}$. The generation procedure can be unconditional or conditional. We differentiate two types of training data:
\begin{itemize}
\item Training shapes $S_i$ consist of consistently meshed shapes, e.g., samples from SMPL~\cite{loper2015smpl} and SMAL~\cite{Zuffi:CVPR:2017}. 
\item The training shapes $S_i$ consist of unordered point clouds, e.g., those captured from scanning devices independently. 
\end{itemize}

%\subsection{Approach Overview}
%\label{Subsec:Approach:Overview}

\noindent\textbf{Approach overview.} As shown in Fig.~\ref{Figure:Overview}, ARAPDiffusion adopts the latent diffusion (LD) paradigm~\cite{DBLP:conf/cvpr/RombachBLEO22} by combining a shape decoder $g^{\phi}: \set{Z}\rightarrow \overline{\set{S}}$ that maps a latent space $\set{Z}\cong \R^d$ to the shape space $\overline{\set{S}}$ and a diffusion model $\D^{\theta}: \set{Z}\rightarrow \set{Z}$ in the latent space. This nicely addresses the two types of input data. When the input data are consistent meshes, $g^{\phi}$ is a mesh generator (same topology as the training data). When the input data consist of unorganized point clouds, $g^{\phi}$ is an implicit deformable shape generator. Both settings share the same diffusion model architecture $\D^{\theta}$. 

ARAPDiffusion has three stages, in which stages II and III alternate. Stage I initializes $g^{\phi}$ from $\set{S}$ by learning an autoencoder and then initializes $\D^{\theta}$ by training a diffusion model using latent codes of $\set{S}$. Stage II fixes the diffusion model $\D^{\theta}$ and refines $g^{\phi}$ by enforcing an ARAP deformation loss using the latent distribution defined by the current $\D^{\theta}$. Stage III fixes the shape decoder $g^{\phi}$ and refines $\D^{\theta}$ by using $g^{\phi}$ to define a regularization loss. 

In the following, we first describe the preliminaries for ARAPDiffusion in Section~\ref{Section:Preliminaries}. We then introduce ARAPDiffusion in detail in Section~\ref{Section:Approach}.

\section{Preliminaries}
\label{Section:Preliminaries}

\subsection{ARAPReg\texorpdfstring{~\cite{Huang_2021_ICCV}}{}}

\label{Subsec:ARAPReg:Review}

ARAPReg enforces an ARAP deformation regularization into a mesh generative model $\bs{m}^{\phi}: \set{Z}\rightarrow \R^{3n}$ that maps a latent $\bs{z}$ to a high-dimensional vector $\bs{m}^{\phi}(\bs{z})$ that concatenates the vertex positions of $n$ vertices of a pre-defined mesh. It is applied in the setting where all training shapes are consistently meshed. ARAPReg combines a data term and a regularization term:
\begin{equation}
\min\limits_{\phi} l_{\data}(\set{S},\bs{m}^{\phi}) + \lambda \underset{\bs{z}\sim \set{N}_d(\bs{0},I_d)}{\mathbb{E}}l_{\reg}(\bs{m}^{\phi}(\bs{z}))   
\label{Eq:ARAPReg:Training:Loss}
\end{equation}
where the first term fits the generator to the training data $\set{S}$ while ensuring that the distribution of the latent codes of the training shapes follows the Gaussian distribution $\set{N}_d(\bs{0},I_d)$; the second term minimizes a norm of the pull-back metric matrix~\cite{10.1145/3618371} $H(\bs{m}^{\phi}(\bs{z}))$ where
\begin{equation}
H(\bs{m}^{\phi}(\bs{z})) = {\frac{\partial \bs{m}^{\phi}(\bs{z})}{\partial \bs{z}}}^T\overline{H}(\bs{m}^{\phi}(\bs{z}))\frac{\partial \bs{m}^{\phi}(\bs{z})}{\partial \bs{z}}
\label{Eq:Metric:Matrix}
\end{equation}
where $\frac{\partial \bs{m}^{\phi}(\bs{z})}{\partial \bs{z}}$ is the Jacobian of the generator, and $\overline{H}(\bs{m}^{\phi}(\bs{z}))\in \R^{3n\times 3n}$ is a sparse matrix induced from the ARAP deformation model (see Sec.~\ref{App:ARAPReg:Details} of the appendix for details).

\subsection{GenCorres\texorpdfstring{~\cite{DBLP:conf/iclr/0005HSBH24}}{}}
\label{Subsec:GenCorres:Review}

GenCorres extends ARAPReg to implicit shape generators. Similarly to Eq.~(\ref{Eq:ARAPReg:Training:Loss}), GenCorres training loss combines a data term and a regularization term. The data term is the standard loss that fits an MLP-based implicit shape generator to the training data. A contribution of GenCorres is on how to convert an implicit shape generator into a mesh generator locally, which is sufficient to define the regularization loss based on the metric in Eq.~(\ref{Eq:Metric:Matrix}). Given a mesh discretization $\bs{m}^{\phi}(\bs{z})\in \R^{3n}$ of an implicit shape generator $g^{\phi}(\bs{x},\bs{z}) = 0$, the local parameterization is given by $\bs{m}^{\phi} + \epsilon \bs{d}^{\phi}(\bs{z},\bs{v})$ for a displacement $\epsilon\bs{v}$ in the latent space, where 
\begin{equation}
\bs{d}^{\phi}(\bs{z},\bs{v})  = \underset{\bs{d}}{\textup{argmin}} \quad \bs{d}^T \overline{H}(\bs{m}^{\phi}(\bs{z})) \bs{d} \quad s.t. \qquad C^{\phi}(\bs{z}) \bs{d} = \bs{f}^{\phi}(\bs{z}) \label{Eq:Implicit:Constraint}
\end{equation}
where $\overline{H}(\bs{m}^{\phi}(\bs{z}))$ is defined in Eq.~(\ref{Eq:Metric:Matrix}) and Eq.~(\ref{Eq:Implicit:Constraint}) enforces that $\bs{m}^{\phi} + \epsilon \bs{d}^{\phi}(\bs{z},\bs{v})$ lies on the adjacent implicit surface $g^{\phi}(\bs{x},\bs{z}+\epsilon\bs{v})=0$ (see Sec.~\ref{App:GenCorres:Details} of the appendix for details). The resulting displacement vector is given by 
\begin{equation}
\bs{d}^{\phi}(\bs{z},\bs{v}) = G^{\phi}(\bs{z})\bs{v},\quad G^{\phi}(\bs{z}) =  \left(
\begin{array}{cc}
I & 0
\end{array}
\right)\left(
\begin{array}{cc}
\overline{H}(\bs{m}^{\phi}(\bs{z})) & C^{\phi}(\bs{z})^T \\
C^{\phi}(\bs{z}) & 0 
\end{array}
\right)^{\dagger}
\left(
\begin{array}{c}
\bs{0} \\
\bs{f}^{\phi}
\end{array}
\right).
\label{Eq:Mesh:Parameterization}
\end{equation}

Using Eq.~(\ref{Eq:Mesh:Parameterization}), we arrive at a local parameterization of $\bs{m}^{\phi}(\bs{z})$ and GenCorres plugs $\bs{m}^{\phi}(\bs{z})$ into Eq.~(\ref{Eq:ARAPReg:Training:Loss}) to train the implicit shape generative model $g^{\phi}$.

\section{ARAPDiffusion}
\label{Section:Approach}

\begin{figure*}
\setlength{\tabcolsep}{3pt}
\begin{tabular}{cccc}
\includegraphics[height=0.245\textwidth]{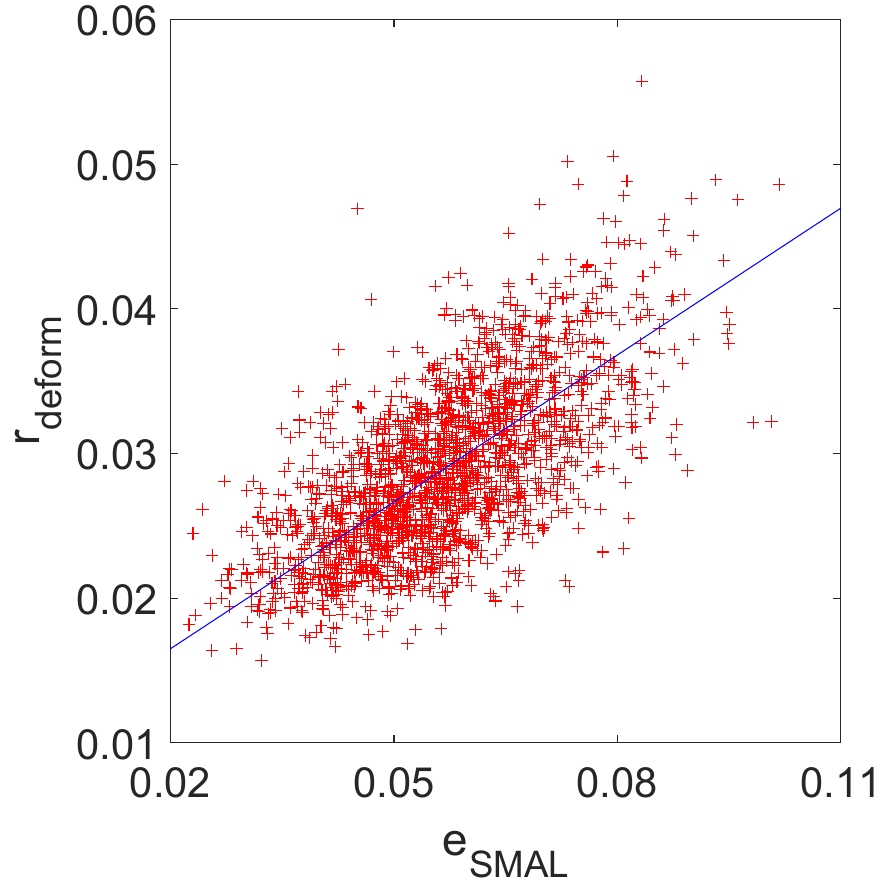}     &
\includegraphics[height=0.24\textwidth]{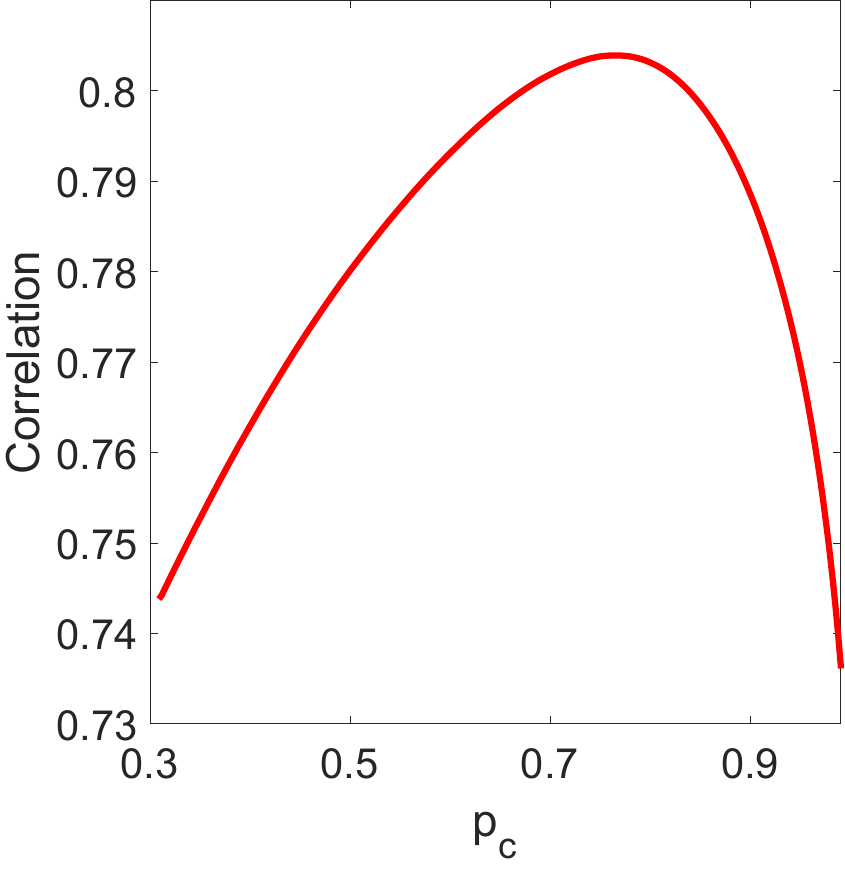}& \includegraphics[height=0.245\textwidth]{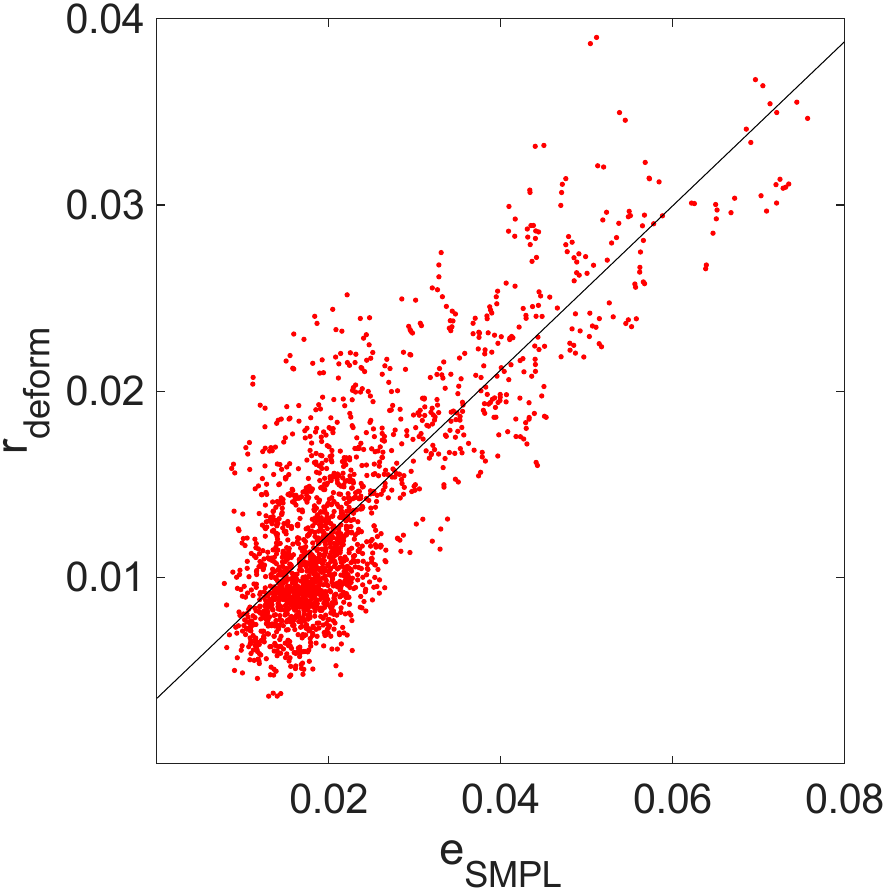}     &
\includegraphics[height=0.24\textwidth]{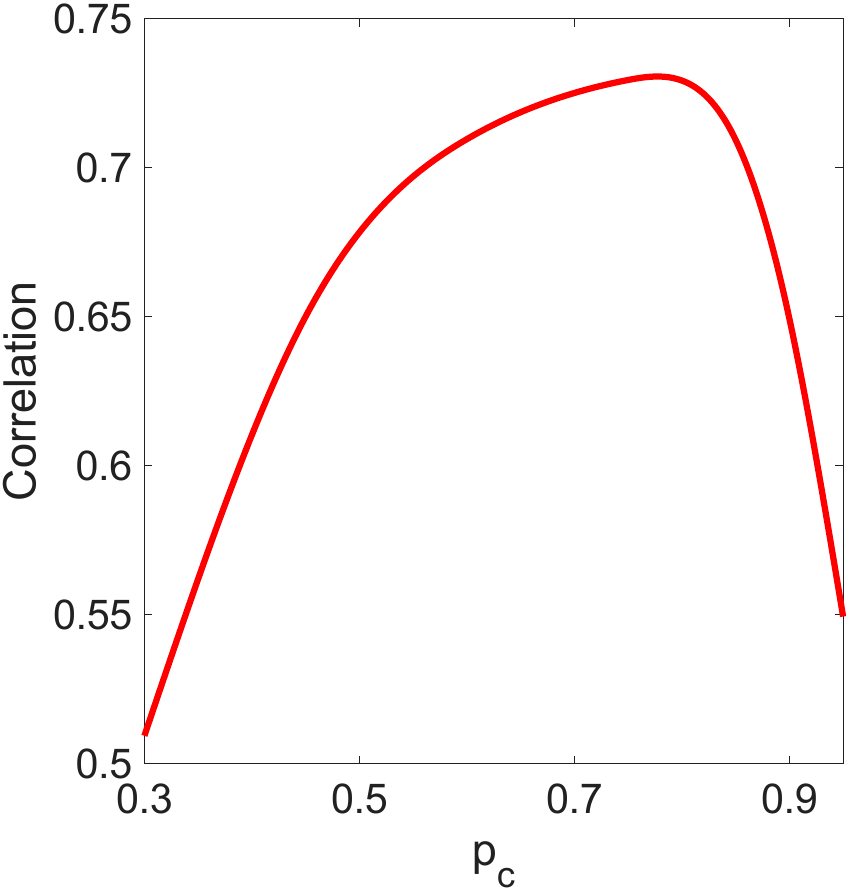} \\
(a) & (b) & (c) & (d) 
\end{tabular}
\vspace{-0.1in}
\caption{Correlation between our latent code scoring function and the reconstruction errors using SMPL and SMAL models. (a) Correlations between reconstruction errors and our scoring scheme under an explicit generative model for SMAL. (b) The correlation when varying the cut-off threshold in (a), which remains stable under a big range of $p_{c}$. (c) Correlations between reconstruction errors and our scoring scheme under an implicit generative model for SMPL. (d) The correlation when varying the cut-off threshold in (c), which again remains stable under a big range of $p_{c}$.}
\vspace{-0.2in}
\label{Figure:Correlation}    
\end{figure*}

This section introduces ARAPDiffusion in details. We begin with the pre-training stage in Section~\ref{Subsec:StageI}. We then present the alternating optimizations of the shape decoder and latent diffusion in Section~\ref{Subsec:StageII} and Section~\ref{Subsec:StageIII}. Finally, we present modifications of ARAPDiffusion that allow conditional generation in Section~\ref{Subsec:CG}. Hyper-parameters, running time, and training details are deferred to Sec.~\ref{Sec:Experimental:Details}.

\subsection{Stage I: Pretraining}
\label{Subsec:StageI}

The pretraining stage of ARAPDiffusion follows the standard pipeline of LD. We introduce a shape encoder $e^{\psi}: \overline{\set{S}}\rightarrow \set{Z}$ and learn it together with the shape decoder $g^{\phi}:\set{Z}\rightarrow \overline{\set{S}}$ via an autoencoder loss
\begin{equation}
\min\limits_{\psi, \phi} \quad \sum\limits_{S\in \set{S}} d^2(S, g^{\phi}(e^{\psi}(S))).   
\end{equation} 
When training data consist of consistent meshes, we adopt the ARAPReg network architecture to define $g^{\phi}$ and $e^{\psi}$ and the training loss simply measures the cumulative distance $L^2$ between the mesh vertices. When training data are given by unorganized points, we adopt PointNet++~\cite{10.5555/3295222.3295263} as the shape encoder and the GenCorres shape decoder. The training loss uses the squared algebraic distance.

Given the pre-trained $g^{\phi}$, we define $p_{\data} = \{e^{\psi}(S)|S\in \set{S}\}$. We then train the denoising network $\D^{\theta}$ in the latent space via EDM~\cite{DBLP:conf/nips/KarrasAAL22}:
$$
\min\limits_{\theta}\underset{\bs{z}\sim p_{\data}}{\mathbb{E}}\underset{\sigma}{\mathbb{E}}\underset{\bs{n}}{\mathbb{E}}\lambda(\sigma) \|\D^{\theta}(\bs{z}+\bs{n};\sigma)-\bs{z}\|^2 .
$$
The network architecture of $\D^{\theta}$ adopts that of DiffuseSDF~\cite{chou2022diffusionsdf} and has multiple layers, each of which consists of an attention~\cite{10.5555/3295222.3295349}, a fully connected layer, and a normalization layer.

\subsection{Stage II: Shape Decoder Optimization}
\label{Subsec:StageII}

Let $p_{\sync}$ be the distribution in the latent space defined by sampling the current denoising network $\D^{\theta}$. We fine-tune the shape encoder $e^{\psi}$ and the shape decoder $g^{\phi}$ by enforcing an ARAP regularization loss
\begin{align}
\min\limits_{\psi, \phi} & \ \ \frac{1}{|\set{S}|}\sum\limits_{S\in \set{S}} d^2(S, g^{\phi}(e^{\psi}(S))) +\underset{\bs{z}\sim p_{\sync}}{\mathbb{E}}l_{\reg}(\bs{m}^{\phi}(\bs{z}))
\label{Eq:AE:Finetuning}
\end{align}
Note that Eq.~(\ref{Eq:AE:Finetuning}) differs from Eq.~(\ref{Eq:ARAPReg:Training:Loss}) in that the latent codes are sampled from $p_{\sync}$. 

ARAPDiffusion also employs a novel definition of $l_{\reg}$. ARAPReg defines $l_{\reg}$ based on a matrix norm of $H(\bs{m}^{\phi}(\bs{z}))$. We find that it is important to measure the relative deformation in a tangent direction $\bs{v}$ at $\bs{z}$, as the magnitude $\|\bs{m}^{\phi}(\bs{z}+\epsilon\bs{v})-\bs{m}^{\phi}(\bs{z})\|$ varies significantly for different unit-length vectors $\bs{v}$. This strategy is particularly important because of the irregular data distribution in the latent space in our setting. Specifically, consider
$$
r_{\reg}(\bs{z},\bs{v}) := \frac{\bs{v}^TH(\bs{m}^{\phi}(\bs{z}))\bs{v}}{\bs{v}^T E(\bs{m}^{\phi}(\bs{z}))\bs{v}},  \qquad    
E(\bs{m}^{\phi}(\bs{z})) = {\frac{\partial \bs{m}^{\phi}(\bs{z})}{\partial \bs{z}}}^T\frac{\partial \bs{m}^{\phi}(\bs{z})}{\partial \bs{z}} + \delta I_d
$$
accounts for infinitesimal absolute displacements ($\delta = 10^{-10}$). This leads to a normalized deformation metric 
\begin{equation}
H_n(\bs{m}^{\phi}(\bs{z})) = {E(\bs{m}^{\phi}(\bs{z}))}^{-\frac{1}{2}} H(\bs{m}^{\phi}(\bs{z})){E(\bs{m}^{\phi}(\bs{z}))}^{-\frac{1}{2}}. 
\end{equation}
ARAPDiffusion then defines the regularization term as the trace norm of $H_n(\bs{m}^{\phi}(\bs{z}))$:
$$
l_{\reg}(\bs{m}^{\phi}(\bs{z})) = \|H_n(\bs{m}^{\phi}(\bs{z}))\|_{\star}
$$

\subsection{Stage III: Latent Diffusion Optimization}
\label{Subsec:StageIII}

In this stage, we fix the shape encoder $e^\psi$ and the shape decoder $g^\phi$ and fine-tune the denoising network $\D^{\theta}$. Denote $p_{\data} = \{e^{\psi}(S)|S\in \set{S}\}$ as the updated data distribution. We introduce a scoring function $r_{\deform}(\bs{z})$ that measures the quality of $\bs{z}$. The smaller $r_{\deform}(\bs{z})$ the better $\bs{z}$. We then convert $r_{\deform}(\bs{z})$ into a weight for $\bs{z}$ as
$$
w(\bs{z}) = \exp\big(-\frac{r^2_{\deform}(\bs{z})}{2\eta^2}\big),
$$
where $\eta$ is the median of $r_{\deform}(\bs{z})$ among the training shape latent codes. We then modify EDM to incorporate $w(\bs{z})$ as a regularization loss:
\begin{equation}
\min\limits_{\theta}\quad \underset{\bs{z}\sim p_{\data}}{\mathbb{E}}\underset{\sigma}{\mathbb{E}}\underset{\bs{n}}{\mathbb{E}}\lambda(\sigma) \|\D^{\theta}(\bs{z}+\bs{n};\sigma)-\bs{z}\|^2 + \underset{\bs{z}\sim p_{\sync}}{\mathbb{E}}\underset{\sigma\leq \sigma_0}{\mathbb{E}}\underset{\bs{n}}{\mathbb{E}}\lambda(\sigma) w(\bs{z})\|\D^{\theta}(\bs{z}+\bs{n};\sigma)-\bs{z}\|^2
\label{Eq:DM:Loss2}    
\end{equation}
where $\sigma_0$ constrains that $\D^{\theta}(\bs{z}+\bs{n};\sigma)$ is only trained from $p_{\data}$ for large $\sigma$, while $w(\bs{z})$ helps address the data sparsity issue for small $\sigma$. We set $\sigma_0$ as half of the average distance between each training shape and its closest training shape in the latent space. 

We proceed to discuss $r_{\deform}(\bs{z})$. One strategy is to employ $l_{\reg}(\bs{m}^{\phi}(\bs{z}))$. However, $l_{\reg}(\bs{m}^{\phi}(\bs{z}))$ only measures the deformation between $\bs{m}^{\phi}(\bs{z})$ and adjacent synthetic shapes, which do not reveal the deformations between $\bs{m}^{\phi}(\bs{z})$ and training shapes, which are more informative about the quality of $\bs{m}^{\phi}(\bs{z})$. Our approach for computing $r_{\deform}(\bs{z})$ is based on measuring the deformation between $\bs{m}^{\phi}(\bs{z})$ and the closest training shape (which excludes $\bs{m}^{\phi}(\bs{z})$ if it is a training shape). We distinguish between explicit and implicit shape generative models. 

\noindent\textbf{Mesh generative model.} When the generative model shares the same mesh connectivity with that of training shapes, we measure the deformation between $\bs{m}^{\phi}(\bs{z})$ and the closest training shape in the latent space. Formally speaking, given two meshes of the same mesh topology with different vector representations $\bs{p}\in \R^{3n}$ and $\bs{q}\in\R^{3n}$. Let $\set{N}(i)$ collect the neighboring vertices of the $i$-th vertex. We define the deformation of the $i$-th vertex as 
\begin{equation}
r_{\deform}^i(\bs{p},\bs{q}) = \Big(\min\limits_{R}\sum\limits_{j\in \set{N}(i)}\|R(\bs{p}_i-\bs{p}_j)-(\bs{q}_i-\bs{q}_j)\|^2\Big)^{\frac{1}{2}}
\label{Eq:Deformation:i}
\end{equation}
which can be computed by performing singular-value decomposition (SVD) on $M_i = \sum\limits_{j\in \set{N}(i)}(\bs{p}_i-\bs{p}_j)(\bs{q}_i-\bs{q}_j)^T$, c.f.,~\cite{arun1987least}. Let $S_{i_{\bs{z}}}$ be the closest training shape of $\bs{m}^{\phi}(\bs{z})$. We sort $r_{\deform}^i(\bs{m}^{\phi}(\bs{z}),S_{i_{\bs{z}}})$ in increasing order and set $r_{\deform}(\bs{z})$ as the $p_c$-percentile ($p_c = 80$ in our experiments). Fig.~\ref{Figure:Correlation}(a) shows the correlation between $r_{\deform}(\bs{z})$ and the SMAL reconstruction error, which is the golden standard. We find that the correlation is above $0.8$, meaning that $r_{\deform}(\bs{z})$ is an effective score. 

Fig.~\ref{Figure:Correlation}(b) shows how the correlation score varies when changing $p_c$. We can see that the correlation score remains stable in a large range, e.g., between $60$-percentile and $90$-percentile. Note that the correlation score drops considerably as $p_c$ approaches $100$. This is expected because we have large deformations near the joint regions. In other words, a small portion of the vertices should be discarded when calculating $r_{\deform}(\bs{z})$.

\noindent\textbf{Implicit generative model.} The difference between an implicit shape generator and mesh shape generator is that we do not have correspondences between synthetic shapes and training shapes. To address this issue, we deform the closest training shape $S_{i_{\bs{z}}}$ to align with each synthetic shape $g^{\phi}(\bs{z})$. The alignment follows the GenCorres strategy~\cite{DBLP:conf/iclr/0005HSBH24}, which uses 10 intermediate shapes between $g^{\phi}(\bs{z})$ and $S_{i_{\bs{z}}}$ to guide the deformation procedure. Let $S_{i_{\bs{z}}}(\bs{z})$ be the deformed shape of $S_{i_{\bs{z}}}$. We define the score of $g^{\phi}(\bs{z})$ as
$$
r_{\deform}(\bs{z}) = \prctile(r_{\deform}^{j}(S_{i_{\bs{z}}},S_{i_{\bs{z}}}(\bs{z}), p_c) +  d_{\textup{Haus}}(S_{i_{\bs{z}}}(\bs{z}), g^{\phi}(\bs{z}))    
$$
where $d_{\textup{Haus}}$ is the Hausdorff distance; $\prctile$ denotes is the percentile operator. 

In addition to evaluating the geometric fidelity of a synthetic shape, we also have prior knowledge that a valid synthetic shape should be simply connected, i.e., genus-zero, or $|\set{V}| -|\set{E}| +|\set{F}| = 2$ for any mesh discretization $\set{M} = (\set{V},\set{E},\set{F})$ of the synthetic shape. When a synthetic shape $g^{\phi}(\bs{z})$ is not simply connected, we set $w(\bs{z}) = 0$.

\noindent\textbf{Discussion.} There are two other variants for computing $r_{\deform}(\bs{z})$ that we have tested. First, instead of calculating the deformation in the closest shape in the training set, we can take the mean deformation error to the closest top-$k$ shapes. Another is to define $r_{\deform}^i(\bs{p},\bs{q})$ in Eq.~(\ref{Eq:Deformation:i}) by combining the current rotation fitting error and a conformal fitting error, i.e., replacing $R$ by $sR$. As detailed in the supp. material, these two variants led to worse correlations. Therefore, we employ rotation fitting errors to the closest training shape to calculate the score of each latent code. 

\subsection{Conditional Generation}
\label{Subsec:CG}

\begin{wrapfigure}{r}{0.55\textwidth}
\begin{overpic}[width=0.55\textwidth]{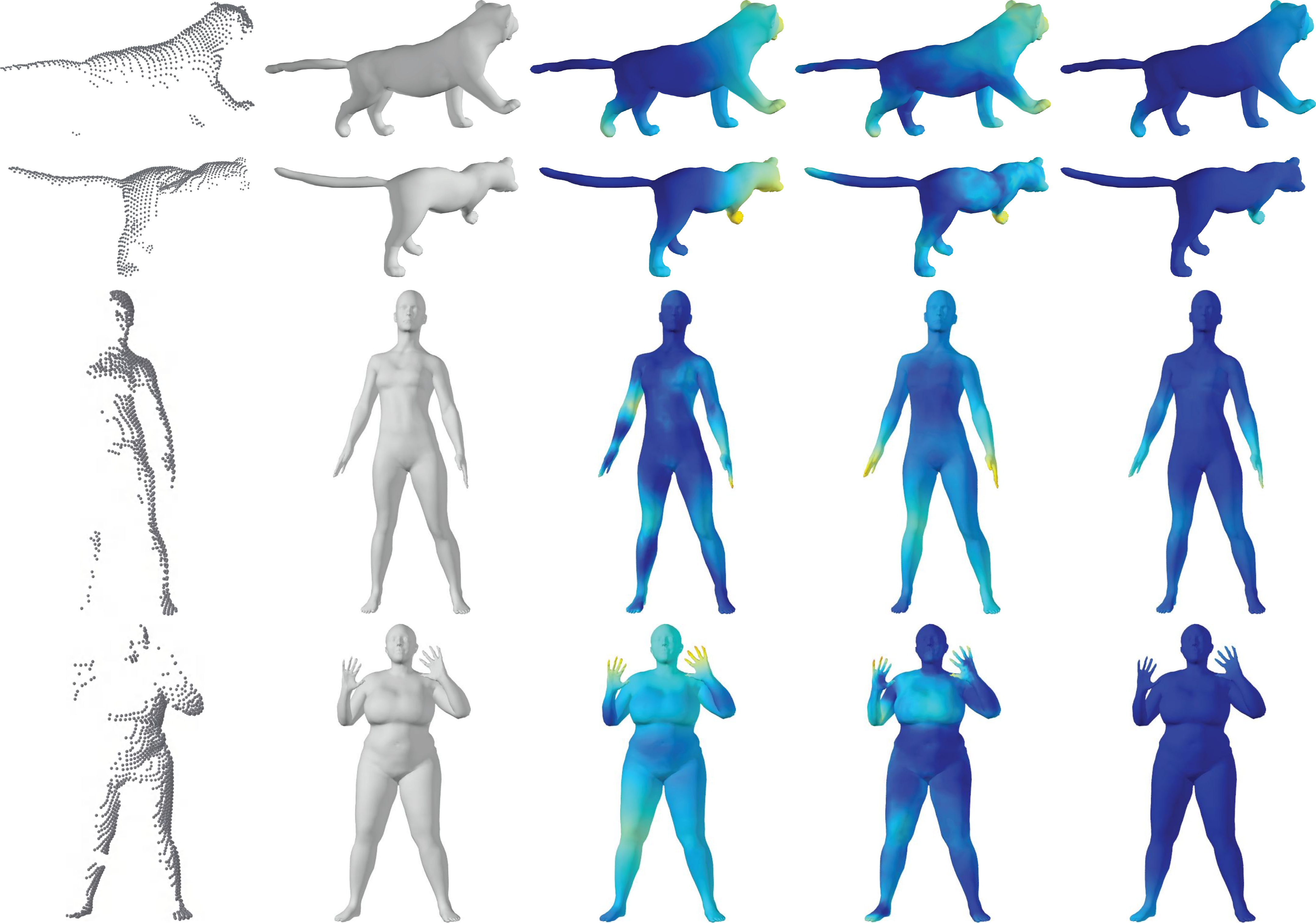}
\put(6,-6) {\small{Input}}
\put(28, -6){\small{GT}}
\put(42, -6){\small{GeoLatent}}
\put(64, -6){\small{BRESA}}
\put(82, -6){\small{ARAPDiff.}}
\end{overpic}
\vspace{0.05in}
\caption{Conditional generation results. The reconstruction errors of each method are color coded.}
\vspace{-0.2in}
\label{Figure:Conditional:Generation}    
\end{wrapfigure}

This section shows how to generalize the unconditional generation framework described in Sec.~\ref{Subsec:StageI}--\ref{Subsec:StageIII} to conditional generation. First, we apply the unconditional generation framework described above to learn the decoder $g^{\phi}(\bs{z})$. Subsequently, we fix this decoder to train conditional diffusion models $\D^{\theta}(\bs{z}+\bs{n},\sigma;c)$ in the latent space where $c$ is the conditional signal. 

Our idea is to continue to enforce that the samples generated from $\D^{\theta}(\bs{z}+\bs{n},\sigma;c)$ have large values in $w(\bs{z})$. However, the difference is that we cannot sample $\bs{z}\sim \set{N}(0,\sigma_{\max}^2 I)$ as not all plausible synthetic shapes correspond to the conditional signal $c$. To address this issue, we denote $p_{\sync}^c$ as the samples generated using the current conditional model. We alternate between computing $p_{\sync}^c$ and fixing it to update $\D^{\theta}$ as 
\begin{equation}
\min\limits_{\theta}\quad \underset{\bs{z}\sim p_{\data}}{\mathbb{E}}\underset{\sigma}{\mathbb{E}}\underset{\bs{n}}{\mathbb{E}}\lambda(\sigma) \|\D^{\theta}(\bs{z}+\bs{n};\sigma,c)-\bs{z}\|^2 + \mu \underset{\bs{z}\sim p_{\sync}^c}{\mathbb{E}}\underset{\sigma\leq \sigma_0}{\mathbb{E}}\underset{\bs{n}}{\mathbb{E}}\lambda(\sigma) w(\bs{z})\|\D^{\theta}(\bs{z}+\bs{n};\sigma,c)-\bs{z}\|^2.
\label{Eq:DM:Loss3}    
\end{equation}

Fig.~\ref{Figure:Conditional:Generation} shows ARAPDiffusion conditional generation results from partial point clouds. 
\section{Experimental Results}

\subsection{Experimental Setup}
\label{Subsec:Exp:Setup}

We evaluate ARAPDiffusion on human shapes and animal shapes in ARAPWe also provide additional experiments on the Bone datasets in APAPReg~\cite{Huang_2021_ICCV} in the supp. material.Reg~\cite{Huang_2021_ICCV} and GenCorres~\cite{DBLP:conf/iclr/0005HSBH24}, which are generated from SMPL~\cite{loper2015smpl} and SMAL~\cite{Zuffi:CVPR:2017}, respectively.  For mesh generative models, the training/test splits on human and animal shapes follow those of ARAPReg. For implicit generative models, the training/test splits follow those of GenCorres. We also provide additional experiments on the Bone datasets in APAPReg~\cite{Huang_2021_ICCV} in the supp. material.

Evaluation protocols include reconstruction loss $e_{r}$ in each test shape, as reported in ARAPReg and GenCorres. This metric does not assess low-quality synthetic shapes or quantify distribution alignments, which are addressed by two other metrics. The first metric $e_{t}$ calculates the Hausdorff distance between each synthetic shape and its SMPL/SMAL reconstruction (see Sec.~\ref{Sec:Experimental:Details}). The second metric is $d_{\textup{W}}$, which measures the Wasserstein distance between the synthetic and training distributions (resampled for evaluation) in the ambient space. We sample 2000 shapes to compute the Wasserstein distance, where the Chamfer distance provides the ground metric. We also report $d_{n}$ which is the Hausdorff distance between each synthetic shape and its closest training shape, assessing the level of overfitting. For implicit generators, we convert an implicit surface into a triangular mesh for evaluation. Timing is in Sec.~\ref{Sec:Experimental:Details}.

\subsection{Unconditional Mesh Generators}
\label{Subsec:UC:Mesh:Gen}

We first evaluate ARAPDiffusion for unconditional mesh generations and compare it against state-of-the-art techniques, i.e., SP-Disent.~\cite{conf/eccv/ZhouBP20}, COMA~\cite{DBLP:conf/eccv/RanjanBSB18}, Neural3DMM~\cite{DBLP:conf/iccv/BouritsasBPZB19}, MeshConv~\cite{DBLP:conf/nips/ZhouWLCYSLS20}, ARAPReg~\cite{Huang_2021_ICCV}, FrameAve~\cite{Atzmon_2022_CVPR},  GeoLatent~\cite{10.1145/3618371}, and BRESA~\cite{DBLP:journals/ijcv/HartmanPBDC25}.

Tab.~\ref{Table:Quantitative:Mesh} and Fig.~\ref{Fig:Qualitative:Results} present quantitative and qualitative results of ARAPDiffusion and baseline approaches. First of all, $d_{n}$ of each method indicates that none of the methods reproduces the training data. Moreover, our approach outperforms the baseline approaches in all metrics. The greatest improvement is in $e_t$, which assesses whether the generated shapes are of good quality. This is expected as our approach is based on diffusion models, which show better distribution alignment than VAE that is used by all baseline approaches. In fact, NoReg, which turns off all regularization terms and alternate optimization of ARAPDiffusion, 
\begin{wraptable}{r}{0.56\textwidth}
\setlength\tabcolsep{1.5pt}
    \centering
    \begin{tabular}{c|cccc|cccc}
         &\multicolumn{4}{c}{Human} & \multicolumn{4}{c}{Animal} \\\hline
         &$e_r\metricdown$ & $e_t\metricdown$ & $d_W\metricdown$ & $d_{n}$& $e_r\metricdown$& $e_t\metricdown$ & $d_W\metricdown$ &$d_{n}$ \\\hline
SP-Disent.  &10.0 &24.1 & 23.1& 14.2 &15.2 & 14.3&26.1 & 15.3\\
COMA       &8.80  & 22.3 & 20.2& 10.7 & 14.5 & 13.2 & 23.2 & 12.9\\
3DMM        & 7.39& 19.1& 21.3 & 9.81 &17.3 & 13.6 & 24.7 & 11.1\\
MeshConv             & 5.43& 16.8 & 19.3 & 9.76 &8.01 & 11.9  & 20.1 & 10.8\\
ARAPReg & 4.52 & 19.7& 18.9 & 6.81 &9.68 & 12.5& 22.5 & 10.9\\
FrameAve &4.94 & 20.6& 16.7 & 9.54 & 12.5& 10.6& 20.2 & 11.1\\
GeoLatent &4.31 & 17.2& 19.1 & 9.81  & 5.98 &11.2 & 21.2 & 11.2\\
BRESA & 4.41& 22.4 & 20.1 & 9.71 & 6.12 & 12.6 & 23.4  &10.7 \\
\hline
\gr ARAPDiff. & \textbf{4.18}& \textbf{7.21} &\textbf{12.4} & 9.85 & \textbf{5.75} & \textbf{7.35} & \textbf{14.4} & 10.9 \\
\cdashline{1-9}[2pt/3pt]
NoReg & 5.18 & 10.6 & 15.1 & 9.86& 6.81 & 8.76 & 18.1 &10.9\\
NoDeReg & 4.74 &8.05 & 13.9 &9.77 & 6.16& 7.99 & 16.1 &11.0\\
NoLaReg &4.34 & 7.27 &13.4 &9.87 & 6.03 &7.71 & 15.3 & 11.1 \\
AlterDiff & 4.16 & 7.19 & 12.3 & 9.86 & 5.77 & 7.34 & 14.3 & 10.8 
\end{tabular}
\caption{Comparisons between ARAPDiffusion and baseline approaches for mesh shape generations. We show results under $e_{r}$ (in mm), $e_{t}$ (in mm), and $d_{W}$ (in cm). The winning approach is bold-faced. We also report $d_{n}$ (in cm),  which measures the level of fitting. }
    \label{Table:Quantitative:Mesh}
    \vspace{-0.2in}
\end{wraptable}
exhibits better performance on this metric than the baseline approaches. This shows the strength of diffusion models to avoid generating low-quality shapes. The relative improvements in $d_W$ are similar to those in $e_t$. The absolute improvements are less, mainly because of the discretization error when calculating the Wasserstein using 2000 samples. Again, NoReg also outperforms the baseline approaches in $d_W$ in terms of the power of diffusion models in distribution alignments. However, as detailed in Sec.~\ref{Subsec:Ablation:Study}, different components of ARAPDiffusion are effective. Please refer to the supp. material regarding collections of randomly generated shapes of each approach. 

Our approach still improves on $e_r$, although the relative improvements are much smaller, i.e., by 3.1\% and 3.9\% on Human and SMAL, respectively. Although ARAPDiffusion employs a similar regularization formulation as ARAPReg, the improvements come from the improved shape decoder due to the better regularization losses defined by the latent diffusion model. This strategy avoids adversely competing against the fitting of training data in VAE.
%Moreover, our approach also achieves noticeable improvements in $e_t$ and $e_r$. The relative performance gains in $e_t$ from GeoLatent/ARAPReg are xx.x\%/xx/x\% and xx.x\%/xx.x\% on Human and Animal, respectively. Although both ARAPDiffusion and ARAPReg/GeoLatent employ an ARAP regularization loss, the benefit of ARAPDiffuion comes from the latent diffusion paradigm, which does not align the latent codes to the Gaussian distribution and which . 

\subsection{Unconditional Implicit Generators}
\label{Subsec:UC:Implicit:Gen}

\begin{wraptable}{r}{0.54\textwidth}
\vspace{-0.2in}
\setlength\tabcolsep{1.5pt}
\centering
\begin{tabular}{c|cccc|cccc}
         &\multicolumn{4}{c}{Human} & \multicolumn{4}{c}{Animal} \\\hline
         &$e_r\metricdown$ & $e_t\metricdown$ & $d_W\metricdown$ & $d_n$ & $e_r\metricdown$& $e_t\metricdown$ & $d_W\metricdown$ & $d_n$ \\\hline
SALD  & 7.21 &16.9 & 21.8 & 10.2 & 8.12&18.9 & 20.5 & 11.2\\
GenCorres        &  6.31&  16.4 & 22.1 & 9.89 & 7.56& 19.1 & 21.2&11.3\\
HY-LoRA & 6.23& 8.84 & 15.9 & 9.8 & 7.23 & 9.25 & 16.9  &8.5 \\ \hline
\gr ARAPDiff. & \textbf{6.18} & \textbf{8.39} & \textbf{14.8} & 10.1 & \textbf{7.11} & \textbf{8.91} & \textbf{16.5} & 11.1
\end{tabular}
\caption{Comparisons between ARAPDiffusion and baseline approaches. We again show results under four metrics, i.e., $e_{r}$, $e_{t}$, $d_{W}$, and $d_{n}$. The winning approach in each category is bold-faced.}
    \label{Table:Quantitative:Implicit}
\vspace{-0.2in}
\end{wraptable}
Fig.~\ref{Figure:Implicit:Qualitative} and Tab.~\ref{Table:Quantitative:Implicit} present qualitative and quantitative results for unconditional implicit generation. The baselines are SALD~\cite{DBLP:conf/iclr/AtzmonL21} and GenCorres~\cite{DBLP:conf/iclr/0005HSBH24}, which are two SOTA implicit generators. We also include a strong baseline that fine-tunes the Hunyuan3D 2.5 model~\cite{DBLP:journals/corr/abs-2506-16504} on the training data via LoRA~\cite{DBLP:conf/iclr/HuSWALWWC22} (named HY-LoRA).

Quantitatively, ARAPDiffusion significantly outperforms baseline approaches with respect to $e_t$ and $d_W$, which are similar to the case of mesh generative models. This again shows the advantage of diffusion models that offer strong distribution alignments and avoid generating low-quality shapes. Even under $e_r$ which only evaluates whether the shape generator covers test shapes, ARAPDiffusion still achieves conformable performance gains, i.e., by 2.06\% and 5.95\% on Human and Animal, respectively. The relative performance between ARAPDiffisuion and HY-LoRA remains similar to that of mesh generattors. 

Compared to HY-LoRA, our approach is better in all metrics, although the relative improvements are smaller in most metrics. This is understandable, as HY-LoRA leverages a pre-trained model. Note that HY-LoRA has a small $d_n$, indicating certain data memorization due to discrete tokens. These results show advantages of our approach for encoding a continuous shape space.

Qualitatively, the ARAPDiffusion reconstruction errors, which are dictated by the quality of the decoder $g^{\phi}$, are consistently better than the baseline approaches. This is again attributed to the regularization loss that improves the decoder and the diffusion model that avoids competition between training data fitting and latent space alignment.

\subsection{Ablation Study}
\label{Subsec:Ablation:Study}

This section presents an ablation study on different components of ARAPDiffusion. For simplicity, we present the results on mesh generators (see Table~\ref{Table:Quantitative:Mesh}). 

\noindent\textbf{NoReg.} In the first ablation study, we remove all regularization losses in VAE and latent diffusion training. In other words, we first train an AE, which is followed by learning a latent diffusion model. In this case, we can see that the performance drops significantly, i.e., by 23.9\%/18.4\%, 45.0\%/19.1\%, and 21.7\%/25.7\% with respect to $e_r$, $e_t$, and $d_{W}$ on Human/Animal, respectively.

\begin{wrapfigure}{r}{0.55\textwidth}
\begin{overpic}[width=0.54\textwidth]{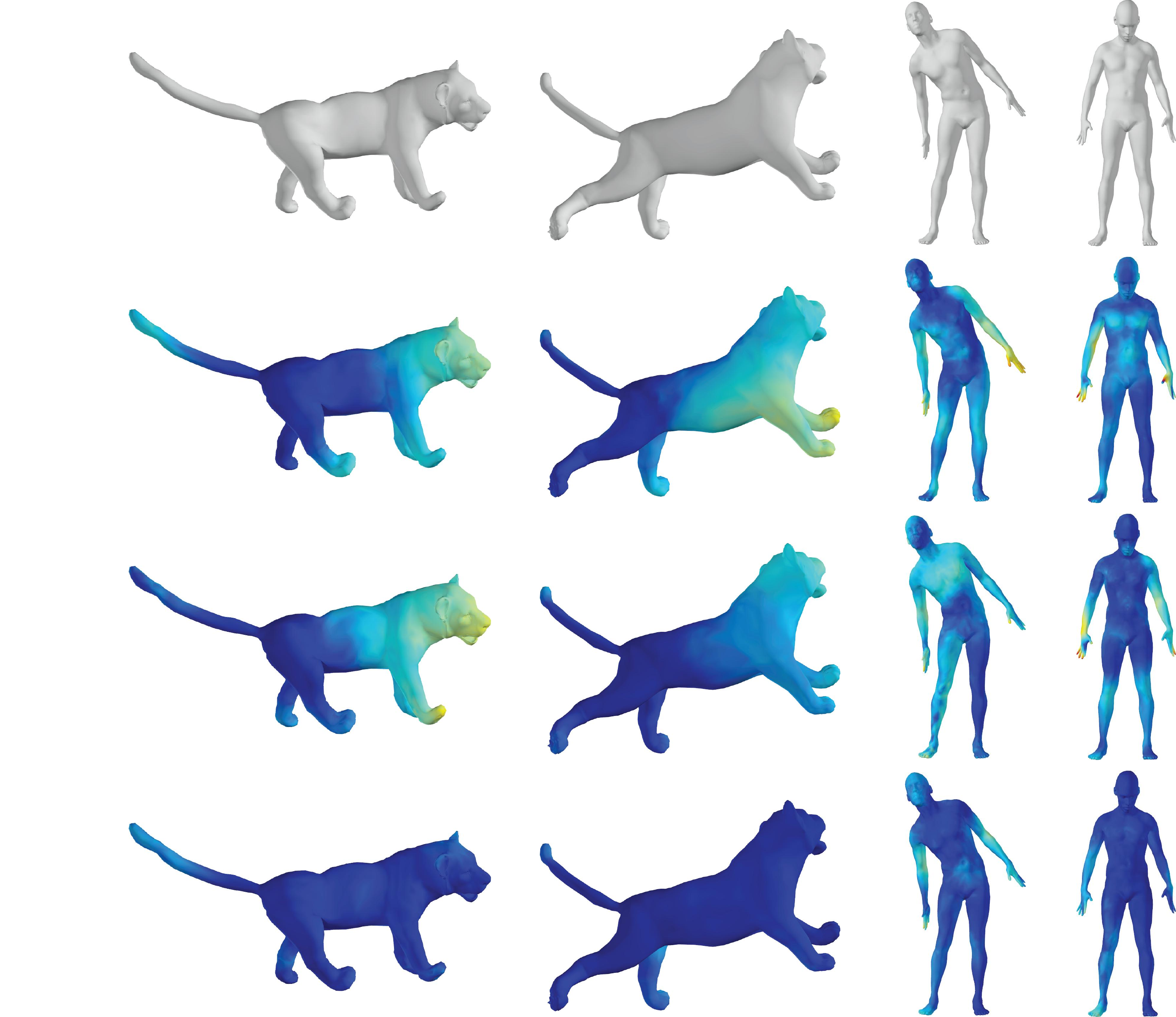}
\put(1, 2){\rotatebox{90}{\small{ARAPDiff.}}}
\put(1, 25){\rotatebox{90}{\small{GenCorres}}}
\put(1, 50){\rotatebox{90}{\small{SALD}}}
\put(1, 74){\rotatebox{90}{\small{Test}}}  
\end{overpic}
\caption{Comparisons between ARAPDiffusion and baseline approaches on unconditional implicit shape generation. ARAPDiffusion achieves better reconstruction errors. }
\label{Figure:Implicit:Qualitative}    
\vspace{-0.2in}
\end{wrapfigure}
\noindent\textbf{NoDeReg.} In the second ablation study, we remove the decoder regularization term. In other words, we fix the decoder and use the scoring function $w(\bs{z})$ to weight each latent code for fine-tuning the LD model. This variant is better than NoReg as the diffusion model is trained to sample latent codes with a high score $w(\bs{z})$. However, it still leads to some performance drops, i.e., by 13.4\%/7.13\%, 11.6\%/8.71\%, and 12.1\%/11.8\% with respect to $e_r$, $e_t$, and $d_{W}$ on Human/Animal, respectively. This is because the decoder is not optimized. 

\noindent\textbf{NoLaReg.} In the third ablation study, we remove the LD regularization term. In other words, we alternate between using the diffusion model to generate latent codes to regularize the auto-encoder and learn the diffusion model with the updated encoder. This variant is also better than NoReg due to improved encoder and decoder. However, it results in considerable performance drops, i.e., by 3.83\%/4.87\%, 0.83\%/4.90\%, and 8.06\%/6.25\% with respect to $e_r$, $e_t$, and $d_{W}$ on Human/Animal, respectively. This is because LD training only uses latent codes of training shapes. Regularizing LD using a score function enhances distribution alignment, which is reflected by relative improvements on $d_W$. 

\noindent\textbf{AlterDiff.} Finally, we replace the EDM model with the more recent rectified flow model~\cite{DBLP:conf/iclr/LiuG023}, which has been used in image generation. We find that the results are similar. We think this is due to two reasons. First, the dimension of the latent space is much smaller than the ones used in image generation, and the gaps between different diffusion models become small. Second, the diffusion model provides a regularization term, which is less dominant compared to the data term. 

\subsection{Conditional Mesh Generators}
\label{Subsec:C:Mesh:Gen}

Fig.~\ref{Figure:Conditional:Generation} compares ARAPDiffusion with ARAPReg and GeoLatent (both of which employs an AE) on conditional mesh generation, where we train each baseline conditional diffusion model in the latent space of each baseline method. We can see that our approach leads to better results than the baseline approaches. As our decoder architecture is the same as , the improvements come from the training strategies in ARAPDiffusion. In particular, adding a regularization loss to train the conditional diffusion model in the latent space yields considerable performance gains. 
\section{Conclusions and Future Work}

In this paper, we have introduced a novel generative model paradigm for deformable shape generations. It leverages the latent diffusion model paradigm and alternates between using the diffusion model to generate samples for regularizing the auto-encoder and using the decoder to score each latent sample to regularize the latent diffusion model. This framework allows us to integrate both a mesh auto-encoder and an implicit auto-encoder, offering flexibility on different types of datasets. Experimental results show that ARAPDiffusion outperforms state-of-the-art deformable shape generators in quality and diversity metrics. 

There are ample opportunities for future research. First, we would like to study how to integrate the scoring function in $w(\bs{z})$ to train the latent diffusion model instead of just increasing the training data for small $\sigma$. Another future direction is to integrate auto-encoders defined under hybrid representations (e.g., mesh, implicit, and multi-view) with a shared latent space, which can combine their strengths. We would also like to extend the approach to other latent-diffusion-based paradigms, where we can define a score for each latent code. 

\section*{Acknowledgments}

This project was supported by NSF-2047677, 2413161, 2504906, and 2515626; GIFTs from Adobe and Google; and computing support on the Vista GPU Cluster through the Center for Generative AI (CGAI) and the Texas Advanced Computing Center (TACC) at UT Austin.

% \clearpage
\bibliographystyle{abbrv}
\bibliography{main}

@article{loper2015smpl,
  title={{SMPL}: A Skinned Multi-Person Linear Model},
  author={Loper, Matthew and Mahmood, Naureen and Romero, Javier and Pons-Moll, Gerard and Black, Michael J},
  journal={ACM Transactions on Graphics},
  volume={34},
  number={6},
  year={2015},
  publisher={Association for Computing Machinery}
}

@inproceedings{DBLP:conf/cvpr/XiangLXDWZC0Y25,
  author       = {Jianfeng Xiang and
                  Zelong Lv and
                  Sicheng Xu and
                  Yu Deng and
                  Ruicheng Wang and
                  Bowen Zhang and
                  Dong Chen and
                  Xin Tong and
                  Jiaolong Yang},
  title        = {Structured 3D Latents for Scalable and Versatile 3D Generation},
  booktitle    = {{IEEE/CVF} Conference on Computer Vision and Pattern Recognition,
                  {CVPR} 2025, Nashville, TN, USA, June 11-15, 2025},
  pages        = {21469--21480},
  publisher    = {Computer Vision Foundation / {IEEE}},
  year         = {2025},
  url          = {https://openaccess.thecvf.com/content/CVPR2025/html/Xiang\_Structured\_3D\_Latents\_for\_Scalable\_and\_Versatile\_3D\_Generation\_CVPR\_2025\_paper.html},
  doi          = {10.1109/CVPR52734.2025.02000},
  bibsource    = {dblp computer science bibliography, https://dblp.org}
}

@inproceedings{DBLP:conf/iclr/HuSWALWWC22,
  author       = {Edward J. Hu and
                  Yelong Shen and
                  Phillip Wallis and
                  Zeyuan Allen{-}Zhu and
                  Yuanzhi Li and
                  Shean Wang and
                  Lu Wang and
                  Weizhu Chen},
  title        = {LoRA: Low-Rank Adaptation of Large Language Models},
  booktitle    = {The Tenth International Conference on Learning Representations, {ICLR}
                  2022, Virtual Event, April 25-29, 2022},
  publisher    = {OpenReview.net},
  year         = {2022},
  url          = {https://openreview.net/forum?id=nZeVKeeFYf9},
  bibsource    = {dblp computer science bibliography, https://dblp.org}
}

@article{DBLP:journals/corr/abs-2506-16504,
  author       = {Zeqiang Lai and
                  Yunfei Zhao and
                  Haolin Liu and
                  Zibo Zhao and
                  Qingxiang Lin and
                  Huiwen Shi and
                  Xianghui Yang and
                  Mingxin Yang and
                  Shuhui Yang and
                  Yifei Feng and
                  Sheng Zhang and
                  Xin Huang and
                  Di Luo and
                  Fan Yang and
                  Fang Yang and
                  Lifu Wang and
                  Sicong Liu and
                  Yixuan Tang and
                  Yulin Cai and
                  Zebin He and
                  Tian Liu and
                  Yuhong Liu and
                  Jie Jiang and
                  Linus and
                  Jingwei Huang and
                  Chunchao Guo},
  title        = {Hunyuan3D 2.5: Towards High-Fidelity 3D Assets Generation with Ultimate
                  Details},
  journal      = {CoRR},
  volume       = {abs/2506.16504},
  year         = {2025},
  url          = {https://doi.org/10.48550/arXiv.2506.16504},
  doi          = {10.48550/ARXIV.2506.16504},
  eprinttype   = {arXiv},
  eprint       = {2506.16504},
  bibsource    = {dblp computer science bibliography, https://dblp.org}
}

@article{doi:10.1137/24M1644730,
author = {Dummer, Sven and Strisciuglio, Nicola and Brune, Christoph},
title = {RDA-INR: Riemannian Diffeomorphic Autoencoding via Implicit Neural Representations},
journal = {SIAM Journal on Imaging Sciences},
volume = {17},
number = {4},
pages = {2302-2330},
year = {2024},
doi = {10.1137/24M1644730},
URL = {https://doi.org/10.1137/24M1644730},
eprint = {https://doi.org/10.1137/24M1644730},
}

@inproceedings{DBLP:conf/cvpr/AtzmonNFKL22,
  author       = {Matan Atzmon and
                  Koki Nagano and
                  Sanja Fidler and
                  Sameh Khamis and
                  Yaron Lipman},
  title        = {Frame Averaging for Equivariant Shape Space Learning},
  booktitle    = {{IEEE/CVF} Conference on Computer Vision and Pattern Recognition,
                  {CVPR} 2022, New Orleans, LA, USA, June 18-24, 2022},
  pages        = {621--631},
  publisher    = {{IEEE}},
  year         = {2022},
  url          = {https://doi.org/10.1109/CVPR52688.2022.00071},
  doi          = {10.1109/CVPR52688.2022.00071},
  bibsource    = {dblp computer science bibliography, https://dblp.org}
}

@inproceedings{10.5555/3295222.3295349,
author = {Vaswani, Ashish and Shazeer, Noam and Parmar, Niki and Uszkoreit, Jakob and Jones, Llion and Gomez, Aidan N. and Kaiser, \L{}ukasz and Polosukhin, Illia},
title = {Attention is all you need},
year = {2017},
isbn = {9781510860964},
publisher = {Curran Associates Inc.},
address = {Red Hook, NY, USA},
booktitle = {Proceedings of the 31st International Conference on Neural Information Processing Systems},
pages = {6000–6010},
numpages = {11},
location = {Long Beach, California, USA},
series = {NIPS'17}
}

@inproceedings{chou2022diffusionsdf,
title={Diffusion-SDF: Conditional Generative Modeling of Signed Distance Functions},
author={Gene Chou and Yuval Bahat and Felix Heide},
booktitle={Proceedings of the IEEE International Conference on Computer Vision (ICCV)},
year={2023}
}

@article{DBLP:journals/cgf/FotiKSC23,
  author       = {Simone Foti and
                  Bongjin Koo and
                  Danail Stoyanov and
                  Matthew J. Clarkson},
  title        = {3D Generative Model Latent Disentanglement via Local Eigenprojection},
  journal      = {Comput. Graph. Forum},
  volume       = {42},
  number       = {6},
  year         = {2023},
  url          = {https://doi.org/10.1111/cgf.14793},
  doi          = {10.1111/CGF.14793},
  bibsource    = {dblp computer science bibliography, https://dblp.org}
}

@inproceedings{DBLP:conf/cvpr/MuralikrishnanC22,
  author       = {Sanjeev Muralikrishnan and
                  Siddhartha Chaudhuri and
                  Noam Aigerman and
                  Vladimir G. Kim and
                  Matthew Fisher and
                  Niloy J. Mitra},
  title        = {{GLASS:} Geometric Latent Augmentation for Shape Spaces},
  booktitle    = {{IEEE/CVF} Conference on Computer Vision and Pattern Recognition,
                  {CVPR} 2022, New Orleans, LA, USA, June 18-24, 2022},
  pages        = {470--479},
  publisher    = {{IEEE}},
  year         = {2022},
  url          = {https://doi.org/10.1109/CVPR52688.2022.01800},
  doi          = {10.1109/CVPR52688.2022.01800},
  bibsource    = {dblp computer science bibliography, https://dblp.org}
}

@inproceedings{10.1145/3610548.3618192,
author = {Maesumi, Arman and Guerrero, Paul and Aigerman, Noam and Kim, Vladimir and Fisher, Matthew and Chaudhuri, Siddhartha and Ritchie, Daniel},
title = {Explorable Mesh Deformation Subspaces from Unstructured 3D Generative Models},
year = {2023},
isbn = {9798400703157},
publisher = {Association for Computing Machinery},
address = {New York, NY, USA},
url = {https://doi.org/10.1145/3610548.3618192},
doi = {10.1145/3610548.3618192},
booktitle = {SIGGRAPH Asia 2023 Conference Papers},
articleno = {68},
numpages = {11},
location = {Sydney, NSW, Australia},
series = {SA '23}
}

@inproceedings{10.5555/3295222.3295263,
author = {Qi, Charles R. and Yi, Li and Su, Hao and Guibas, Leonidas J.},
title = {PointNet++: deep hierarchical feature learning on point sets in a metric space},
year = {2017},
isbn = {9781510860964},
publisher = {Curran Associates Inc.},
address = {Red Hook, NY, USA},
booktitle = {Proceedings of the 31st International Conference on Neural Information Processing Systems},
pages = {5105–5114},
numpages = {10},
location = {Long Beach, California, USA},
series = {NIPS'17}
}

@inproceedings{DBLP:conf/iclr/AtzmonHWL24,
  author       = {Matan Atzmon and
                  Jiahui Huang and
                  Francis Williams and
                  Or Litany},
  title        = {Approximately Piecewise {E(3)} Equivariant Point Networks},
  booktitle    = {The Twelfth International Conference on Learning Representations,
                  {ICLR} 2024, Vienna, Austria, May 7-11, 2024},
  publisher    = {OpenReview.net},
  year         = {2024},
  url          = {https://openreview.net/forum?id=aKJEHWmBEf},
  bibsource    = {dblp computer science bibliography, https://dblp.org}
}

@inproceedings{DBLP:conf/cvpr/RombachBLEO22,
  author       = {Robin Rombach and
                  Andreas Blattmann and
                  Dominik Lorenz and
                  Patrick Esser and
                  Bj{\"{o}}rn Ommer},
  title        = {High-Resolution Image Synthesis with Latent Diffusion Models},
  booktitle    = {{IEEE/CVF} Conference on Computer Vision and Pattern Recognition,
                  {CVPR} 2022, New Orleans, LA, USA, June 18-24, 2022},
  pages        = {10674--10685},
  publisher    = {{IEEE}},
  year         = {2022},
  url          = {https://doi.org/10.1109/CVPR52688.2022.01042},
  doi          = {10.1109/CVPR52688.2022.01042},
  bibsource    = {dblp computer science bibliography, https://dblp.org}
}

@article{kass1988snakes,
  title={Snakes: Active contour models},
  author={Kass, Michael and Witkin, Andrew and Terzopoulos, Demetri},
  journal={International Journal of Computer Vision},
  volume={1},
  number={4},
  pages={321--331},
  year={1988},
  publisher={Springer}
}

@InProceedings{Tan_2018_CVPR,
  author = {Tan, Qingyang and Gao, Lin and Lai, Yu-Kun and Xia, Shihong},
  title = {Variational Autoencoders for Deforming 3D Mesh Models},
  booktitle = {Proceedings of the IEEE Conference on Computer Vision and Pattern Recognition (CVPR)},
  month = {June},
  year = {2018},
  pages = {5841-5850}
}

@inproceedings{DBLP:conf/nips/0001ZXFT16,
  author       = {Jiajun Wu and
                  Chengkai Zhang and
                  Tianfan Xue and
                  Bill Freeman and
                  Josh Tenenbaum},
  editor       = {Daniel D. Lee and
                  Masashi Sugiyama and
                  Ulrike von Luxburg and
                  Isabelle Guyon and
                  Roman Garnett},
  title        = {Learning a Probabilistic Latent Space of Object Shapes via 3D Generative-Adversarial
                  Modeling},
  booktitle    = {Advances in Neural Information Processing Systems 29: Annual Conference
                  on Neural Information Processing Systems 2016, December 5-10, 2016,
                  Barcelona, Spain},
  pages        = {82--90},
  year         = {2016},
  url          = {https://proceedings.neurips.cc/paper/2016/hash/44f683a84163b3523afe57c2e008bc8c-Abstract.html},
  bibsource    = {dblp computer science bibliography, https://dblp.org}
}

@InProceedings{Atzmon_2022_CVPR,
    author    = {Atzmon, Matan and Nagano, Koki and Fidler, Sanja and Khamis, Sameh and Lipman, Yaron},
    title     = {Frame Averaging for Equivariant Shape Space Learning},
    booktitle = {Proceedings of the IEEE/CVF Conference on Computer Vision and Pattern Recognition (CVPR)},
    month     = {June},
    year      = {2022},
    pages     = {631-641}
}

@inproceedings{DBLP:conf/iclr/AtzmonL21,
  author       = {Matan Atzmon and
                  Yaron Lipman},
  title        = {{SALD:} Sign Agnostic Learning with Derivatives},
  booktitle    = {9th International Conference on Learning Representations, {ICLR} 2021,
                  Virtual Event, Austria, May 3-7, 2021},
  publisher    = {OpenReview.net},
  year         = {2021},
  url          = {https://openreview.net/forum?id=7EDgLu9reQD},
  bibsource    = {dblp computer science bibliography, https://dblp.org}
}

@article{DBLP:journals/corr/abs-2108-08931,
  author       = {Matan Atzmon and
                  David Novotn{\'{y}} and
                  Andrea Vedaldi and
                  Yaron Lipman},
  title        = {Augmenting Implicit Neural Shape Representations with Explicit Deformation
                  Fields},
  journal      = {CoRR},
  volume       = {abs/2108.08931},
  year         = {2021},
  url          = {https://arxiv.org/abs/2108.08931},
  eprinttype    = {arXiv},
  eprint       = {2108.08931},
  bibsource    = {dblp computer science bibliography, https://dblp.org}
}

@inproceedings{DBLP:conf/nips/ZhouWLCYSLS20,
  author       = {Yi Zhou and
                  Chenglei Wu and
                  Zimo Li and
                  Chen Cao and
                  Yuting Ye and
                  Jason M. Saragih and
                  Hao Li and
                  Yaser Sheikh},
  editor       = {Hugo Larochelle and
                  Marc'Aurelio Ranzato and
                  Raia Hadsell and
                  Maria{-}Florina Balcan and
                  Hsuan{-}Tien Lin},
  title        = {Fully Convolutional Mesh Autoencoder using Efficient Spatially Varying
                  Kernels},
  booktitle    = {Advances in Neural Information Processing Systems 33: Annual Conference
                  on Neural Information Processing Systems 2020, NeurIPS 2020, December
                  6-12, 2020, virtual},
  year         = {2020},
  url          = {https://proceedings.neurips.cc/paper/2020/hash/68dd09b9ff11f0df5624a690fe0f6729-Abstract.html},
  bibsource    = {dblp computer science bibliography, https://dblp.org}
}

@inproceedings{DBLP:conf/iccv/BouritsasBPZB19,
  author       = {Giorgos Bouritsas and
                  Sergiy Bokhnyak and
                  Stylianos Ploumpis and
                  Stefanos Zafeiriou and
                  Michael M. Bronstein},
  title        = {Neural 3D Morphable Models: Spiral Convolutional Networks for 3D Shape
                  Representation Learning and Generation},
  booktitle    = {2019 {IEEE/CVF} International Conference on Computer Vision, {ICCV}
                  2019, Seoul, Korea (South), October 27 - November 2, 2019},
  pages        = {7212--7221},
  publisher    = {{IEEE}},
  year         = {2019},
  url          = {https://doi.org/10.1109/ICCV.2019.00731},
  doi          = {10.1109/ICCV.2019.00731},
  bibsource    = {dblp computer science bibliography, https://dblp.org}
}

@inproceedings{DBLP:conf/eccv/RanjanBSB18,
  author       = {Anurag Ranjan and
                  Timo Bolkart and
                  Soubhik Sanyal and
                  Michael J. Black},
  editor       = {Vittorio Ferrari and
                  Martial Hebert and
                  Cristian Sminchisescu and
                  Yair Weiss},
  title        = {Generating 3D Faces Using Convolutional Mesh Autoencoders},
  booktitle    = {Computer Vision - {ECCV} 2018 - 15th European Conference, Munich,
                  Germany, September 8-14, 2018, Proceedings, Part {III}},
  series       = {Lecture Notes in Computer Science},
  volume       = {11207},
  pages        = {725--741},
  publisher    = {Springer},
  year         = {2018},
  url          = {https://doi.org/10.1007/978-3-030-01219-9\_43},
  doi          = {10.1007/978-3-030-01219-9\_43},
  bibsource    = {dblp computer science bibliography, https://dblp.org}
}

@misc{hartman2024selfsupervisednetworkslearning,
      title={Self Supervised Networks for Learning Latent Space Representations of Human Body Scans and Motions}, 
      author={Emmanuel Hartman and Nicolas Charon and Martin Bauer},
      year={2024},
      eprint={2411.03475},
      archivePrefix={arXiv},
      primaryClass={cs.CV},
      url={https://arxiv.org/abs/2411.03475}, 
}

@inproceedings{DBLP:conf/nips/HoJA20,
  author       = {Jonathan Ho and
                  Ajay Jain and
                  Pieter Abbeel},
  editor       = {Hugo Larochelle and
                  Marc'Aurelio Ranzato and
                  Raia Hadsell and
                  Maria{-}Florina Balcan and
                  Hsuan{-}Tien Lin},
  title        = {Denoising Diffusion Probabilistic Models},
  booktitle    = {Advances in Neural Information Processing Systems 33: Annual Conference
                  on Neural Information Processing Systems 2020, NeurIPS 2020, December
                  6-12, 2020, virtual},
  year         = {2020},
  url          = {https://proceedings.neurips.cc/paper/2020/hash/4c5bcfec8584af0d967f1ab10179ca4b-Abstract.html},
  bibsource    = {dblp computer science bibliography, https://dblp.org}
}

@inproceedings{DBLP:conf/iclr/TevetRGSCB23,
  author       = {Guy Tevet and
                  Sigal Raab and
                  Brian Gordon and
                  Yonatan Shafir and
                  Daniel Cohen{-}Or and
                  Amit Haim Bermano},
  title        = {Human Motion Diffusion Model},
  booktitle    = {The Eleventh International Conference on Learning Representations,
                  {ICLR} 2023, Kigali, Rwanda, May 1-5, 2023},
  publisher    = {OpenReview.net},
  year         = {2023},
  url          = {https://openreview.net/forum?id=SJ1kSyO2jwu},
  bibsource    = {dblp computer science bibliography, https://dblp.org}
}

@inproceedings{DBLP:conf/iclr/0011SKKEP21,
  author       = {Yang Song and
                  Jascha Sohl{-}Dickstein and
                  Diederik P. Kingma and
                  Abhishek Kumar and
                  Stefano Ermon and
                  Ben Poole},
  title        = {Score-Based Generative Modeling through Stochastic Differential Equations},
  booktitle    = {9th International Conference on Learning Representations, {ICLR} 2021,
                  Virtual Event, Austria, May 3-7, 2021},
  publisher    = {OpenReview.net},
  year         = {2021},
  url          = {https://openreview.net/forum?id=PxTIG12RRHS},
  bibsource    = {dblp computer science bibliography, https://dblp.org}
}

@inproceedings{DBLP:conf/nips/SongE19,
  author       = {Yang Song and
                  Stefano Ermon},
  editor       = {Hanna M. Wallach and
                  Hugo Larochelle and
                  Alina Beygelzimer and
                  Florence d'Alch{\'{e}}{-}Buc and
                  Emily B. Fox and
                  Roman Garnett},
  title        = {Generative Modeling by Estimating Gradients of the Data Distribution},
  booktitle    = {Advances in Neural Information Processing Systems 32: Annual Conference
                  on Neural Information Processing Systems 2019, NeurIPS 2019, December
                  8-14, 2019, Vancouver, BC, Canada},
  pages        = {11895--11907},
  year         = {2019},
  url          = {https://proceedings.neurips.cc/paper/2019/hash/3001ef257407d5a371a96dcd947c7d93-Abstract.html},
  bibsource    = {dblp computer science bibliography, https://dblp.org}
}

@inproceedings{DBLP:conf/iclr/LiuG023,
  author       = {Xingchao Liu and
                  Chengyue Gong and
                  Qiang Liu},
  title        = {Flow Straight and Fast: Learning to Generate and Transfer Data with
                  Rectified Flow},
  booktitle    = {The Eleventh International Conference on Learning Representations,
                  {ICLR} 2023, Kigali, Rwanda, May 1-5, 2023},
  publisher    = {OpenReview.net},
  year         = {2023},
  url          = {https://openreview.net/forum?id=XVjTT1nw5z},
  bibsource    = {dblp computer science bibliography, https://dblp.org}
}

@inproceedings{DBLP:conf/cvpr/ZhangBXWK0B24,
  author       = {Siwei Zhang and
                  Bharat Lal Bhatnagar and
                  Yuanlu Xu and
                  Alexander Winkler and
                  Petr Kadlecek and
                  Siyu Tang and
                  Federica Bogo},
  title        = {RoHM: Robust Human Motion Reconstruction via Diffusion},
  booktitle    = {{IEEE/CVF} Conference on Computer Vision and Pattern Recognition,
                  {CVPR} 2024, Seattle, WA, USA, June 16-22, 2024},
  pages        = {14606--14617},
  publisher    = {{IEEE}},
  year         = {2024},
  url          = {https://doi.org/10.1109/CVPR52733.2024.01384},
  doi          = {10.1109/CVPR52733.2024.01384},
  bibsource    = {dblp computer science bibliography, https://dblp.org}
}

@InProceedings{Xiang_2025_CVPR,
    author    = {Xiang, Jianfeng and Lv, Zelong and Xu, Sicheng and Deng, Yu and Wang, Ruicheng and Zhang, Bowen and Chen, Dong and Tong, Xin and Yang, Jiaolong},
    title     = {Structured 3D Latents for Scalable and Versatile 3D Generation},
    booktitle = {Proceedings of the Computer Vision and Pattern Recognition Conference (CVPR)},
    month     = {June},
    year      = {2025},
    pages     = {21469-21480}
}

@inproceedings{cae613698fd84e4c8ec6c13b2d2e0f7c,
  title={Sdfusion: Multimodal 3d shape completion, reconstruction, and generation},
  author={Cheng, Yen-Chi and Lee, Hsin-Ying and Tulyakov, Sergey and Schwing, Alexander G and Gui, Liang-Yan},
  booktitle={Proceedings of the IEEE/CVF conference on computer vision and pattern recognition},
  pages={4456--4465},
  year={2023}
}

@article{10.1145/3592442,
author = {Zhang, Biao and Tang, Jiapeng and Nie\ss{}ner, Matthias and Wonka, Peter},
title = {3DShape2VecSet: A 3D Shape Representation for Neural Fields and Generative Diffusion Models},
year = {2023},
issue_date = {August 2023},
publisher = {Association for Computing Machinery},
address = {New York, NY, USA},
volume = {42},
number = {4},
issn = {0730-0301},
url = {https://doi.org/10.1145/3592442},
doi = {10.1145/3592442},
journal = {ACM Trans. Graph.},
month = jul,
articleno = {92},
numpages = {16}
}

@inproceedings{DBLP:conf/cvpr/StathopoulosHM24,
  author       = {Anastasis Stathopoulos and
                  Ligong Han and
                  Dimitris N. Metaxas},
  title        = {Score-Guided Diffusion for 3D Human Recovery},
  booktitle    = {{IEEE/CVF} Conference on Computer Vision and Pattern Recognition,
                  {CVPR} 2024, Seattle, WA, USA, June 16-22, 2024},
  pages        = {906--915},
  publisher    = {{IEEE}},
  year         = {2024},
  url          = {https://doi.org/10.1109/CVPR52733.2024.00092},
  doi          = {10.1109/CVPR52733.2024.00092},
  bibsource    = {dblp computer science bibliography, https://dblp.org}
}

@article{DBLP:journals/corr/abs-2410-03665,
  author       = {Brent Yi and
                  Vickie Ye and
                  Maya Zheng and
                  Lea M{\"{u}}ller and
                  Georgios Pavlakos and
                  Yi Ma and
                  Jitendra Malik and
                  Angjoo Kanazawa},
  title        = {Estimating Body and Hand Motion in an Ego-sensed World},
  journal      = {CoRR},
  volume       = {abs/2410.03665},
  year         = {2024},
  url          = {https://doi.org/10.48550/arXiv.2410.03665},
  doi          = {10.48550/ARXIV.2410.03665},
  eprinttype    = {arXiv},
  eprint       = {2410.03665},
  bibsource    = {dblp computer science bibliography, https://dblp.org}
}

@article{10.1145/3626235,
author = {Yang, Ling and Zhang, Zhilong and Song, Yang and Hong, Shenda and Xu, Runsheng and Zhao, Yue and Zhang, Wentao and Cui, Bin and Yang, Ming-Hsuan},
title = {Diffusion Models: A Comprehensive Survey of Methods and Applications},
year = {2023},
issue_date = {April 2024},
publisher = {Association for Computing Machinery},
address = {New York, NY, USA},
volume = {56},
number = {4},
issn = {0360-0300},
url = {https://doi.org/10.1145/3626235},
doi = {10.1145/3626235},
journal = {ACM Comput. Surv.},
month = nov,
articleno = {105},
numpages = {39}
}

@inproceedings{DBLP:conf/iccv/HartmanP0CD23,
  author       = {Emmanuel Hartman and
                  Emery Pierson and
                  Martin Bauer and
                  Nicolas Charon and
                  Mohamed Daoudi},
  title        = {BaRe-ESA: {A} Riemannian Framework for Unregistered Human Body Shapes},
  booktitle    = {{IEEE/CVF} International Conference on Computer Vision, {ICCV} 2023,
                  Paris, France, October 1-6, 2023},
  pages        = {14135--14145},
  publisher    = {{IEEE}},
  year         = {2023},
  url          = {https://doi.org/10.1109/ICCV51070.2023.01304},
  doi          = {10.1109/ICCV51070.2023.01304},
  bibsource    = {dblp computer science bibliography, https://dblp.org}
}

@article{DBLP:journals/ijcv/HartmanPBDC25,
  author       = {Emmanuel Hartman and
                  Emery Pierson and
                  Martin Bauer and
                  Mohamed Daoudi and
                  Nicolas Charon},
  title        = {Basis Restricted Elastic Shape Analysis on the Space of Unregistered
                  Surfaces},
  journal      = {Int. J. Comput. Vis.},
  volume       = {133},
  number       = {4},
  pages        = {1999--2024},
  year         = {2025},
  url          = {https://doi.org/10.1007/s11263-024-02269-3},
  doi          = {10.1007/S11263-024-02269-3},
  bibsource    = {dblp computer science bibliography, https://dblp.org}
}

@article{DBLP:journals/ijcv/HartmanSKCB23,
  author       = {Emmanuel Hartman and
                  Yashil Sukurdeep and
                  Eric Klassen and
                  Nicolas Charon and
                  Martin Bauer},
  title        = {Elastic Shape Analysis of Surfaces with Second-Order Sobolev Metrics:
                  {A} Comprehensive Numerical Framework},
  journal      = {Int. J. Comput. Vis.},
  volume       = {131},
  number       = {5},
  pages        = {1183--1209},
  year         = {2023},
  url          = {https://doi.org/10.1007/s11263-022-01743-0},
  doi          = {10.1007/S11263-022-01743-0},
  bibsource    = {dblp computer science bibliography, https://dblp.org}
}

@inproceedings{DBLP:conf/cvpr/HuangM99,
  author       = {Jinggang Huang and
                  David Mumford},
  title        = {Statistics of Natural Images and Models},
  booktitle    = {1999 Conference on Computer Vision and Pattern Recognition {(CVPR}
                  '99), 23-25 June 1999, Ft. Collins, CO, {USA}},
  pages        = {1541--1547},
  publisher    = {{IEEE} Computer Society},
  year         = {1999},
  url          = {https://doi.org/10.1109/CVPR.1999.786990},
  doi          = {10.1109/CVPR.1999.786990},
  bibsource    = {dblp computer science bibliography, https://dblp.org}
}

@article{JEMS_2006_8_1_a0,
     author = {Peter W. Michor and David B. Mumford},
     title = {Riemannian geometries on spaces of plane curves},
     journal = {Journal of the European Mathematical Society},
     pages = {1--48},
     publisher = {mathdoc},
     volume = {8},
     number = {1},
     year = {2006},
     doi = {10.4171/jems/37},
     url = {https://geodesic-test.mathdoc.fr/articles/10.4171/jems/37/}
}

@article{DBLP:journals/pami/KlassenSMJ03,
  author       = {Eric Klassen and
                  Anuj Srivastava and
                  Washington Mio and
                  Shantanu H. Joshi},
  title        = {Analysis of Planar Shapes Using Geodesic Paths on Shape Spaces},
  journal      = {{IEEE} Trans. Pattern Anal. Mach. Intell.},
  volume       = {26},
  number       = {3},
  pages        = {372--383},
  year         = {2004},
  url          = {https://doi.org/10.1109/TPAMI.2004.1262333},
  doi          = {10.1109/TPAMI.2004.1262333},
  bibsource    = {dblp computer science bibliography, https://dblp.org}
}

@article{DBLP:journals/pami/SrivastavaKJJ11,
  author       = {Anuj Srivastava and
                  Eric Klassen and
                  Shantanu H. Joshi and
                  Ian H. Jermyn},
  title        = {Shape Analysis of Elastic Curves in Euclidean Spaces},
  journal      = {{IEEE} Trans. Pattern Anal. Mach. Intell.},
  volume       = {33},
  number       = {7},
  pages        = {1415--1428},
  year         = {2011},
  url          = {https://doi.org/10.1109/TPAMI.2010.184},
  doi          = {10.1109/TPAMI.2010.184},
  bibsource    = {dblp computer science bibliography, https://dblp.org}
}

@inproceedings{DBLP:conf/cvpr/EisenbergerNKLN21,
  author       = {Marvin Eisenberger and
                  David Novotn{\'{y}} and
                  Gael Kerchenbaum and
                  Patrick Labatut and
                  Natalia Neverova and
                  Daniel Cremers and
                  Andrea Vedaldi},
  title        = {NeuroMorph: Unsupervised Shape Interpolation and Correspondence in
                  One Go},
  booktitle    = {{IEEE} Conference on Computer Vision and Pattern Recognition, {CVPR}
                  2021, virtual, June 19-25, 2021},
  pages        = {7473--7483},
  publisher    = {Computer Vision Foundation / {IEEE}},
  year         = {2021},
  url          = {https://openaccess.thecvf.com/content/CVPR2021/html/Eisenberger\_NeuroMorph\_Unsupervised\_Shape\_Interpolation\_and\_Correspondence\_in\_One\_Go\_CVPR\_2021\_paper.html},
  doi          = {10.1109/CVPR46437.2021.00739},
  bibsource    = {dblp computer science bibliography, https://dblp.org}
}

@article{10.1145/1276377.1276457,
author = {Kilian, Martin and Mitra, Niloy J. and Pottmann, Helmut},
title = {Geometric modeling in shape space},
year = {2007},
issue_date = {July 2007},
publisher = {Association for Computing Machinery},
address = {New York, NY, USA},
volume = {26},
number = {3},
issn = {0730-0301},
url = {https://doi.org/10.1145/1276377.1276457},
doi = {10.1145/1276377.1276457},
journal = {ACM Trans. Graph.},
month = jul,
pages = {64–es},
numpages = {8}
}

@article{DBLP:journals/ijcv/KassWT88,
  author       = {Michael Kass and
                  Andrew P. Witkin and
                  Demetri Terzopoulos},
  title        = {Snakes: Active contour models},
  journal      = {Int. J. Comput. Vis.},
  volume       = {1},
  number       = {4},
  pages        = {321--331},
  year         = {1988},
  url          = {https://doi.org/10.1007/BF00133570},
  doi          = {10.1007/BF00133570},
  bibsource    = {dblp computer science bibliography, https://dblp.org}
}

@inproceedings{Zuffi:CVPR:2017,
        title = {{3D} Menagerie: Modeling the {3D} Shape and Pose of Animals},
        author = {Zuffi, Silvia and Kanazawa, Angjoo and Jacobs, David and Black, Michael J.},
        booktitle = {IEEE Conf. on Computer Vision and Pattern Recognition (CVPR)},
        month = jul,
        year = {2017},
        month_numeric = {7}
      }

@article{arun1987least,
  title={Least-Squares Fitting of Two 3-D Point Sets},
  author={Arun, K. S. and Huang, T. S. and Blostein, S. D.},
  journal={IEEE Transactions on Pattern Analysis and Machine Intelligence},
  volume={9},
  number={1},
  pages={67--71},
  year={1987},
  publisher={IEEE}
}

@inproceedings{DBLP:conf/nips/KarrasAAL22,
  author       = {Tero Karras and
                  Miika Aittala and
                  Timo Aila and
                  Samuli Laine},
  editor       = {Sanmi Koyejo and
                  S. Mohamed and
                  A. Agarwal and
                  Danielle Belgrave and
                  K. Cho and
                  A. Oh},
  title        = {Elucidating the Design Space of Diffusion-Based Generative Models},
  booktitle    = {Advances in Neural Information Processing Systems 35: Annual Conference
                  on Neural Information Processing Systems 2022, NeurIPS 2022, New Orleans,
                  LA, USA, November 28 - December 9, 2022},
  year         = {2022},
  url          = {http://papers.nips.cc/paper\_files/paper/2022/hash/a98846e9d9cc01cfb87eb694d946ce6b-Abstract-Conference.html},
  bibsource    = {dblp computer science bibliography, https://dblp.org}
}

@article{10.1145/3618371,
author = {Yang, Haitao and Sun, Bo and Chen, Liyan and Pavel, Amy and Huang, Qixing},
title = {GeoLatent: A Geometric Approach to Latent Space Design for Deformable Shape Generators},
year = {2023},
issue_date = {December 2023},
publisher = {Association for Computing Machinery},
address = {New York, NY, USA},
volume = {42},
number = {6},
issn = {0730-0301},
url = {https://doi.org/10.1145/3618371},
doi = {10.1145/3618371},
journal = {ACM Trans. Graph.},
month = dec,
articleno = {242},
numpages = {20}
}

@InProceedings{Huang_2021_ICCV,
    author    = {Huang, Qixing and Huang, Xiangru and Sun, Bo and Zhang, Zaiwei and Jiang, Junfeng and Bajaj, Chandrajit},
    title     = {ARAPReg: An As-Rigid-As Possible Regularization Loss for Learning Deformable Shape Generators},
    booktitle = {Proceedings of the IEEE/CVF International Conference on Computer Vision (ICCV)},
    month     = {October},
    year      = {2021},
    pages     = {5815-5825}
}

@inproceedings{conf/eccv/ZhouBP20,
  author = {Zhou, Keyang and Bhatnagar, Bharat Lal and Pons-Moll, Gerard},
  booktitle = {ECCV (22)},
  editor = {Vedaldi, Andrea and Bischof, Horst and Brox, Thomas and Frahm, Jan-Michael},
  ee = {https://doi.org/10.1007/978-3-030-58542-6_21},
  isbn = {978-3-030-58542-6},
  pages = {341-357},
  publisher = {Springer},
  series = {Lecture Notes in Computer Science},
  title = {Unsupervised Shape and Pose Disentanglement for 3D Meshes.},
  url = {http://dblp.uni-trier.de/db/conf/eccv/eccv2020-22.html#ZhouBP20},
  volume = 12367,
  year = 2020
}

@inproceedings{DBLP:conf/nips/VahdatKK21,
  author       = {Arash Vahdat and
                  Karsten Kreis and
                  Jan Kautz},
  editor       = {Marc'Aurelio Ranzato and
                  Alina Beygelzimer and
                  Yann N. Dauphin and
                  Percy Liang and
                  Jennifer Wortman Vaughan},
  title        = {Score-based Generative Modeling in Latent Space},
  booktitle    = {Advances in Neural Information Processing Systems 34: Annual Conference
                  on Neural Information Processing Systems 2021, NeurIPS 2021, December
                  6-14, 2021, virtual},
  pages        = {11287--11302},
  year         = {2021},
  url          = {https://proceedings.neurips.cc/paper/2021/hash/5dca4c6b9e244d24a30b4c45601d9720-Abstract.html},
  bibsource    = {dblp computer science bibliography, https://dblp.org}
}

@inproceedings{DBLP:conf/iclr/0005HSBH24,
  author       = {Haitao Yang and
                  Xiangru Huang and
                  Bo Sun and
                  Chandrajit L. Bajaj and
                  Qixing Huang},
  title        = {GenCorres: Consistent Shape Matching via Coupled Implicit-Explicit
                  Shape Generative Models},
  booktitle    = {The Twelfth International Conference on Learning Representations,
                  {ICLR} 2024, Vienna, Austria, May 7-11, 2024},
  publisher    = {OpenReview.net},
  year         = {2024},
  url          = {https://openreview.net/forum?id=dGH4kHFKFj},
  bibsource    = {dblp computer science bibliography, https://dblp.org}
}

@inproceedings{lu2025dposer,
  title={DPoser-X: Diffusion Model as Robust 3D Whole-body Human Pose Prior},
  author={Lu, Junzhe and Lin, Jing and Dou, Hongkun and Zeng, Ailing and Deng, Yue and Liu, Xian and Cai, Zhongang and Yang, Lei and Zhang, Yulun and Wang, Haoqian and others},
  booktitle={Proceedings of the IEEE/CVF International Conference on Computer Vision},
  pages={9988--9997},
  year={2025}
}

@inproceedings{ho2025phd,
  title={PHD: Personalized 3D Human Body Fitting with Point Diffusion},
  author={Ho, Hsuan-I and Guo, Chen and Wu, Po-Chen and Shugurov, Ivan and Tang, Chengcheng and Mittal, Abhay and An, Sizhe and Kaufmann, Manuel and Zhang, Linguang},
  booktitle={Proceedings of the IEEE/CVF International Conference on Computer Vision},
  pages={7526--7537},
  year={2025}
}

\newpage
\appendix
\section{Normalized ARAP Reg}

Our goal is to compute the derivative of 
$$
r^{\theta}(\bs{z}) = \textup{Tr}\big(E^{\theta}(z)^{-\frac{1}{2}}H^{\theta}(\bs{z})E^{\theta}(z)^{-\frac{1}{2}}\big) = \textup{Tr}\big(H^{\theta}(\bs{z})E^{\theta}(z)^{-1}\big)
$$
where
$$
E^{\theta}(\bs{z}) = {J^{\theta}(\bs{z})}^TJ^{\theta}(\bs{z}), \quad H^{\theta}(\bs{z}) = {J^{\theta}(\bs{z})}^T\overline{H}^{\theta}(\bs{z}) J^{\theta}(\bs{z})
$$
where
$$
J^{\theta}(\bs{z}) = \frac{\partial \bs{g}^{\theta}(\bs{z})}{\partial \bs{z}}.
$$
Using the chain rule, we have
\begin{align}
\frac{\partial r^{\theta}(\bs{z})}{\partial \theta} & = \textup{Tr}\big(E^{\theta}(z)^{-\frac{1}{2}}\frac{\partial H^{\theta}(\bs{z})}{\partial \theta}E^{\theta}(z)^{-\frac{1}{2}}\big) \nonumber \\
& - \textup{Tr}\big({H^{\theta}(\bs{z})}^{\frac{1}{2}}E^{\theta}(z)^{-1}\frac{\partial E^{\theta}(\bs{z})}{\partial \theta}E^{\theta}(z)^{-1}{H^{\theta}(\bs{z})}^{\frac{1}{2}}\big)
\end{align}
Let eigenvalue decomposition of $E^{\theta}$ and singular value decomposition of ${H^{\theta}(\bs{z})}^{\frac{1}{2}}E^{\theta}(z)^{-1}$ be
$$
E^{\theta}(\bs{z}) = {U^{\theta}(\bs{z})}\Lambda^{\theta}(\bs{z}){U^{\theta}(\bs{z})}^T
$$
and
$$
{H^{\theta}(\bs{z})}^{\frac{1}{2}}E^{\theta}(z)^{-1} = {V^{\theta}(\bs{z})}\Sigma^{\theta}(\bs{z}){W^{\theta}(\bs{z})}^T.
$$
We have
\begin{align*}
& \textup{Tr}\big(E^{\theta}(z)^{-\frac{1}{2}}\frac{\partial H^{\theta}(\bs{z})}{\partial \theta}E^{\theta}(z)^{-\frac{1}{2}}\big) \\
= & \frac{2}{\epsilon^2}\sum\limits_{i=1}^d \frac{1}{\lambda_i^{\theta}(\bs{z})}\big(\frac{\partial \bs{g}^{\theta}(\bs{z}+\epsilon\bs{u}_i^{\theta}(\bs{z}))}{\partial \theta}-\frac{\partial \bs{g}^{\theta}(\bs{z})}{\partial \theta}\big)^T \overline{H}^{\theta}(\bs{z})\\
& \qquad \qquad \big(\bs{g}^{\theta}(\bs{z}+\epsilon\bs{u}_i^{\theta}(\bs{z}))-\bs{g}^{\theta}(\bs{z})\big) \\
+ & \frac{1}{\epsilon^2}\sum\limits_{i=1}^d \frac{1}{\lambda_i^{\theta}(\bs{z})}\big(\bs{g}^{\theta}(\bs{z}+\epsilon\bs{u}_i^{\theta}(\bs{z}))-\bs{g}^{\theta}(\bs{z})\big)^T \frac{\partial \overline{H}^{\theta}(\bs{z})}{\partial \theta}\\
& \qquad \qquad \big(\bs{g}^{\theta}(\bs{z}+\epsilon\bs{u}_i^{\theta}(\bs{z}))-\bs{g}^{\theta}(\bs{z})\big)
\end{align*}
Similarly, we have
\begin{align*}
& \textup{Tr}\big({H^{\theta}(\bs{z})}^{\frac{1}{2}}E^{\theta}(z)^{-1}\frac{\partial E^{\theta}(\bs{z})}{\partial \theta}E^{\theta}(z)^{-1}{H^{\theta}(\bs{z})}^{\frac{1}{2}}\big) \\
= & \frac{2}{\epsilon^2}\sum\limits_{i=1}^d {\sigma_i^{\theta}(\bs{z})}^2\big(\frac{\partial \bs{g}^{\theta}(\bs{z}+\epsilon\bs{w}_i^{\theta}(\bs{z}))}{\partial \theta}-\frac{\partial \bs{g}^{\theta}(\bs{z})}{\partial \theta }\big)^T \\
& \qquad \qquad \big(\bs{g}^{\theta}(\bs{z}+\epsilon\bs{w}_i^{\theta}(\bs{z}))-\bs{g}^{\theta}(\bs{z})\big).
\end{align*}

\section{Introduction}

Deformation energy between two meshes of the same topology. Suppose the vertex vectors are $\bs{p}\in \R^{3n}$ and $\bs{q}\in \R^{3n}$ where the $i$-th vertices are given by $\bs{p}_i$ and $\bs{q}_i$. Let $\set{N}(i)$ collect 1-ring neighbors of the $i$-th vertex.
We measure the generalized deformation energy between them as 
\begin{align}
f(\bs{p},\bs{q}) = & \sum\limits_{i=1}^n \Big(\lambda^{\textup{rigid}}f_i^{\textup{rigid}}(\bs{p},\bs{q}) +\lambda^{\textup{sim}}f_i^{\textup{sim}}(\bs{p},\bs{q})\nonumber \\
& +\lambda^{\textup{aff}}f_i^{\textup{aff}}(\bs{p},\bs{q})\Big)
\end{align}
Here
$$
f_i^{\textup{rigid}}(\bs{p},\bs{q}) =\min\limits_{R\in \textup{SO}(3)} \sum\limits_{j\in \set{N}(i)}\|R(\bs{p}_i-\bs{p}_j)-(\bs{q}_i-\bs{q}_j)\|^2.
$$
Denote $P_i = (\bs{p}_i-\bs{p}_j) \in \R^{3\times |\set{N}(i)|}$ and $Q_i = (\bs{q}_i-\bs{q}_j)\in \R^{3\times |\set{N}(i)|}$. Then
\begin{align}
f_i^{\textup{rigid}}(\bs{p},\bs{q}) & =\min\limits_{R\in\textup{SO}(3)}\|RP_i-Q_i\|_{\set{F}}^2 \nonumber \\
& = \|P_i\|_{\set{F}}^2 + \|Q_i\|^2_{\set{F}}-2\max\limits_{R\in\textup{SO}(3)}\langle RP_i, Q_I \rangle \nonumber \\
& = \|P_i\|_{\set{F}}^2 + \|Q_i\|^2_{\set{F}}-2\max\limits_{R\in\textup{SO}(3)}\textup{Trace}(RPQ^T).
\end{align}
Denote $S = PQ^T$. Let its singular-value decomposition (SVD) be $S = U\Sigma V^T$ where the singular values in the diagonal of $\Sigma$ are ordered in the decreasing manner. The optimal $R$ that maximizes $\textup{Trace}(RS)$ is given by $R = V\diag(1,1,\textup{sign}(\textup{det}(S)))U^T$. Note that $\textup{sign}(\textup{det}(S))$ enforces that $R\in \textup{SO}(3)$.

Similarly, 
$$
f_i^{\textup{sim}}(\bs{p},\bs{q}) =\min\limits_{s, R\in \textup{SO}(3)} \|SRP_i-Q_i\|_{\set{F}}^2.
$$
It is easy to see that the optimal rotation can be derived from SVD of $S = PQ^T$ in the same way as above. Given the optimal $R^{\star}$, the optimal scale is 
$$
s^{\star} = \frac{\langle R^{\star}P_i, Q_i\rangle}{\|P_i\|_{\set{F}}^2}.
$$

Finally, 
$$
f_i^{\textup{sim}}(\bs{p},\bs{q}) =\min\limits_{A} \|AP_i-Q_i\|_{\set{F}}^2
$$
and the optimal 
$$
A^{\star} = (Q_iP_i^T)(P_iP_i^T)^{-1}.
$$

\begin{itemize}
\item For each synthetic shape, obtain top-$K$ closest training shapes using $L^2$-distance between vertex positions. 
\item For each closest shape, calculate the deformation energy using $f(\bs{p},\bs{q})$ with hyper-parameters $\lambda^{\textup{sim}}$ and $\lambda^{\textup{aff}}$.
\item Sort the distances and use the $k$-th smallest value as the score.
\item There are three critical hyper-parameters above, i.e., $\lambda^{\textup{sim}}$, $\lambda^{\textup{aff}}$, and $k$. $K$ is less sensitive. 
\end{itemize}
\begin{figure*}
\begin{overpic}[width=1.0\textwidth]{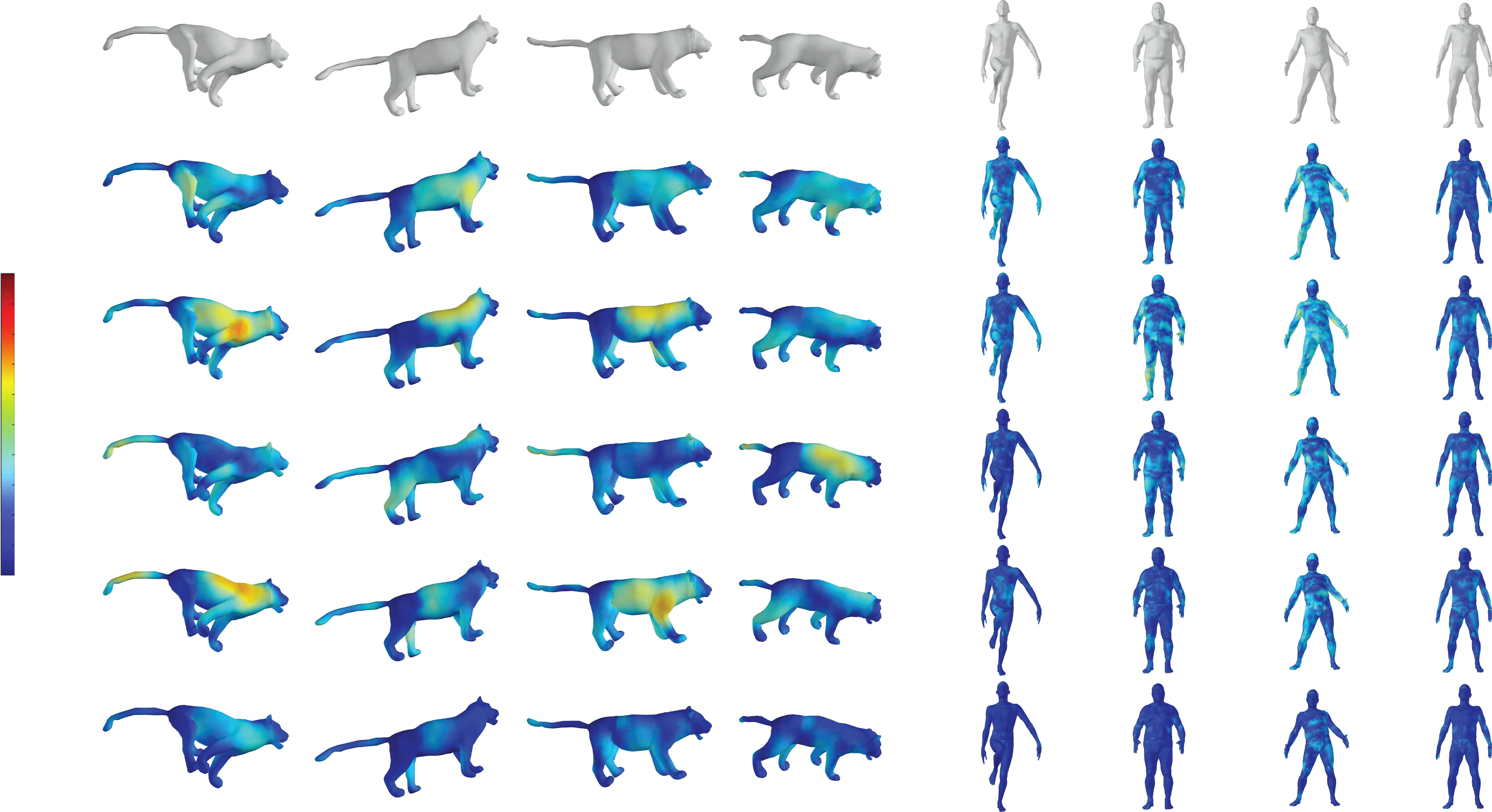}
\put(3, 0){\rotatebox{90}{\small{ARAPDiff.}}}
\put(3, 11){\rotatebox{90}{\small{BRESA}}}
\put(3, 19.5){\rotatebox{90}{\small{GeoLatent}}}
\put(3, 30){\rotatebox{90}{\small{FrameA.}}}
\put(3, 38.5){\rotatebox{90}{\small{ARAPReg}}}
\put(3, 48.5){\rotatebox{90}{\small{Test}}}
\end{overpic}
\caption{Color-coded reconstruction errors on test samples. We compare ARAPDiffusion with top four baseline approaches.}
\label{Fig:Qualitative:Results}
\vspace{-0.2in}
\end{figure*}
\section{Details in Preliminaries}

\subsection{Details of ARAPReg}
\label{App:ARAPReg:Details}

The sparse matrix $\overline{H}(\bs{m}^{\phi}(\bs{z}))$ is based on an ARAP deformation model that measures the deformation between a mesh whose vertices are given by $\bs{p}\in \R^{3n}$ and the same mesh topology whose vertices are given by $\bs{p}+\bs{d}\in \R^{3n}$. Let $\set{N}(i)$ collect the adjacent vertices of $i$. The deformation model is given by
$$
e(\bs{p},\bs{d}) = \sum\limits_{i=1}^{n}\min\limits_{\bs{c}_i}\sum\limits_{j\in \set{N}(i)}\|\bs{c}_i\times (\bs{p}_i-\bs{p}_j)-(\bs{d}_i-\bs{d}_j)\|^2.
$$
Introduce $L\in \R^{n\times n}$, $B(\bs{p})\in \R^{3n\times 3n}$, and $C(\bs{p})\in \R^{3n\times 3n}$:
\begin{align*}
L_{ij} = \left\{
\begin{array}{cc}
|\set{N}(i)| & i = j \\
-1 & j\in \set{N}(i) \\
0 & \textup{otherwise}
\end{array}
\right.\, \quad B_{ij} = \left\{
\begin{array}{cc}
\sum\limits_{j\in \set{N}(i)}(\bs{p}_i-\bs{p}_j)\times & i = j \\
-(\bs{p}_i-\bs{p}_j)\times& j\in \set{N}(i)\\
0 & \textup{otherwise}
\end{array}
\right.\, \\ C_{ij} = \left\{
\begin{array}{cc}
\sum\limits_{j\in \set{N}(i)}((\bs{p}_i-\bs{p}_j)\times )((\bs{p}_i-\bs{p}_j)\times )^T& i = j \\
0& \textup{otherwise}
\end{array}
\right.\
\end{align*}

Let $\bs{c}\in \R^{3n}$ collect $\bs{c}_i$ in its corresponding blocks. We have
\begin{align*}
e(\bs{p},\bs{d}) &= \min\limits_{\bs{c}} \Big(\bs{d}^T(L\otimes I_3)\bs{d} +2\bs{d}^2 B(\bs{p})\bs{c} + \bs{c}^TC(\bs{p})\bs{c}\Big) \\
& = \bs{d}^T\Big(L\otimes I_3 - B(\bs{p}){C(\bs{p})}^{-1}{B(\bs{p})}^T\Big)\bs{d}.
\end{align*}
In other words
$$
\overline{H}(\bs{m}^{\phi}(\bs{z})) = L\otimes I_3 - B(\bs{m}^{\phi}(\bs{z})){C(\bs{m}^{\phi}(\bs{z}))}^{-1}{B(\bs{m}^{\phi}(\bs{z}))}^T
$$

\subsection{Details of GenCorres}
\label{App:GenCorres:Details}

Consider a point $\bs{m}_i^{\phi}$ on the implicit surface $g^{\phi}(\bs{x},\bs{z}) = 0$. Let $\epsilon \bs{d}$ be its perturbation so that $\bs{m}_i^{\phi}+\epsilon \bs{d}$ lies on $g^{\phi}(\bs{x},\bs{z}+\epsilon\bs{v}) = 0$. Applying the first-order Taylor expansion of $g^{\phi}(\bs{x},\bs{z})$, we have
\begin{equation}
\frac{\partial g^{\phi}}{\partial \bs{x}}(\bs{m}_i^{\phi}(\bs{z}),\bs{z})\cdot \bs{d} + \frac{\partial g^{\phi}}{\partial \bs{z}}(\bs{m}_i^{\phi}(\bs{z}),\bs{z})\bs{v} = 0.
\label{Eq:Imp:Linear:Cons}
\end{equation}
Eq.~(\ref{Eq:Implicit:Constraint}) in the main paper is the matrix representation of Eq.~(\ref{Eq:Imp:Linear:Cons}).

\section{Details of Experiments}
\label{Sec:Experimental:Details}

\subsection{Hyper Parameters of ARAPDiffusion}

The initialization step follows the standard protocol of ARAPReg and GenCorres to train the AE, which go through 200 epochs, where each epoch goes through a training shape once. The latent diffusion model is trained with 200 epochs as well.

We alternate between AE optimization and diffusion model optimization for 10 alternations. See Fig.~\ref{fig:overall_convergence} for illustration of convergence. In each alternating iteration, we train 50 epochs for fine-tuning the AE, and 100 epochs for fine-tuning the diffusion model. 

When training the AE with regularization, we sample $p_{\sync}$ with 150 samples in each iteration. Each epoch picks a subset of 12800 samples for training. When training the latent diffusion model with regularization,  we use $4N$ samples $p_{\sync}$ to train $\D^{\theta}$ in each epoch.

\subsection{Parametric Fitting of SMPL and SMAL Models}

For simplicity, we only describe the procedure of fitting the SMPL model described as $\bs{p}(\theta, \beta)$ to a 3D shape, where $\theta$ and $\beta$ are shape and pose shape codes. 

\subsubsection{Mesh Fitting}

Suppose $S$ has the same mesh as the SMPL model where mesh vertices are given by $\bs{p}^{0}$. For reconstruction, our goal is to solve for $\theta$ and $\beta$ to optimize
$$
\min\limits_{\theta, \beta} \|\bs{p}^{0}-\bs{p}(\theta, \beta)\|^2 + \lambda \Big(\|\theta-\theta_{m}\|^2 + \|\beta-\beta_{m}\|^2\Big).
$$
where $\theta_{m}$ and $\beta_m$ are the codes of the closest training shape to $\bs{p}^{0}$. The value of $\lambda$ is set so that the fitting term and the regularization term have the same values. Optimization employs the Gauss-Newton method, starting from $\theta_m$ and $\beta_m$. 

\subsubsection{Implicit Surface Fitting}

For an implicit surface defined by $g(\bs{p})$, we solve for 
$$
\min\limits_{\theta, \beta} \sum\limits_{i=1}^n f^2(\bs{p}_i(\theta, \beta)) + \lambda \Big(\|\theta-\theta_{m}\|^2 + \|\beta-\beta_{m}\|^2\Big).
$$
where $\theta_{m}$ and $\beta_m$ are the codes of the closest training shape to the mesh discretiziation of $g(\bs{p})$ with respect to the Chamfer distance. We again set value of $\lambda$ so that the fitting term and the regularization term have the same values. Optimization employs the Gauss-Newton method, starting from $\theta_m$ and $\beta_m$. 

\subsection{Running Time}

All experiments are conducted on a workstation equipped with a single NVIDIA H200 GPU (140GB memory). The computational cost varies depending on the shape representation and dataset complexity. For explicit representations, the pre-training stage (Stage I) takes approximately 5 hours for SMAL and 10 hours for SMPL. The subsequent alternating optimization (Stage II and III) requires around 3 hours for SMAL and 6 hours for SMPL. Implicit representations are more computationally demanding; the pre-training stage takes approximately 20 hours for SMAL and 50 hours for SMPL, while the optimization stages require about 10 hours and 20 hours, respectively.

\section{Convergence of Alternating Optimization}

In this section, we plot the alternating optimization procedure of ARAPDiffusion with respect to two metrics, i.e., the Wasserstein distance $d_W$ and the SMPL/SMAL fitting error $e_{t}$, which are critical for assessing the quality of the generated samples. 

\begin{figure*}[t]
    \centering
    % 子图 (a)
    \begin{subfigure}{0.48\textwidth}
        \centering
        \includegraphics[width=0.49\linewidth]{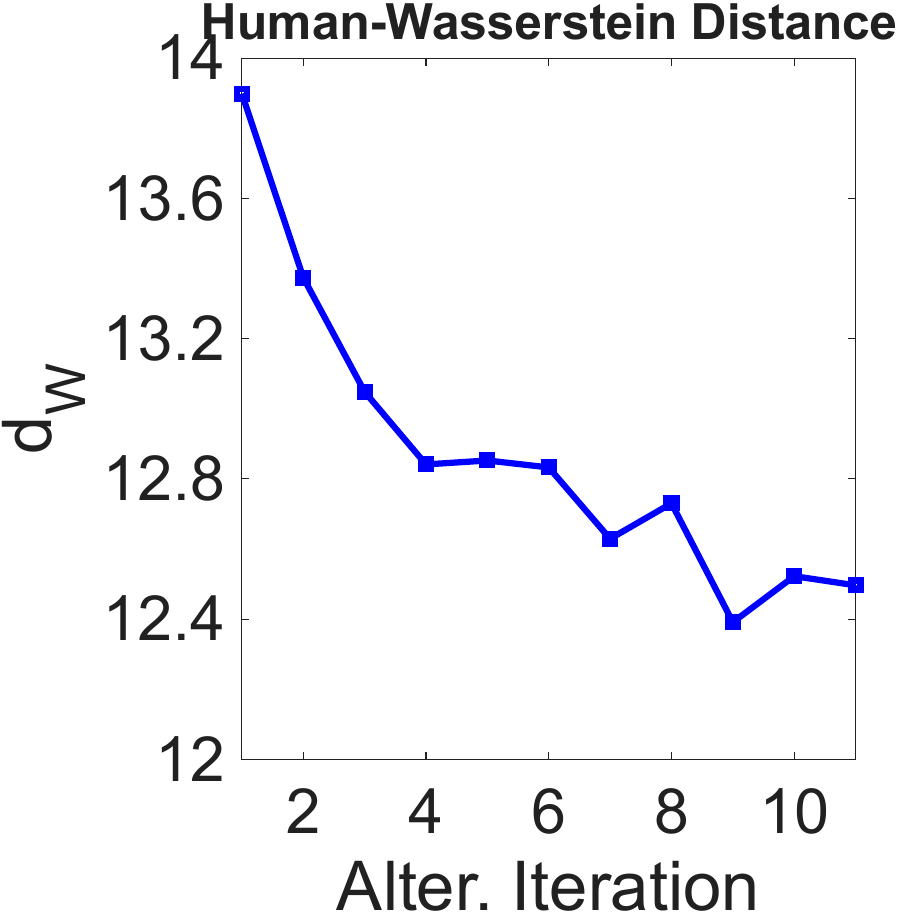}
        \includegraphics[width=0.49\linewidth]{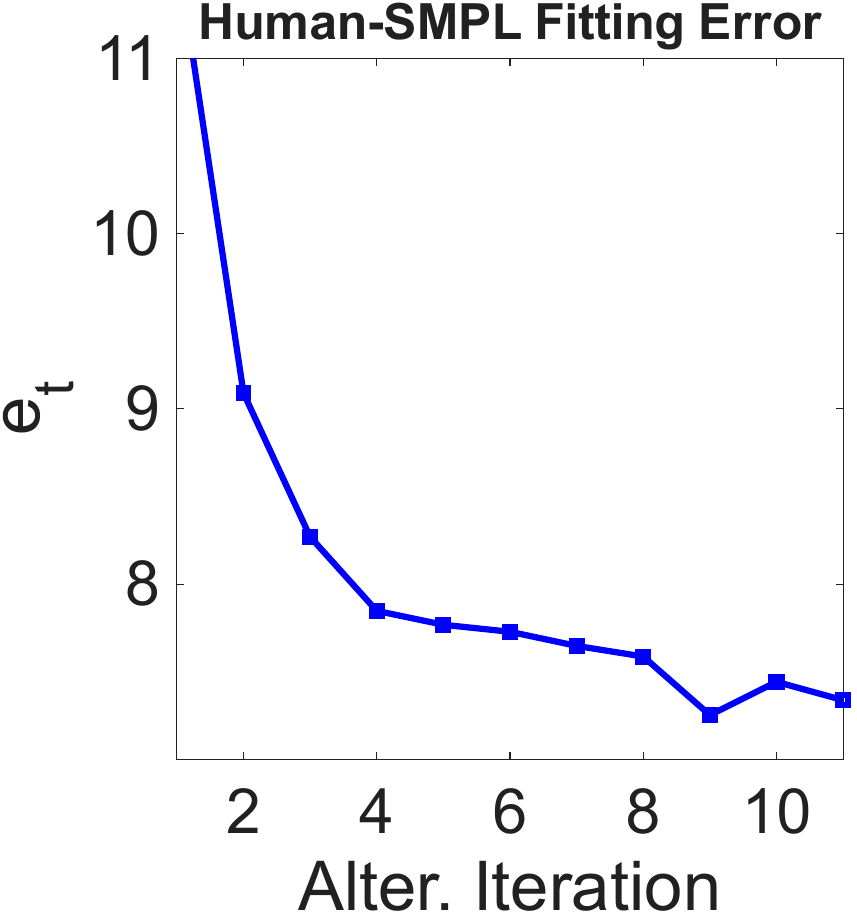}
        \caption{Human (Explicit Mesh)}
        \label{Figure:Human:Mesh}
    \end{subfigure}
    \hfill
    % 子图 (b)
    \begin{subfigure}{0.48\textwidth}
        \centering
        \includegraphics[width=0.49\linewidth]{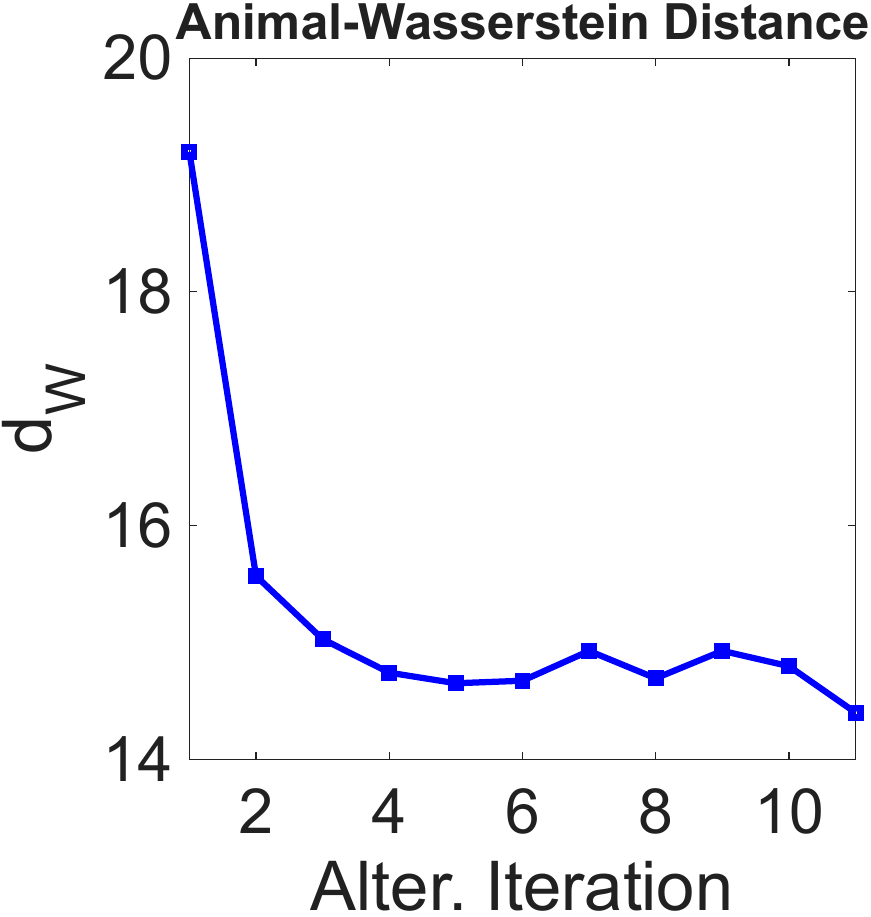}
        \includegraphics[width=0.49\linewidth]{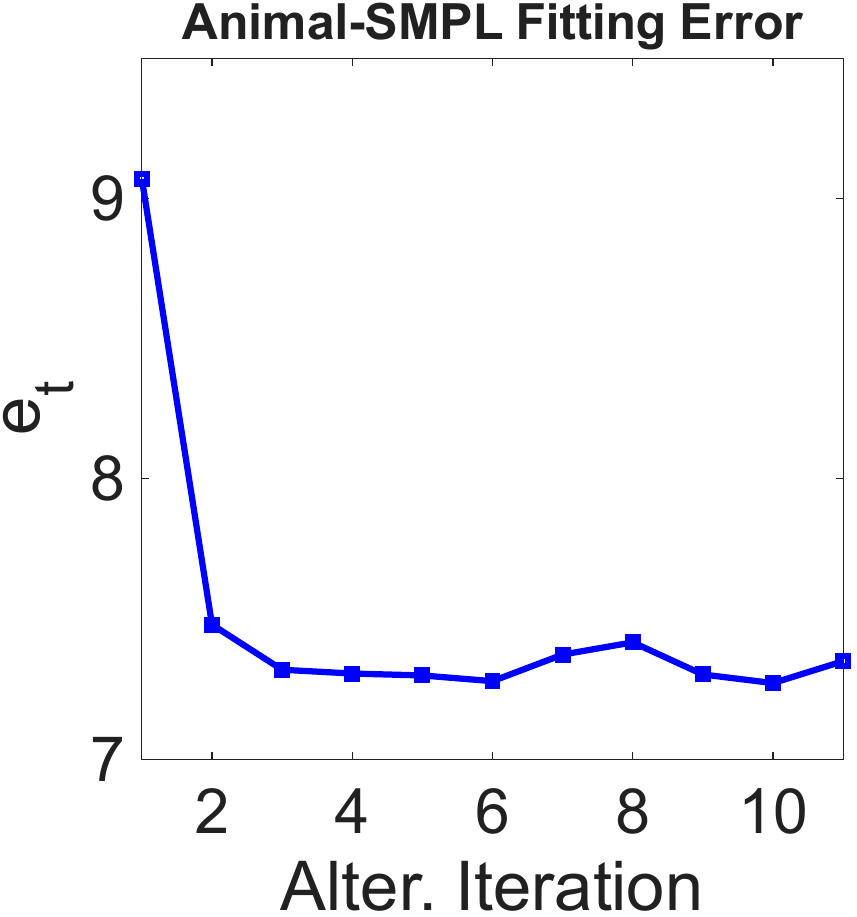}
        \caption{Animal (Explicit Mesh)}
        \label{Figure:Animal:Mesh2}
    \end{subfigure}

    \vspace{1em} % 两行之间的间距

    % 子图 (c)
    \begin{subfigure}{0.48\textwidth}
        \centering
        \includegraphics[width=0.49\linewidth]{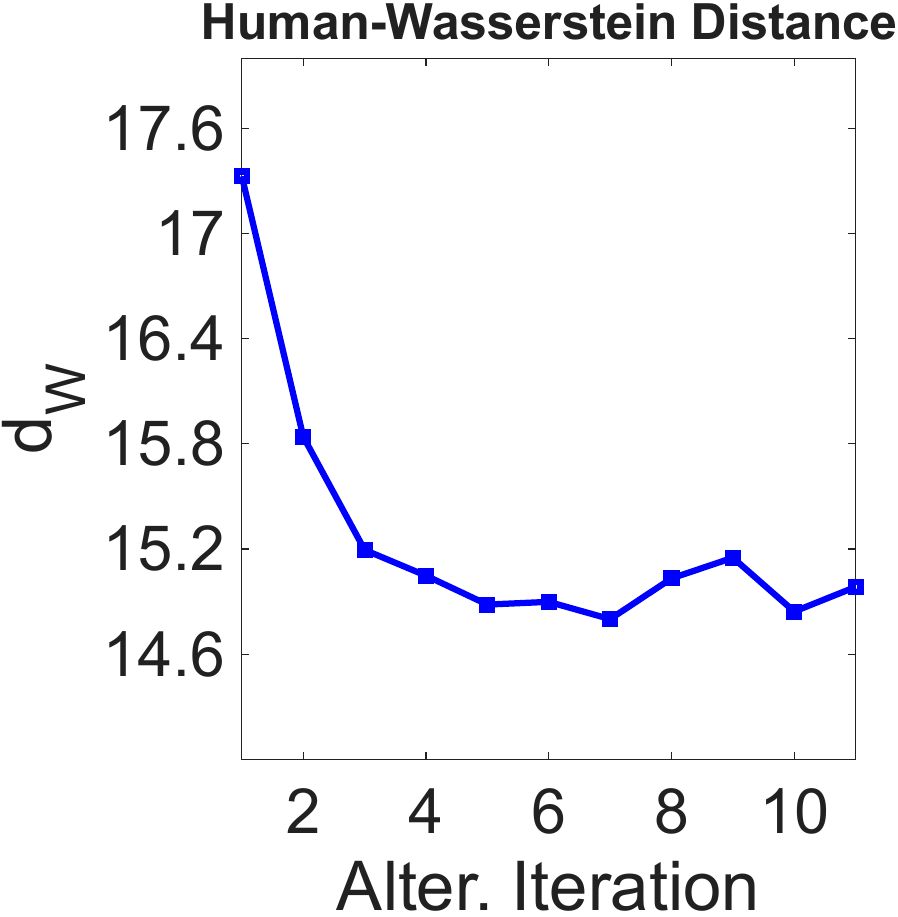}
        \includegraphics[width=0.49\linewidth]{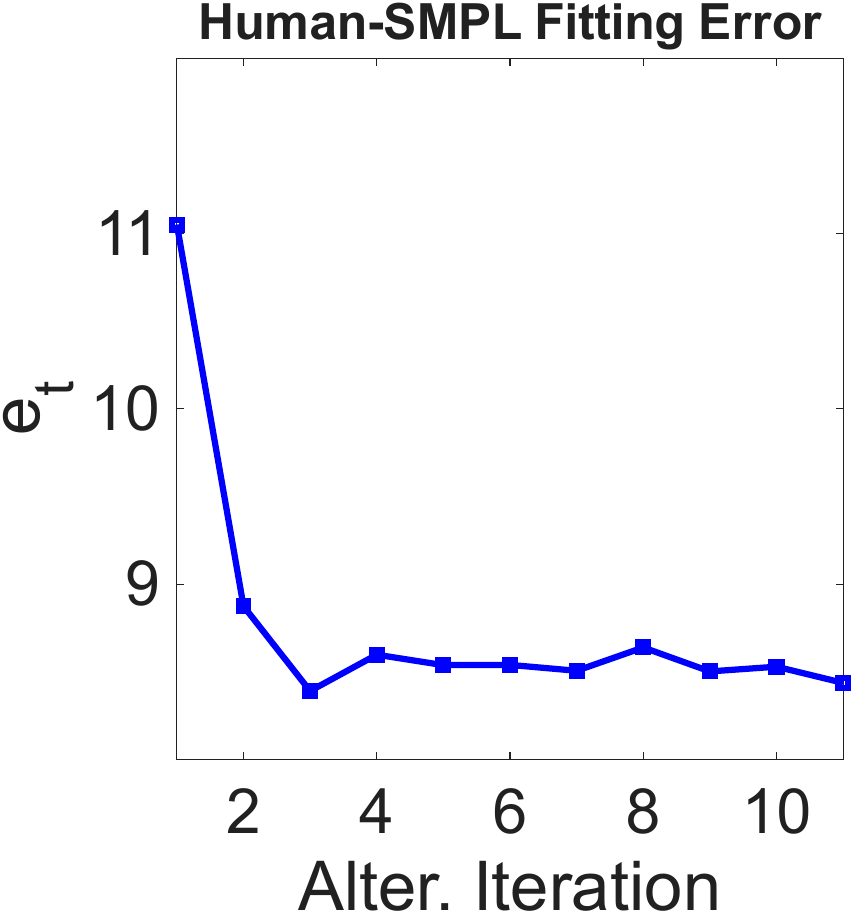}
        \caption{Human (Implicit Surface)}
        \label{Figure:Animal:Mesh3}
    \end{subfigure}
    \hfill
    % 子图 (d)
    \begin{subfigure}{0.48\textwidth}
        \centering
        \includegraphics[width=0.49\linewidth]{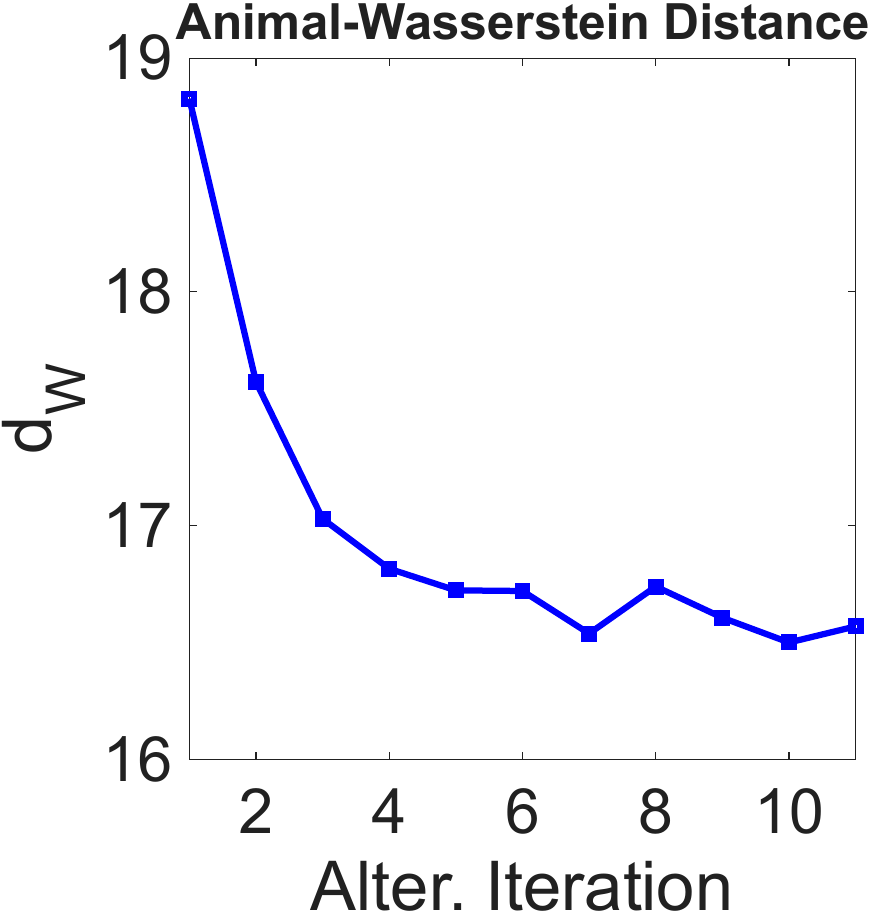}
        \includegraphics[width=0.49\linewidth]{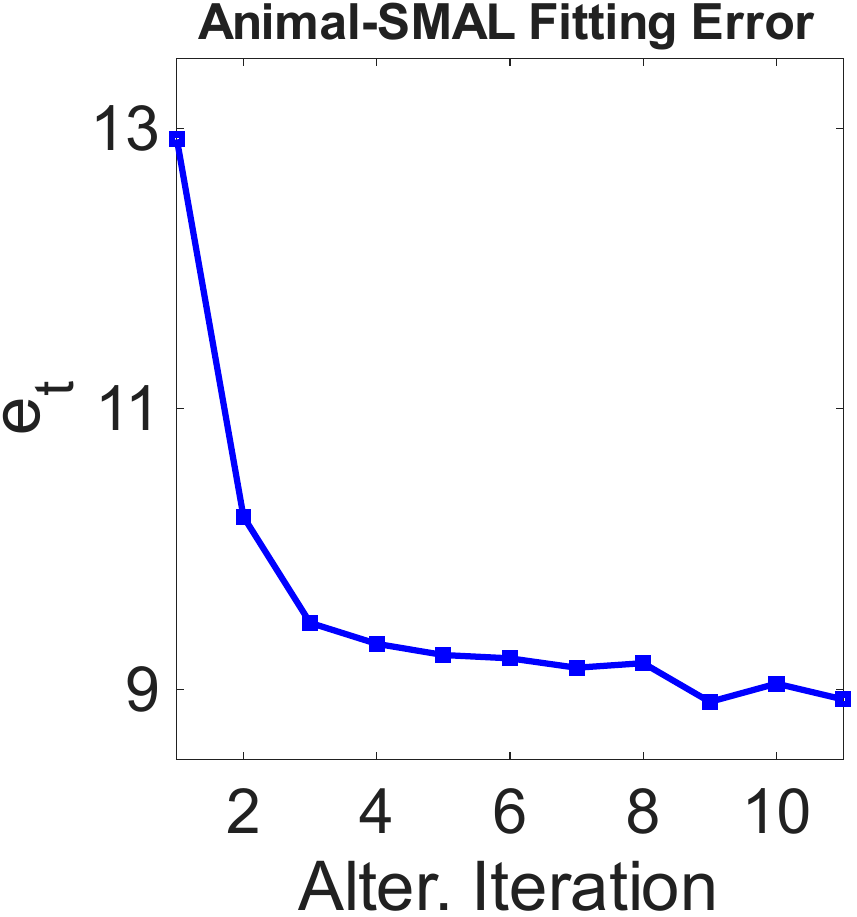}
        \caption{Animal (Implicit Surface)}
        \label{Figure:Animal:Mesh4}
    \end{subfigure}
    
    \caption{Convergence of alternating optimization of ARAPDiffusion under different settings: (a) and (b) show the explicit mesh generator for Human and Animal; (c) and (d) show the implicit surface generator for Human and Animal.}
    \label{fig:overall_convergence}
\end{figure*}

There are two observations from Fig.~\ref{Figure:Human:Mesh} to Fig.~\ref{Figure:Animal:Mesh4}. First, roughly speaking, the alternating optimization procedure converges after two iterations. During the remaining iterations, the results fluctuate a bit due to stochasticity of deep learning training.

\section{Theoretical Analysis of Convergence}

ARAPDiffusion alternates between updating the encoder--decoder $(\psi, \phi)$ and the diffusion parameters $\theta$. Unlike standard Block Coordinate Descent, our stages are coupled: $\psi$ reshapes the data distribution $p_{\text{data}}(\psi)$ that $\theta$ must fit, while $\theta$ defines the synthesized distribution $p_{\text{sync}}(\theta)$ used to regularize $\phi$. This mutual dependence creates non-stationary targets, making traditional convergence proofs inapplicable. We therefore model the process as a coupled dynamical system and prove its stability via a Lyapunov-like functional that tracks the joint evolution of the latent manifold and the generative model.

\subsection{System Dynamics and Lyapunov Stability}

We denote $\mathcal{P}$ as the space of probability measures in the latent space $\mathcal{Z}$. The system state at iteration $k$ is represented by two distributions $(\mu_k, \nu_k) \in \mathcal{P} \times \mathcal{P}$:
\begin{itemize}
    \item $\mu_k = p_{\text{data}}(\psi^k)$ is the empirical distribution of latent codes produced by the current encoder $e_{\psi^k}$.
    \item $\nu_k = p_{\text{sync}}(\theta^k)$ is the distribution synthesized by the current diffusion model $D_{\theta^k}$.
\end{itemize}

To monitor the system, we define the Lyapunov functional $V(\mu, \nu)$ as:
\begin{equation}
    V(\mu, \nu) = \mathcal{W}_2^2(\mu, \nu) + \lambda \mathcal{R}_{\text{arap}}(\phi) + \eta \mathcal{R}_{\text{deform}}(\theta; w),
\end{equation}
where $\mathcal{W}_2$ is the Wasserstein-2 distance, defined as:
\begin{equation}
    \mathcal{W}_2(\mu, \nu) = \left( \inf_{\gamma \in \Gamma(\mu, \nu)} \int_{\mathcal{Z} \times \mathcal{Z}} \|z_1 - z_2\|^2 d\gamma(z_1, z_2) \right)^{1/2}.
\end{equation}
Here, $\Gamma(\mu, \nu)$ is the set of all couplings between $\mu$ and $\nu$. The term $\mathcal{R}_{\text{deform}}$ represents the deformation-weighted regularization in Stage III, which penalizes the diffusion model for generating samples in regions with high deformation scores (low $w(z)$).

\subsection{Convergence via Manifold Anchoring and Enhancement}

Our analysis models the alternating optimization as a discrete dynamical system of coupled probability measures. Given the high-dimensional and non-convex nature of neural networks, we establish convergence based on two principled assumptions derived from the optimal transport theory and the geometric properties of our loss function. 

\begin{assumption}[Stage II: Geometric Anchoring]
\label{assum:stage2}
In Stage II, the joint update of $(\psi, \phi)$ minimizes the reconstruction error while keeping the decoder compliant with the current synthesis $\nu_k$. Because the encoder $e_\psi$ is optimized alongside a decoder constrained by ARAP energy, it is incentivized to map shapes into latent regions that are geometrically consistent. This ensures 
\begin{equation}
\mathcal{W}_2^2(\mu_{k+1}, \nu_k) \leq \mathcal{W}_2^2(\mu_k, \nu_k),
\end{equation}
preventing the data distribution from drifting into low-quality regions.
\end{assumption}

\begin{remark}
    This property is theoretically supported by interpreting the Stage II update as a proximal mapping in the Wasserstein space. Under the Jordan-Kinderlehrer-Otto (JKO) framework, minimizing a functional that combines a distance metric with a potential energy( In our case, the ARAP-regularized loss) is equivalent to a gradient flow in the space of probability measures. Since the decoder $g_\phi$ is regularized to maintain geometric integrity specifically over the support of $p_{\text{sync}}$, the encoder $e^\psi$ is driven to map data into these regularized regions to minimize reconstruction error. Through this coupled optimization, the system effectively performs a displacement descent towards the synthesized manifold. This mapping is non-expansive in the $\mathcal{W}_2$ metric, ensuring that the latent data distribution $\mu$ is pulled toward the regions validated by the current generative prior rather than drifting into unoptimized latent areas. 
\end{remark}

\begin{assumption}[Stage III: Weighted Distributional Contraction]
\label{assum:stage3}
In Stage III, the update of $\theta$ minimizes a weighted EDM loss. The weighting function $w(z)$ acts as a "quality filter." Unlike standard diffusion that only fits the empirical samples $\mu_{k+1}$, this weighted objective forces the synthesized distribution $\nu_{k+1}$ to concentrate on latent regions where the deformation score $r_{\text{deform}}$ is low. This creates a stronger contraction toward the high-quality shape manifold:
\begin{equation}
    \mathcal{W}_2(\mu_{k+1}, \nu_{k+1}) \leq \kappa(w) \mathcal{W}_2(\mu_{k+1}, \nu_k),
\end{equation}
where the contraction coefficient $\kappa(w) \in (0, 1)$ is enhanced by the weighting scheme, accelerating the convergence toward geometrically valid latent codes.
\end{assumption}

\begin{remark}
The weighting $w(z)$ biases the score-matching gradient flow toward high-fidelity regions of the latent space. By prioritizing samples with low deformation energy, the diffusion model effectively prunes the search space, reducing the Wasserstein distance more aggressively than unweighted optimization.
\end{remark}

\begin{theorem}[Convergence to Equilibrium]
Under Assumptions~\ref{assum:stage2} and \ref{assum:stage3}, the sequence $\{(\mu_k, \nu_k)\}_{k \geq 0}$ converges to a stationary equilibrium. At this equilibrium, the encoded data distribution and the synthesized distribution are not only aligned in the latent space but also restricted to a manifold of low deformation energy.
\end{theorem}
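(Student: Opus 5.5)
The plan is to chain the two assumptions into a one-step contraction of the coupling quantity $a_k := \mathcal{W}_2(\mu_k,\nu_k)$ and then upgrade this to convergence of the pair $(\mu_k,\nu_k)$ itself. Taking square roots in Assumption~\ref{assum:stage2} gives $\mathcal{W}_2(\mu_{k+1},\nu_k)\le a_k$. Combining with Assumption~\ref{assum:stage3} yields
\begin{equation*}
a_{k+1} = \mathcal{W}_2(\mu_{k+1},\nu_{k+1}) \le \kappa(w)\,\mathcal{W}_2(\mu_{k+1},\nu_k) \le \kappa(w)\, a_k ,
\end{equation*}
so $a_k \le \kappa(w)^k a_0 \to 0$ geometrically. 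This shows the $\mathcal{W}_2^2$ part of the Lyapunov functional $V$ decreases monotonically to zero, i.e.\ the encoded and synthesized distributions become aligned. Here I would note that $\kappa(w)$ must be a fixed constant independent of $k$, which I take to be the content of Assumption~\ref{assum:stage3}.

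The second step is to show that the individual sequences converge, not just their distance. Alignment alone does not prevent $\mu_k$ and $\nu_k$ from drifting together. I would try to bound the step sizes $\mathcal{W}_2(\nu_k,\nu_{k+1})$ and $\mathcal{W}_2(\mu_k,\mu_{k+1})$ by $a_k$ via the triangle inequality:
\begin{equation*}
\mathcal{W}_2(\nu_{k+1},\nu_k)\le \mathcal{W}_2(\nu_{k+1},\mu_{k+1})+\mathcal{W}_2(\mu_{k+1},\nu_k)\le (\kappa(w)+1)\,a_k ,
\end{equation*}
and then $\mathcal{W}_2(\mu_{k+1},\mu_k)\le \mathcal{W}_2(\mu_{k+1},\nu_k)+a_k\le 2a_k$. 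Since $\sum_k a_k<\infty$, both sequences are Cauchy in $(\mathcal{P}_2(\mathcal{Z}),\mathcal{W}_2)$. That space is complete, so $\mu_k\to\mu^\star$ and $\nu_k\to\nu^\star$, with $\mathcal{W}_2(\mu^\star,\nu^\star)=\lim a_k=0$, hence $\mu^\star=\nu^\star$. Stationarity then follows: the limit is a fixed point of the composed Stage~II/III update, assuming that update map is continuous in $\mathcal{W}_2$, which I would state explicitly as a mild regularity requirement.

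The third step addresses the claim that the limit is restricted to a low-deformation manifold. Stage~III fits $\nu_{k+1}$ to a $w$-weighted objective whose weighted term vanishes where $w(\bs{z})=0$ (non-genus-zero shapes) and is small where $r_{\deform}(\bs{z})\gg\eta$. My route is to argue that the weighted loss assigns vanishing mass to regions $\{r_{\deform}>2\eta\}$ in the limit, giving $\nu^\star(\{r_{\deform}>c\})=0$ for a suitable $c$. I would also use that $\mu^\star=\nu^\star$ transfers this property to the encoded data. Together with monotone non-increase of $\mathcal{R}_{\text{deform}}$ along the iterates, this would show that $V$ converges to a value attained on that set.

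I expect this last step to be the main obstacle. Neither assumption quantifies how much mass $\nu_{k+1}$ places on high-deformation regions, so the "restricted to a manifold of low deformation energy" conclusion is not implied by the $\mathcal{W}_2$ contraction alone. I would first try to read it off the definition of $\mathcal{R}_{\text{deform}}$ as a penalty minimized at each Stage~III step, treating $\mathcal{R}_{\text{deform}}(\theta^k;w)$ as non-increasing and bounded below, hence convergent. If that is insufficient, I would add the mild assumption that $r_{\deform}$ is lower semicontinuous. Then the portmanteau theorem under $\mathcal{W}_2$ (hence weak) convergence passes the mass bound to the limit.
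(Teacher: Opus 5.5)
Your first two steps are correct, and they take a different and more rigorous route than the paper's proof.

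\textbf{What the paper does.} The paper adds $\mathcal{R}_{\text{arap}}$ and $\mathcal{R}_{\text{deform}}$ to $\mathcal{W}_2^2(\mu,\nu)$ to form a Lyapunov functional $V$. It splits one cycle's $\Delta V$ into two increments. The Stage~II increment is bounded by a residual $\epsilon_k$, via Assumption~\ref{assum:stage2}. The Stage~III increment is bounded by $-(1-\kappa^2)\mathcal{W}_2^2(\mu_{k+1},\nu_k)$, plus a posited $\epsilon'_k$ for $\mathcal{R}_{\text{deform}}$. It arrives at $\Delta V\le-(1-\kappa^2)\mathcal{W}_2^2(\mu_k,\nu_k)+\delta_k$ and concludes convergence from ``$V$ non-negative and non-increasing up to $\delta_k$''.

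\textbf{What your version fixes.} First, the paper's last step replaces $\mathcal{W}_2^2(\mu_{k+1},\nu_k)$ by the larger $\mathcal{W}_2^2(\mu_k,\nu_k)$ inside a term with a negative coefficient, which goes the wrong way. The bound is recovered by keeping the Stage~II bracket: $\kappa^2\mathcal{W}_2^2(\mu_{k+1},\nu_k)-\mathcal{W}_2^2(\mu_k,\nu_k)\le-(1-\kappa^2)\mathcal{W}_2^2(\mu_k,\nu_k)$. That is exactly your chaining $a_{k+1}\le\kappa a_k$, squared. Second, an almost non-increasing, bounded-below scalar gives at most convergence of the numbers $V(\mu_k,\nu_k)$, and even that needs summable $\delta_k$. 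It does not give convergence of the measures. Your step-size bounds, summability, and completeness of $(\mathcal{P}_2(\mathcal{Z}),\mathcal{W}_2)$ are what actually yield $\mu_k\to\mu^\star$ and $\nu_k\to\nu^\star=\mu^\star$, at a geometric rate.

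\textbf{What it costs.} You need the hypotheses you rightly state. The factor $\kappa$ must be uniform in $k$; this is not automatic, because $w$ and the median $\eta$ are recomputed from the current decoder every round. Stationarity needs continuity of the update. That update must also be a function of $(\mu,\nu)$ alone rather than of the network parameters.

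\textbf{Your third step is a genuine gap.} The paper leaves the same gap open. It gets the low-deformation clause only by asserting that Stage~III's ``explicit minimization'' gives $\Delta\mathcal{R}_{\text{deform}}\le\epsilon'_k$ and reading the conclusion off $V$. That is an extra hypothesis, not a consequence of Assumptions~\ref{assum:stage2} and~\ref{assum:stage3}. None of your routes derives it either.
\begin{itemize}
\item In Eq.~(\ref{Eq:DM:Loss2}), samples with $w(\bs{z})=0$ or small $w(\bs{z})$ simply drop out of the objective. Nothing penalizes $\nu_{k+1}$ for putting mass on $\{r_{\deform}>2\eta\}$.
\item A non-increasing, bounded-below $\mathcal{R}_{\text{deform}}$ gives a convergent number, not a bound on mass or support.
\item The portmanteau step needs a fixed lower semicontinuous score. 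But $r_{\deform}$ and $\eta$ depend on $\phi_k$ and on the current training codes, so you would also have to assume that the decoder converges (or that the scores converge, e.g.\ locally uniformly).
\end{itemize}

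\textbf{Your Step~2 already determines the limit.} $\nu^\star=\mu^\star$ is a $\mathcal{W}_2$-limit of $N$-atom empirical measures, so it is supported on at most $N$ points: the limiting encoded training codes. Any statement $\nu^\star(\{r_{\deform}>c\})=0$ is therefore a statement about the scores of those $N$ codes. It holds trivially for $c$ above their maximum. It is false for every $c$ below their median, and $\eta$ is defined as exactly that median, so at least half the mass has score at least $\eta$. The clause is really about where the encoder places the training shapes, and neither assumption controls that. The honest fix is to state the low-deformation property, together with convergence of $\phi_k$, as an explicit additional hypothesis, as the paper tacitly does.
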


\begin{proof}
We analyze the change in the functional $V$ over one complete iteration cycle, consisting of the transition from state $(\mu_k, \nu_k)$ to $(\mu_{k+1}, \nu_{k+1})$. The total difference can be decomposed into the contribution from each stage:
\begin{equation}
    \Delta V = \underbrace{V(\mu_{k+1}, \nu_k) - V(\mu_k, \nu_k)}_{\text{Stage II: Anchoring}} + \underbrace{V(\mu_{k+1}, \nu_{k+1}) - V(\mu_{k+1}, \nu_k)}_{\text{Stage III: Contraction}}.
\end{equation}

In Stage II, the joint optimization of $(\psi, \phi)$ minimizes the reconstruction error and ARAP regularity. According to Assumption~\ref{assum:stage2}, the anchoring effect ensures that the reorganization of the latent codes does not increase the transport cost to the current synthesized manifold. Let $\Delta \mathcal{R}_{\text{arap}} = \mathcal{R}_{\text{arap}}(\phi_{k+1}) - \mathcal{R}_{\text{arap}}(\phi_k)$. We have:
\begin{equation}
    V(\mu_{k+1}, \nu_k) - V(\mu_k, \nu_k) = \left( \mathcal{W}_2^2(\mu_{k+1}, \nu_k) - \mathcal{W}_2^2(\mu_k, \nu_k) \right) + \lambda \Delta \mathcal{R}_{\text{arap}} \leq 0 + \epsilon_k,
\end{equation}
where $\epsilon_k$ represents the residual from the numerical optimization of the autoencoder.

In Stage III, the update of the diffusion parameters $\theta$ fixed $\mu_{k+1}$. Under the EDM framework with deformation-weighted regularization, Assumption~\ref{assum:stage3} provides the contraction property $\mathcal{W}_2(\mu_{k+1}, \nu_{k+1}) \leq \kappa(w) \mathcal{W}_2(\mu_{k+1}, \nu_k)$. The change in the distance term is:
\begin{equation}
    \begin{aligned}
        \mathcal{W}_2^2(\mu_{k+1}, \nu_{k+1}) - \mathcal{W}_2^2(\mu_{k+1}, \nu_k) &\leq \kappa(w)^2 \mathcal{W}_2^2(\mu_{k+1}, \nu_k) - \mathcal{W}_2^2(\mu_{k+1}, \nu_k) \\
        &= -(1 - \kappa(w)^2) \mathcal{W}_2^2(\mu_{k+1}, \nu_k).
    \end{aligned}
\end{equation}

Summing these two stages, we account for the explicit minimization of the weighted deformation energy in Stage III, which implies $\eta \Delta \mathcal{R}_{\text{deform}} = \eta (\mathcal{R}_{\text{deform}}^{k+1} - \mathcal{R}_{\text{deform}}^k) \leq 0 + \epsilon'_k$. Combined with the distributional contraction, we obtain the cumulative bound:
\begin{equation}
    \begin{aligned}
        \Delta V &\leq -(1 - \kappa(w)^2) \mathcal{W}_2^2(\mu_{k+1}, \nu_k) + \epsilon_k \\
        &\leq -(1 - \kappa(w)^2) \left[ \mathcal{W}_2^2(\mu_k, \nu_k) - (\mathcal{W}_2^2(\mu_k, \nu_k) - \mathcal{W}_2^2(\mu_{k+1}, \nu_k)) \right] + \epsilon_k.
    \end{aligned}
\end{equation}

By utilizing the anchoring property $\mathcal{W}_2^2(\mu_{k+1}, \nu_k) \leq \mathcal{W}_2^2(\mu_k, \nu_k)$ once more, the expression simplifies to:
\begin{equation}
    V(\mu_{k+1}, \nu_{k+1}) - V(\mu_k, \nu_k) \leq -(1 - \kappa(w)^2) \mathcal{W}_2^2(\mu_k, \nu_k) + \delta_k,
\end{equation}
where $\delta_k$ aggregates all stochastic optimization noise, approximation errors, and the residuals from the energy minimization. Since $\kappa(w) \in (0, 1)$, the coefficient $-(1 - \kappa(w)^2)$ is strictly negative. As $V$ is non-negative and monotonically non-increasing (up to $\delta_k$), the sequence $\{(\mu_k, \nu_k)\}$ converges to an equilibrium where the distributions are aligned and the geometric deformation energy is minimized.
\end{proof}

\subsection{Performance on Bone Datasets}

\begin{wraptable}[9]{r}{0.54\textwidth}
\vspace{-0.2in}
\setlength\tabcolsep{1.5pt}
\centering
\begin{tabular}{c|c|c|c|c}
Bone & $e_r$ & $e_t$ & $d_W$ & $d_n$ \\ \hline
SP-disent.& 5.34 & 8.01 & 12.9 & 6.86 \\ \hline
COMA& 4.14 & 7.09& 12.1 & 6.78\\ \hline
3DMM& 4.03 & 7.12& 11.9 & 6.72\\ \hline
MeshConv& 4.47 & 7.34& 12.1 & 6.69\\ \hline
ARAPReg& 3.76 & 6.77& 11.6 & 6.76\\ \hline
FrameAVE& 3.92 & 6.81& 11.9 & 6.91\\ \hline
GeoLatent& 3.67 & 6.37& 11.5 & 6.83 \\ \hline
BRESA& 3.61 & 6.45& 11.7& 6.91\\ \hline
ARAPDiffusion & \textbf{3.25} & \textbf{5.92} & \textbf{10.4} & \textbf{6.79} \\ \hline
\end{tabular}
\end{wraptable} 
We added additional experiments on the bone dataset introduced in ARAPReg (see below). Just like many other medical datasets, each bone category has 40 shapes, which are sparse.

We can see that APAPDiffusion achieves similar performance gains compared to baseline approaches.

\section{Additional Experiments}

\subsection{Performance When Scaling the Data}

The quantitative results on Animal are shown in the table below, comparing our full pipeline(ARAPDiffusion) with the baselines without ARAP regularization(Standard LD / No Reg):

\begin{wraptable}{r}{0.54\textwidth}
\vspace{-0.2in}
\setlength\tabcolsep{1.5pt}
\centering
\begin{tabular}{c|c|c|c}
\# Training &Method & $e_r$ & $e_t$ \\\hline
50  & Standard LD (No Reg) & 14.20 & 19.50\\\hline
50 & APAPDiffusion &  10.85 & 15.20\\\hline
100 & Standard LD (No Reg)& 10.60& 15.10\\\hline 
100 & APAPDiffusion & 7.21 & 12.40 \\\hline
200 & Standard LD (No Reg)& 6.95& 12.10\\\hline 
200 & APAPDiffusion & 5.8 & 10.20 \\\hline
\end{tabular}
\end{wraptable} 
Note: 100\% data corresponds to the original setting in Table 1 of our main paper. As shown in the table, ARAPDiffusion trained on only 50\% of the data achieves a performance nearly identical to Standard LD on 100\% of the data. This shows that injecting the ARAP regularization is roughly   equivalent to increasing the training data amount about 60\% to 80\%. Furthermore, even as the data scale up significantly(200\%), ARAPDiffusion still provides a consistent performance over the unregularized baselines(e.g., $e_t$:10.20 vs. 12.10), proving the refinement continues to improve the performance.

\section{Samples of Random Generated Shapes}

\subsection{Human}

Fig.~\ref{Figure:SMPL:Mesh:ARAPReg}, Fig.~\ref{Figure:SMPL:Mesh:FrameAVE}, Fig.~\ref{Figure:SMPL:Mesh:Geolatent}, and Fig.~\ref{Figure:SMPL:Mesh:BRESA} show randomly generated shapes of the baseline approaches ARAPReg, FrameAVE, Geolatent, and BRESA, respectively. All of them share the same issue that a subset of the synthetic shapes have low-quality overall shapes. Moreover, their synthetic shapes show a variety of noticeable distortions. 

In contrast, as shown in Fig.~\ref{Figure:SMPL:Mesh:ARAPDiff}, ARAPDiffusion does not have low-quality generated shapes. The distortion of each generate shape is also visually smaller. Fig.~\ref{Figure:SMPL:Mesh:ARAPDiff:Closest:Training} shows for each generated shape (on the left) its closest training shape (on the right). We can see that the generated shapes do not replicate the training shapes.

\begin{figure*}
\setlength\tabcolsep{2.5pt}
\begin{tabular}{cccccccc}
\includegraphics[height=0.19\textwidth]{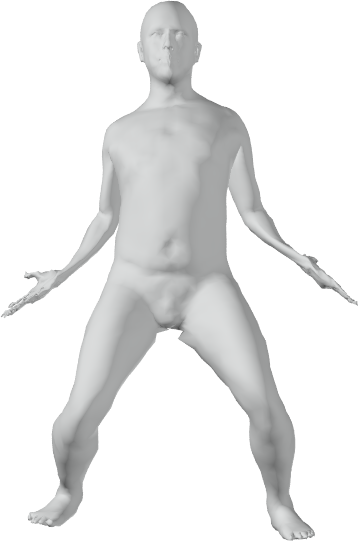}&
\includegraphics[height=0.19\textwidth]{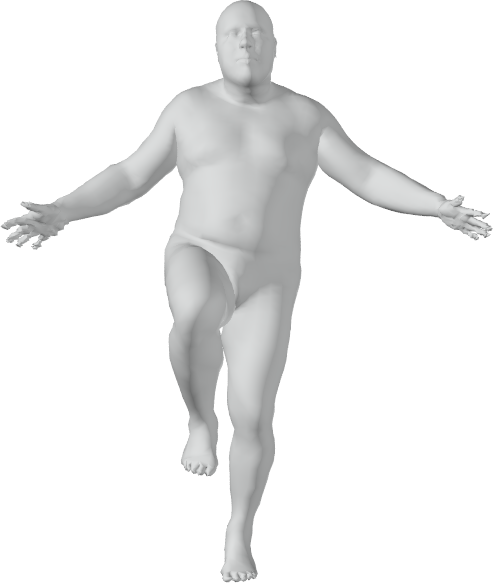}&
\includegraphics[height=0.19\textwidth]{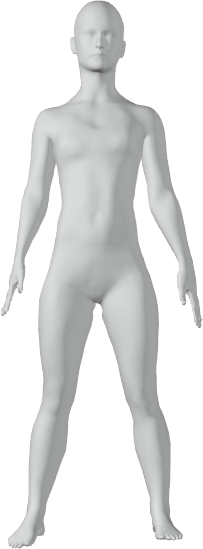}&
\bluepart{\includegraphics[height=0.19\textwidth]{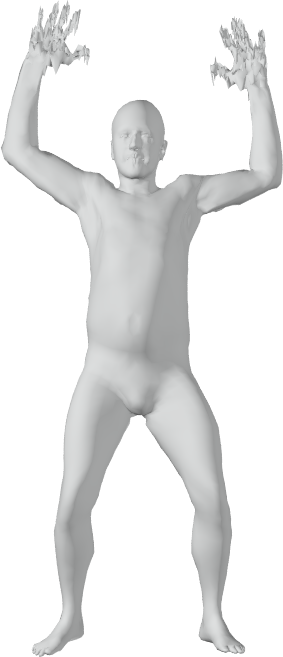}}{0.0}{1}{1}{0.85}&
\includegraphics[height=0.19\textwidth]{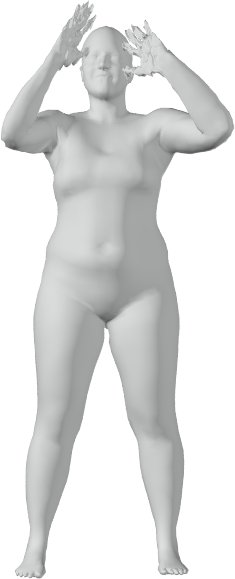}&
\includegraphics[height=0.19\textwidth]{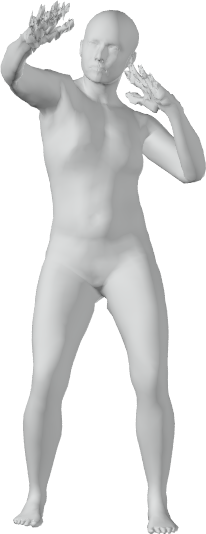}&
\includegraphics[height=0.19\textwidth]{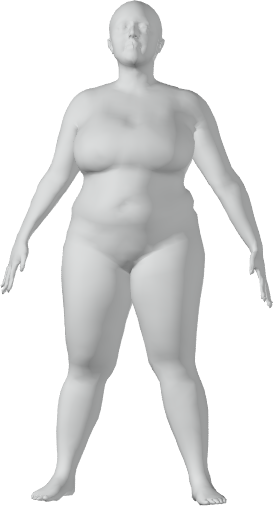}&
\includegraphics[height=0.19\textwidth]{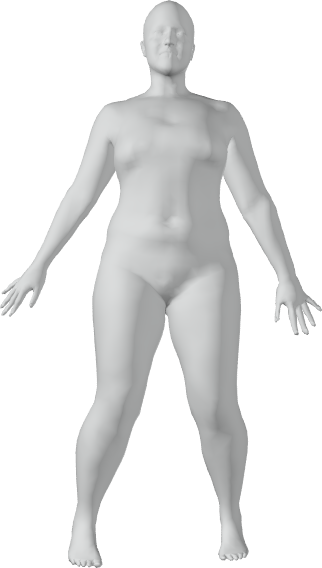}\\
% Row 2
% R2C1 Correction: Moved box RIGHT to catch the hand (0.15 - 0.35)
\includegraphics[height=0.19\textwidth]{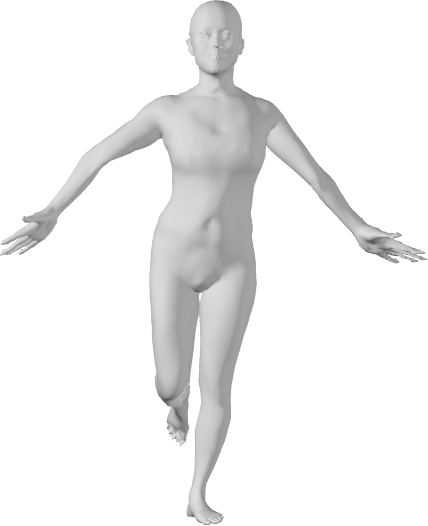}& 
\bluepart{\includegraphics[height=0.19\textwidth]{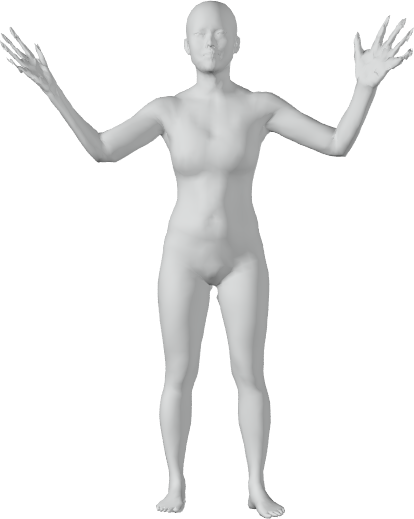}}{0.0}{0.97}{0.2}{0.75}&
\includegraphics[height=0.19\textwidth]{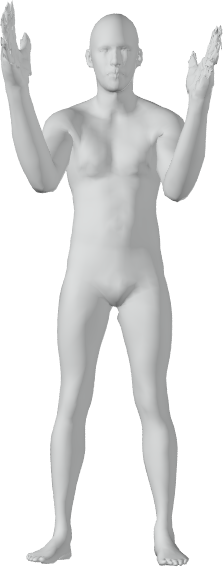}& 
\includegraphics[height=0.19\textwidth]{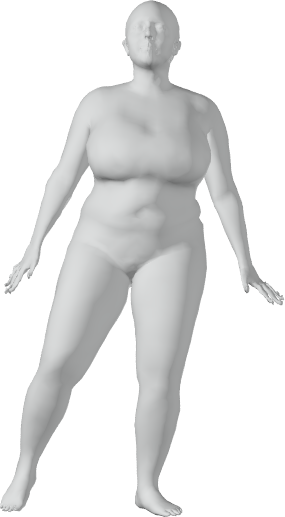}&
\includegraphics[height=0.19\textwidth]{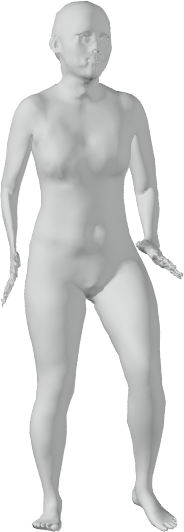}&
\includegraphics[height=0.19\textwidth]{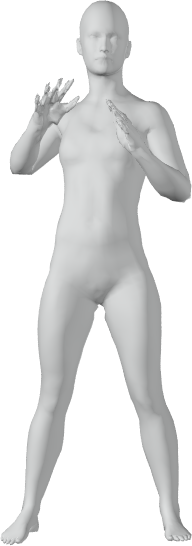}&
\redbox{\includegraphics[height=0.19\textwidth]{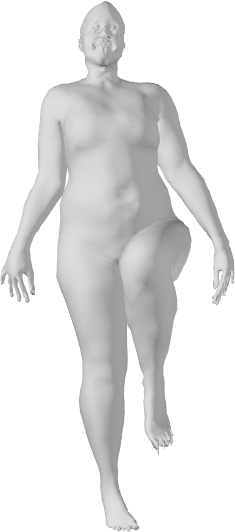}}&
\includegraphics[height=0.19\textwidth]{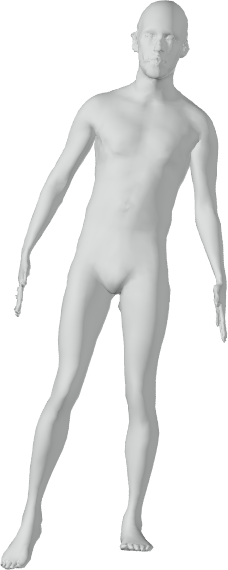}\\
% Row 3
\includegraphics[height=0.19\textwidth]{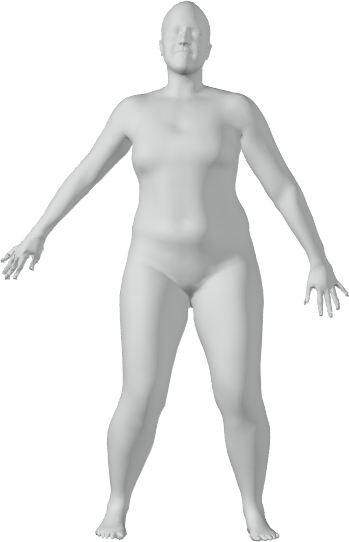}&
\includegraphics[height=0.19\textwidth]{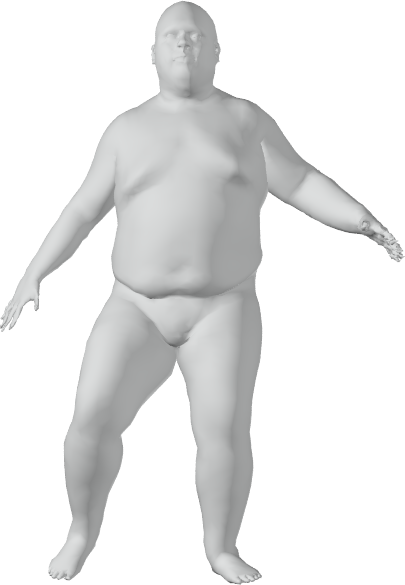}&
\includegraphics[height=0.19\textwidth]{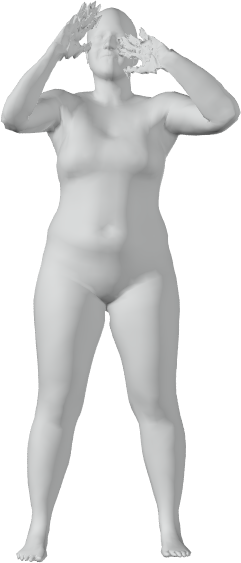}&
\includegraphics[height=0.19\textwidth]{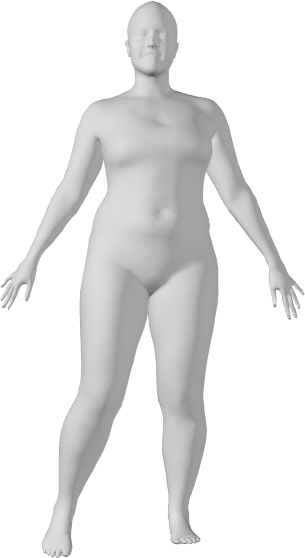}& 
\bluepart{\includegraphics[height=0.19\textwidth]{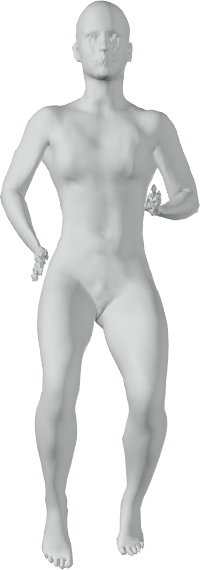}}{0.55}{0.55}{0.95}{0.75}&
% R3C6 Correction: Moved box UP and RIGHT to catch hip intersection (0.60 - 0.78)
\includegraphics[height=0.19\textwidth]{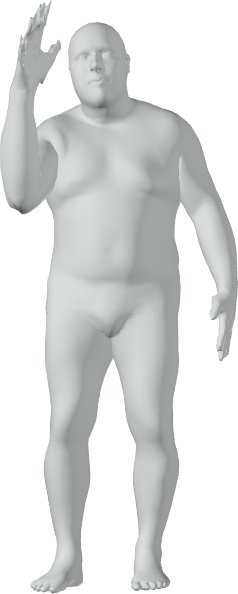}& 
\includegraphics[height=0.19\textwidth]{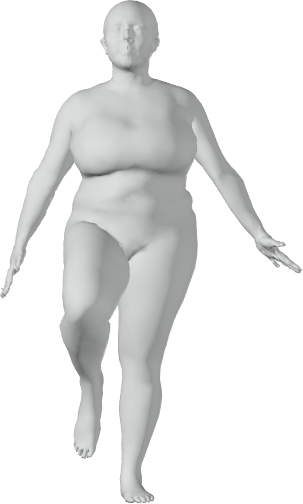}&
\includegraphics[height=0.19\textwidth]{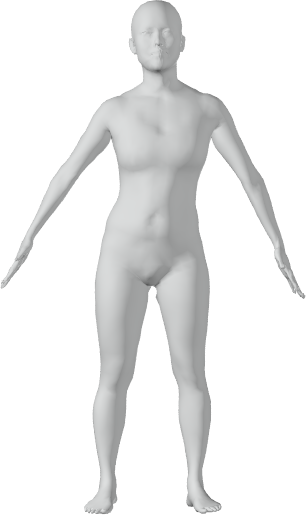}
\end{tabular}
\caption{ARAPReg results. Red boxes indicate global failures; blue boxes indicate local artifacts.}
\label{Figure:SMPL:Mesh:ARAPReg}    
\end{figure*}

\begin{figure*}
\setlength\tabcolsep{1.5pt}
\begin{tabular}{cccccccc}
\redbox{\includegraphics[height=0.19\textwidth]{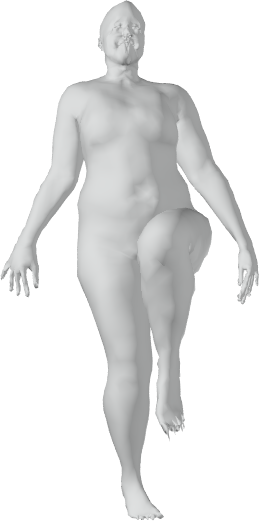}}&
\includegraphics[height=0.19\textwidth]{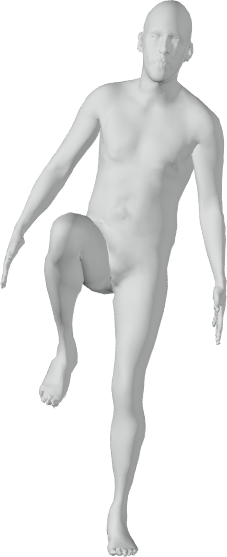}&
\bluepart{\includegraphics[height=0.19\textwidth]{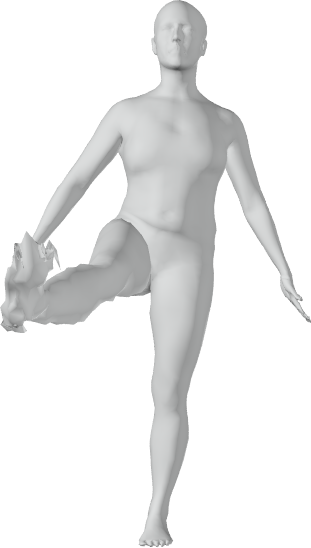}}{-0.05}{0.35}{0.2}{0.6}&
\redbox{\includegraphics[height=0.19\textwidth]{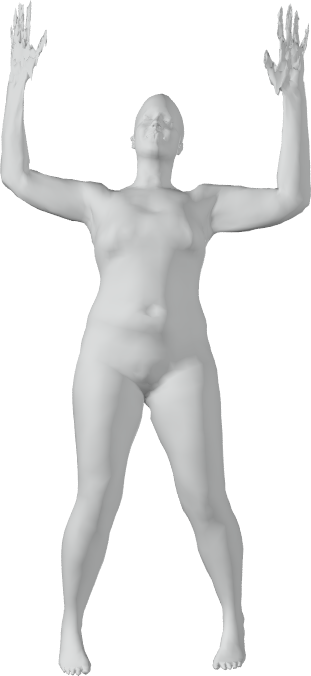}}&
\includegraphics[height=0.19\textwidth]{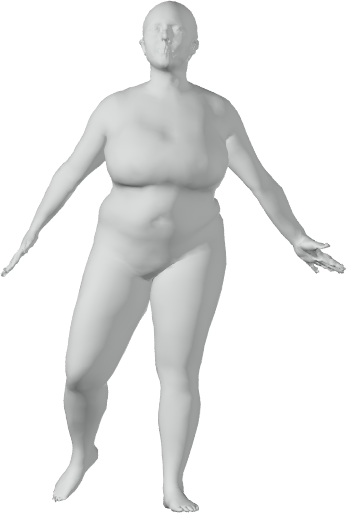}&
\bluepart{\includegraphics[height=0.19\textwidth]{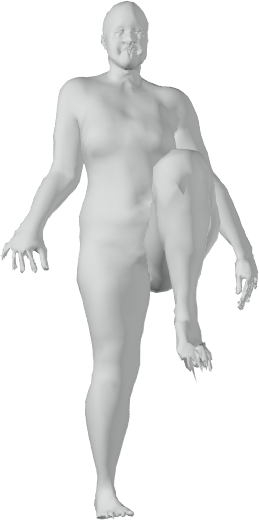}}{0.6}{0.25}{0.85}{0.4}&
\includegraphics[height=0.19\textwidth]{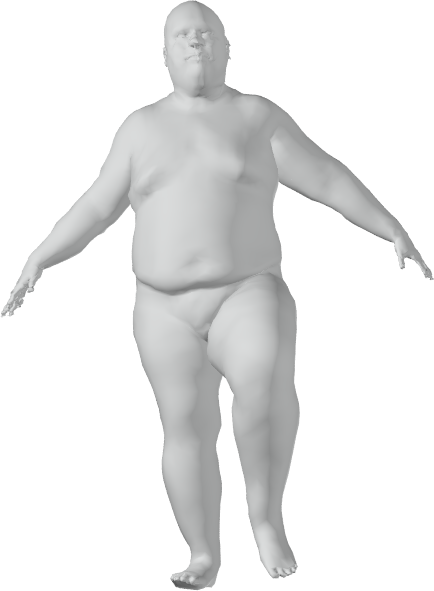}&
\includegraphics[height=0.19\textwidth]{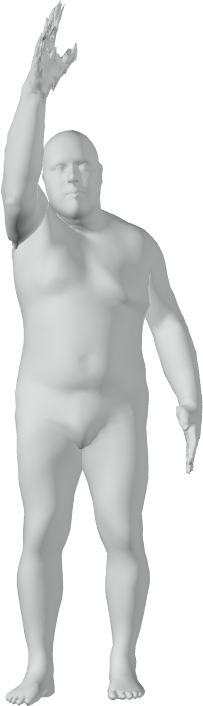}\\
\includegraphics[height=0.19\textwidth]{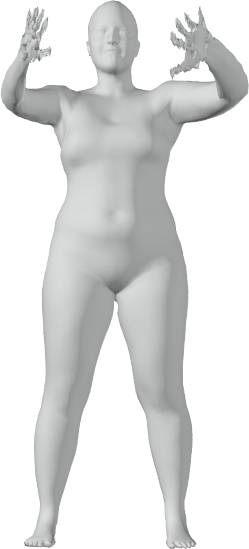}&
\bluepart{\includegraphics[height=0.19\textwidth]{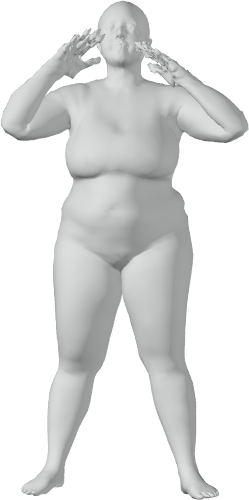}}{0.2}{0.8}{0.8}{0.95}&
\includegraphics[height=0.19\textwidth]{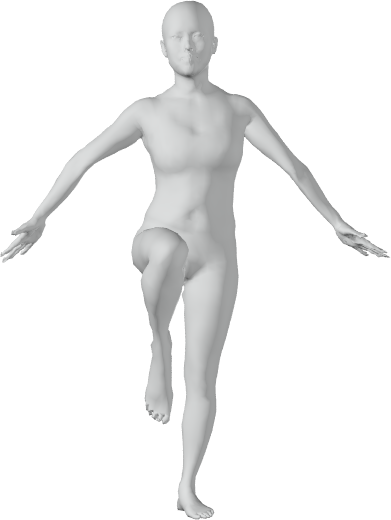}&
\includegraphics[height=0.19\textwidth]{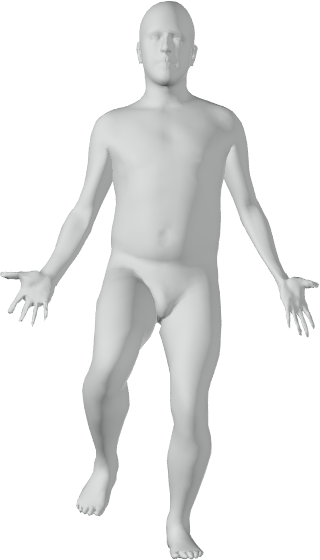}&
\includegraphics[height=0.19\textwidth]{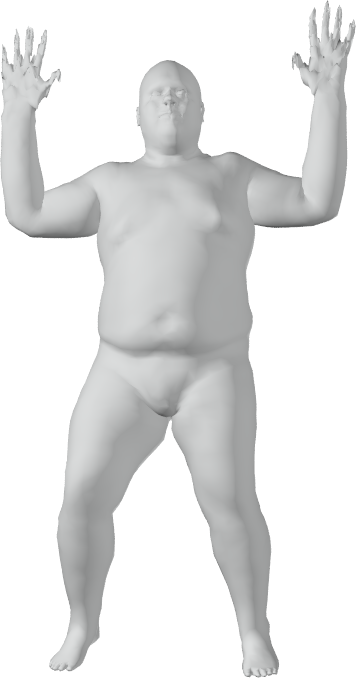}&
\includegraphics[height=0.19\textwidth]{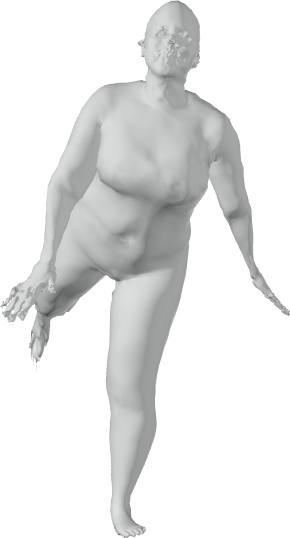}&
\includegraphics[height=0.19\textwidth]{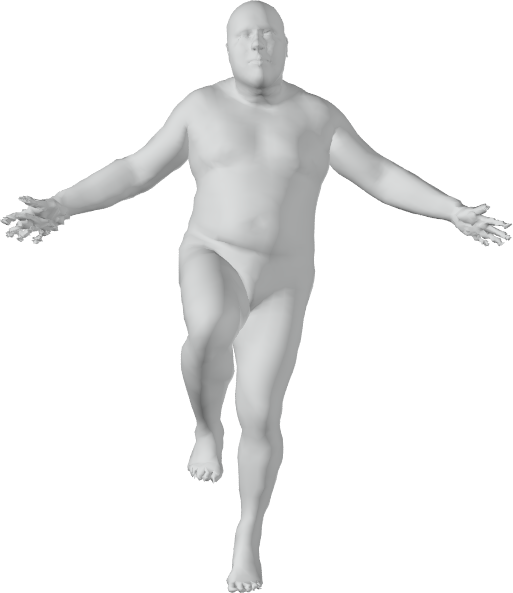}&
\includegraphics[height=0.19\textwidth]{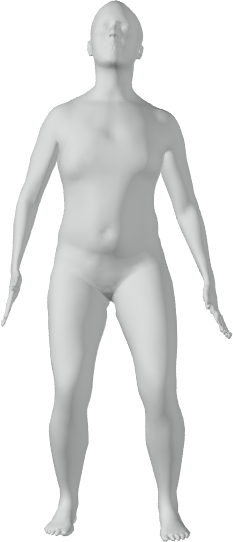}\\
\includegraphics[height=0.19\textwidth]{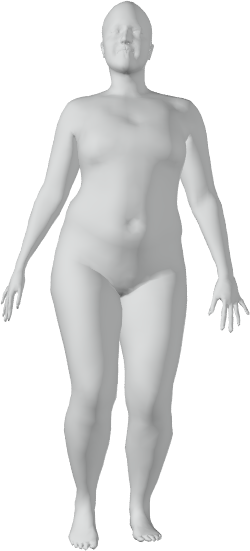}&
\includegraphics[height=0.19\textwidth]{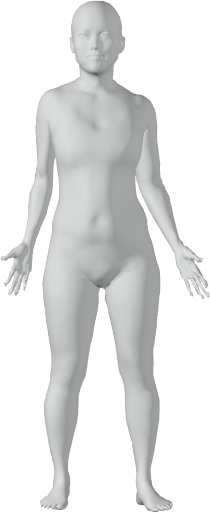}&
\includegraphics[height=0.19\textwidth]{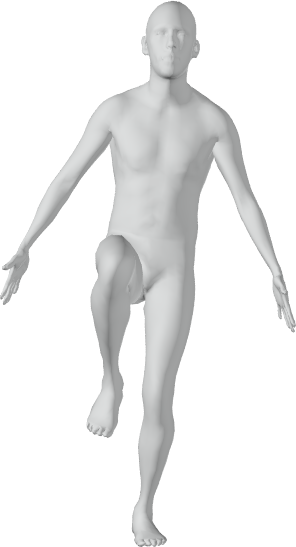}&
\includegraphics[height=0.19\textwidth]{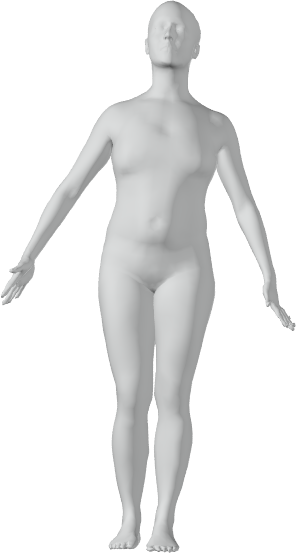}&
\includegraphics[height=0.19\textwidth]{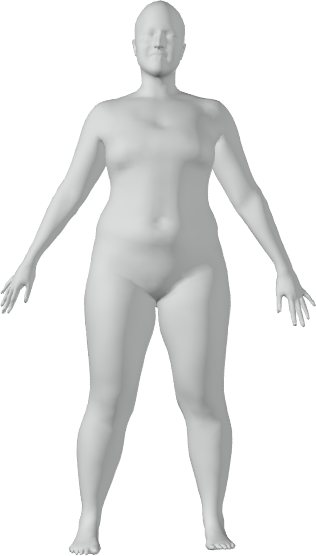}&
\includegraphics[height=0.19\textwidth]{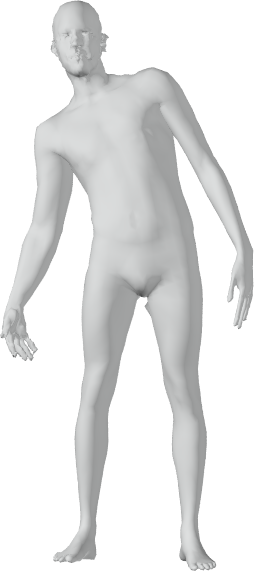}&
\includegraphics[height=0.19\textwidth]{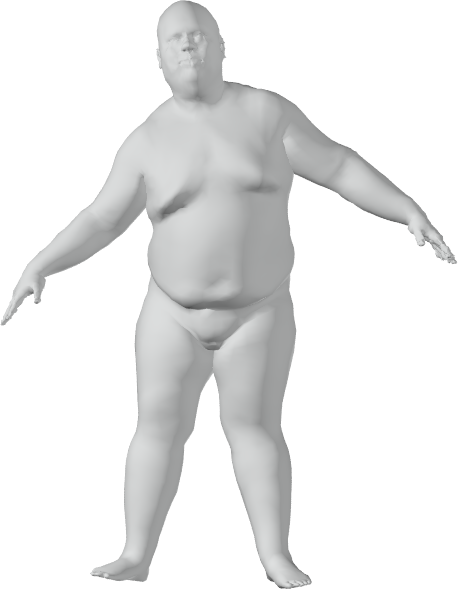}&
\includegraphics[height=0.19\textwidth]{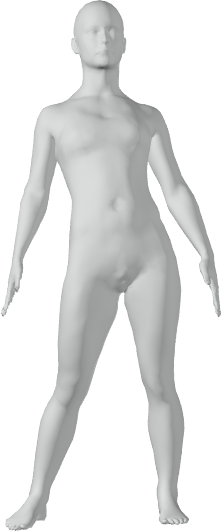}
\end{tabular}
\caption{FrameAVE results. They show improved individual shape quality and higher success rate. }
\label{Figure:SMPL:Mesh:FrameAVE}    
\end{figure*}

\begin{figure*}
\setlength\tabcolsep{2pt}
\begin{tabular}{cccccccc}
\redbox{\includegraphics[height=0.19\textwidth]{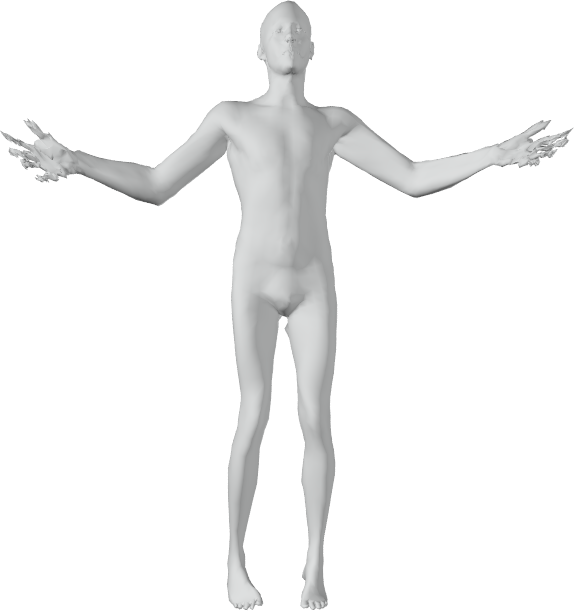}}&
\includegraphics[height=0.19\textwidth]{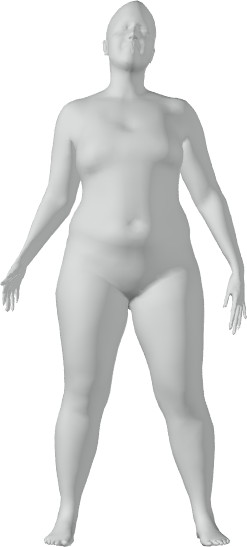}&
\includegraphics[height=0.19\textwidth]{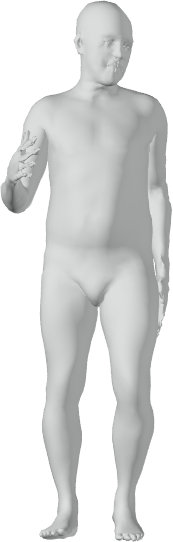}&
\includegraphics[height=0.19\textwidth]{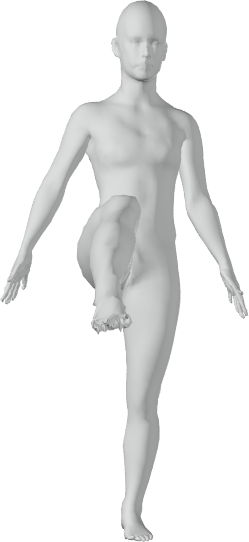}&
\includegraphics[height=0.19\textwidth]{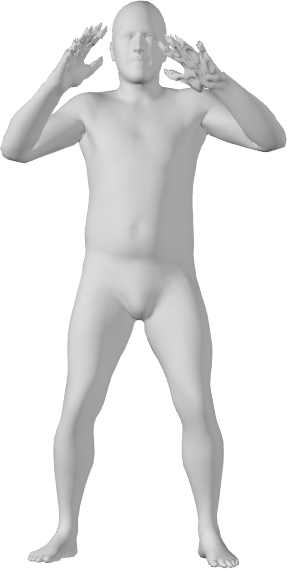}&
\includegraphics[height=0.19\textwidth]{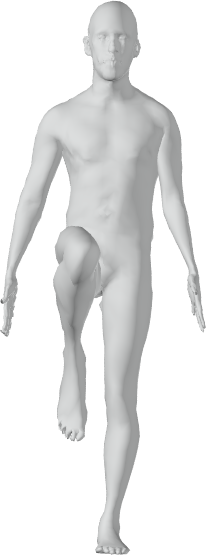}&
\includegraphics[height=0.19\textwidth]{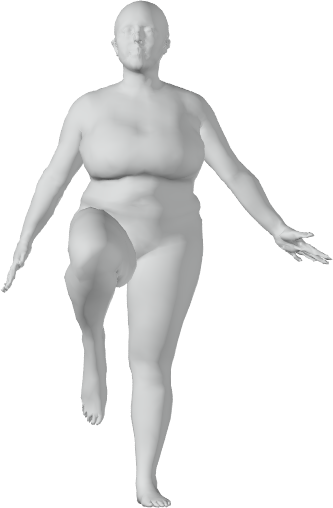}&
\includegraphics[height=0.19\textwidth]{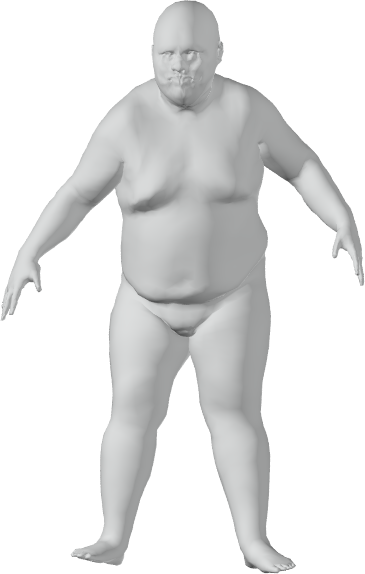}\\
\includegraphics[height=0.19\textwidth]{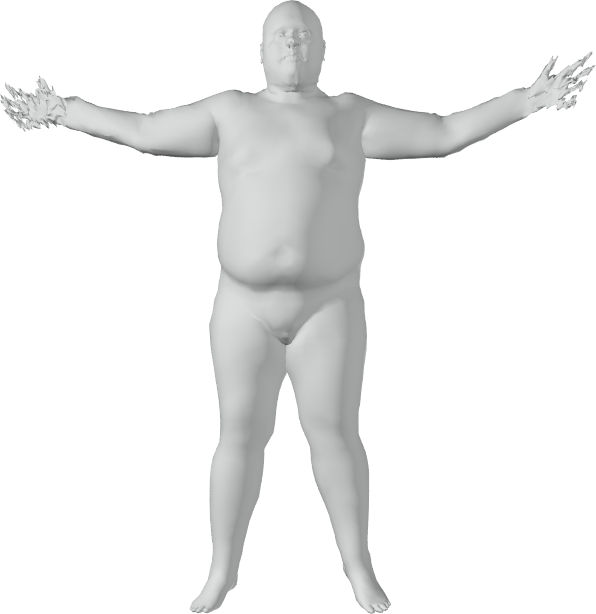}&
\includegraphics[height=0.19\textwidth]{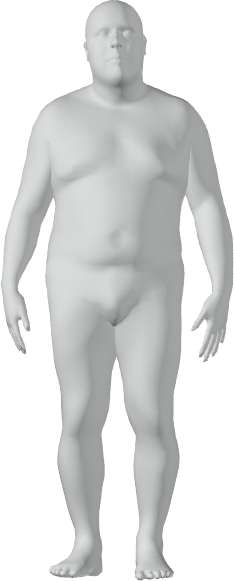}&
\includegraphics[height=0.19\textwidth]{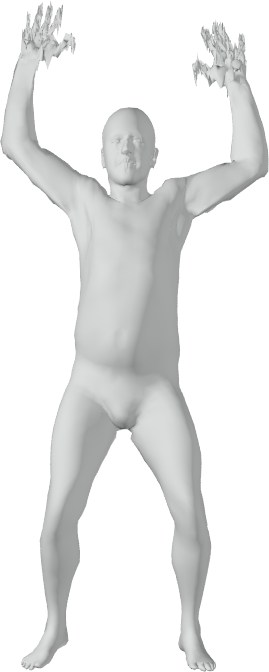}&
\bluepart{\includegraphics[height=0.19\textwidth]{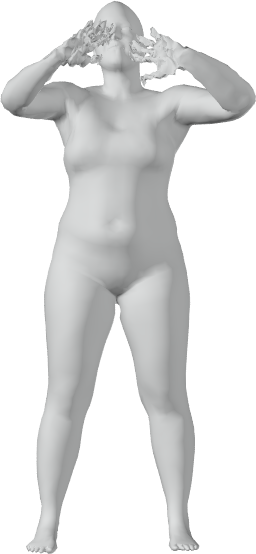}}{0.1}{1}{0.9}{0.8}&
\includegraphics[height=0.19\textwidth]{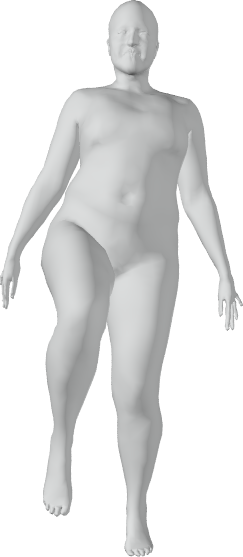}&
\includegraphics[height=0.19\textwidth]{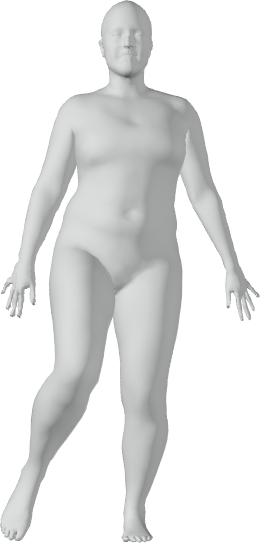}&
\includegraphics[height=0.19\textwidth]{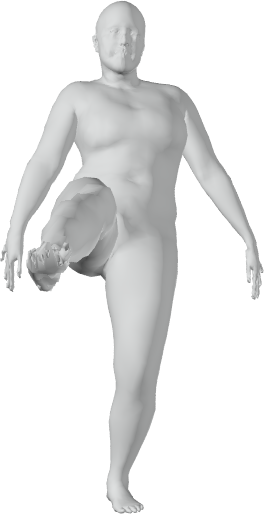}&
\includegraphics[height=0.19\textwidth]{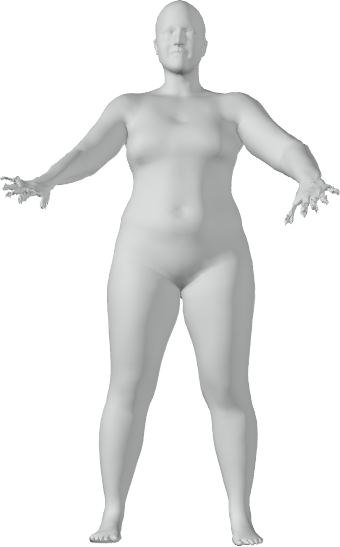}\\
\includegraphics[height=0.19\textwidth]{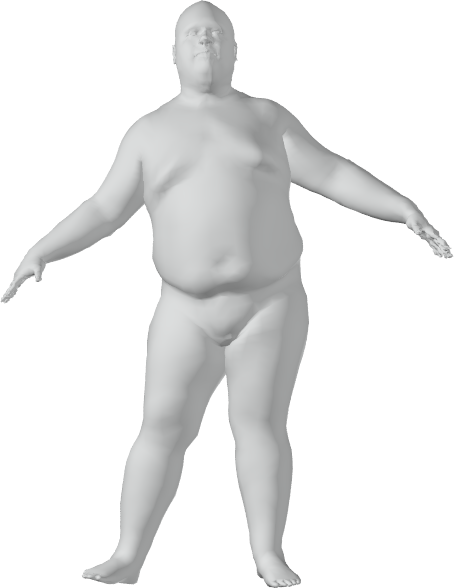}&
\includegraphics[height=0.19\textwidth]{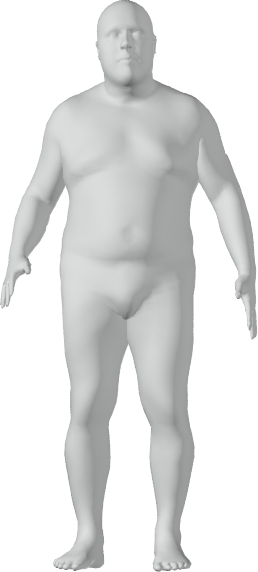}&
\includegraphics[height=0.19\textwidth]{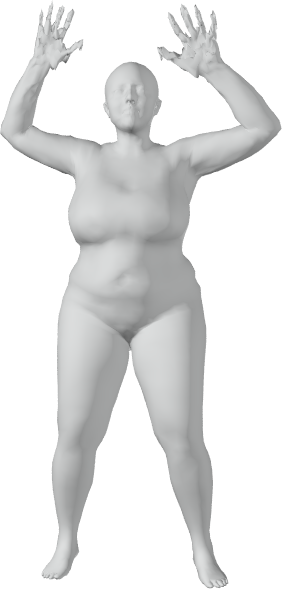}&
\includegraphics[height=0.19\textwidth]{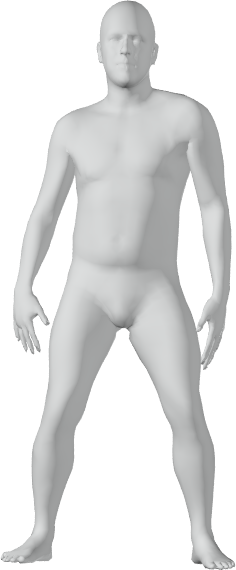}&
\includegraphics[height=0.19\textwidth]{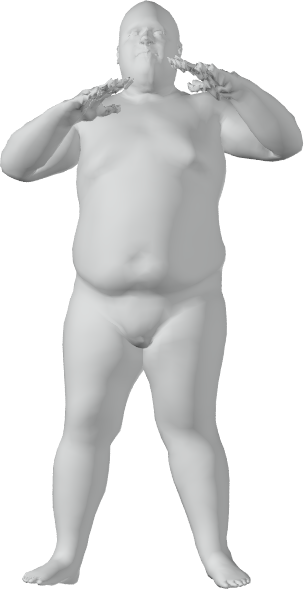}&
\includegraphics[height=0.19\textwidth]{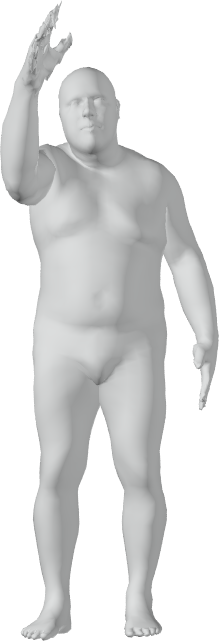}&
\redbox{\includegraphics[height=0.19\textwidth]{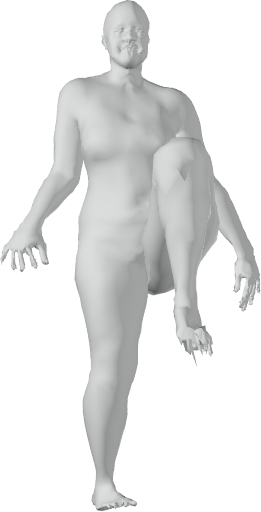}}&
\includegraphics[height=0.19\textwidth]{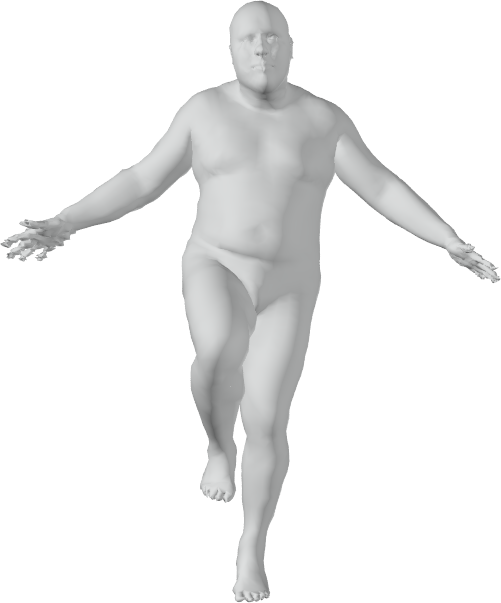}
\end{tabular}
\caption{GeoLatent results. They show improved individual shape quality and higher success rate. }
\label{Figure:SMPL:Mesh:Geolatent}    
\end{figure*}

\begin{figure*}
\setlength\tabcolsep{2.5pt}
\begin{tabular}{cccccccc}
\includegraphics[height=0.2\textwidth]{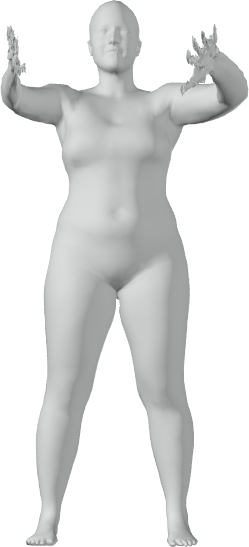}&
\includegraphics[height=0.2\textwidth]{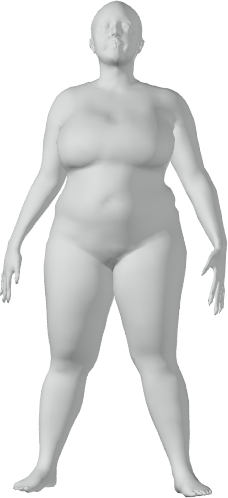}&
\includegraphics[height=0.2\textwidth]{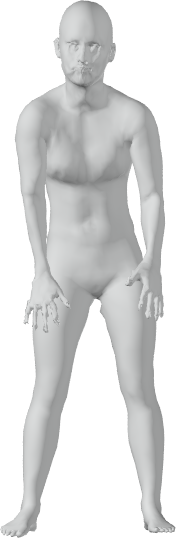}&
\includegraphics[height=0.2\textwidth]{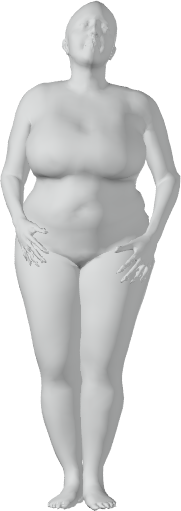}&
\redbox{\includegraphics[height=0.2\textwidth]{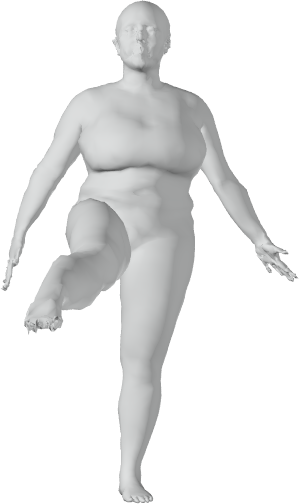}}&
\includegraphics[height=0.2\textwidth]{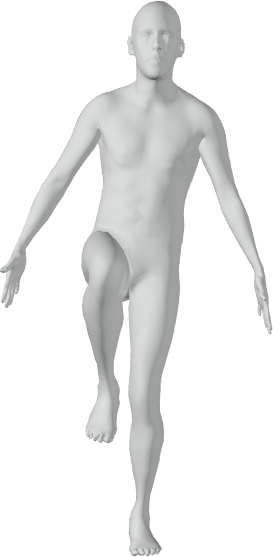}&
\includegraphics[height=0.2\textwidth]{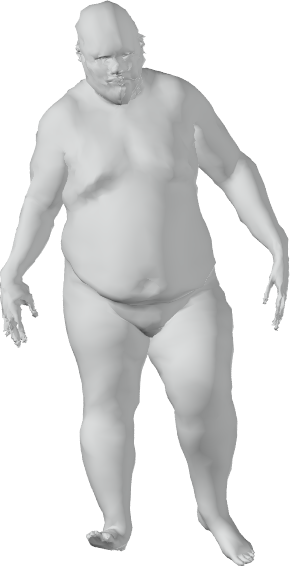}&
\includegraphics[height=0.2\textwidth]{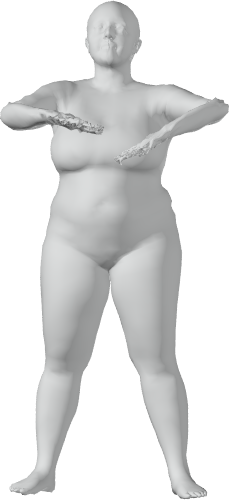}\\
\bluepart{\includegraphics[height=0.2\textwidth]{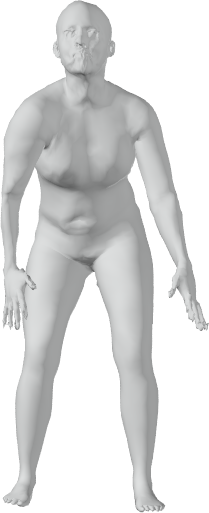}}{0.2}{1}{0.6}{0.8}&
\includegraphics[height=0.2\textwidth]{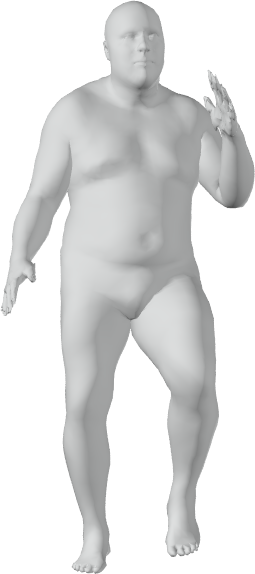}&
\includegraphics[height=0.2\textwidth]{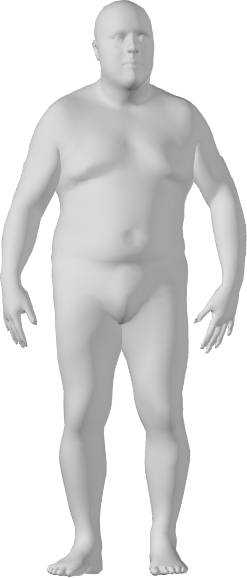}&
\includegraphics[height=0.2\textwidth]{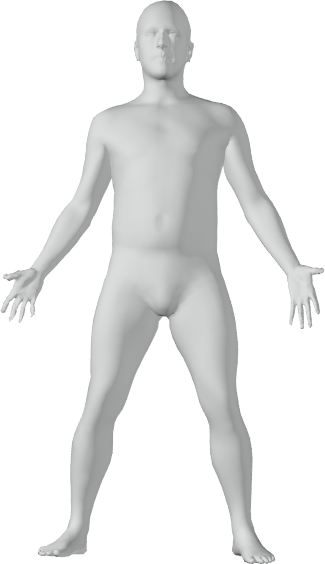}&
\includegraphics[height=0.2\textwidth]{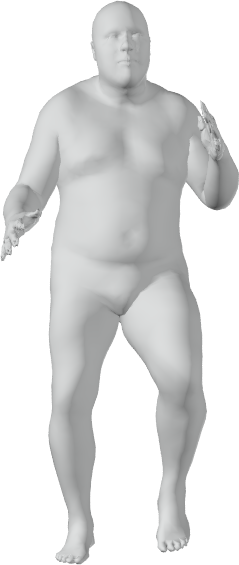}&
\includegraphics[height=0.2\textwidth]{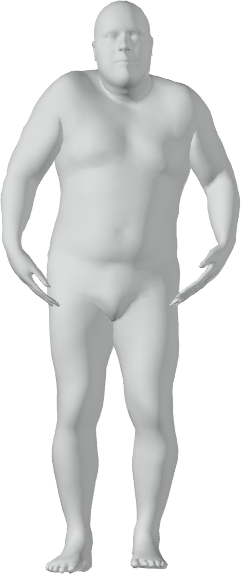}&
\includegraphics[height=0.2\textwidth]{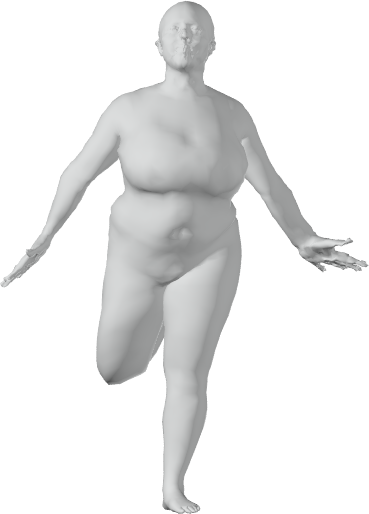}&
\bluepart{\includegraphics[height=0.2\textwidth]{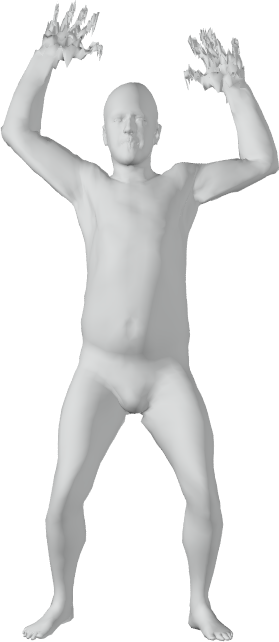}}{0.1}{1}{0.9}{0.8}\\
\includegraphics[height=0.2\textwidth]{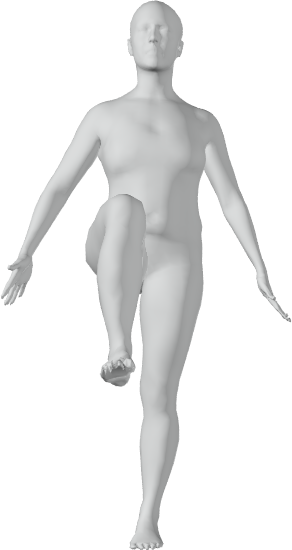}&
\includegraphics[height=0.2\textwidth]{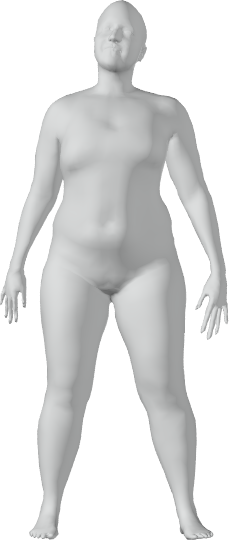}&
\includegraphics[height=0.2\textwidth]{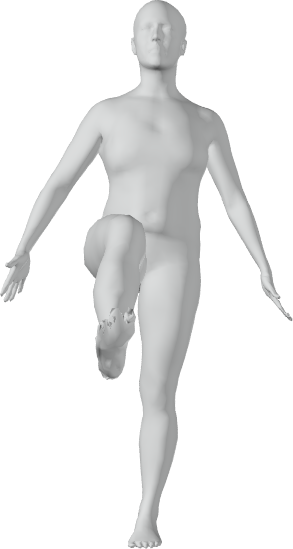}&
\includegraphics[height=0.2\textwidth]{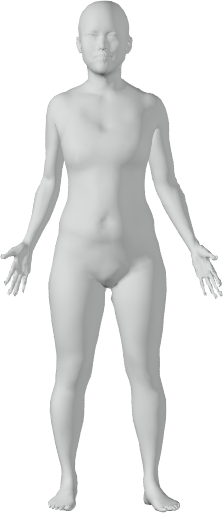}&
\includegraphics[height=0.2\textwidth]{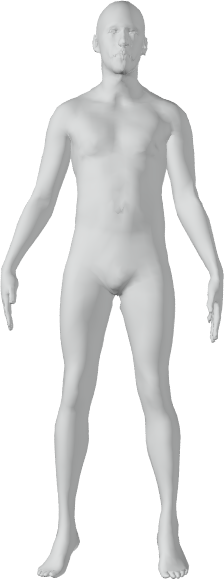}&
\includegraphics[height=0.2\textwidth]{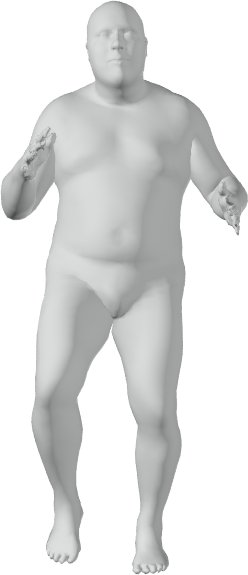}&
\redbox{\includegraphics[height=0.2\textwidth]{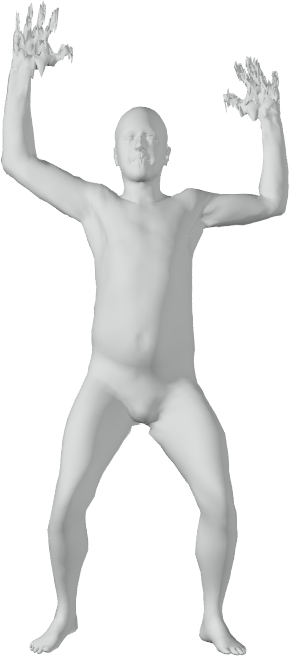}}&
\includegraphics[height=0.2\textwidth]{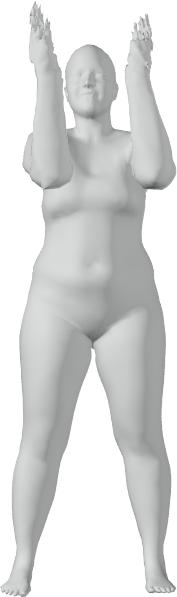}
\end{tabular}
\caption{BRESA results. They show improved individual shape quality and higher success rate. }
\label{Figure:SMPL:Mesh:BRESA}    
\end{figure*}

\begin{figure*}
\setlength\tabcolsep{2pt}
\begin{tabular}{cccccccc}
\includegraphics[height=0.19\textwidth]{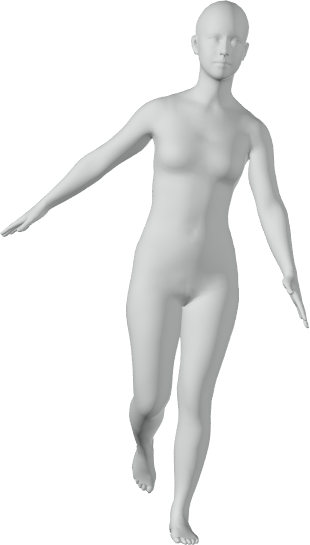}&
\includegraphics[height=0.19\textwidth]{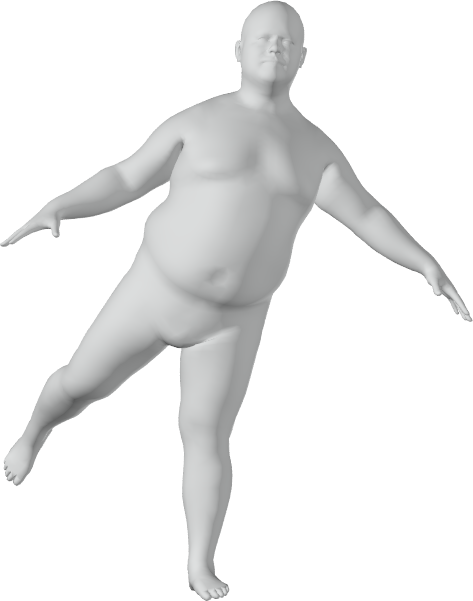}&
\includegraphics[height=0.19\textwidth]{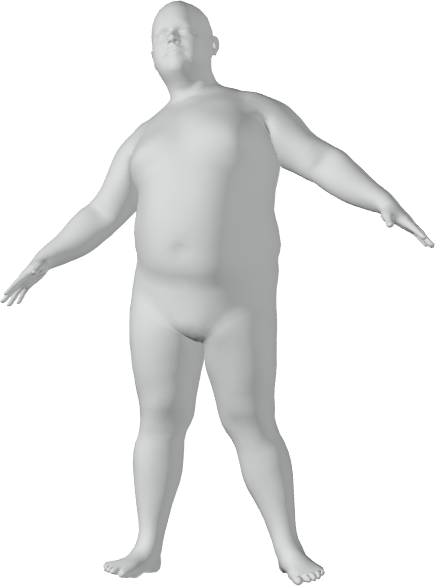}&
\includegraphics[height=0.19\textwidth]{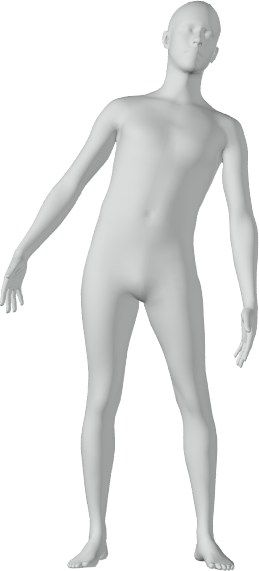}&
\includegraphics[height=0.19\textwidth]{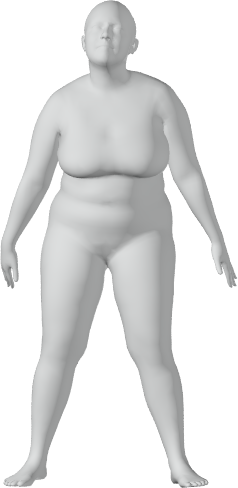}&
\includegraphics[height=0.19\textwidth]{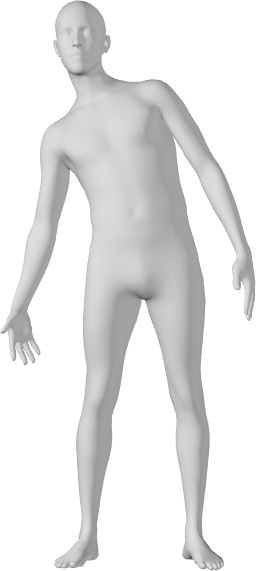}&
\includegraphics[height=0.19\textwidth]{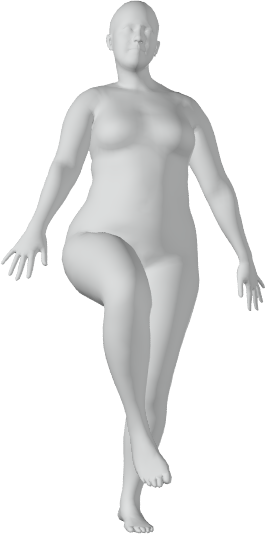}&
\includegraphics[height=0.19\textwidth]{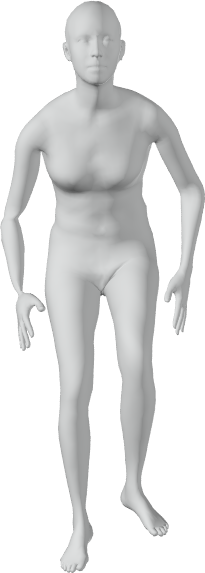}\\
\includegraphics[height=0.19\textwidth]{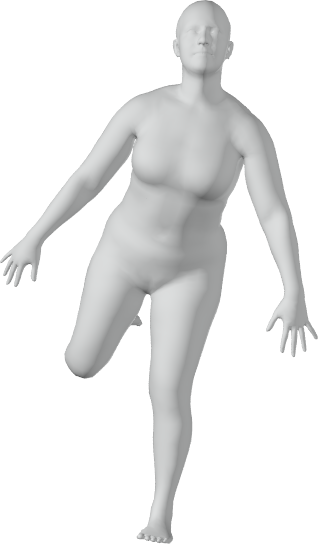}&
\includegraphics[height=0.19\textwidth]{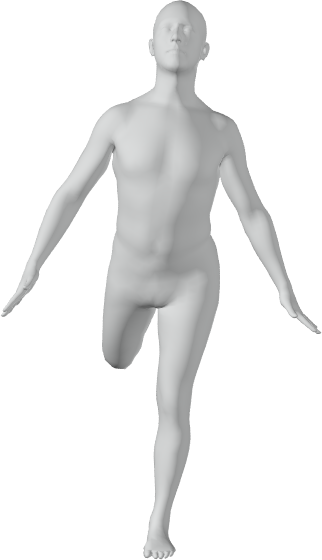}&
\includegraphics[height=0.19\textwidth]{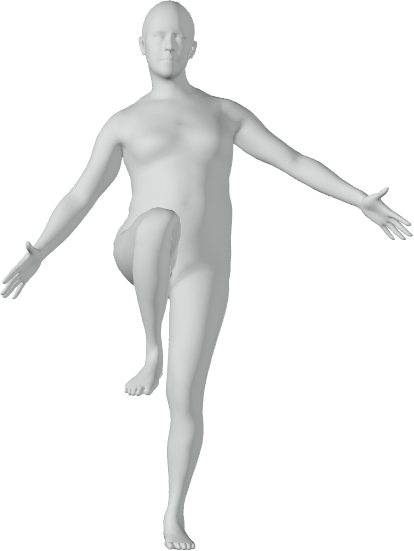}&
\includegraphics[height=0.19\textwidth]{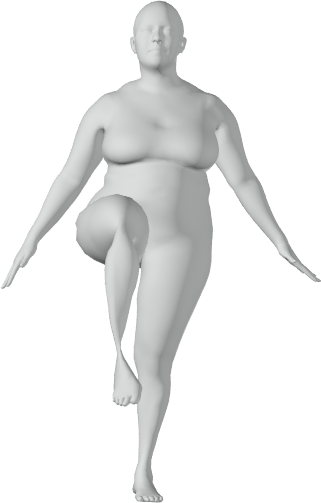}&
\includegraphics[height=0.19\textwidth]{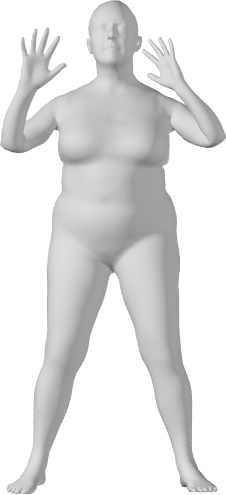}&
\includegraphics[height=0.19\textwidth]{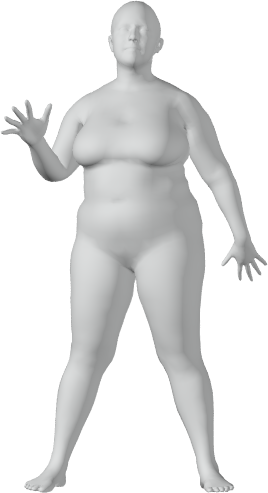}&
\includegraphics[height=0.19\textwidth]{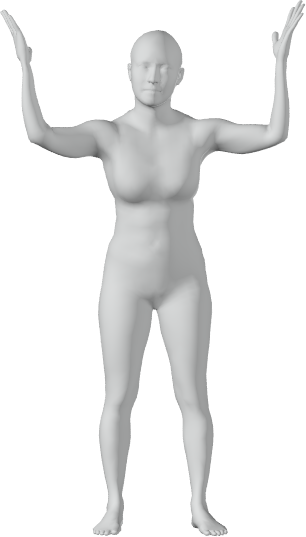}&
\includegraphics[height=0.19\textwidth]{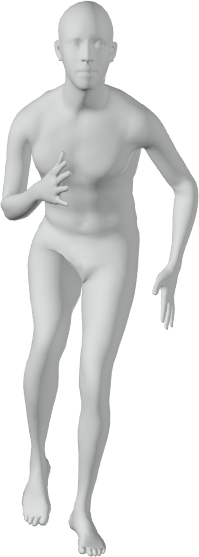}\\
\includegraphics[height=0.19\textwidth]{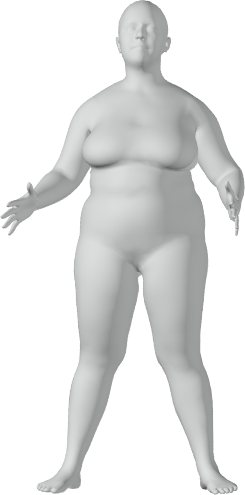}&
\includegraphics[height=0.19\textwidth]{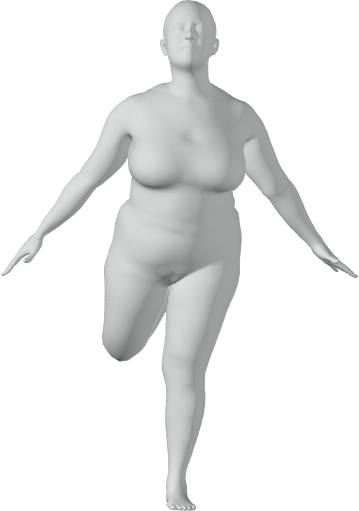}&
\includegraphics[height=0.19\textwidth]{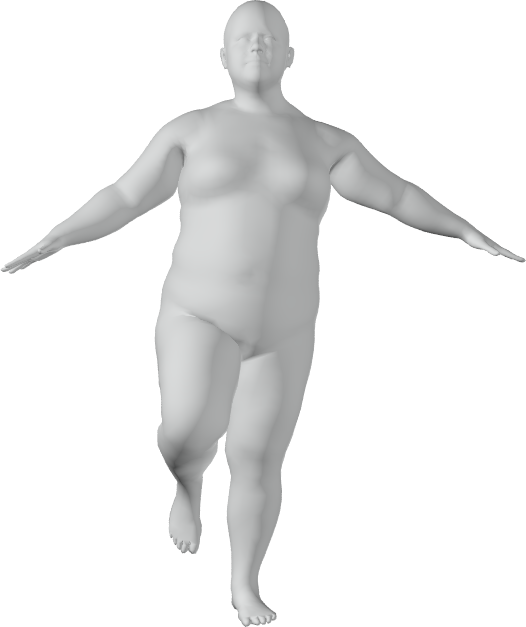}&
\includegraphics[height=0.19\textwidth]{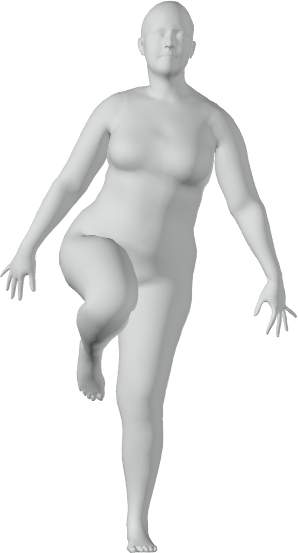}&
\includegraphics[height=0.19\textwidth]{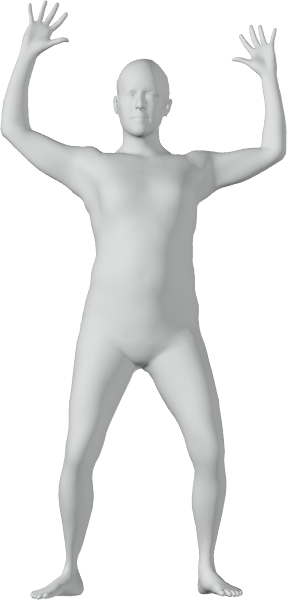}&
\includegraphics[height=0.19\textwidth]{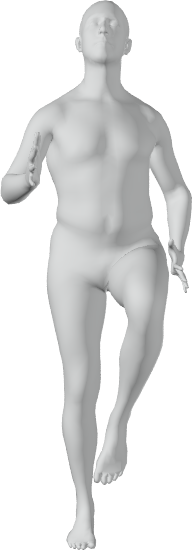}&
\includegraphics[height=0.19\textwidth]{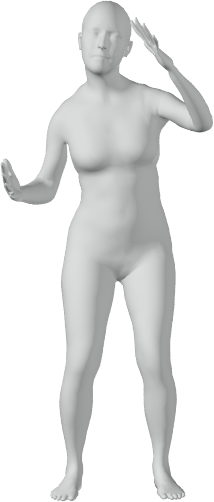}&
\includegraphics[height=0.19\textwidth]{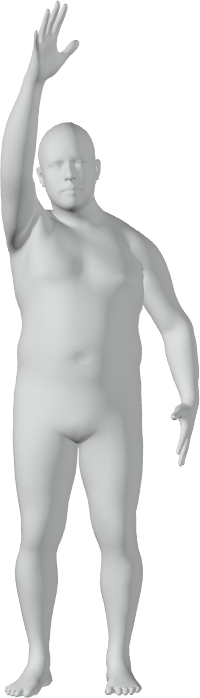}
\end{tabular}
\caption{ARAPDiffusion results. They show improved individual shape quality and higher success rate. }
\label{Figure:SMPL:Mesh:ARAPDiff}    
\end{figure*}

\begin{figure*}
\setlength\tabcolsep{1.5pt}
\begin{tabular}{cc|cc|cc|cc}
\includegraphics[height=0.20\textwidth]{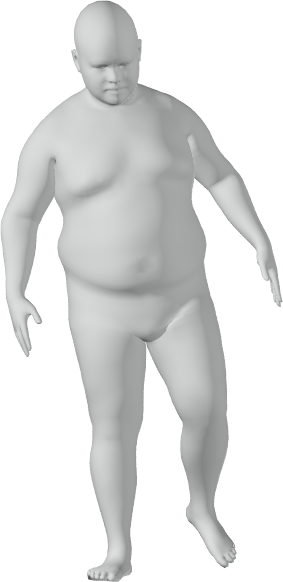}&
\includegraphics[height=0.20\textwidth]{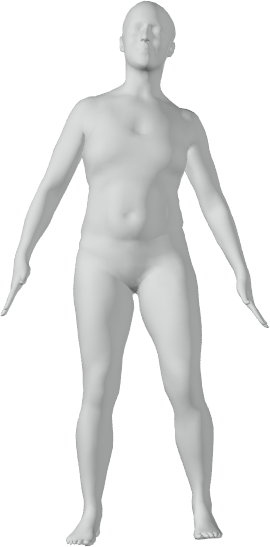}&
\includegraphics[height=0.20\textwidth]{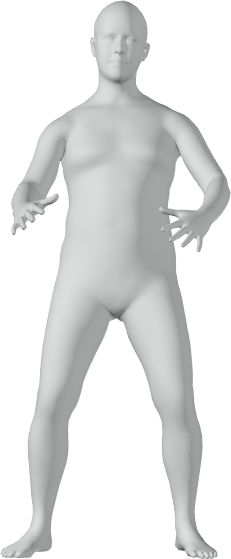}&
\includegraphics[height=0.20\textwidth]{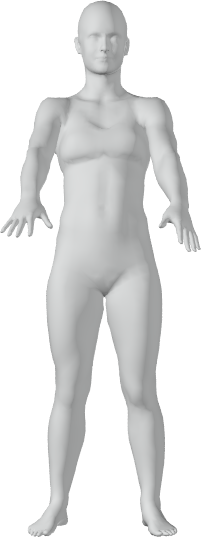}&
\includegraphics[height=0.20\textwidth]{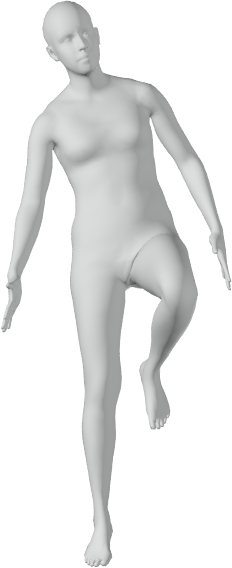}&
\includegraphics[height=0.20\textwidth]{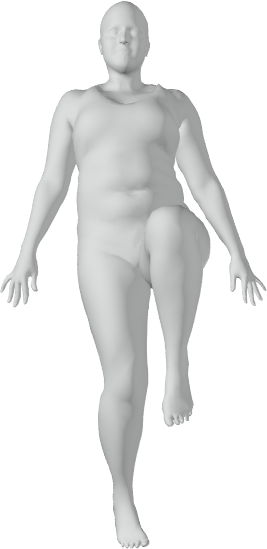}&
\includegraphics[height=0.20\textwidth]{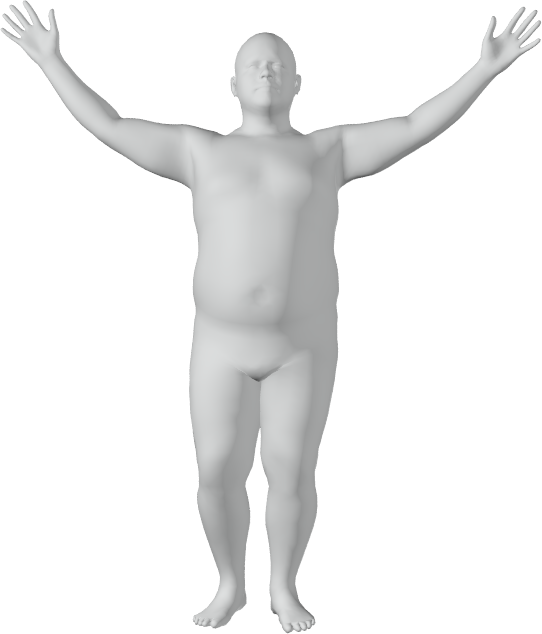}&
\includegraphics[height=0.20\textwidth]{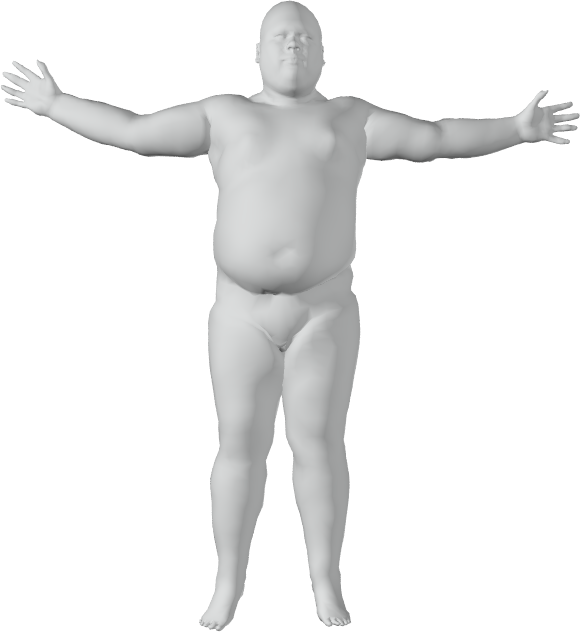}\\\hline
\includegraphics[height=0.20\textwidth]{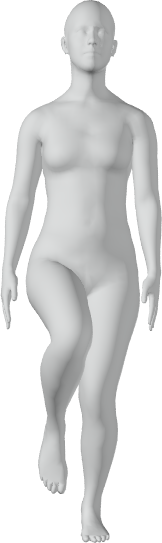}&
\includegraphics[height=0.20\textwidth]{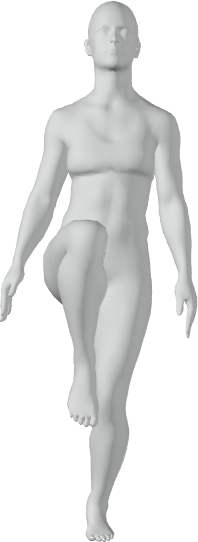}&
\includegraphics[height=0.20\textwidth]{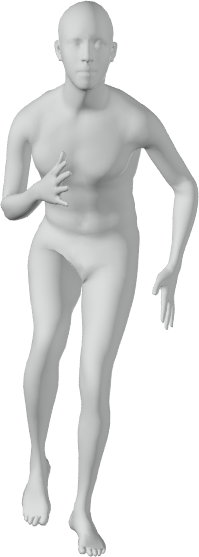}&
\includegraphics[height=0.20\textwidth]{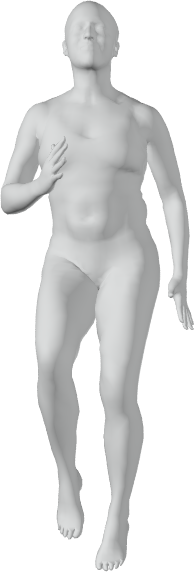}&
\includegraphics[height=0.20\textwidth]{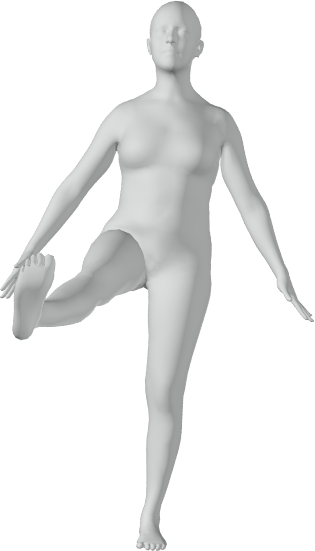}&
\includegraphics[height=0.20\textwidth]{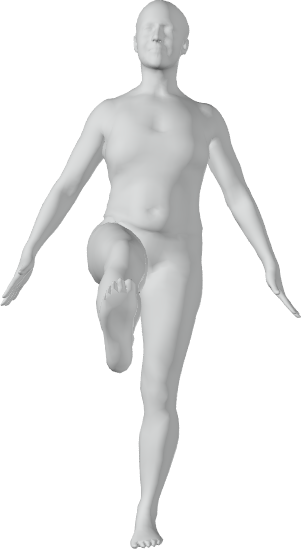}&
\includegraphics[height=0.20\textwidth]{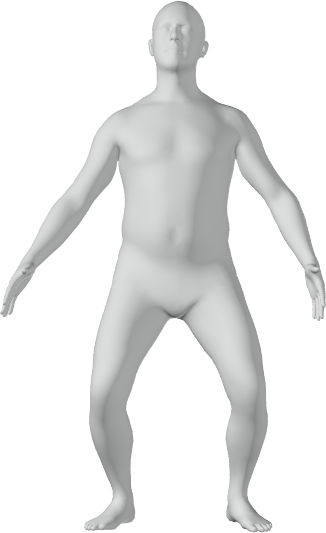}&
\includegraphics[height=0.20\textwidth]{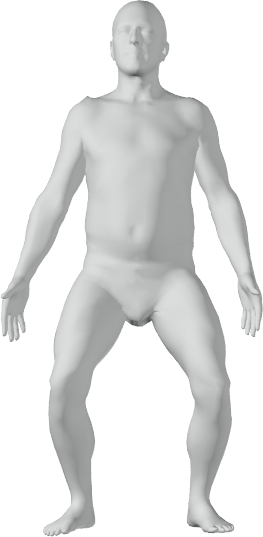}\\\hline
\includegraphics[height=0.20\textwidth]{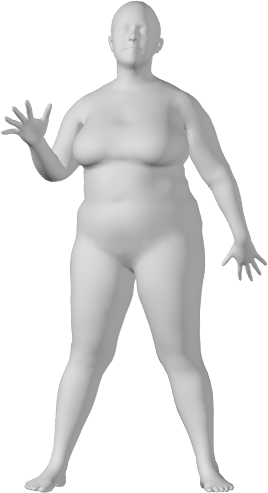}&
\includegraphics[height=0.20\textwidth]{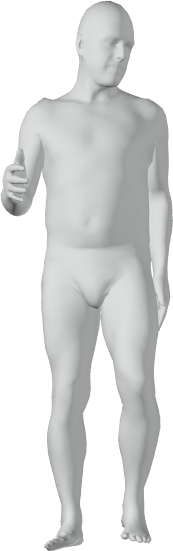}&
\includegraphics[height=0.20\textwidth]{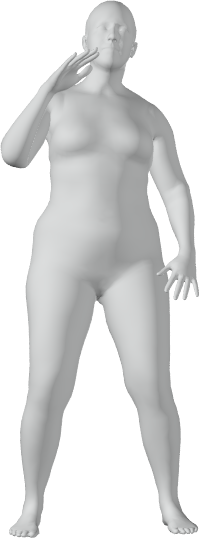}&
\includegraphics[height=0.20\textwidth]{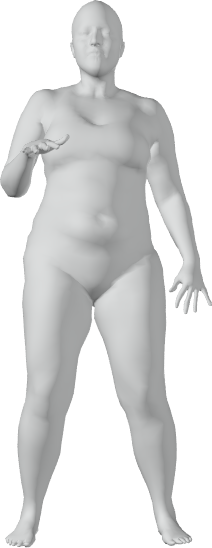}&
\includegraphics[height=0.20\textwidth]{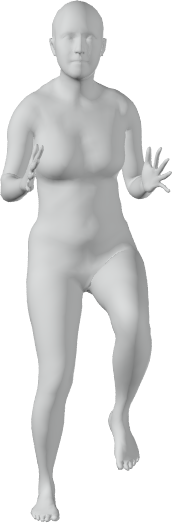}&
\includegraphics[height=0.20\textwidth]{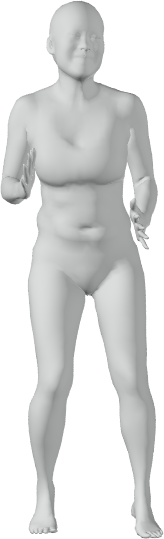}&
\includegraphics[height=0.20\textwidth]{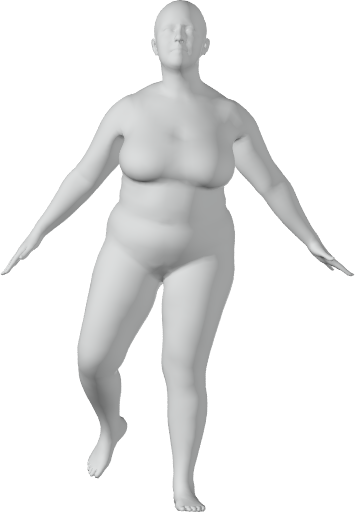}&
\includegraphics[height=0.20\textwidth]{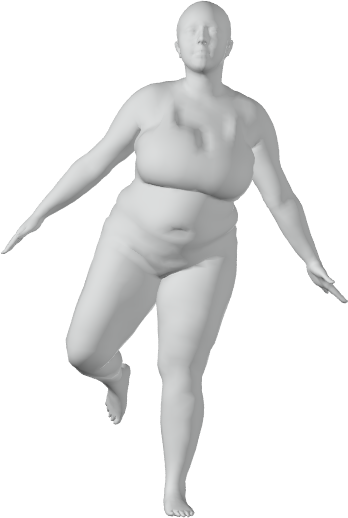}
\end{tabular}
\caption{The closest training shape of each generated shape. In each cell, the generated shape is shown on the left, while the closest training shape is shown on the right. }
\label{Figure:SMPL:Mesh:ARAPDiff:Closest:Training}    
\end{figure*}

\subsection{Animal}

Fig.~\ref{Figure:SMAL:Mesh:ARAPReg}, Fig.~\ref{Figure:SMAL:Mesh:FrameAVE}, Fig.~\ref{Figure:SMAL:Mesh:GeoLatent}, and Fig.~\ref{Figure:SMAL:Mesh:BRESA} show randomly generated shapes of the baseline approaches ARAPReg, FrameAVE, Geolatent, and BRESA, respectively. All of them share the same issue that a subset of the synthetic shapes have low-quality overall shapes. Moreover, their synthetic shapes show a variety of noticeable distortions. 

In contrast, as shown in Fig.~\ref{Figure:SMAL:Mesh:ARAPDiff}, ARAPDiffusion does not have low-quality generated shapes. The distortion of each generate shape is also visually smaller. Fig.~\ref{Figure:SMAL:Mesh:ARAPDiff:Closest:Training} shows for each generated shape (on the left) its closest training shape (on the right). We can see that the generated shapes do not replicate the training shapes. 

\begin{figure*}
\setlength\tabcolsep{4pt}
\begin{tabular}{cccccc}
\includegraphics[width=0.15\textwidth]{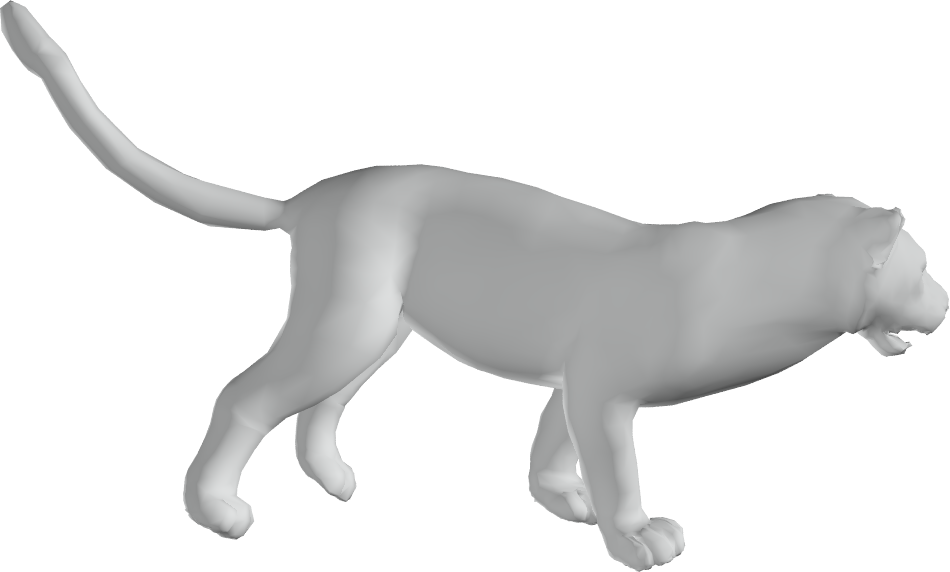}&
\redbox{\includegraphics[width=0.15\textwidth]{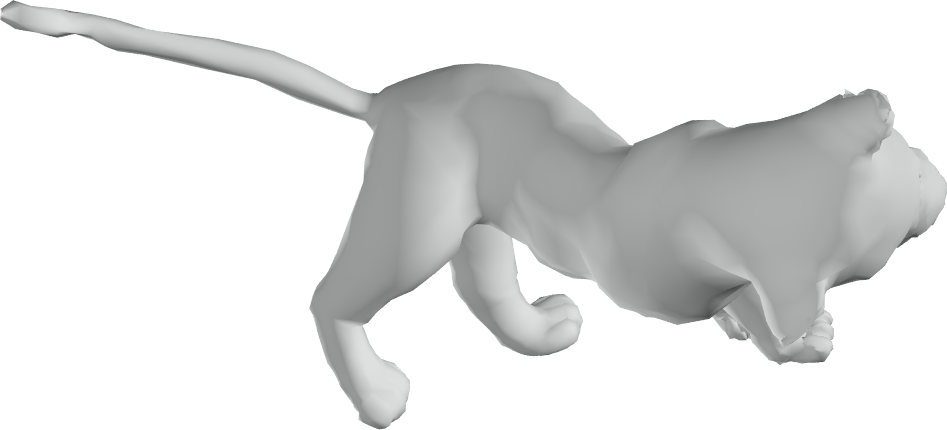}}&
\includegraphics[width=0.15\textwidth]{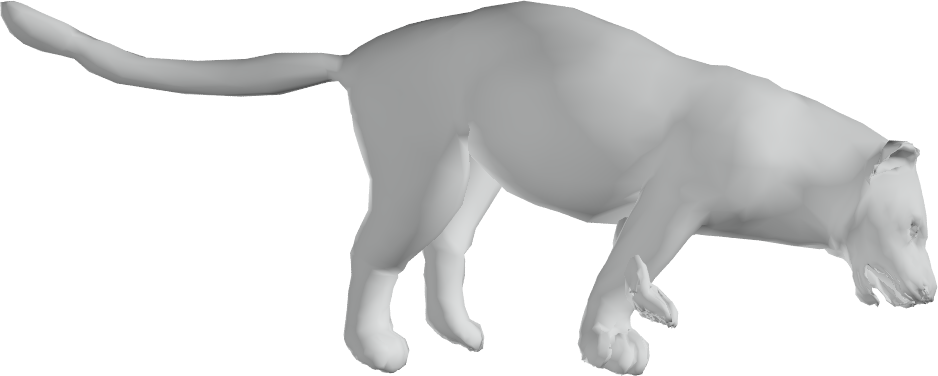}&
\includegraphics[width=0.15\textwidth]{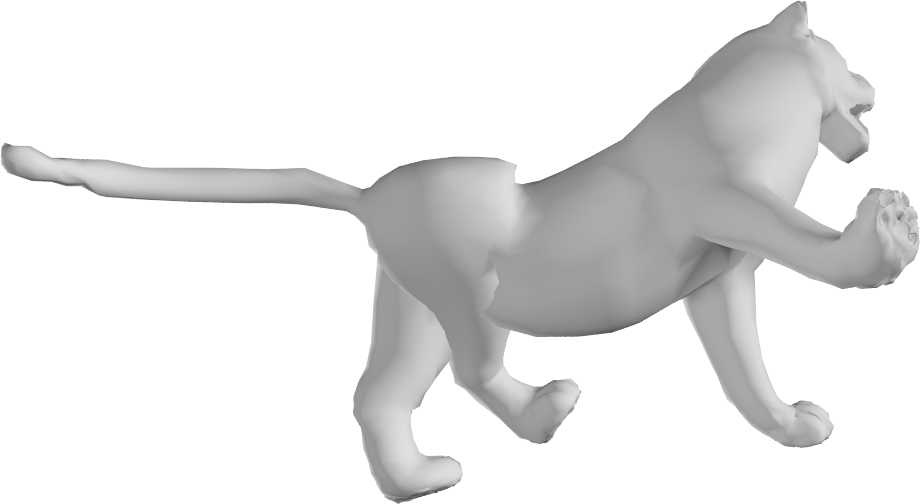}&
\includegraphics[width=0.15\textwidth]{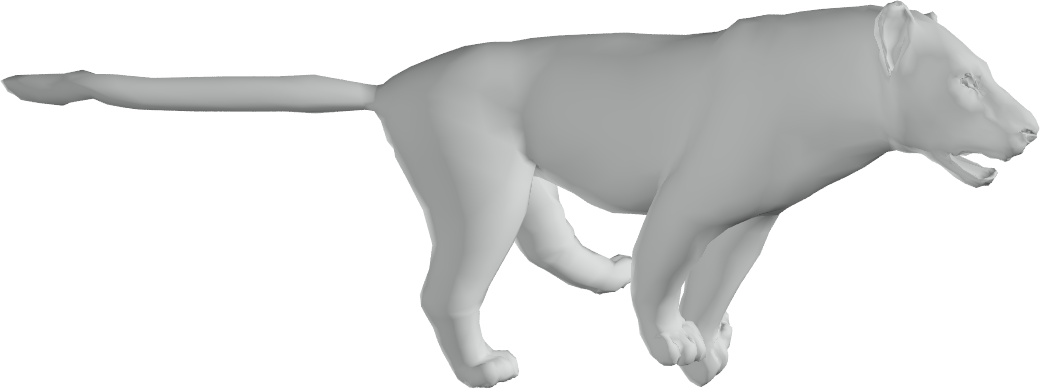}&
\includegraphics[width=0.15\textwidth]{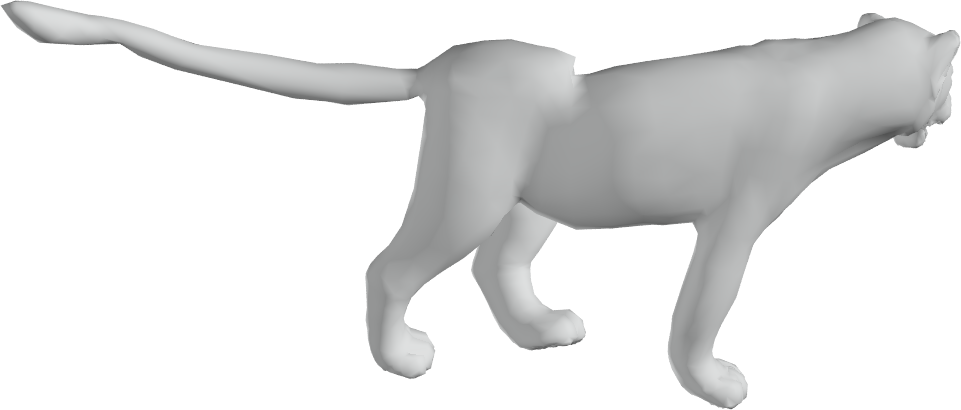}\\
\includegraphics[width=0.15\textwidth]{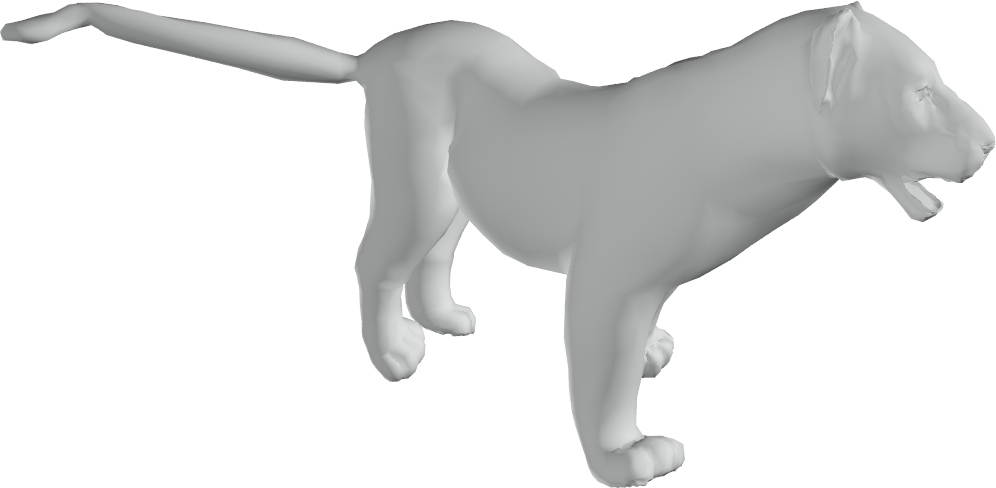}&
\includegraphics[width=0.15\textwidth]{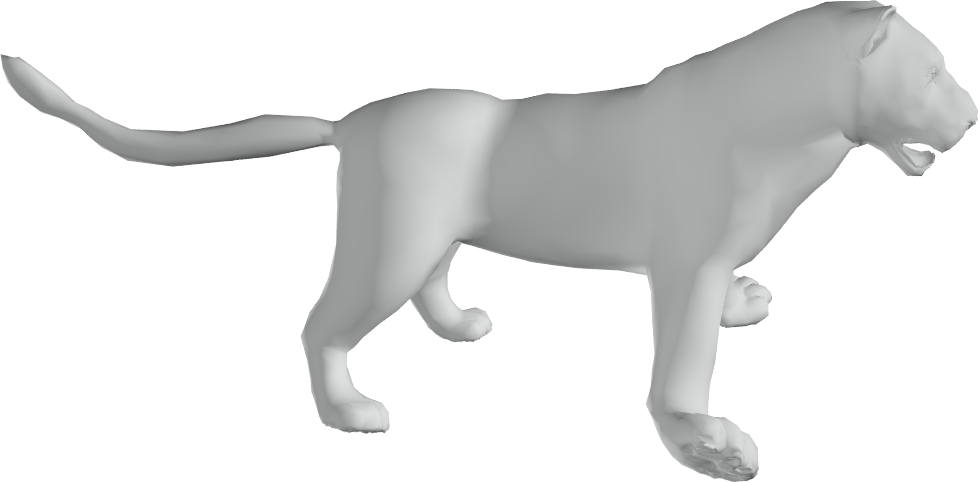}&
\includegraphics[width=0.15\textwidth]{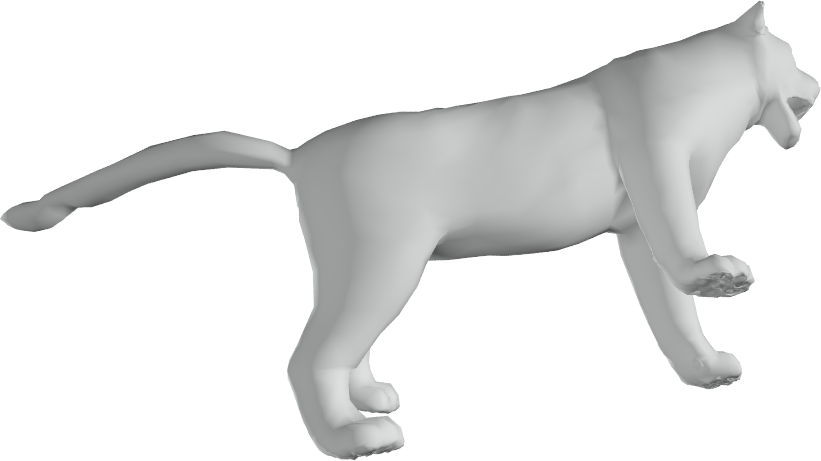}&
\includegraphics[width=0.15\textwidth]{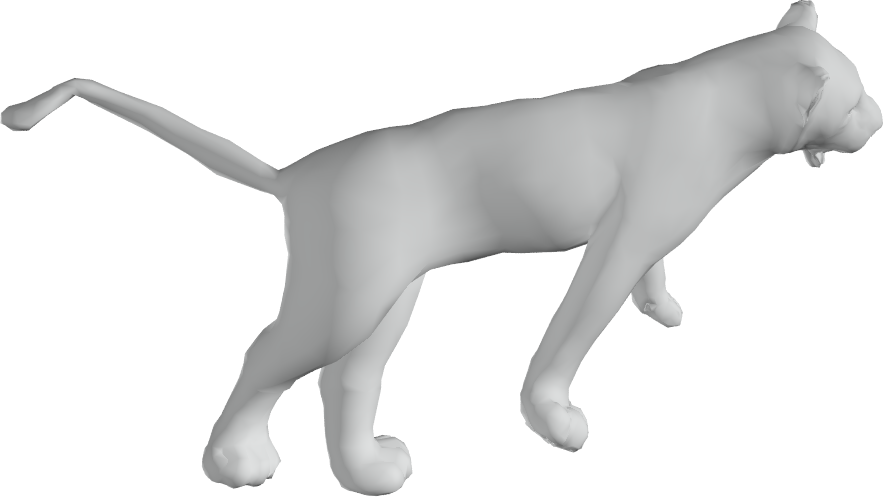}&
\includegraphics[width=0.15\textwidth]{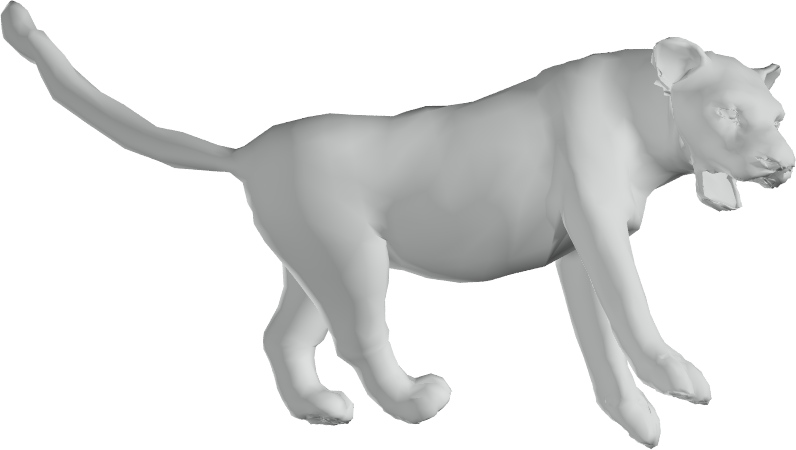}&
\includegraphics[width=0.15\textwidth]{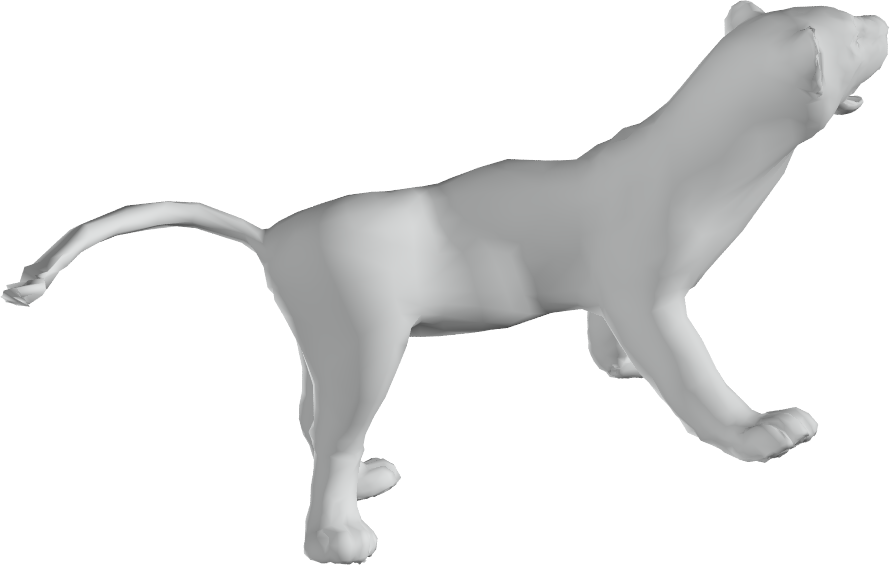}\\
\includegraphics[width=0.15\textwidth]{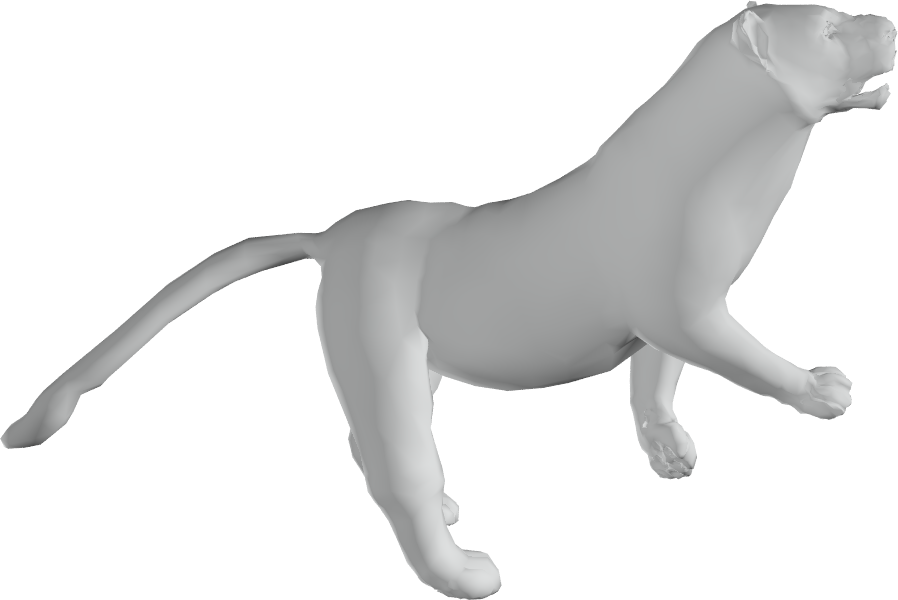}&
\includegraphics[width=0.15\textwidth]{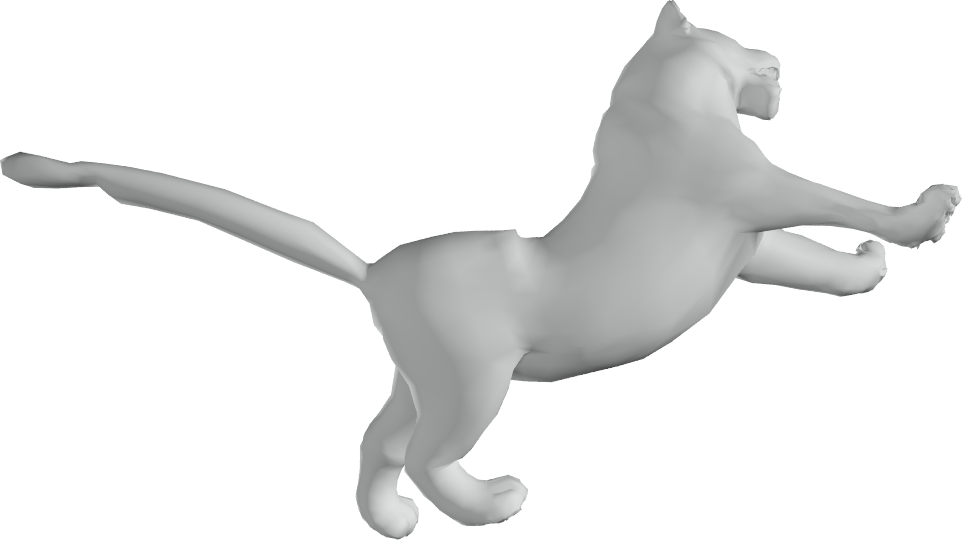}&
\includegraphics[width=0.15\textwidth]{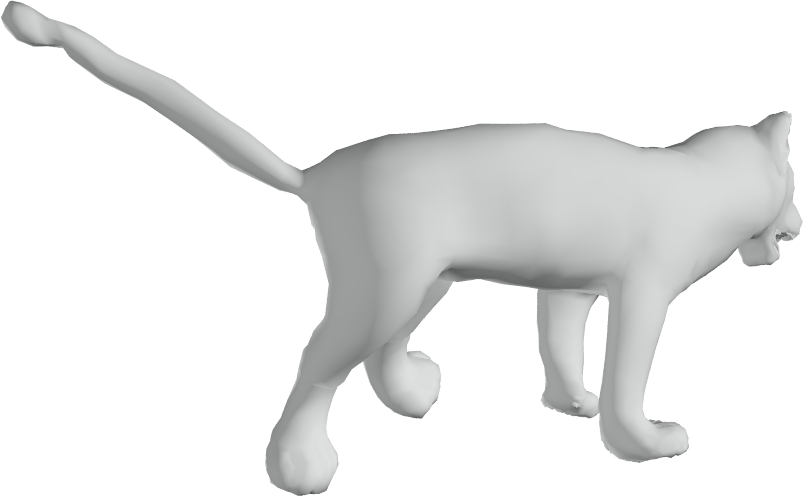}&
\includegraphics[width=0.15\textwidth]{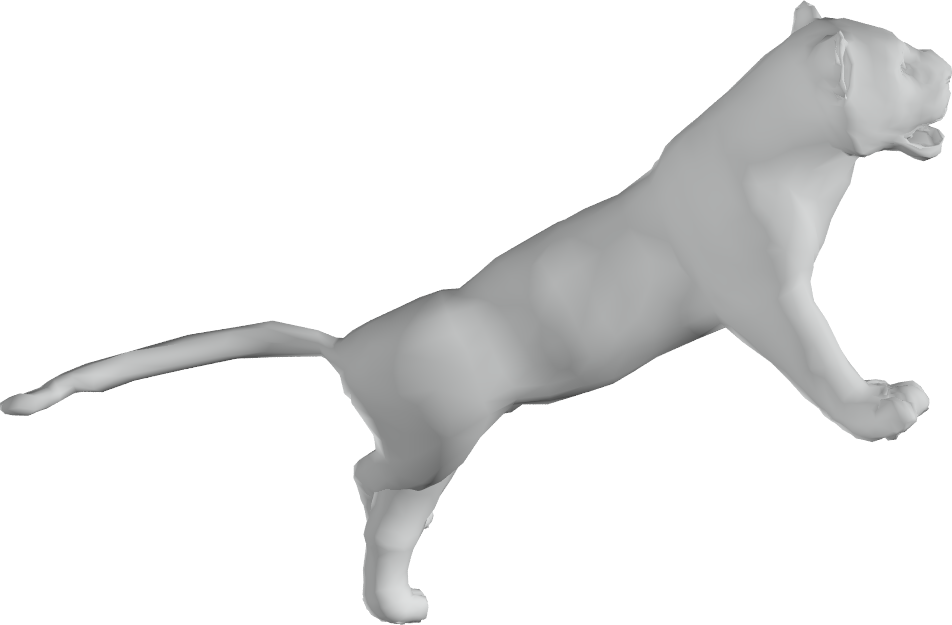}&
\includegraphics[width=0.15\textwidth]{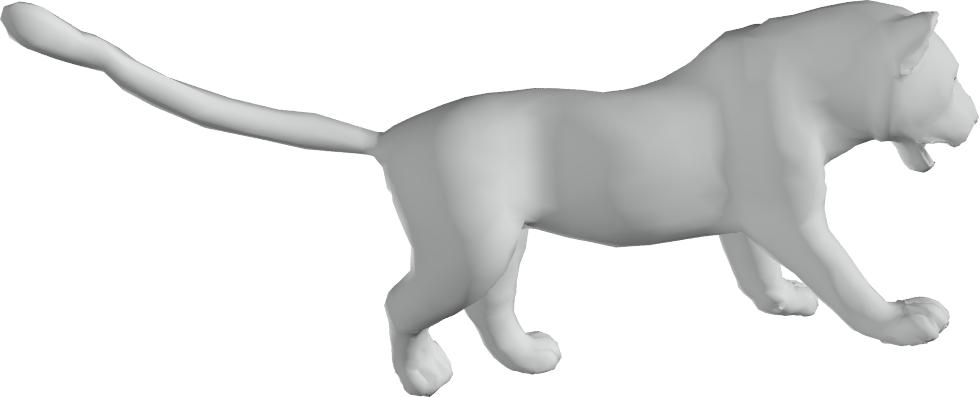}&
\includegraphics[width=0.15\textwidth]{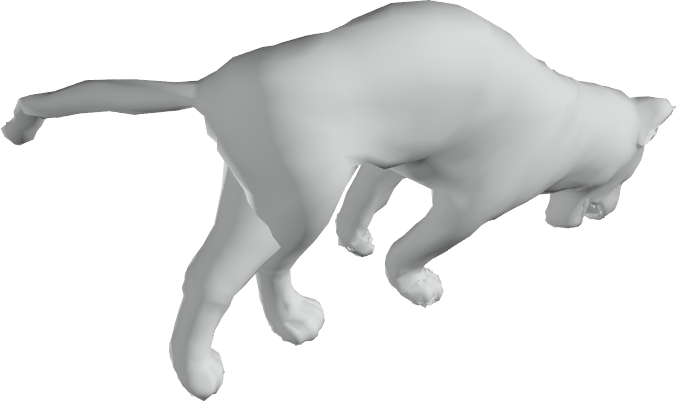}\\
\includegraphics[width=0.15\textwidth]{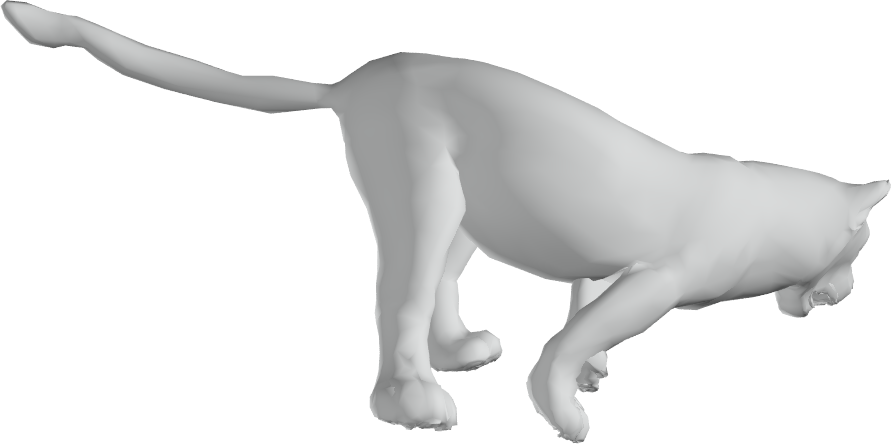}&
\includegraphics[width=0.15\textwidth]{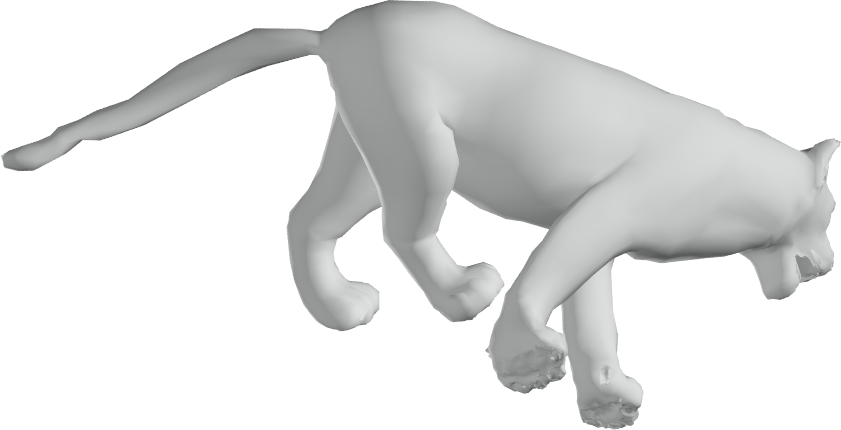}&
\includegraphics[width=0.15\textwidth]{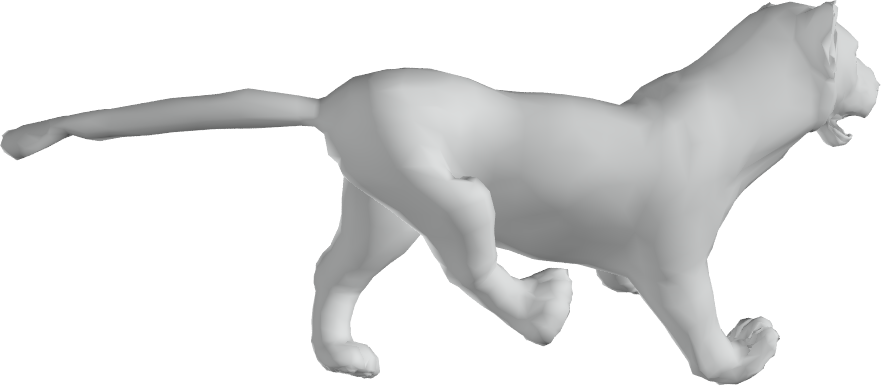}&
\includegraphics[width=0.15\textwidth]{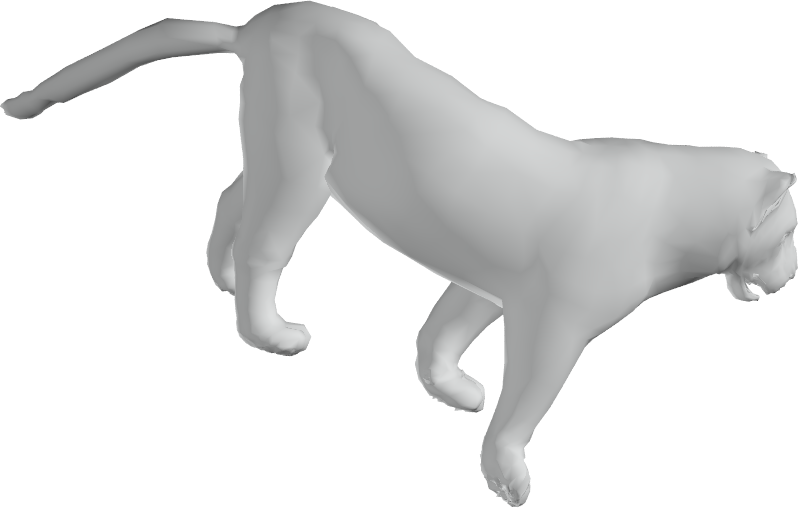}&
\redbox{\includegraphics[width=0.15\textwidth]{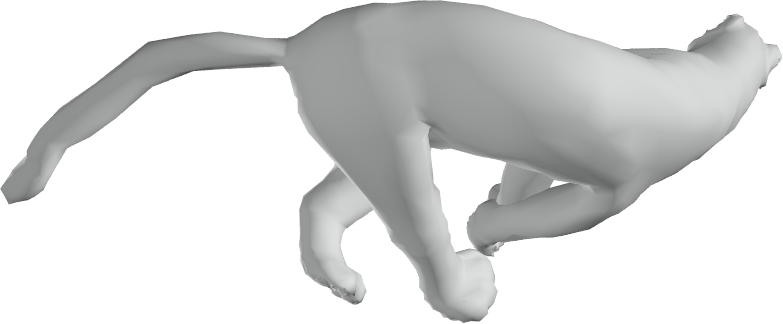}}&
\includegraphics[width=0.15\textwidth]{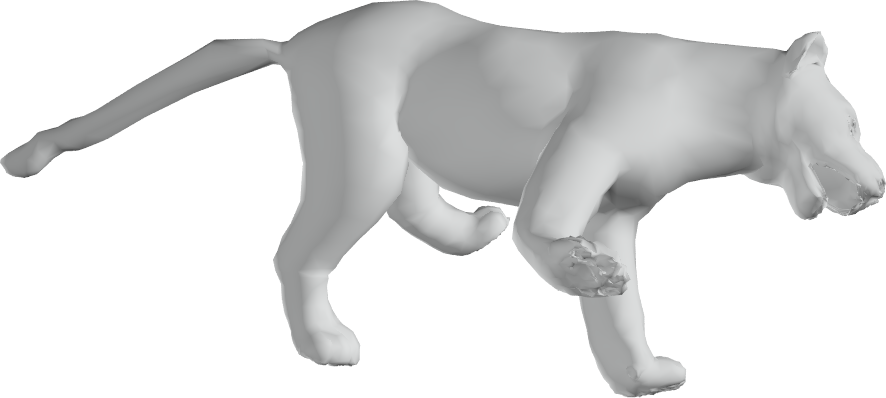}
\end{tabular}
\caption{ARAPReg results. Similar to other baseline approaches, some of the generated shapes have significantly distorted geometries. Most of shapes have noticeable local distortions. }
\label{Figure:SMAL:Mesh:ARAPReg}    
\end{figure*}

\begin{figure*}
\setlength\tabcolsep{4pt}
\begin{tabular}{cccccc}
\bluepart{\includegraphics[width=0.15\textwidth]{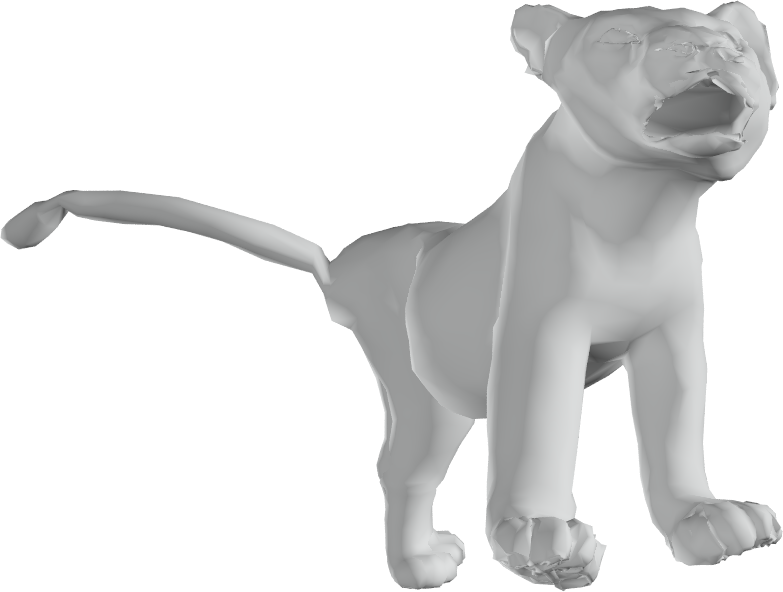}}{0.65}{1}{1}{0.65}&
\includegraphics[width=0.15\textwidth]{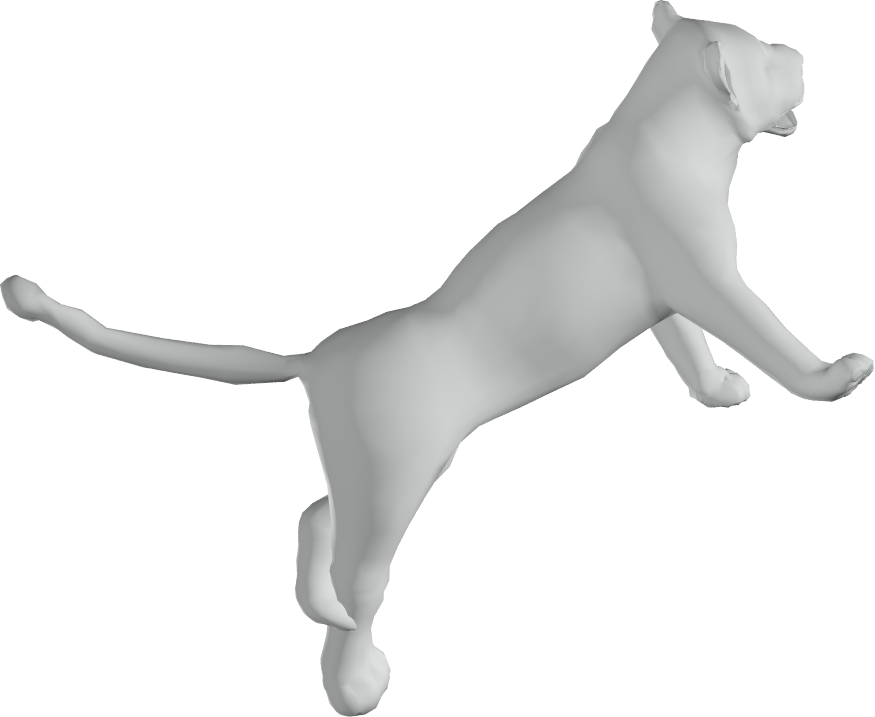}&
\includegraphics[width=0.15\textwidth]{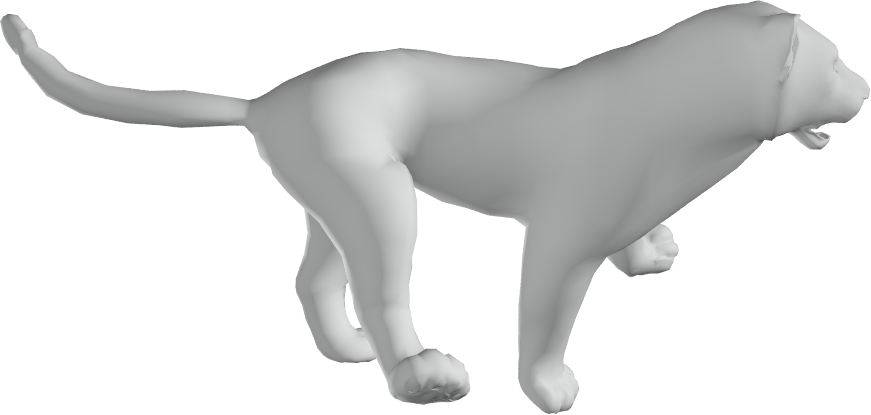}&
\includegraphics[width=0.15\textwidth]{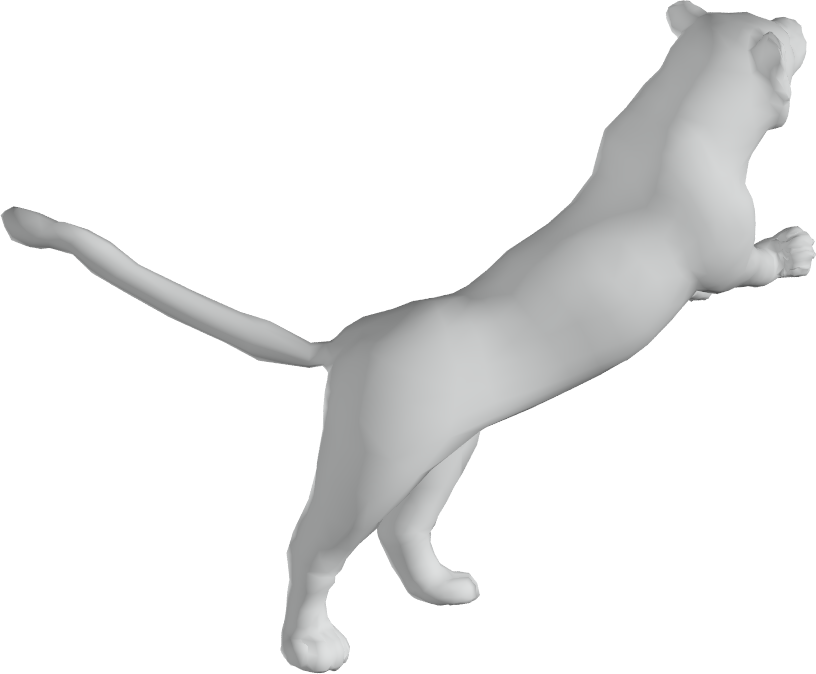}&
\includegraphics[width=0.15\textwidth]{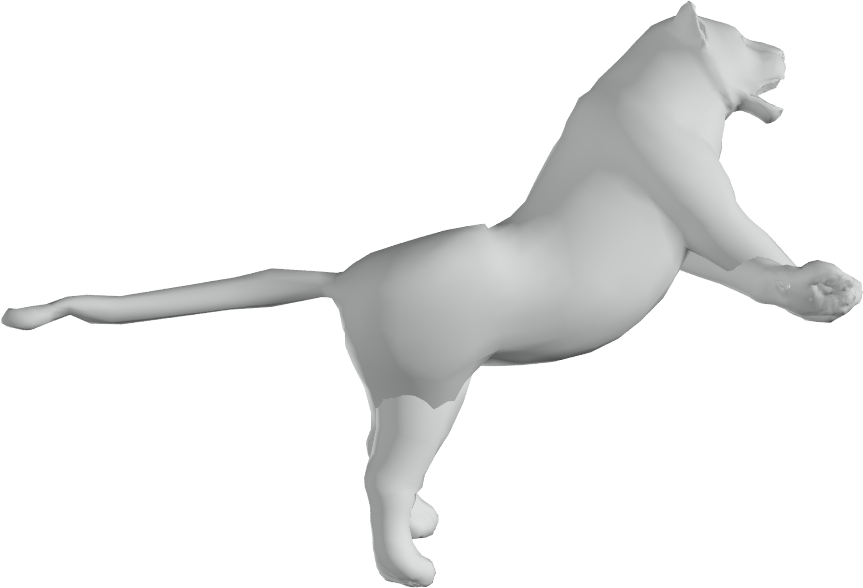}&
\includegraphics[width=0.15\textwidth]{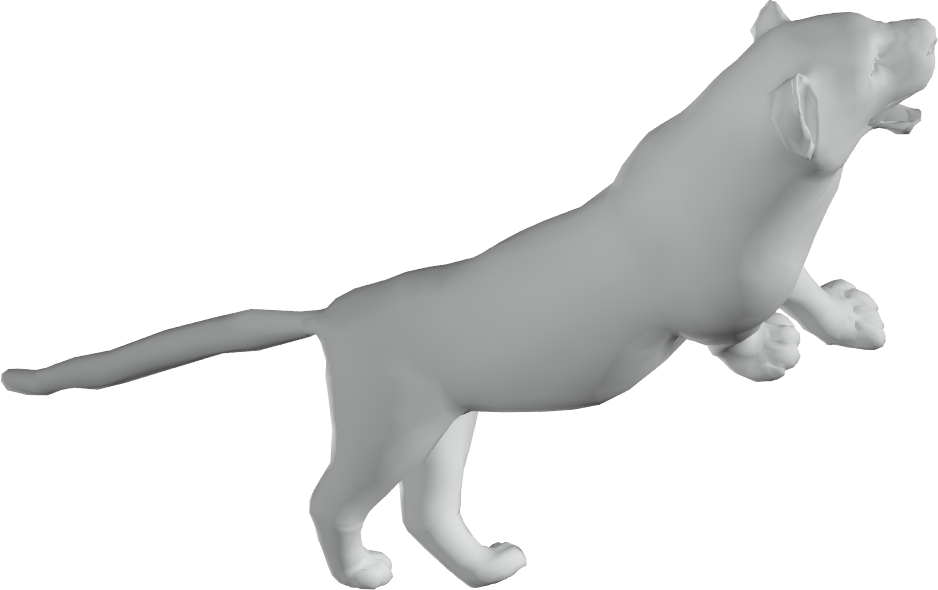}\\
\includegraphics[width=0.15\textwidth]{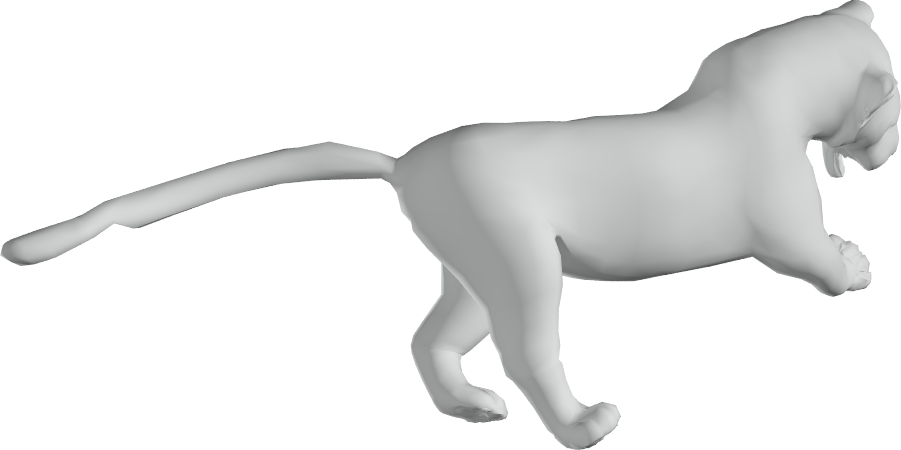}&
\includegraphics[width=0.15\textwidth]{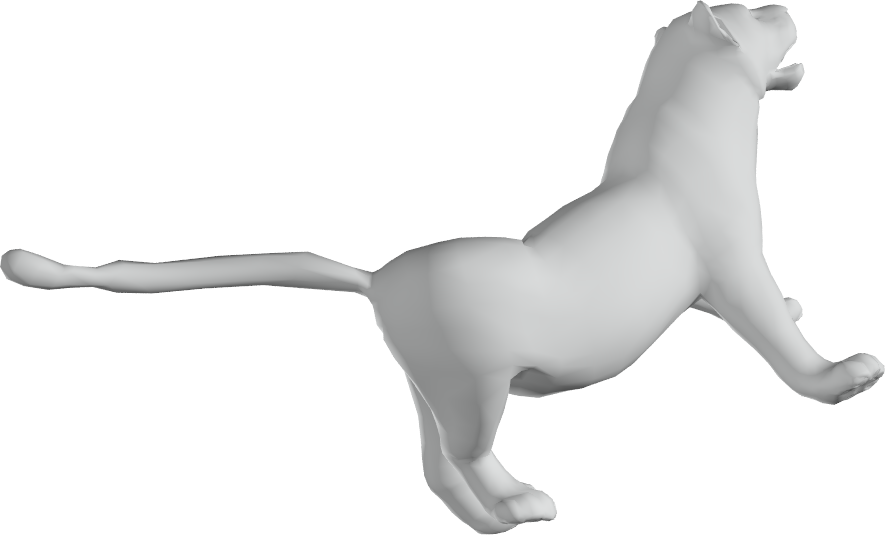}&
\includegraphics[width=0.15\textwidth]{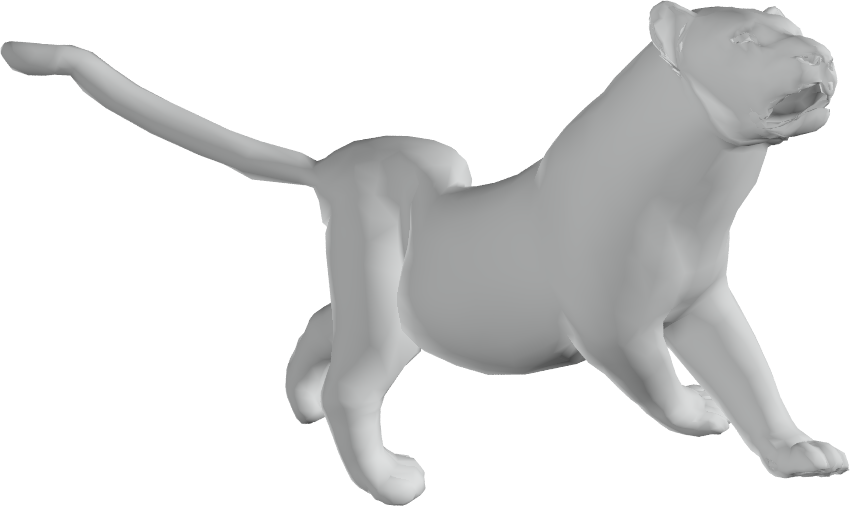}&
\includegraphics[width=0.15\textwidth]{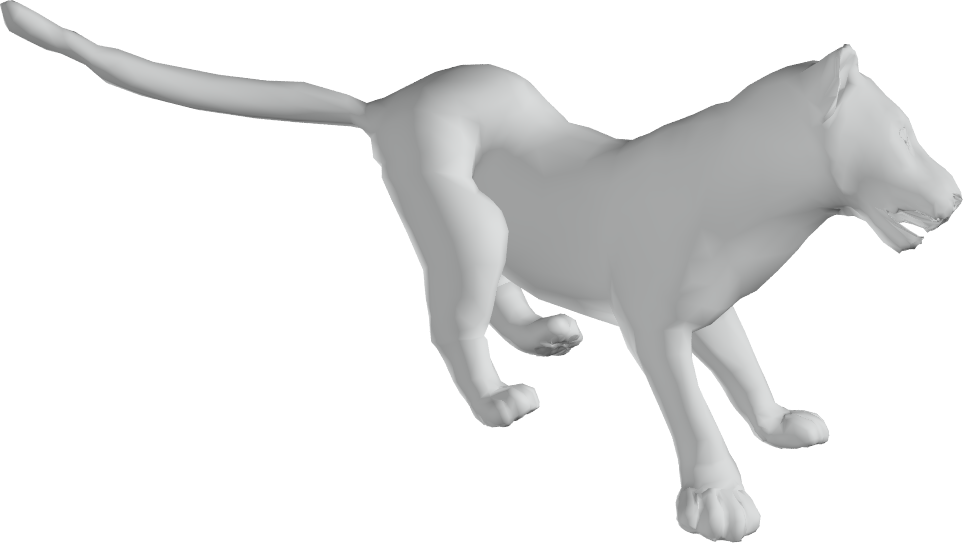}&
\includegraphics[width=0.15\textwidth]{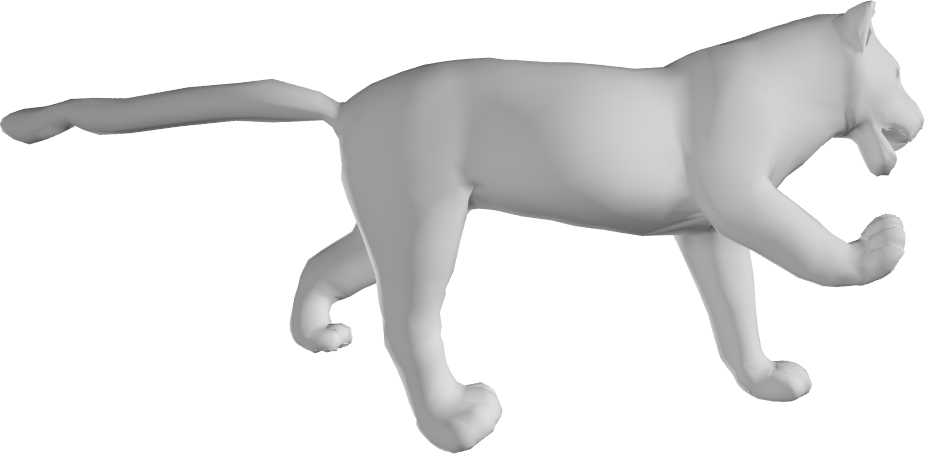}&
\includegraphics[width=0.15\textwidth]{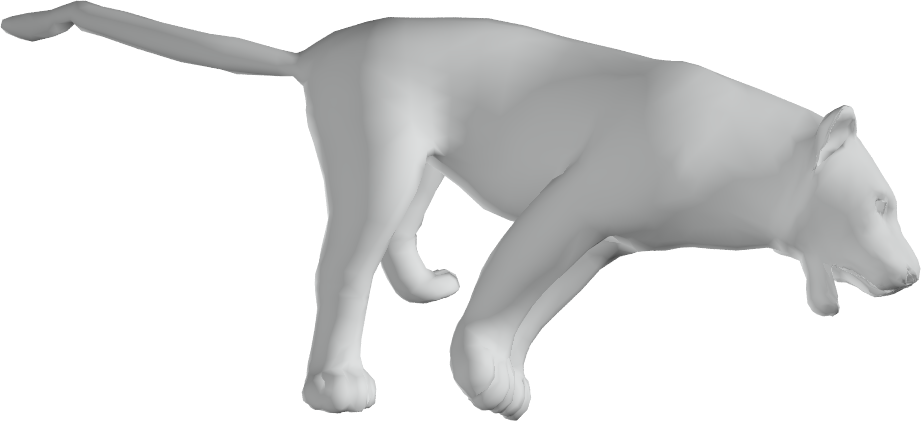}\\
\includegraphics[width=0.15\textwidth]{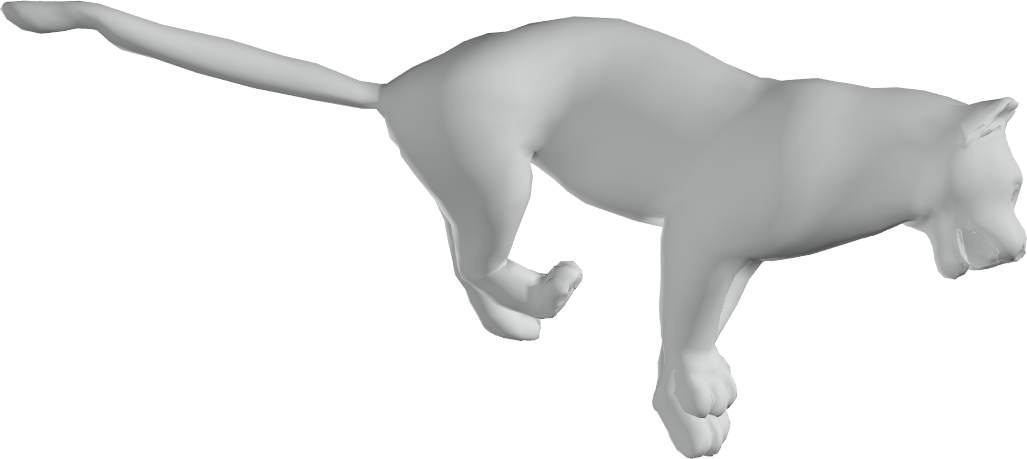}&
\includegraphics[width=0.15\textwidth]{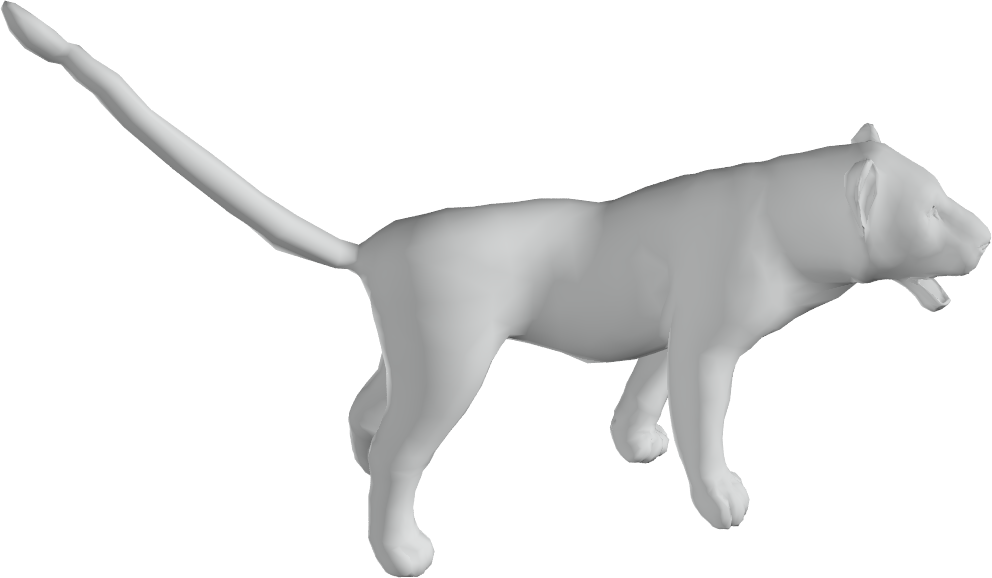}&
\includegraphics[width=0.15\textwidth]{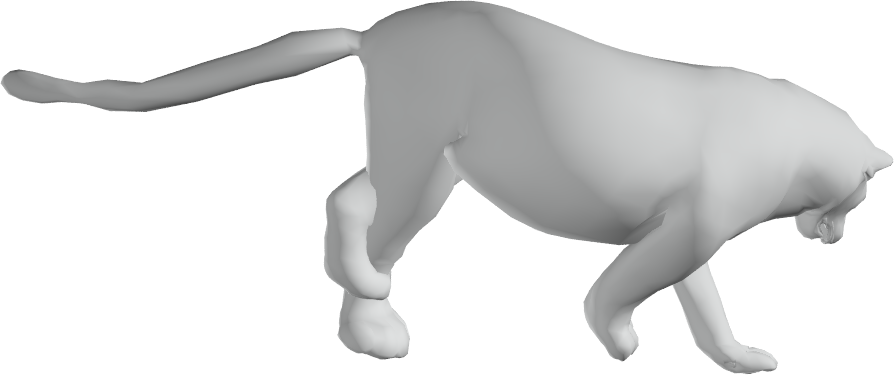}&
\includegraphics[width=0.15\textwidth]{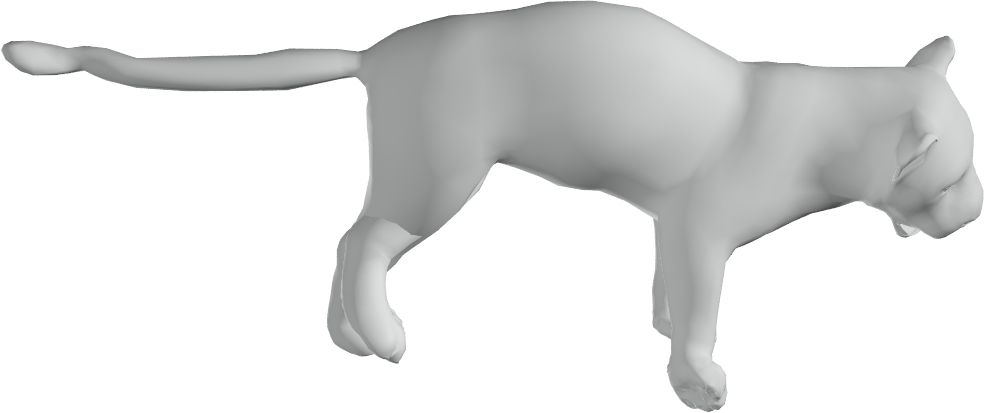}&
\includegraphics[width=0.15\textwidth]{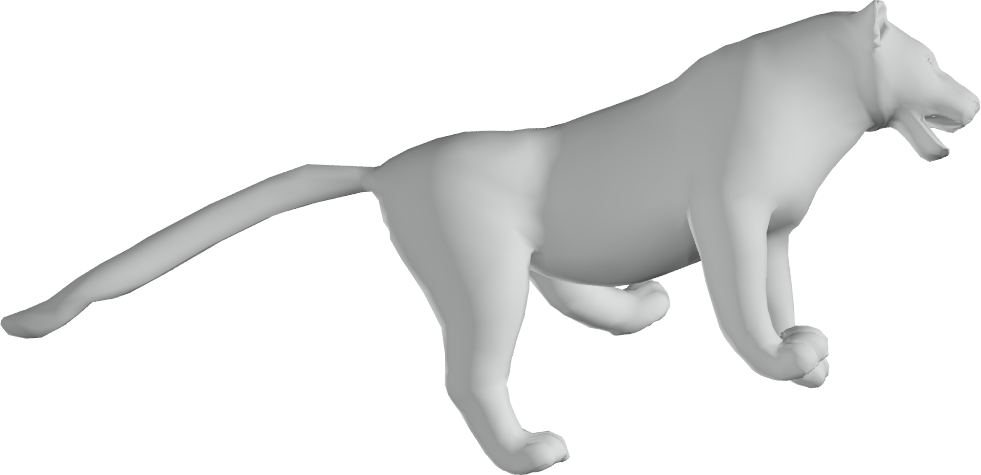}&
\redbox{\includegraphics[width=0.15\textwidth]{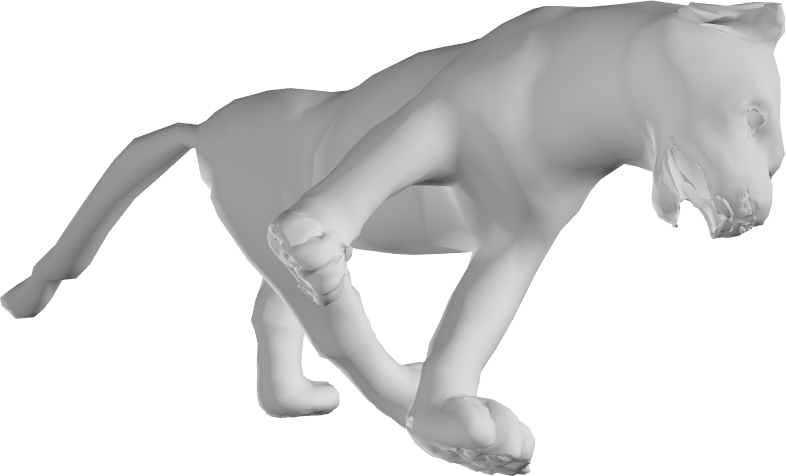}}\\
\includegraphics[width=0.15\textwidth]{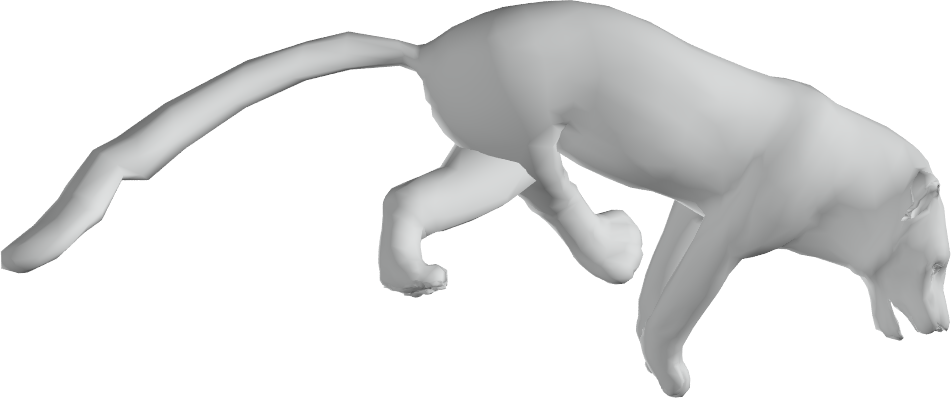}&
\includegraphics[width=0.15\textwidth]{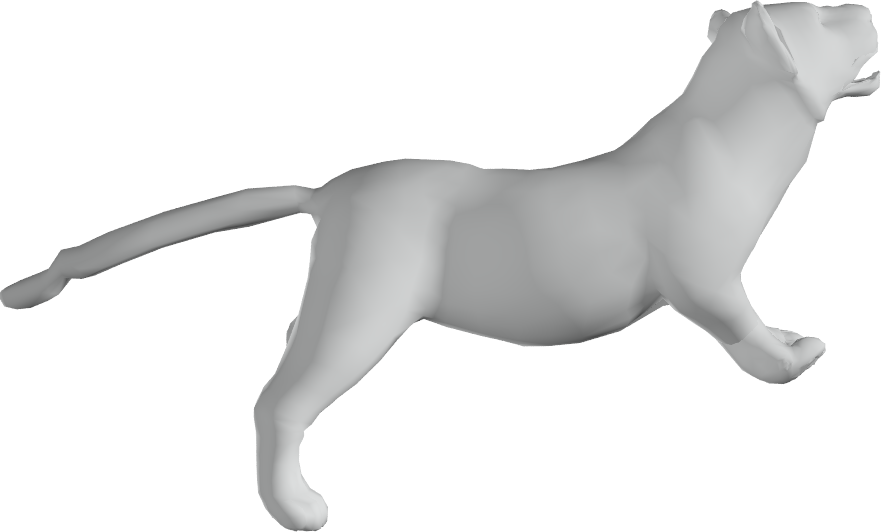}&
\includegraphics[width=0.15\textwidth]{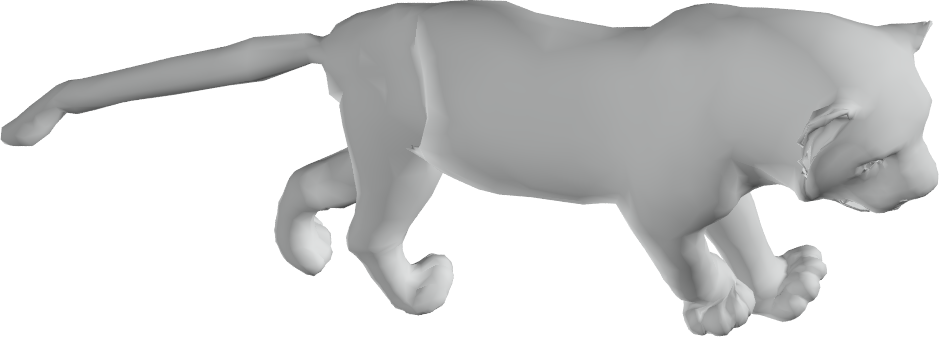}&
\includegraphics[width=0.15\textwidth]{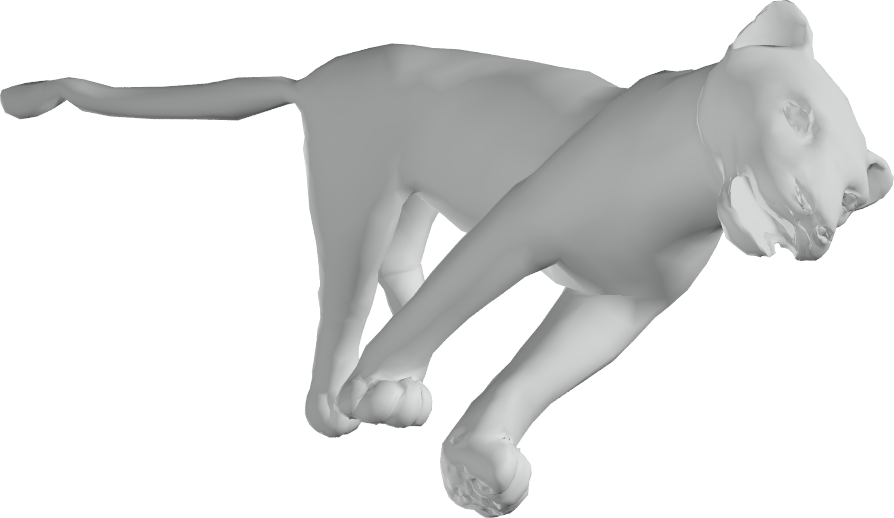}&
\includegraphics[width=0.15\textwidth]{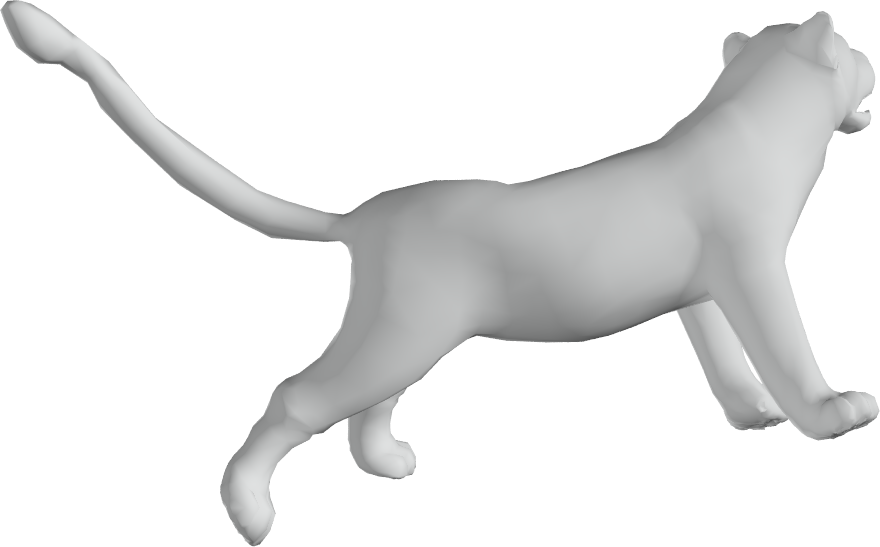}&
\includegraphics[width=0.15\textwidth]{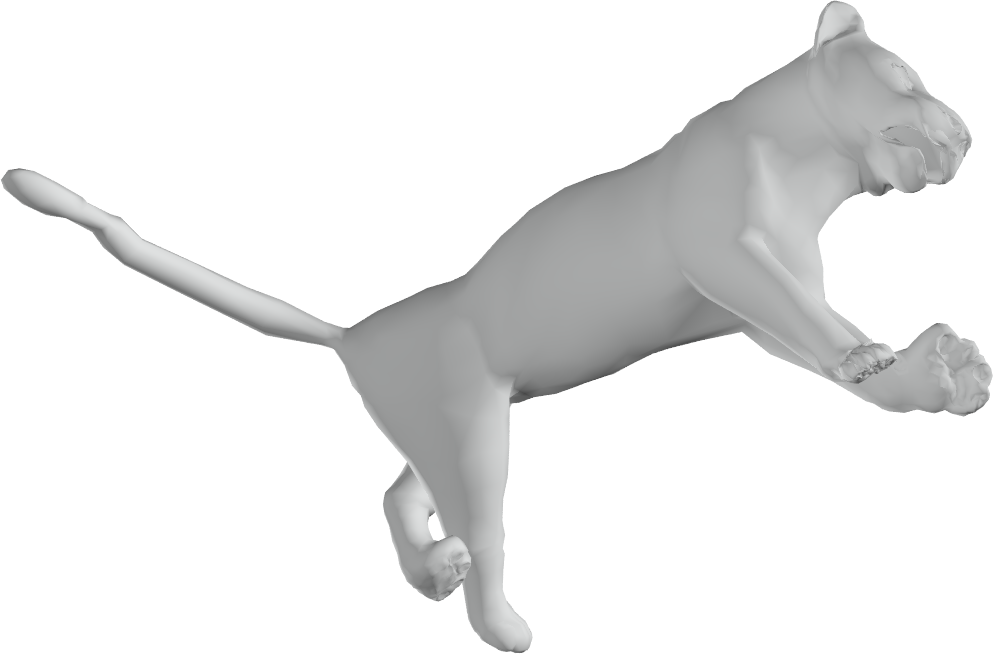}
\end{tabular}
\caption{FrameAVE results. Similar to other baseline approaches, some of the generated shapes have significantly distorted geometries. Most of shapes have noticeable local distortions. }
\label{Figure:SMAL:Mesh:FrameAVE}    
\end{figure*}

\begin{figure*}
\setlength\tabcolsep{4pt}
\begin{tabular}{cccccc}
\includegraphics[width=0.15\textwidth]{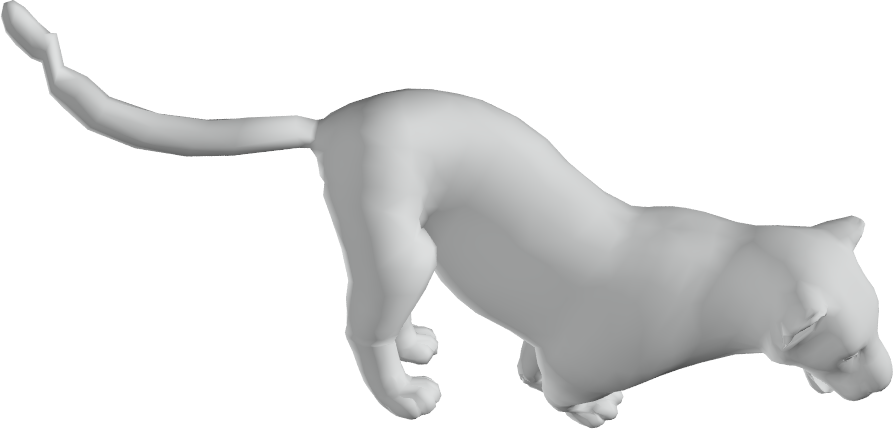}&
\includegraphics[width=0.15\textwidth]{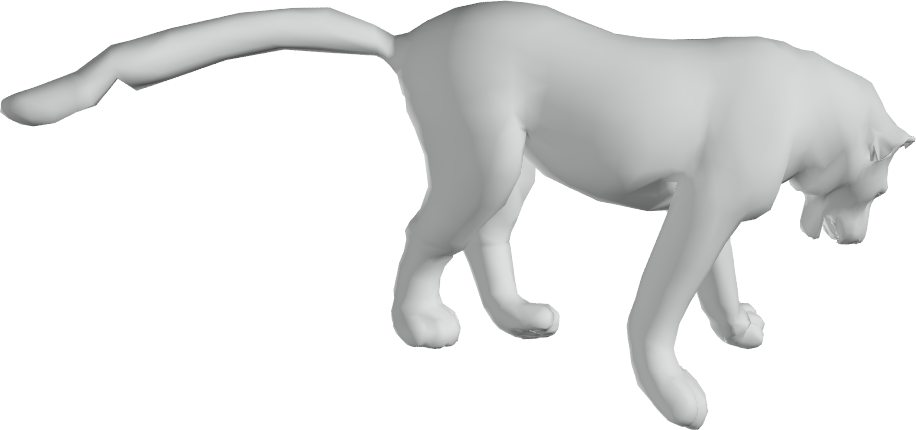}&
\includegraphics[width=0.15\textwidth]{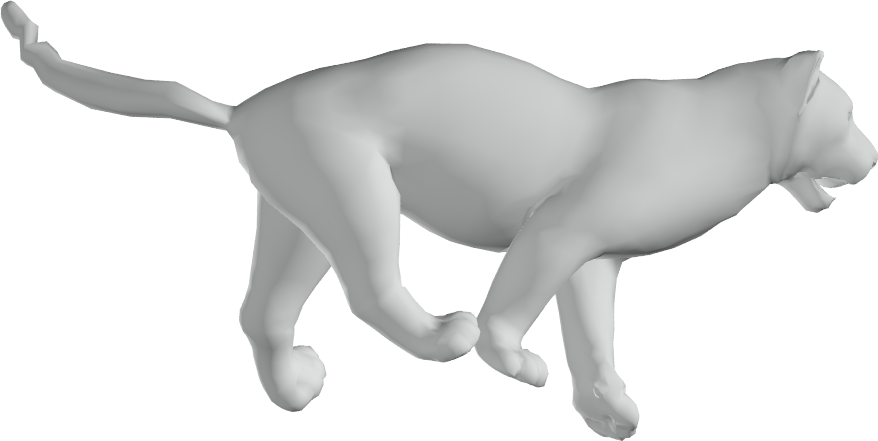}&
\includegraphics[width=0.15\textwidth]{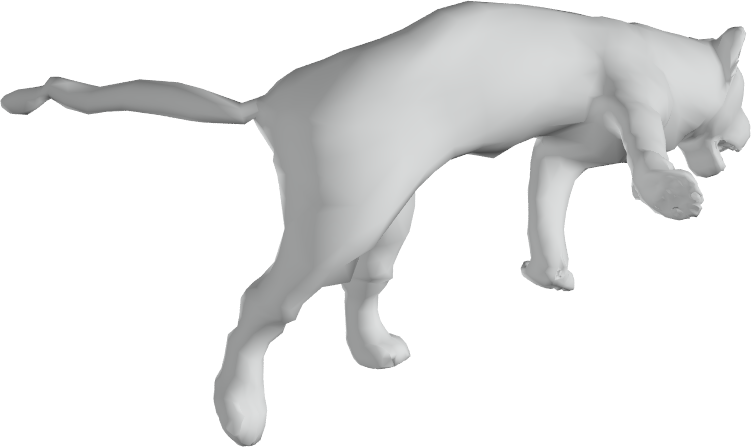}&
\includegraphics[width=0.15\textwidth]{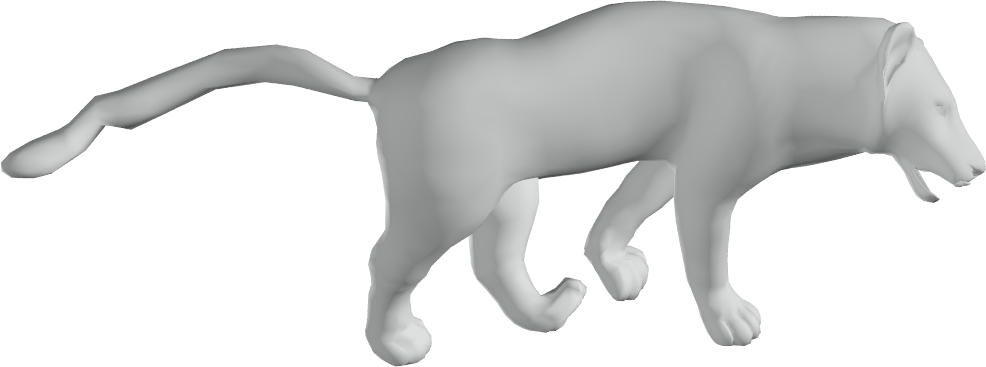}&
\bluepart{\includegraphics[width=0.15\textwidth]{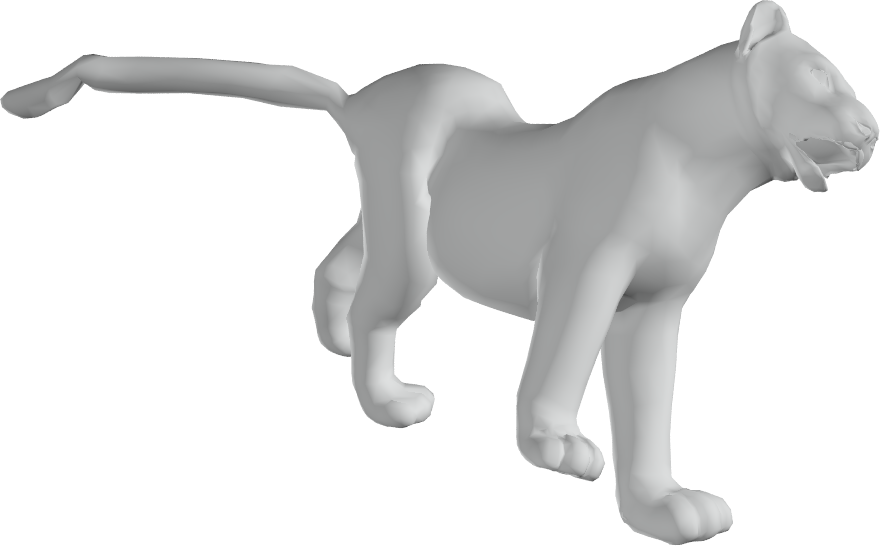}}{0.4}{0.85}{0.6}{0.4}\\
\includegraphics[width=0.15\textwidth]{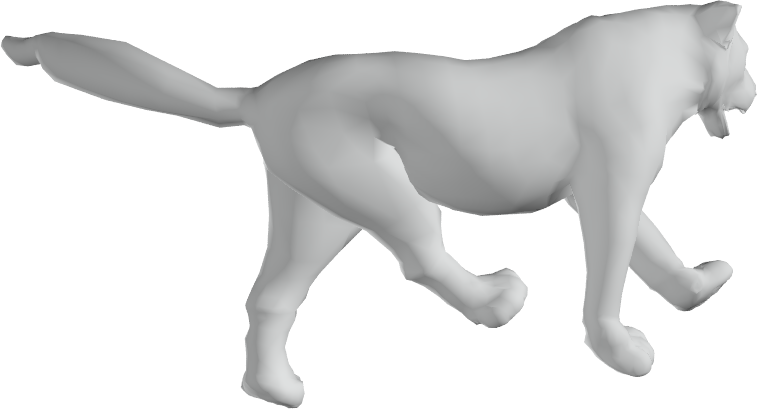}&
\includegraphics[width=0.15\textwidth]{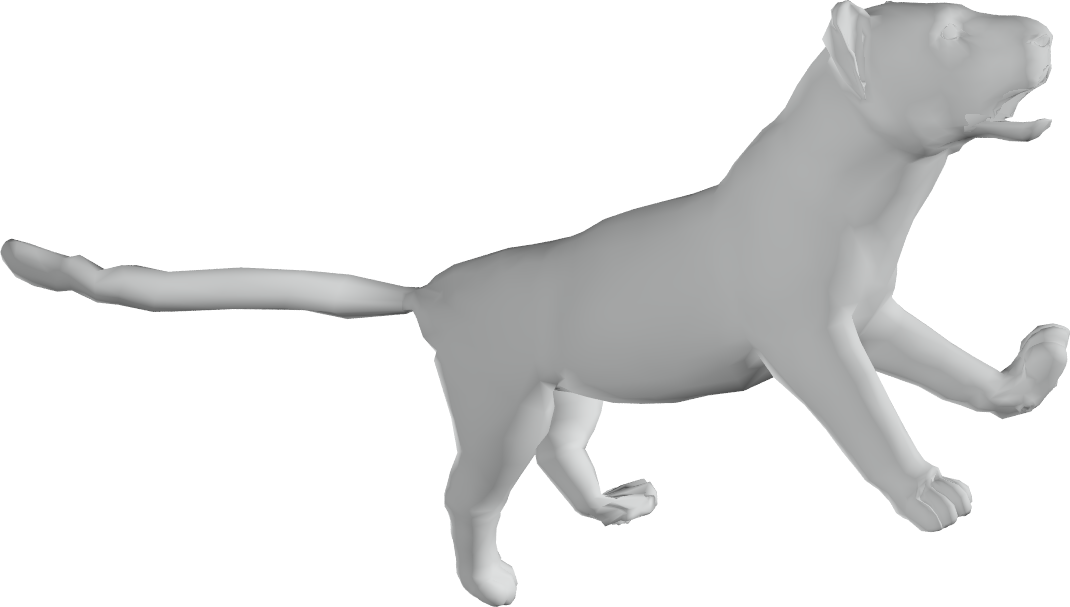}&
\includegraphics[width=0.15\textwidth]{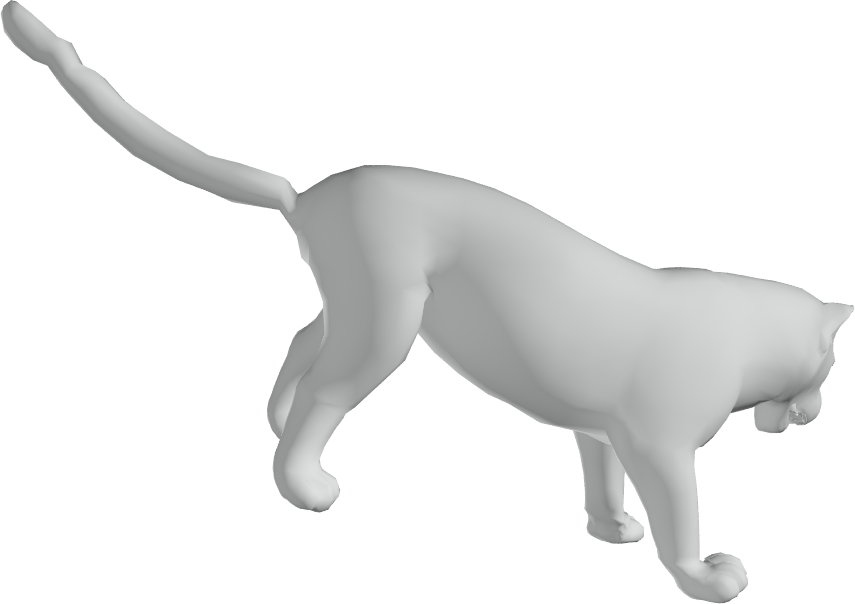}&
\includegraphics[width=0.15\textwidth]{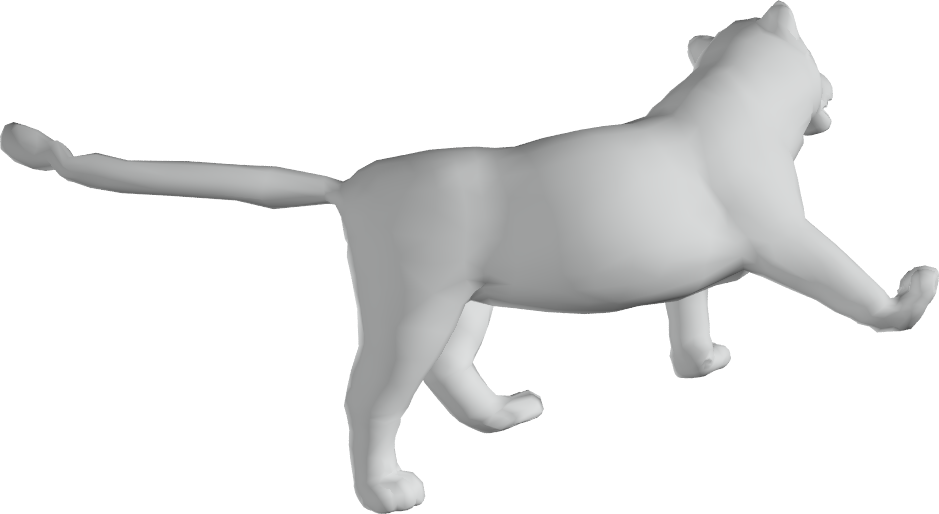}&
\includegraphics[width=0.15\textwidth]{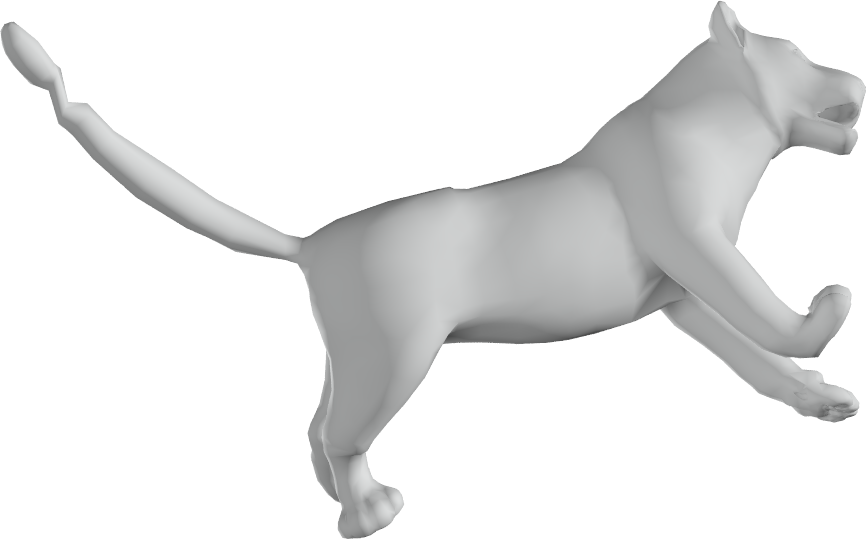}&
\includegraphics[width=0.15\textwidth]{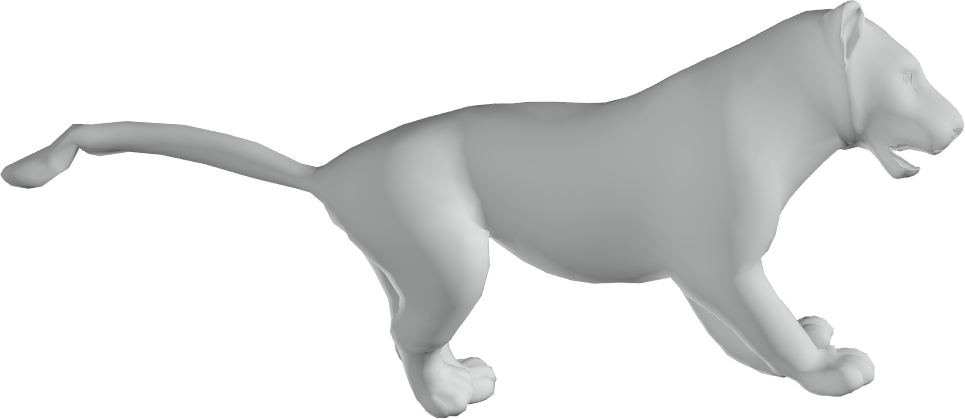}\\
\includegraphics[width=0.15\textwidth]{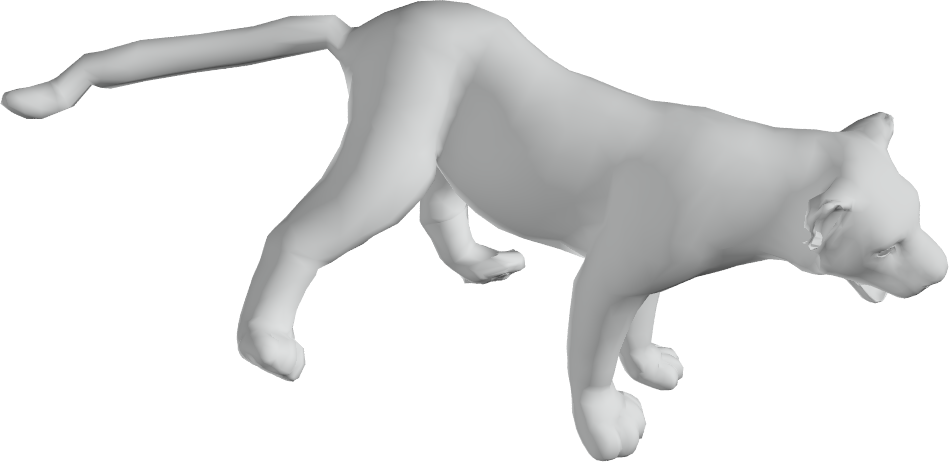}&
\includegraphics[width=0.15\textwidth]{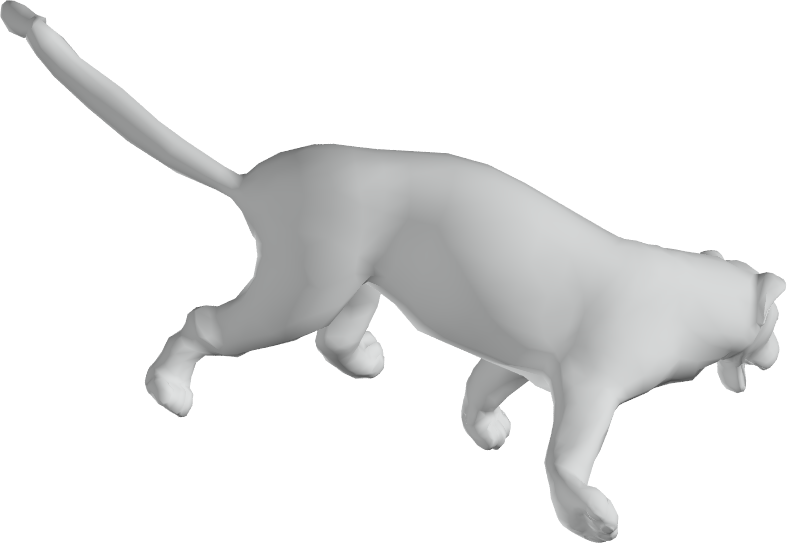}&
\bluepart{\includegraphics[width=0.15\textwidth]{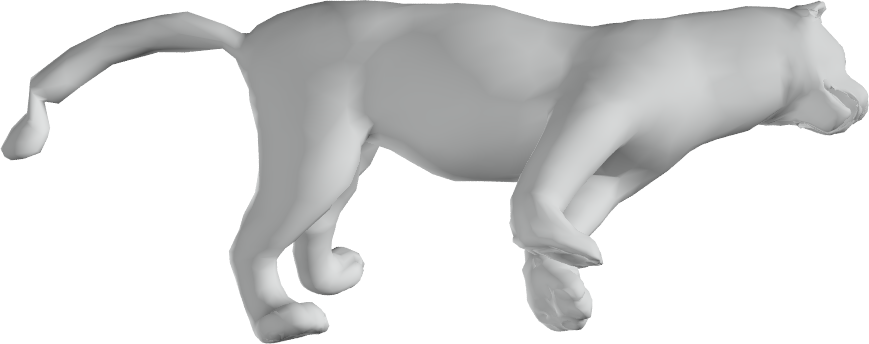}}{-0.02}{0.45}{0.1}{0.9}&
\includegraphics[width=0.15\textwidth]{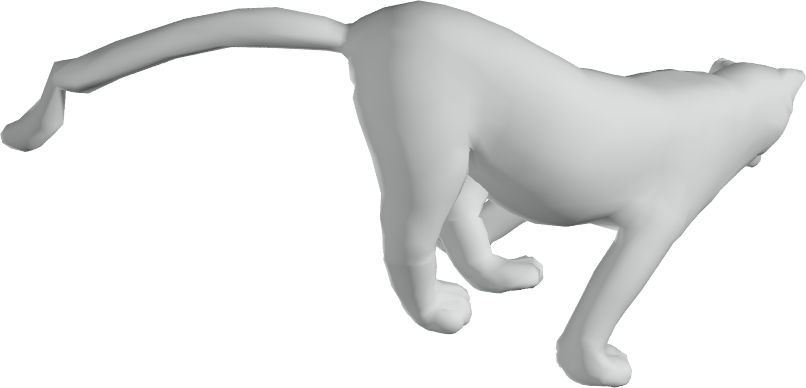}&
\includegraphics[width=0.15\textwidth]{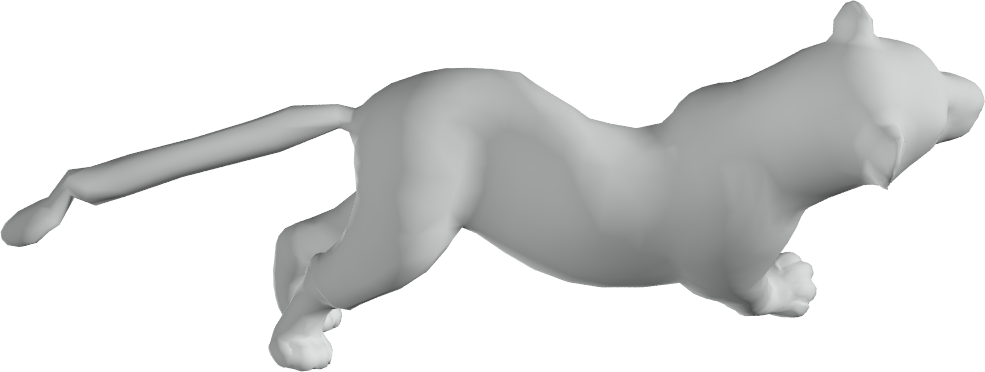}&
\includegraphics[width=0.15\textwidth]{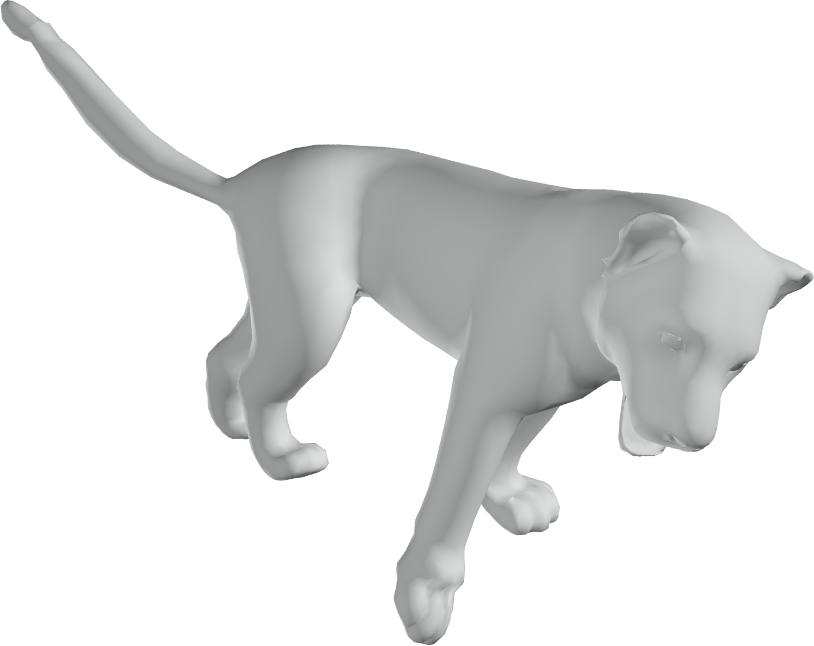}\\
\includegraphics[width=0.15\textwidth]{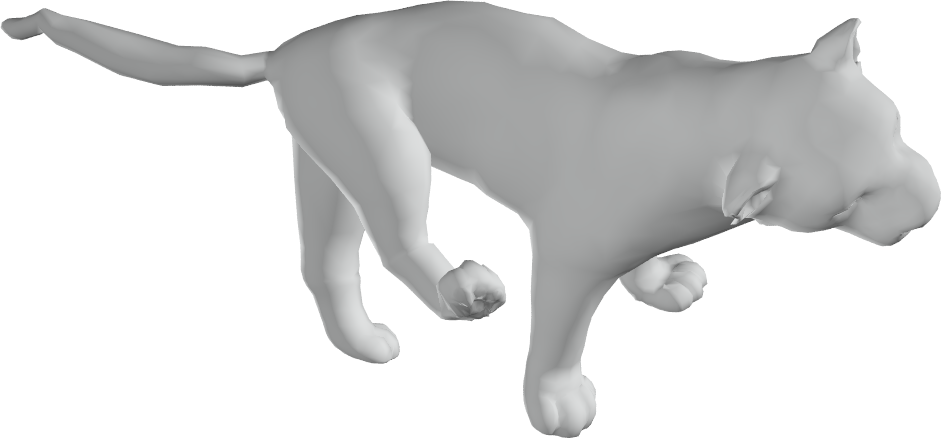}&
\includegraphics[width=0.15\textwidth]{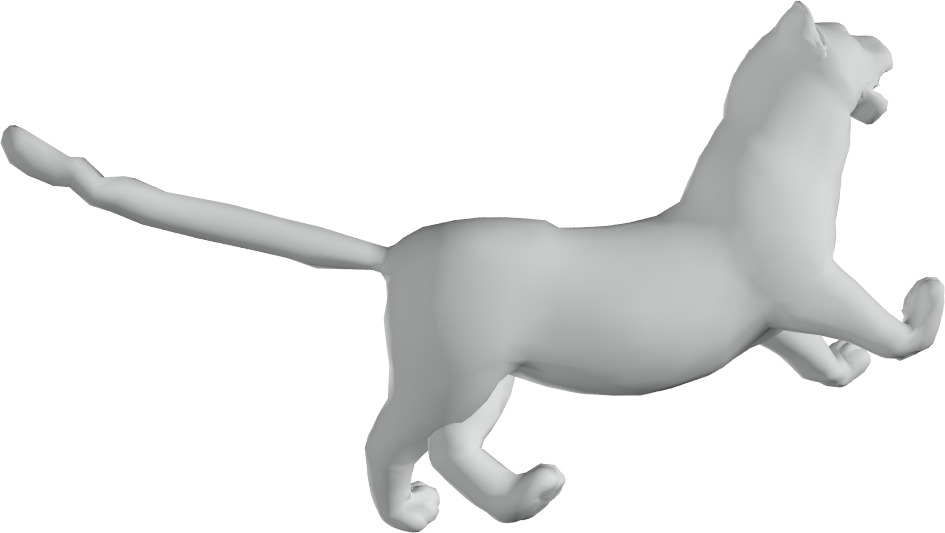}&
\redbox{\includegraphics[width=0.15\textwidth]{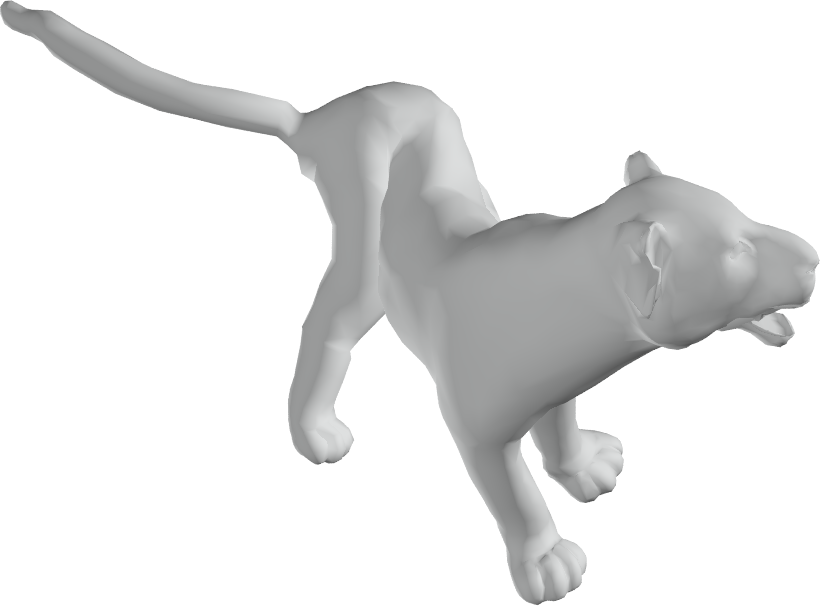}}&
\includegraphics[width=0.15\textwidth]{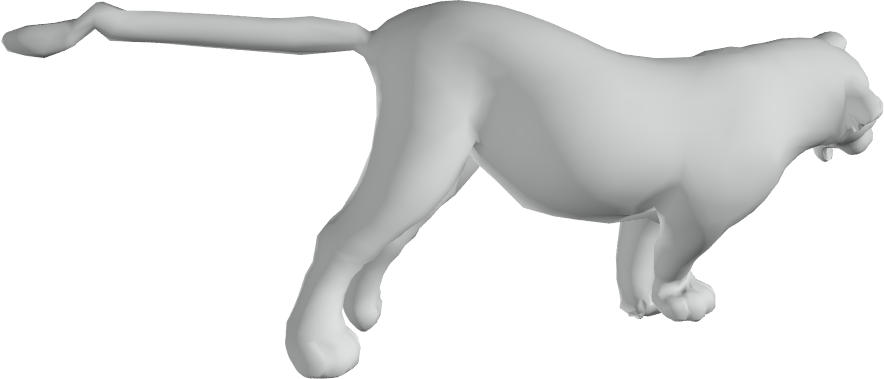}&
\includegraphics[width=0.15\textwidth]{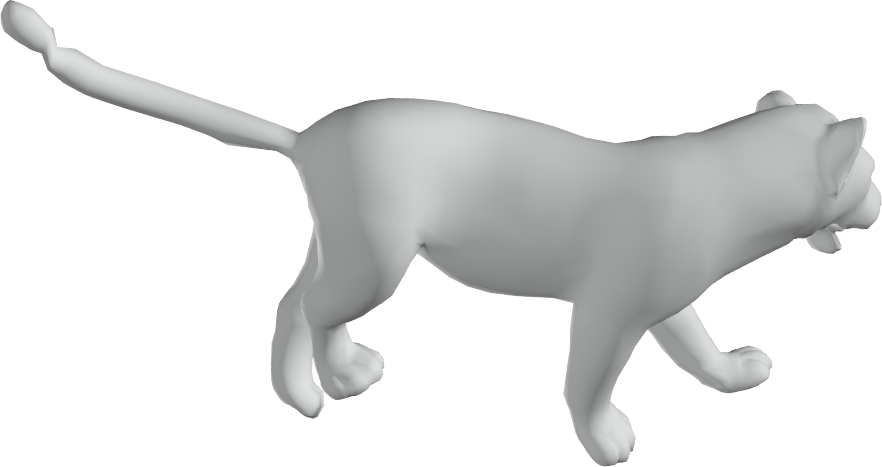}&
\redbox{\includegraphics[width=0.15\textwidth]{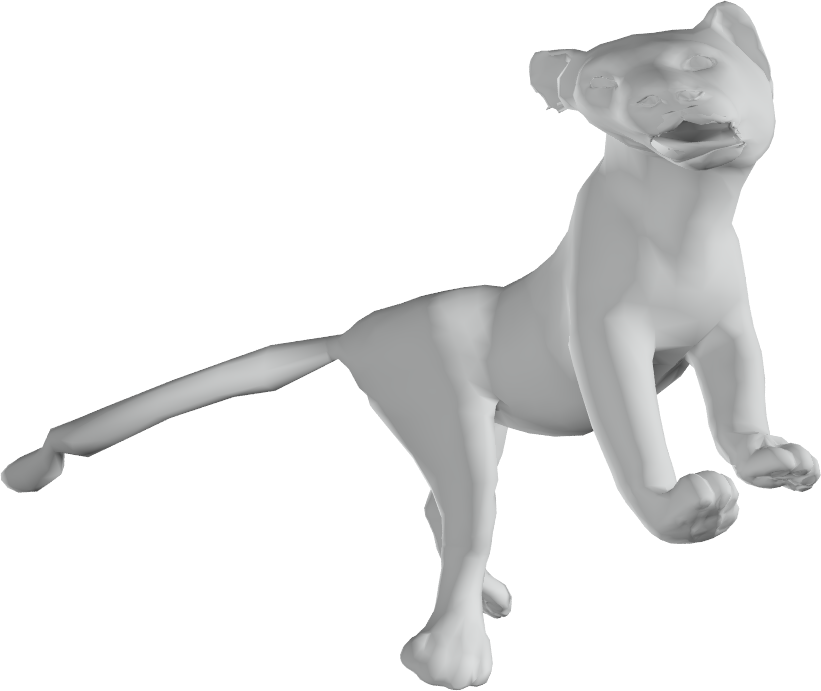}}
\end{tabular}
\caption{GeoLatent results. Similar to other baseline approaches, some of the generated shapes have significantly distorted geometries. Most of shapes have noticeable local distortions. }
\label{Figure:SMAL:Mesh:GeoLatent}    
\end{figure*}

\begin{figure*}
\setlength\tabcolsep{4pt}
\begin{tabular}{cccccc}
\includegraphics[width=0.15\textwidth]{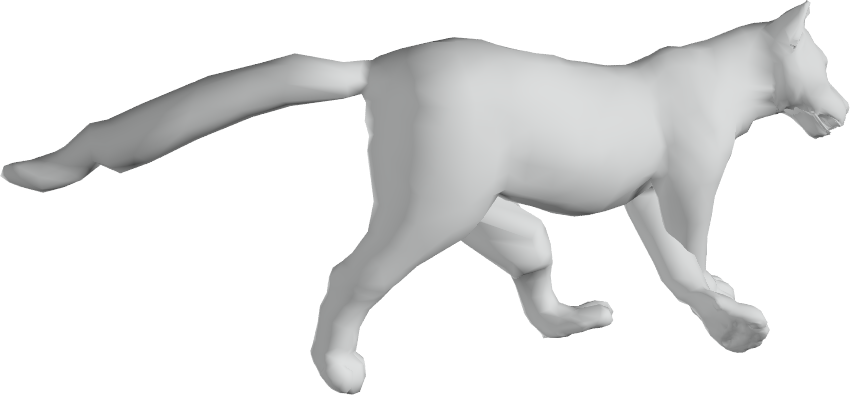}&
\includegraphics[width=0.15\textwidth]{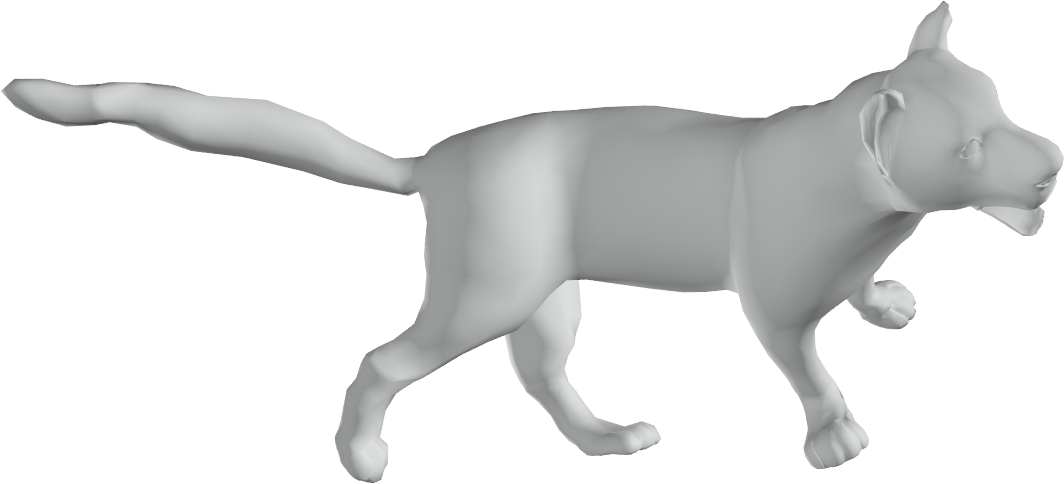}&
\bluepart{\includegraphics[width=0.15\textwidth]{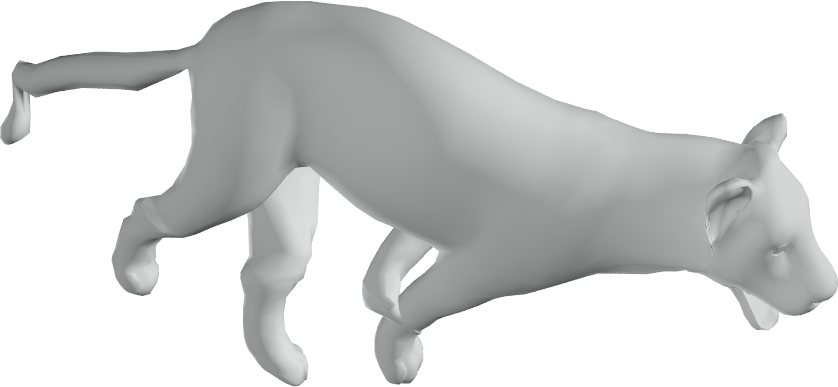}}{-0.02}{0.5}{0.1}{0.9}&
\includegraphics[width=0.15\textwidth]{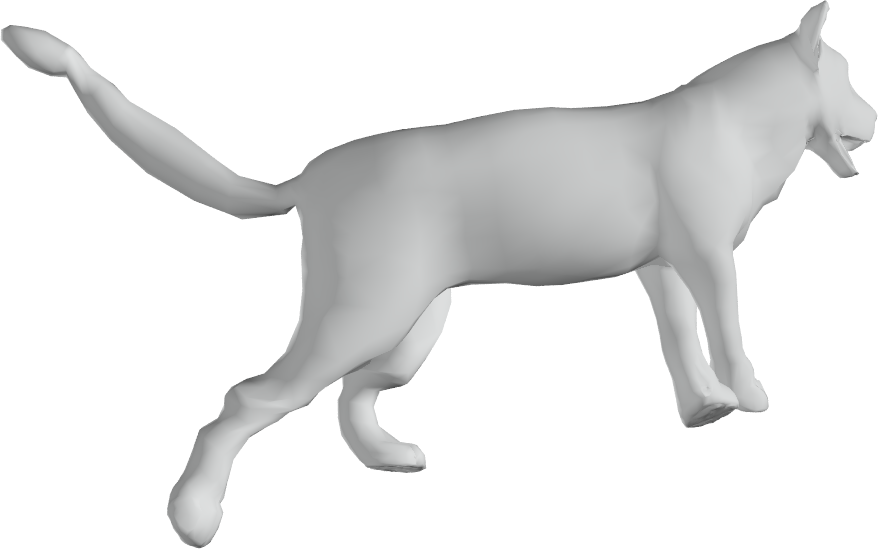}&
\includegraphics[width=0.15\textwidth]{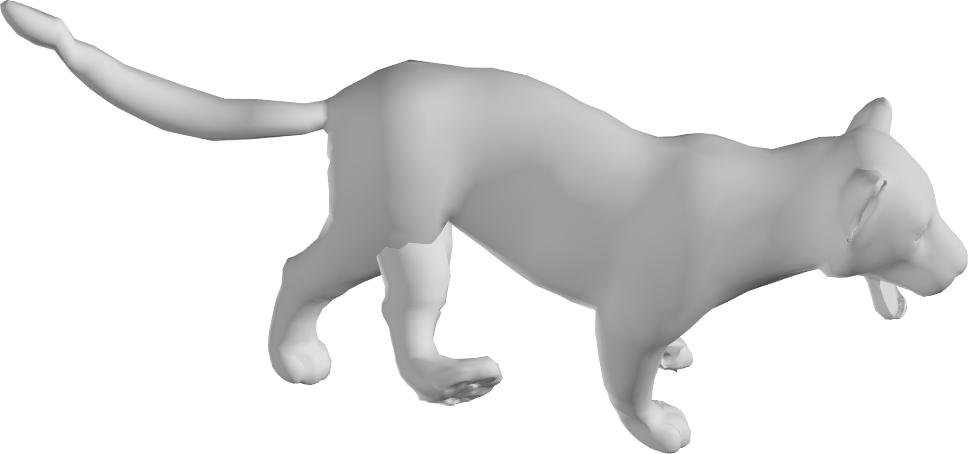}&
\includegraphics[width=0.15\textwidth]{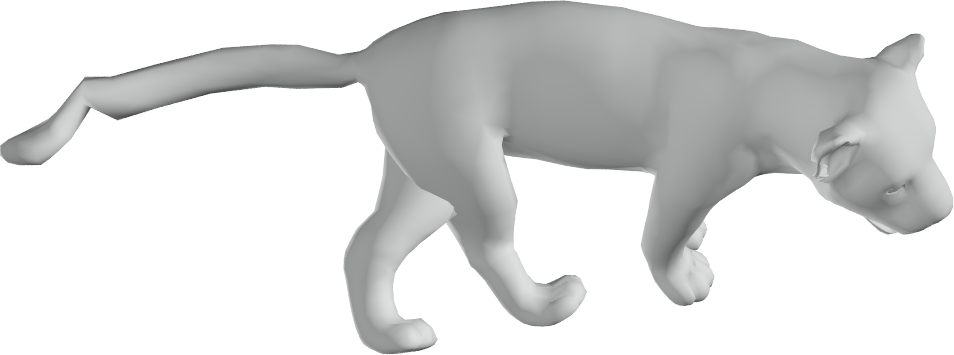}\\
\includegraphics[width=0.15\textwidth]{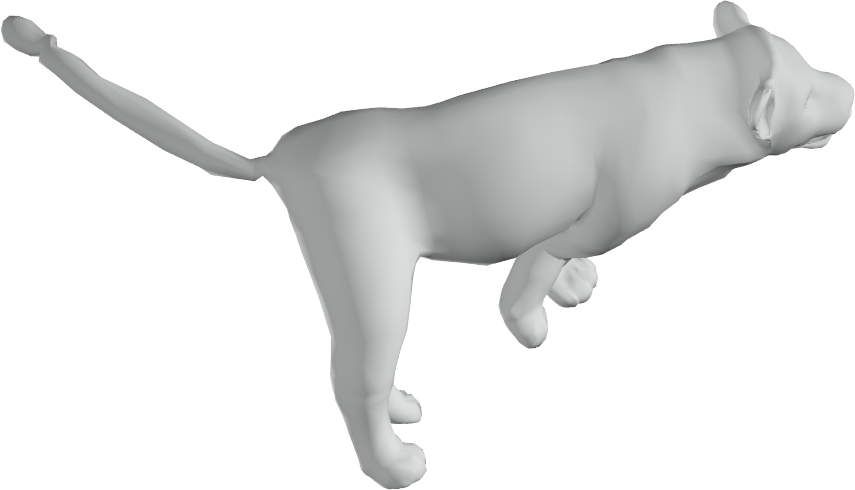}&
\includegraphics[width=0.15\textwidth]{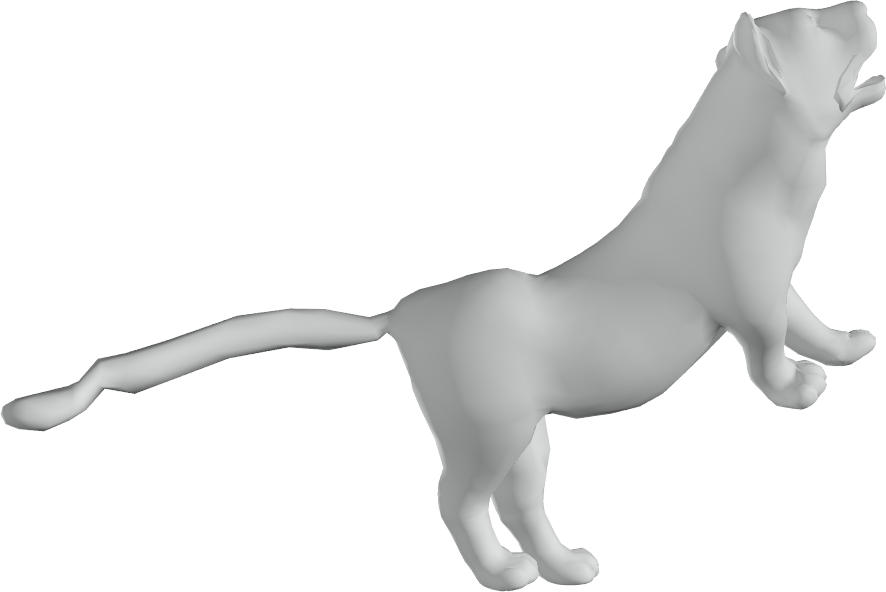}&
\includegraphics[width=0.15\textwidth]{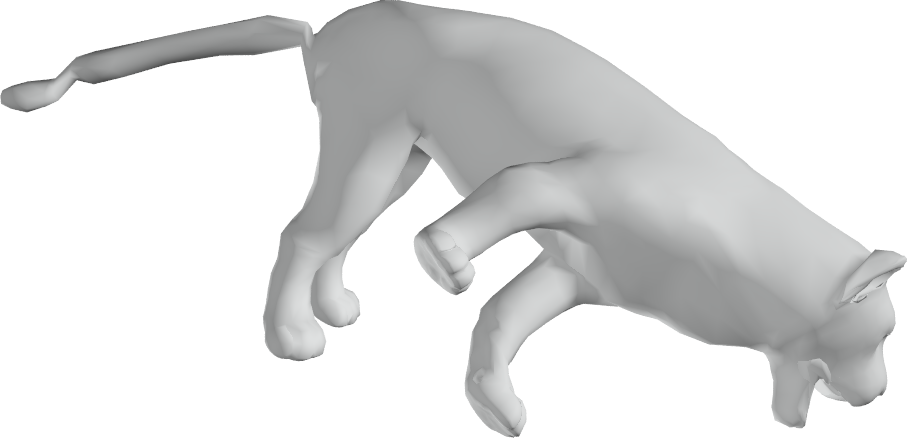}&
\includegraphics[width=0.15\textwidth]{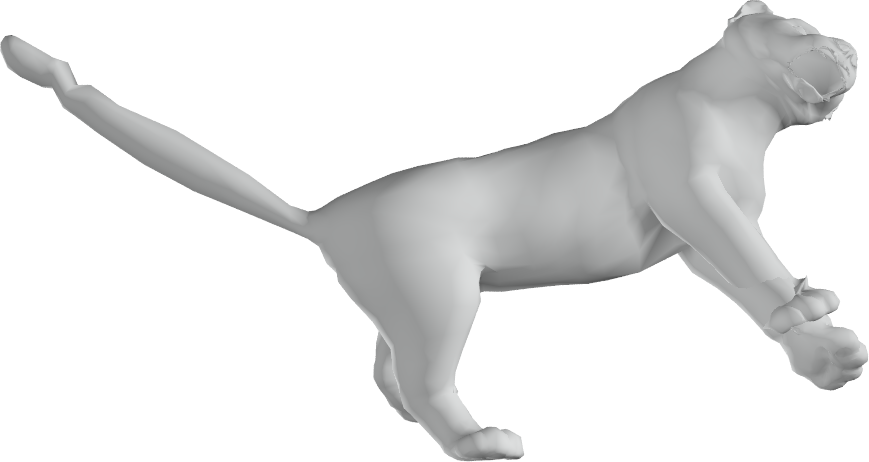}&
\includegraphics[width=0.15\textwidth]{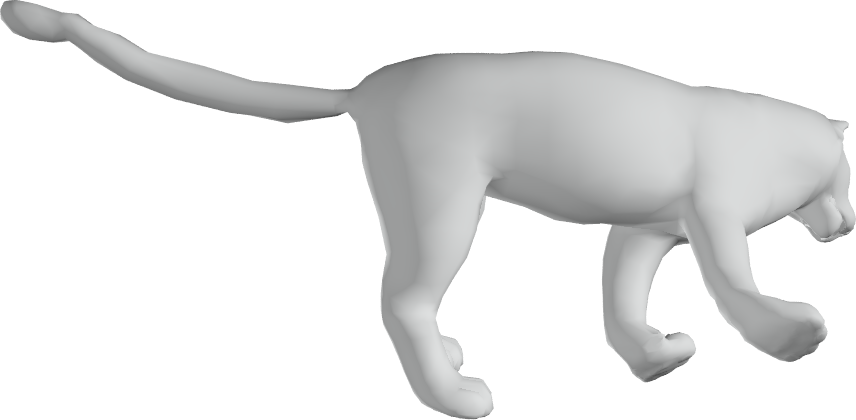}&
\includegraphics[width=0.15\textwidth]{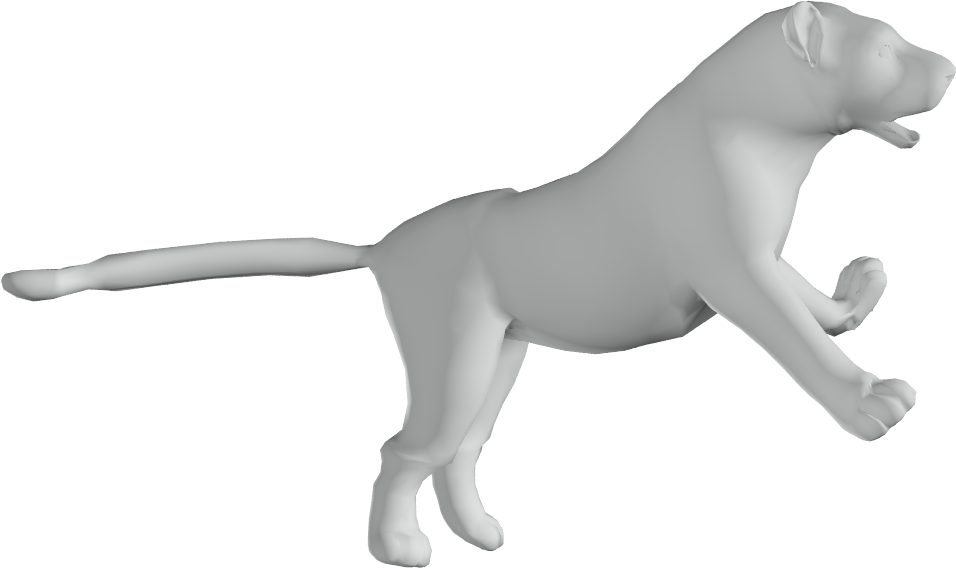}\\
\includegraphics[width=0.15\textwidth]{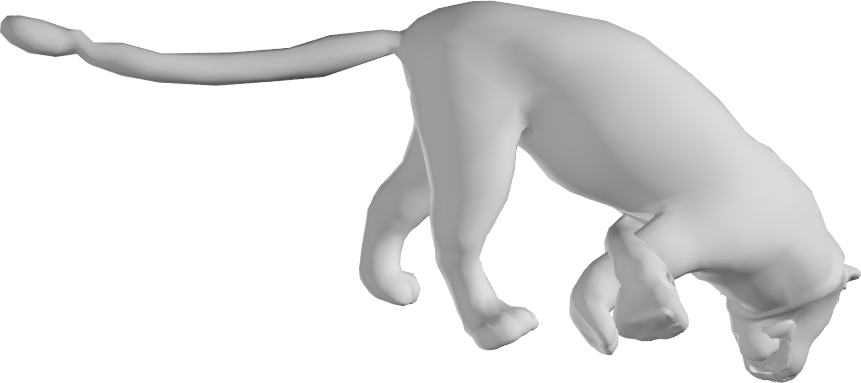}&
\includegraphics[width=0.15\textwidth]{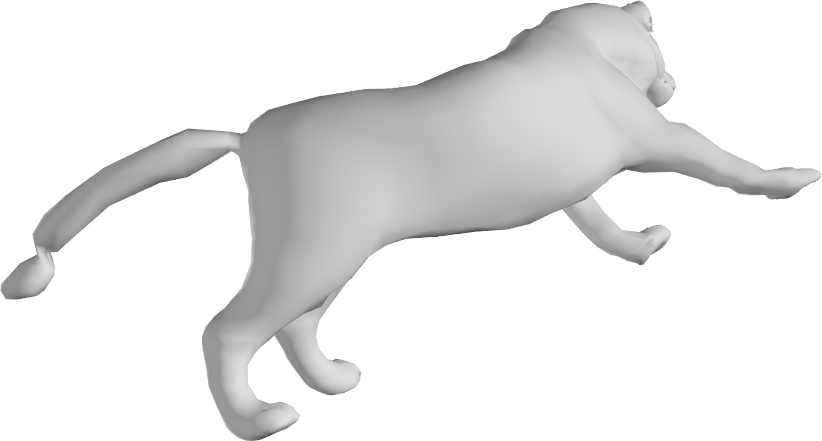}&
\includegraphics[width=0.15\textwidth]{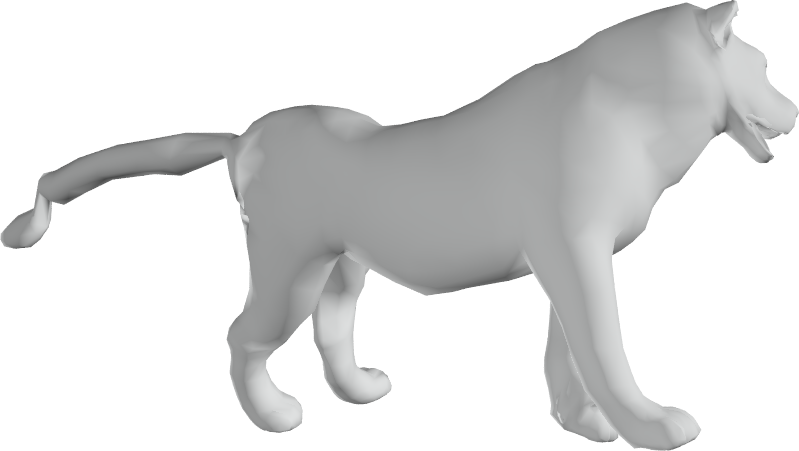}&
\includegraphics[width=0.15\textwidth]{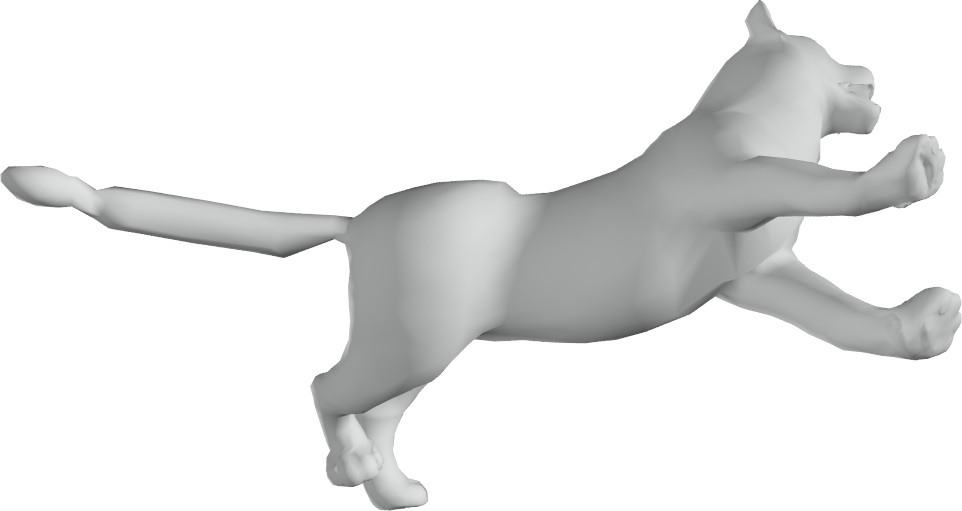}&
\includegraphics[width=0.15\textwidth]{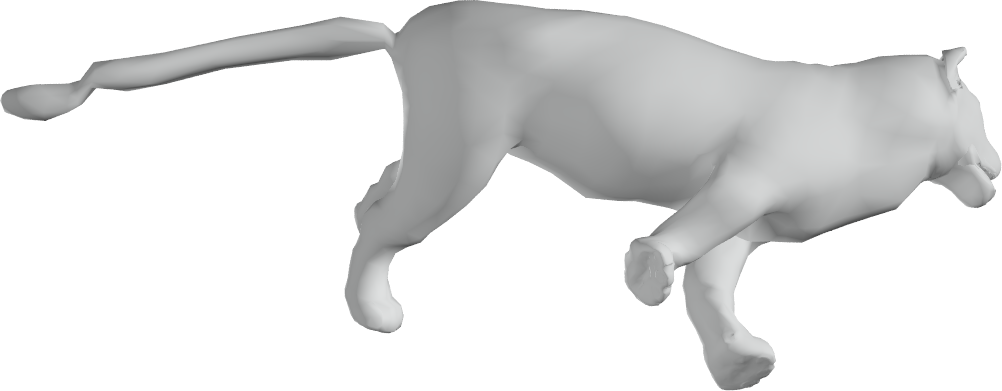}&
\bluepart{\includegraphics[width=0.15\textwidth]{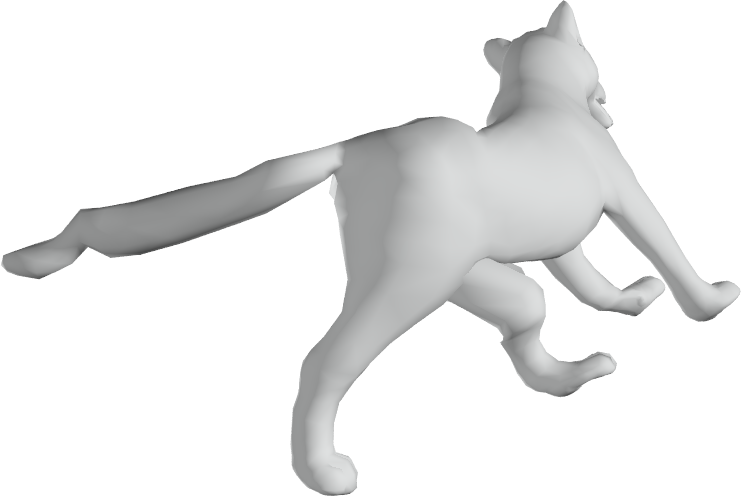}}{0.6}{0.15}{0.9}{0.4}\\
\includegraphics[width=0.15\textwidth]{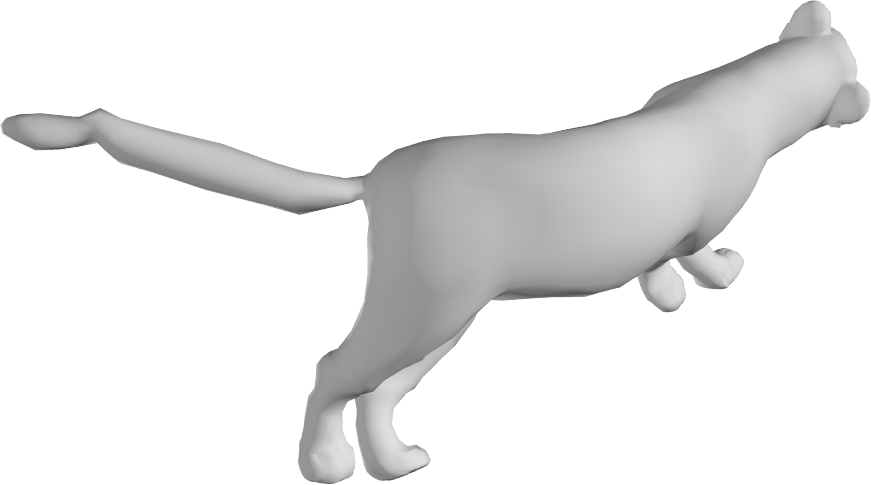}&
\includegraphics[width=0.15\textwidth]{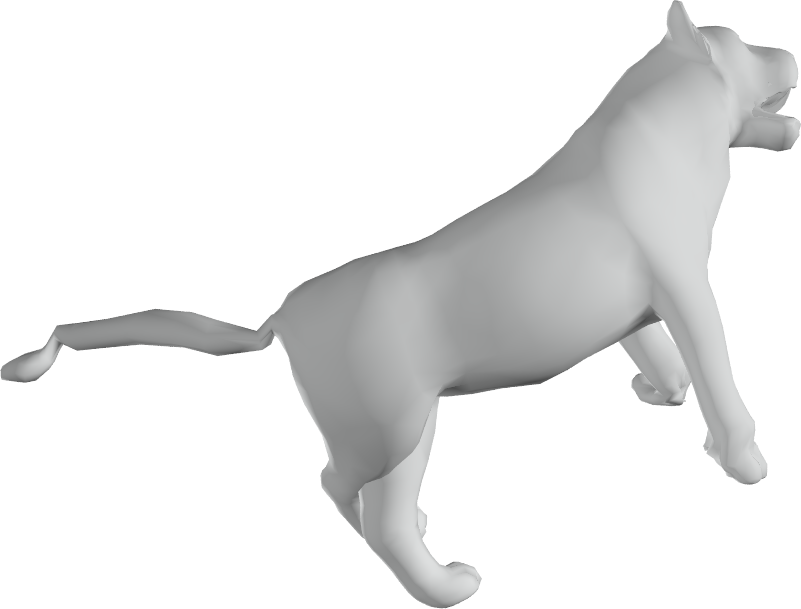}&
\redbox{\includegraphics[width=0.15\textwidth]{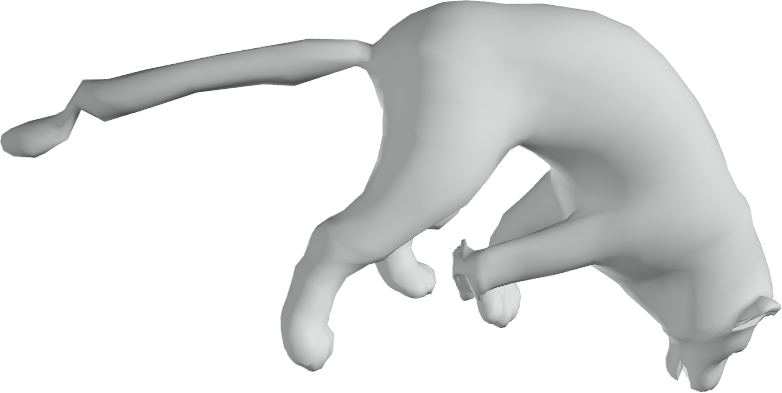}}&
\includegraphics[width=0.15\textwidth]{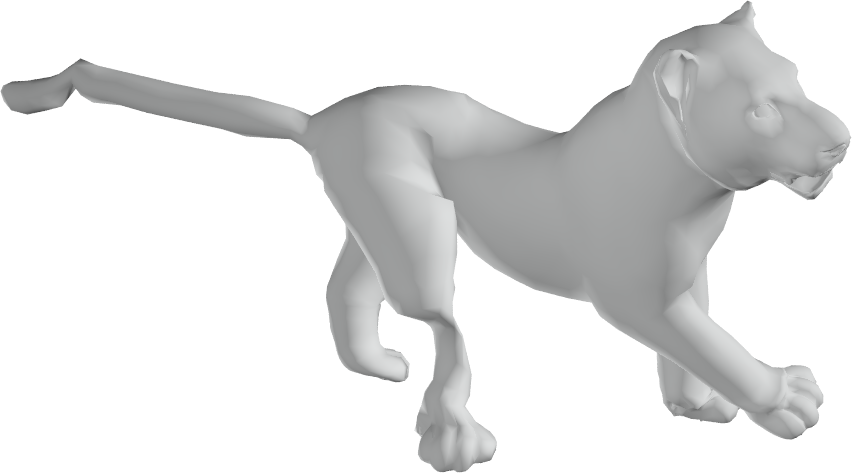}&
\includegraphics[width=0.15\textwidth]{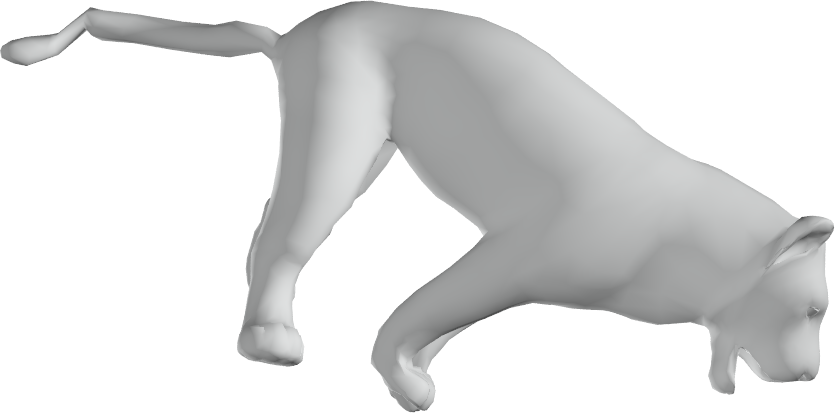}&
\redbox{\includegraphics[width=0.15\textwidth]{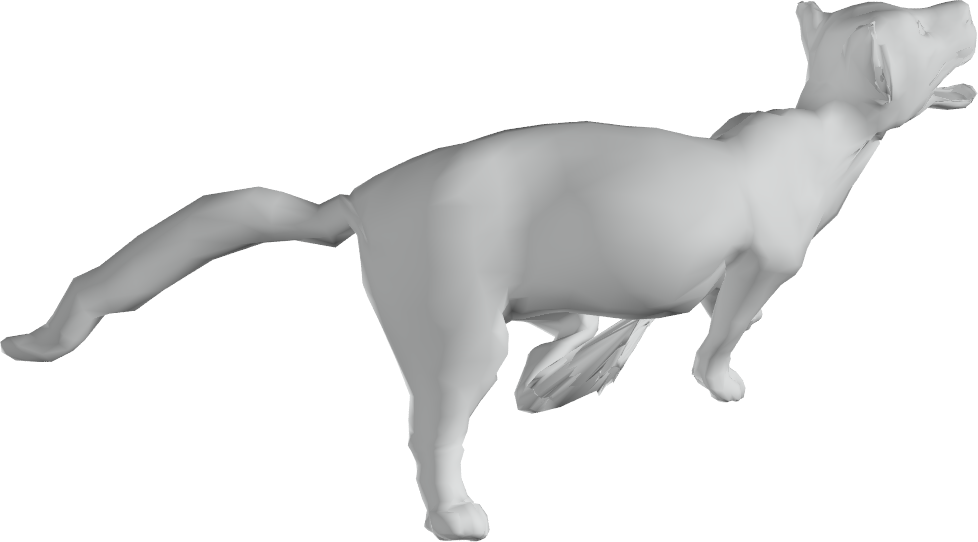}}
\end{tabular}
\caption{BREAS results. Similar to other baseline approaches, some of the generated shapes have significantly distorted geometries. Most of shapes have noticeable local distortions. }
\label{Figure:SMAL:Mesh:BRESA}    
\end{figure*}

\begin{figure*}
\setlength\tabcolsep{4pt}
\begin{tabular}{cccccc}
\includegraphics[width=0.15\textwidth]{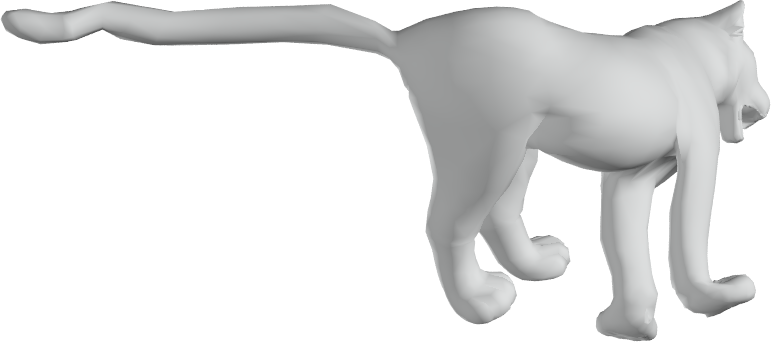}&
\includegraphics[width=0.15\textwidth]{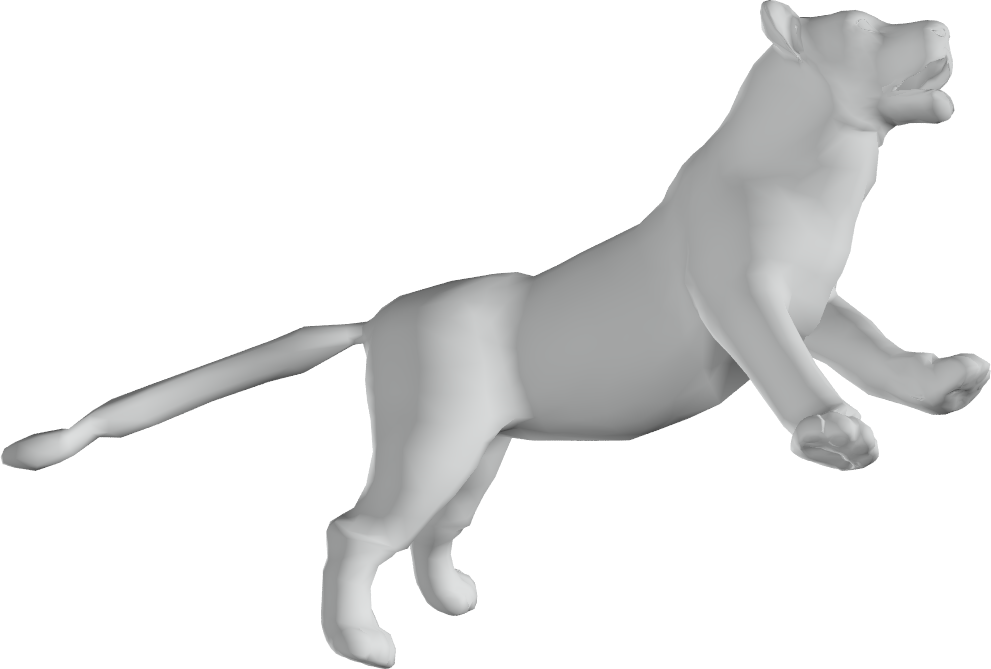}&
\includegraphics[width=0.15\textwidth]{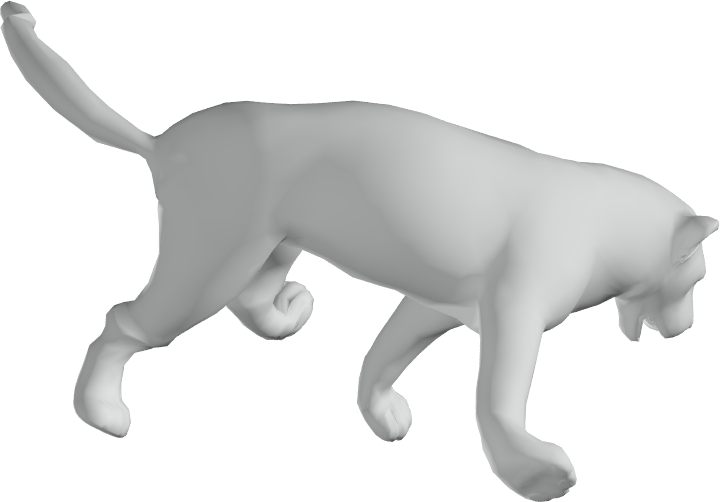}&
\includegraphics[width=0.15\textwidth]{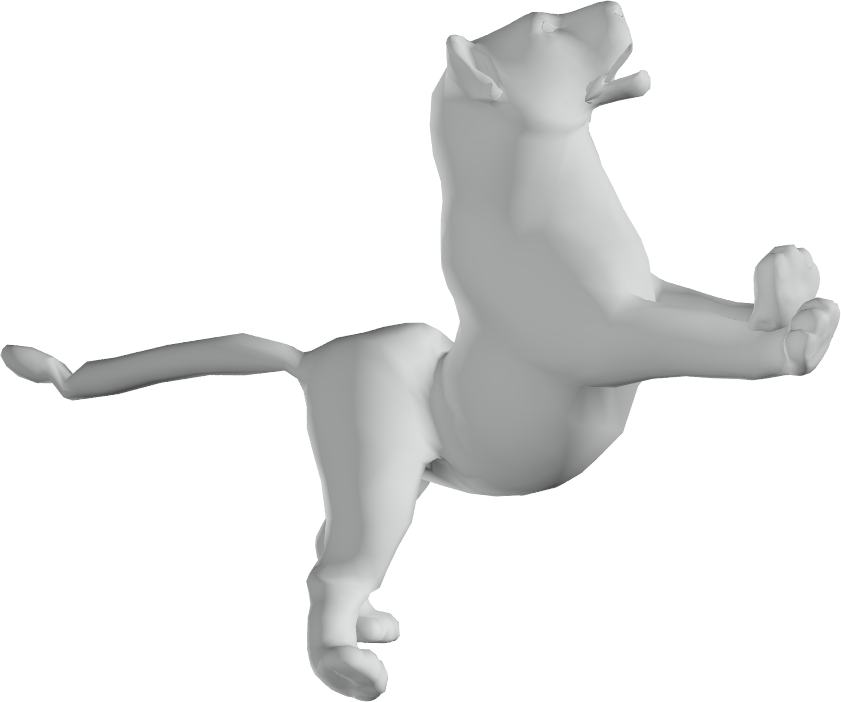}&
\includegraphics[width=0.15\textwidth]{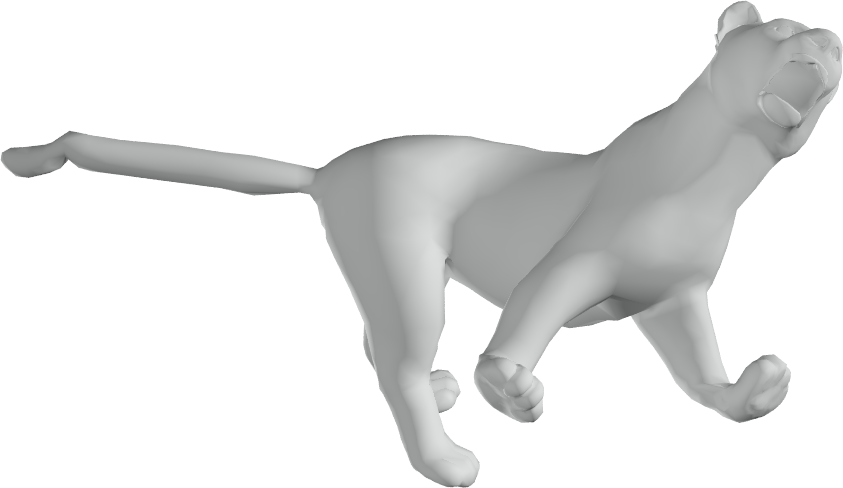}&
\includegraphics[width=0.15\textwidth]{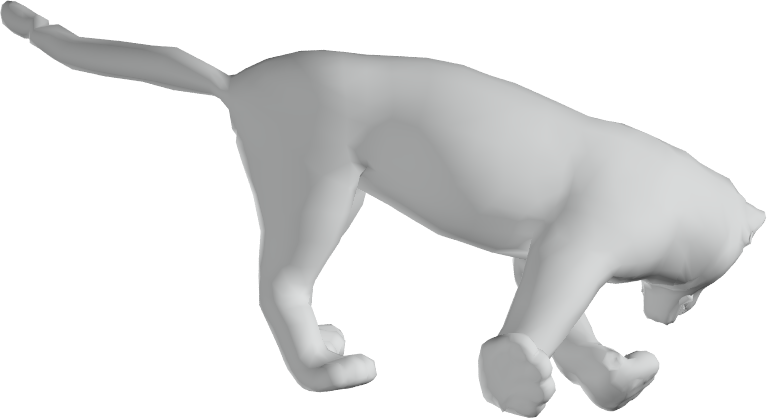}\\
\includegraphics[width=0.15\textwidth]{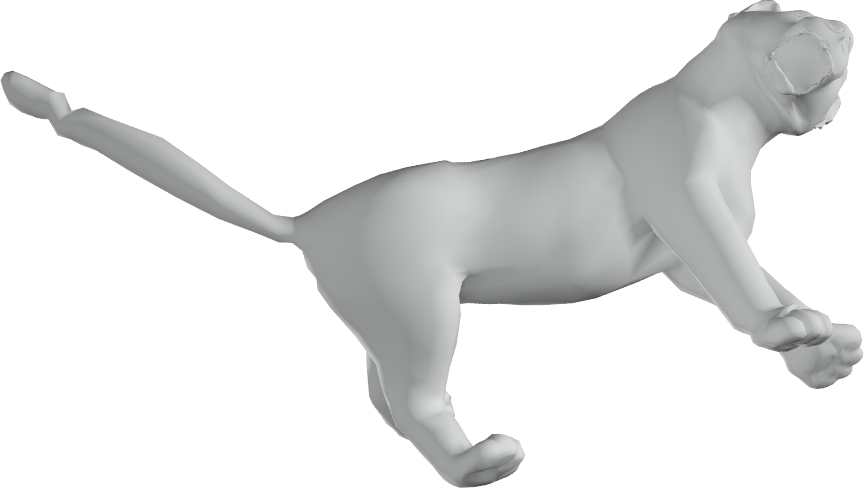}&
\includegraphics[width=0.15\textwidth]{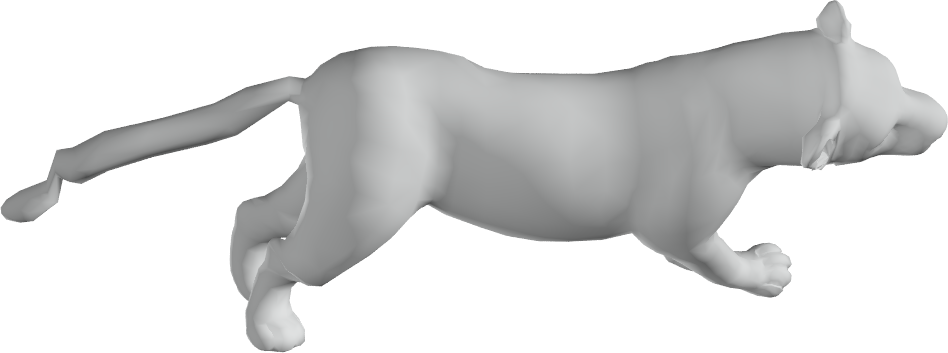}&
\includegraphics[width=0.15\textwidth]{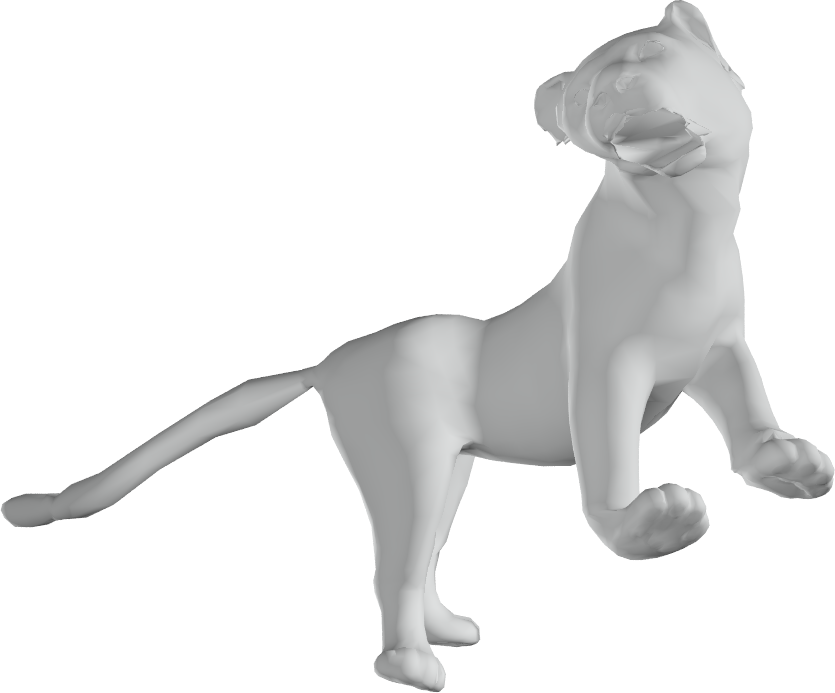}&
\includegraphics[width=0.15\textwidth]{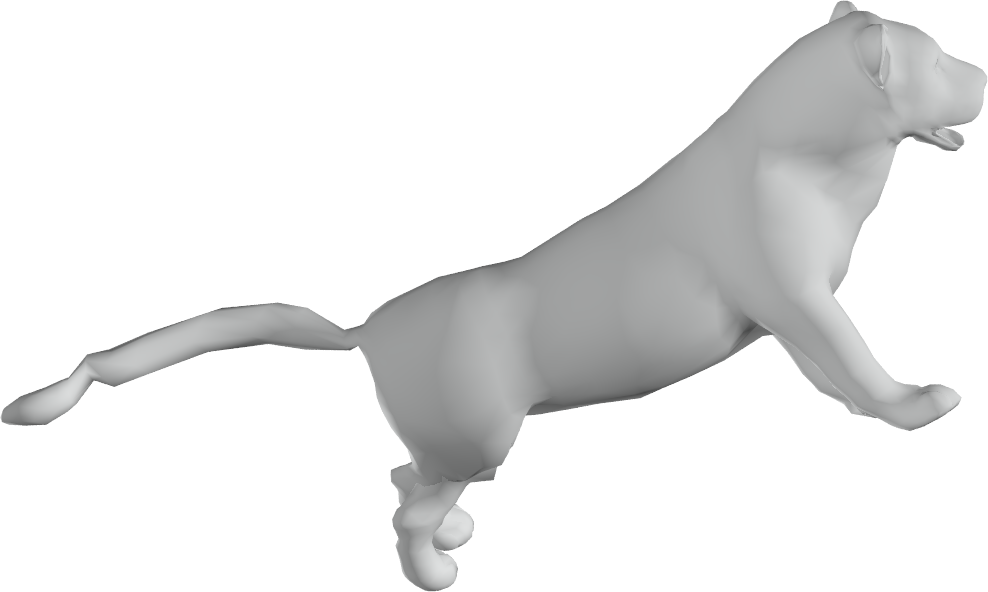}&
\includegraphics[width=0.15\textwidth]{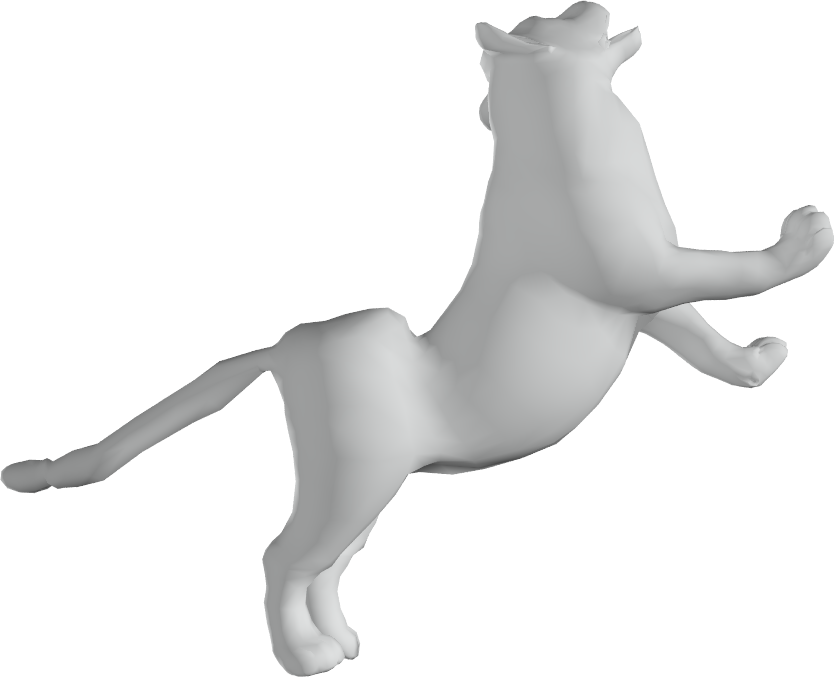}&
\includegraphics[width=0.15\textwidth]{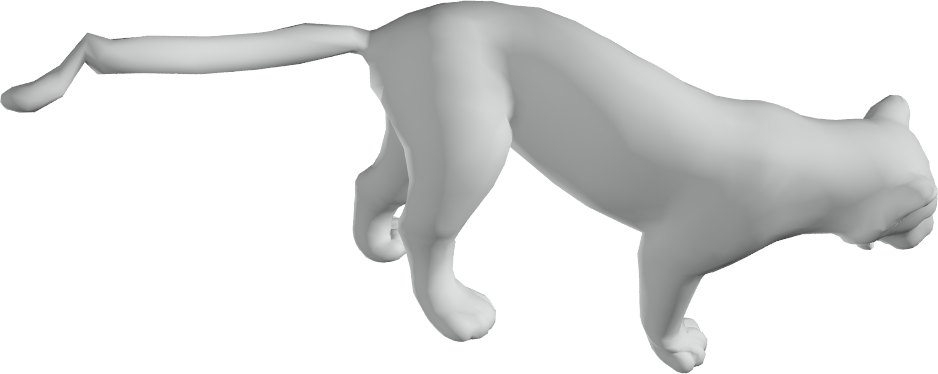}\\
\includegraphics[width=0.15\textwidth]{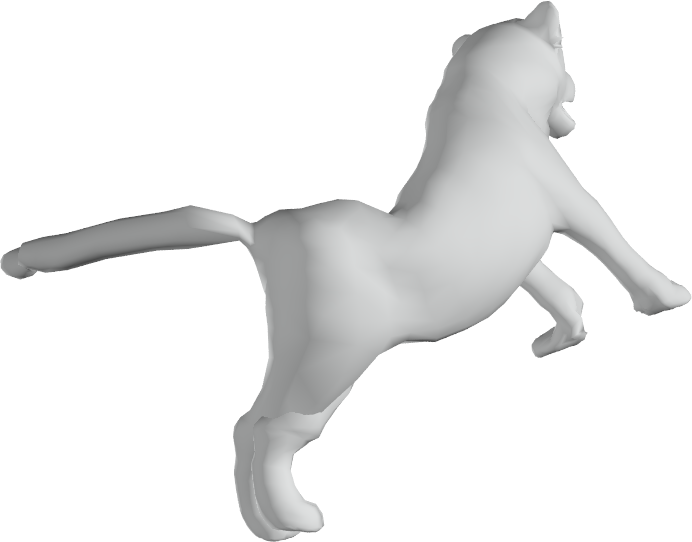}&
\includegraphics[width=0.15\textwidth]{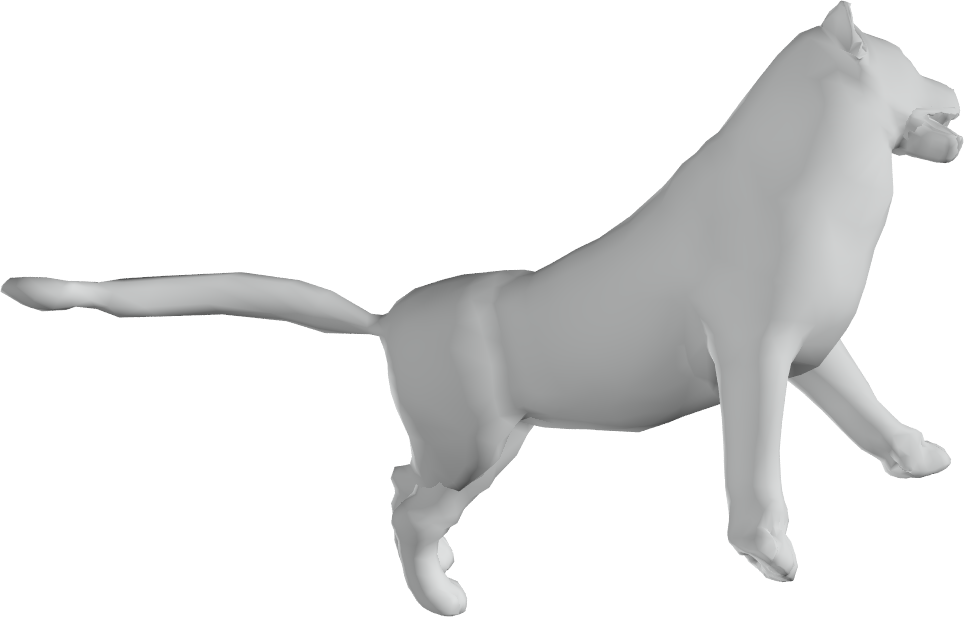}&
\includegraphics[width=0.15\textwidth]{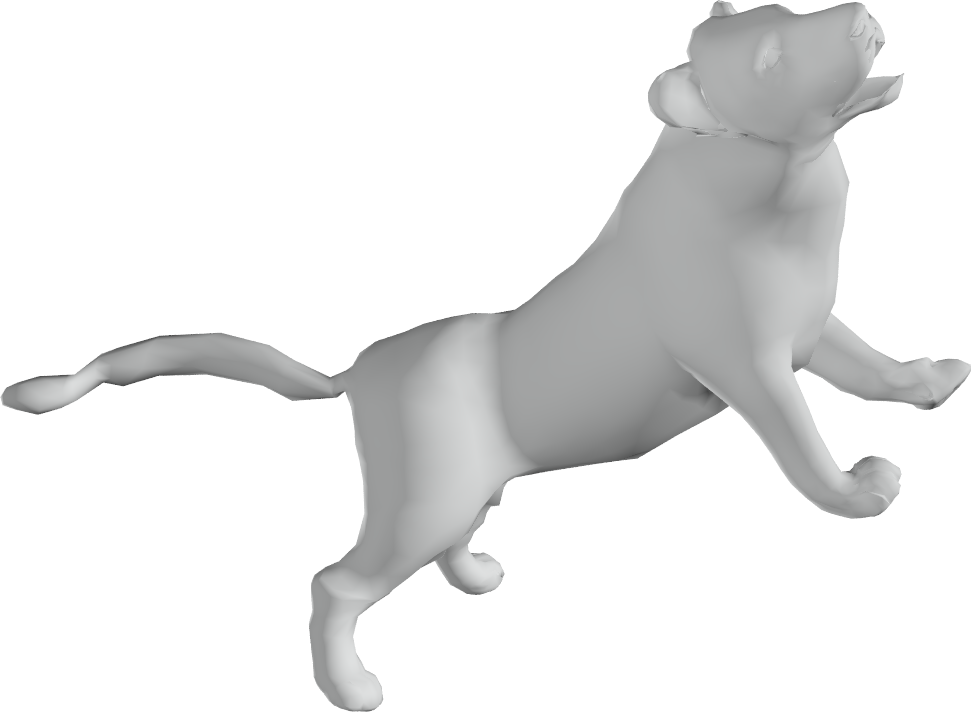}&
\includegraphics[width=0.15\textwidth]{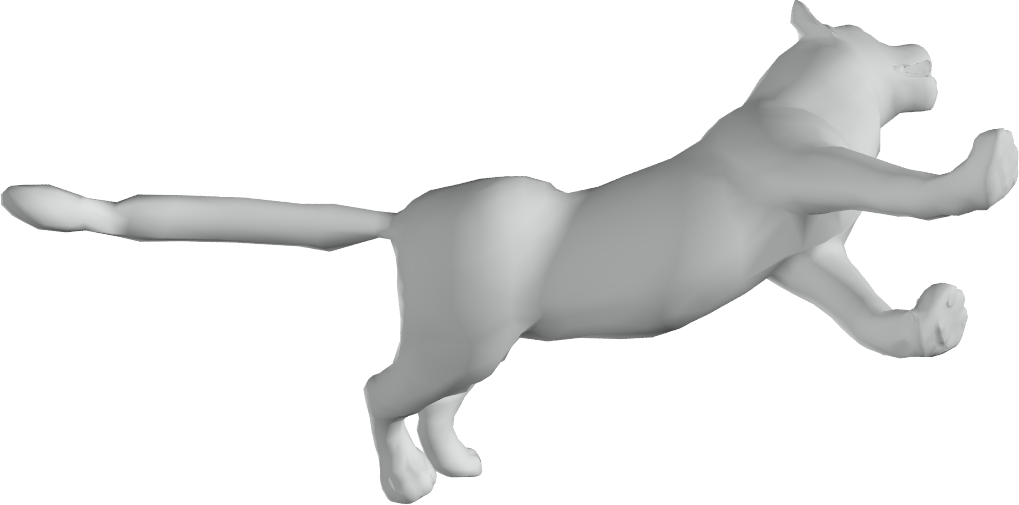}&
\includegraphics[width=0.15\textwidth]{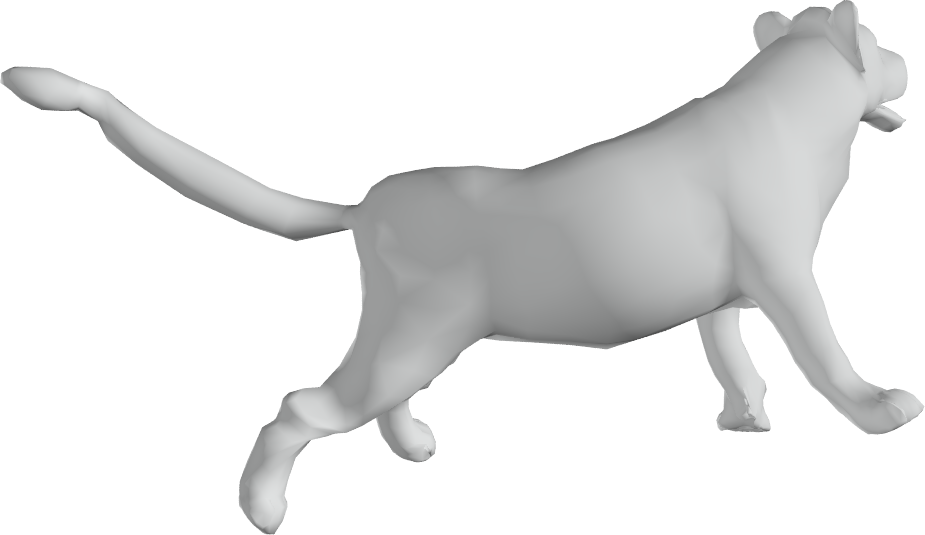}&
\includegraphics[width=0.15\textwidth]{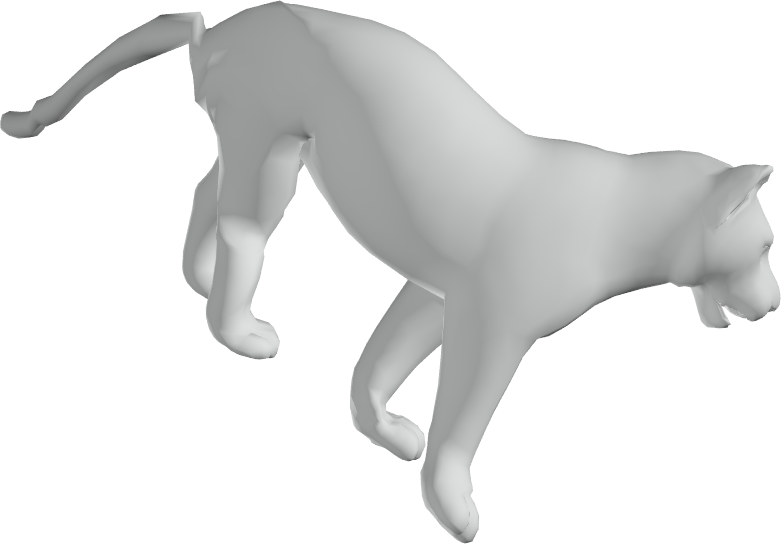}\\
\includegraphics[width=0.15\textwidth]{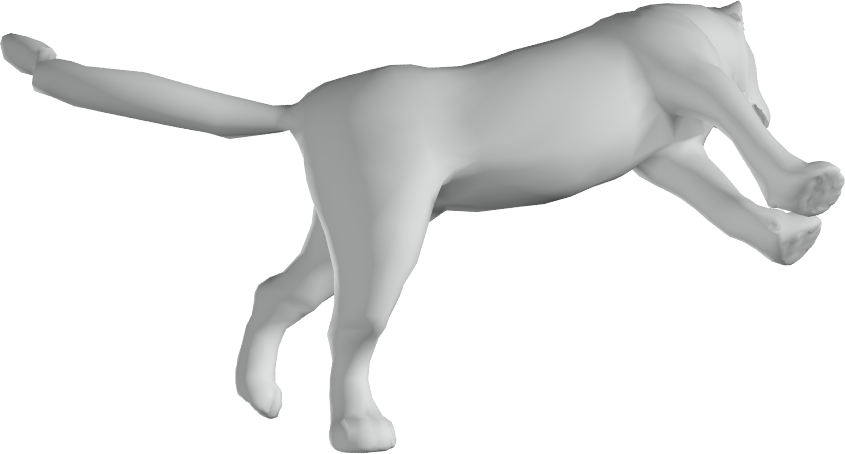}&
\includegraphics[width=0.15\textwidth]{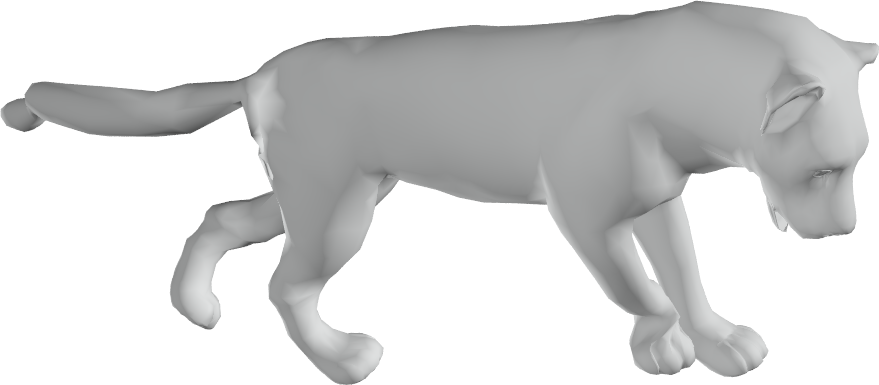}&
\includegraphics[width=0.15\textwidth]{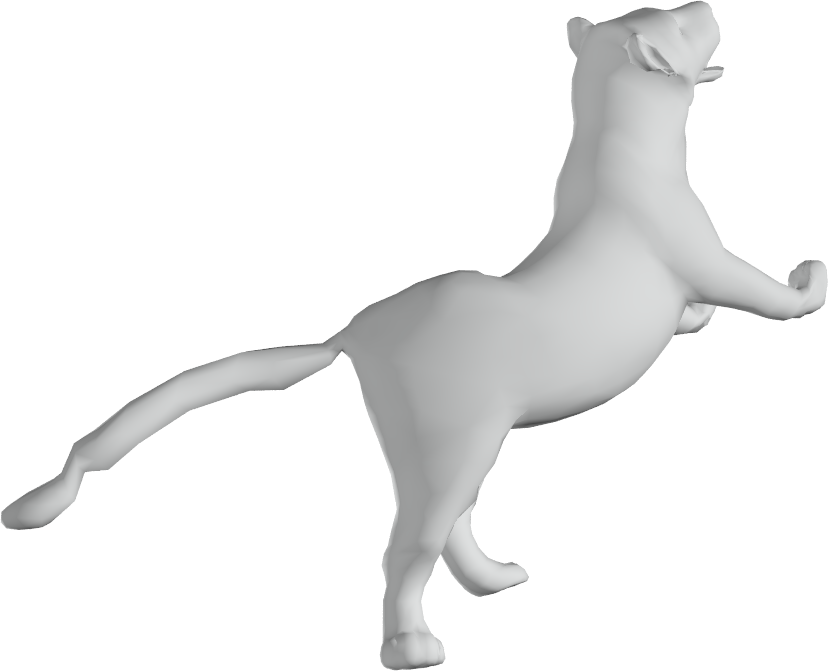}&
\includegraphics[width=0.15\textwidth]{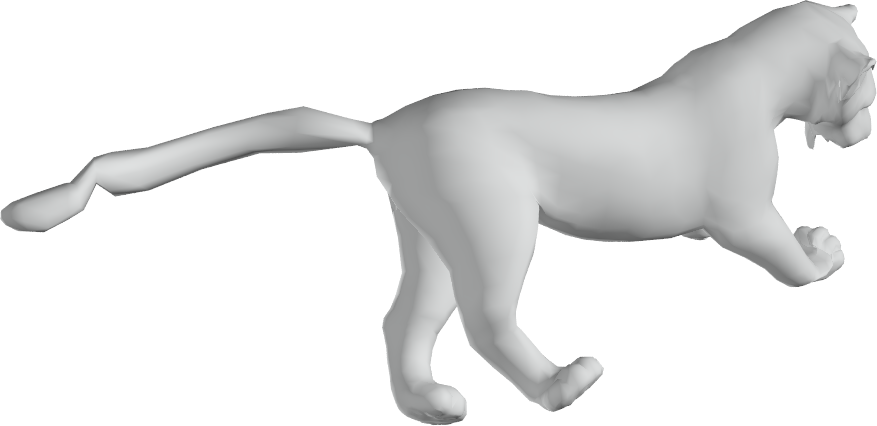}&
\includegraphics[width=0.15\textwidth]{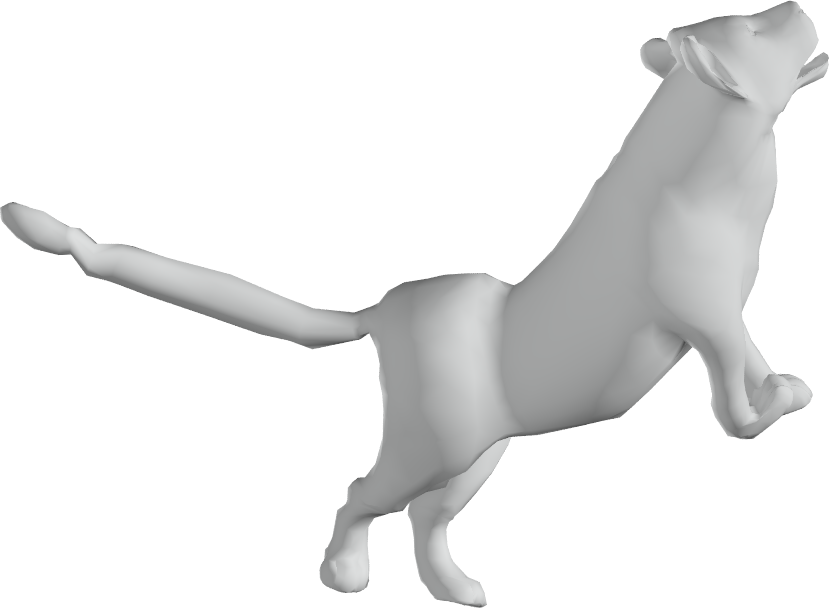}&
\includegraphics[width=0.15\textwidth]{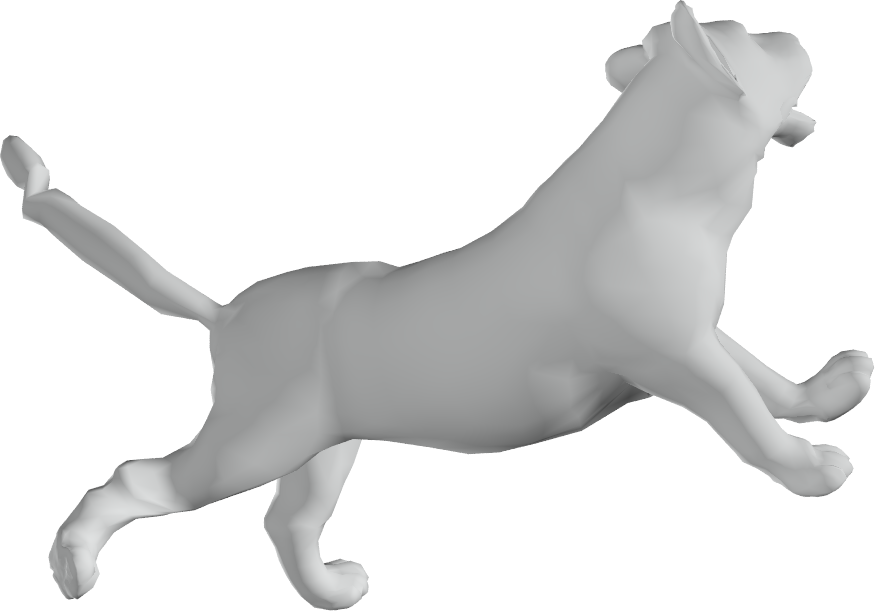}
\end{tabular}
\caption{ARAPDiffusion results. They show improved individual shape quality and higher success rate. }
\label{Figure:SMAL:Mesh:ARAPDiff}    
\end{figure*}

\begin{figure*}
\setlength\tabcolsep{4pt}
\begin{tabular}{cc|cc|cc}
\includegraphics[width=0.15\textwidth]{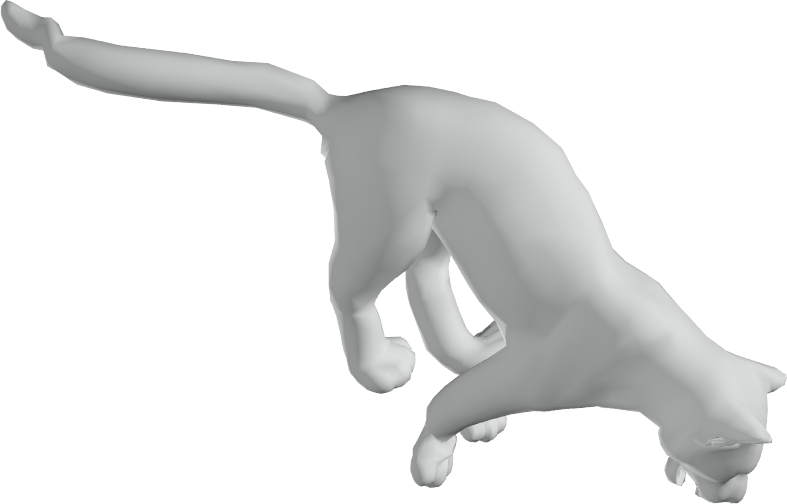}&
\includegraphics[width=0.15\textwidth]{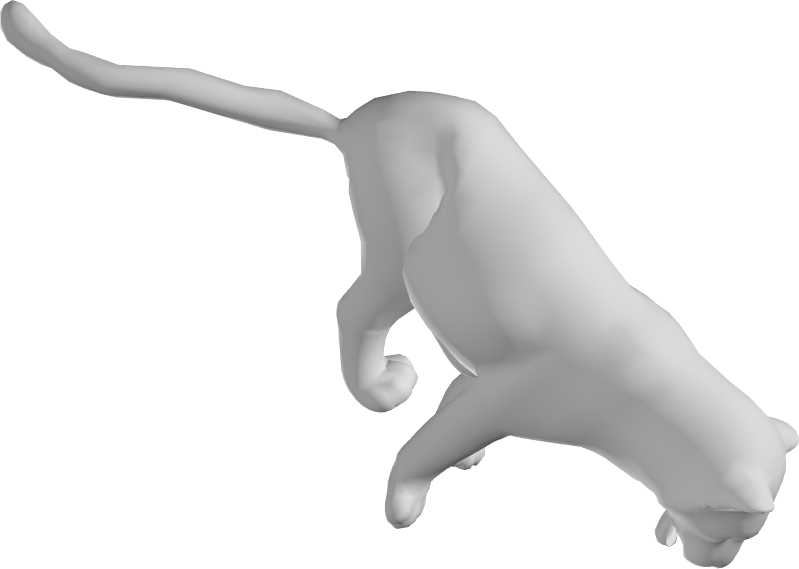}&
\includegraphics[width=0.15\textwidth]{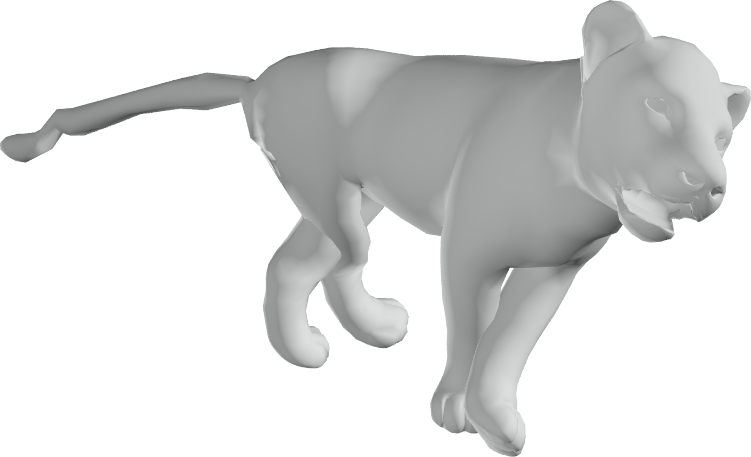}&
\includegraphics[width=0.15\textwidth]{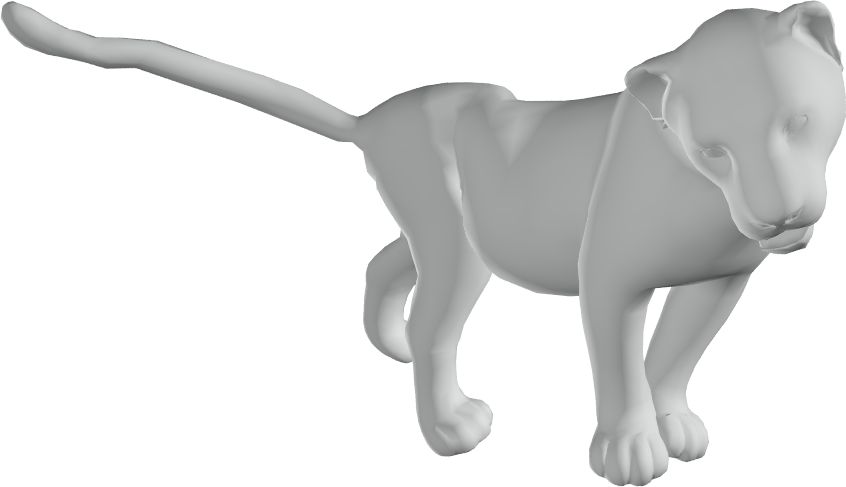}&
\includegraphics[width=0.15\textwidth]{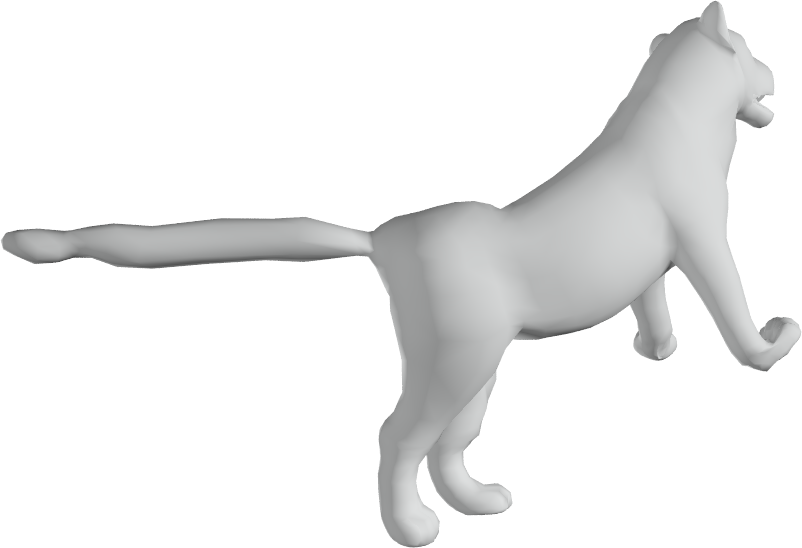}&
\includegraphics[width=0.15\textwidth]{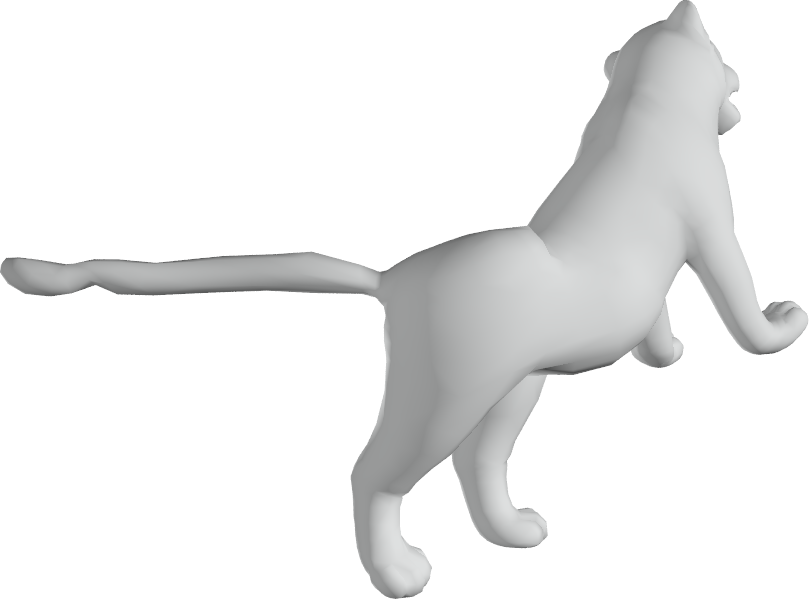}\\\hline
\includegraphics[width=0.15\textwidth]{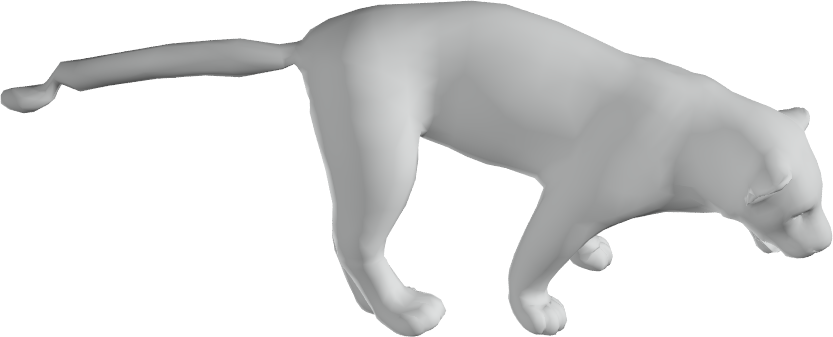}&
\includegraphics[width=0.15\textwidth]{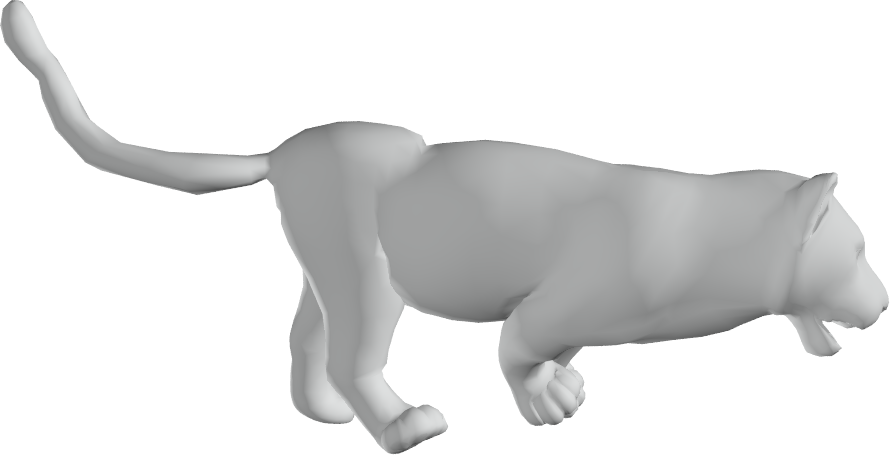}&
\includegraphics[width=0.15\textwidth]{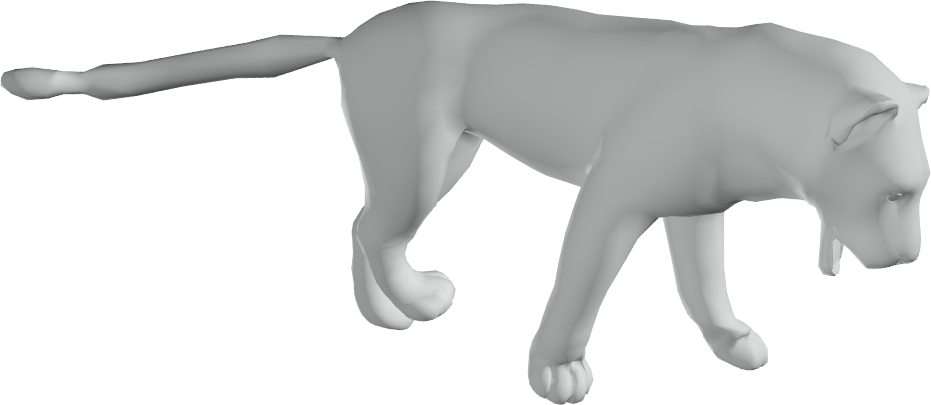}&
\includegraphics[width=0.15\textwidth]{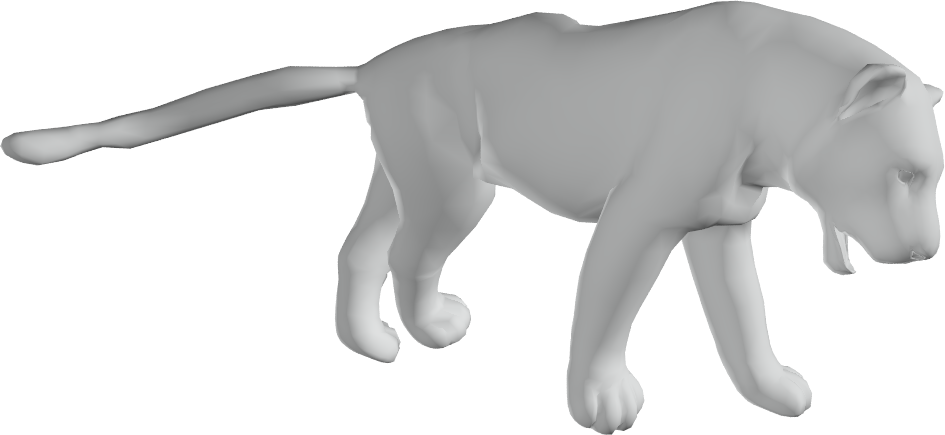}&
\includegraphics[width=0.15\textwidth]{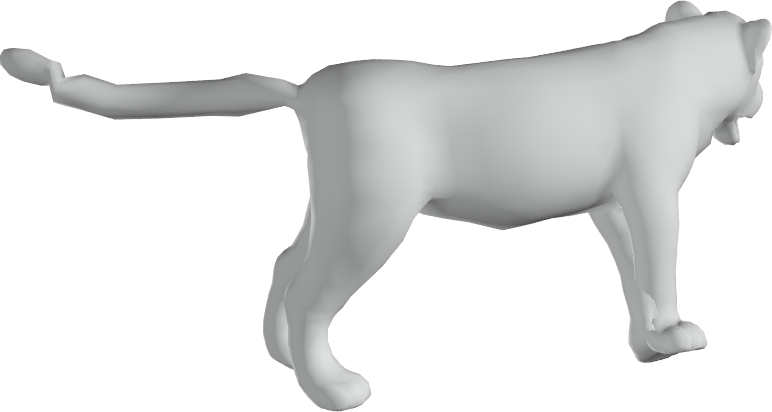}&
\includegraphics[width=0.15\textwidth]{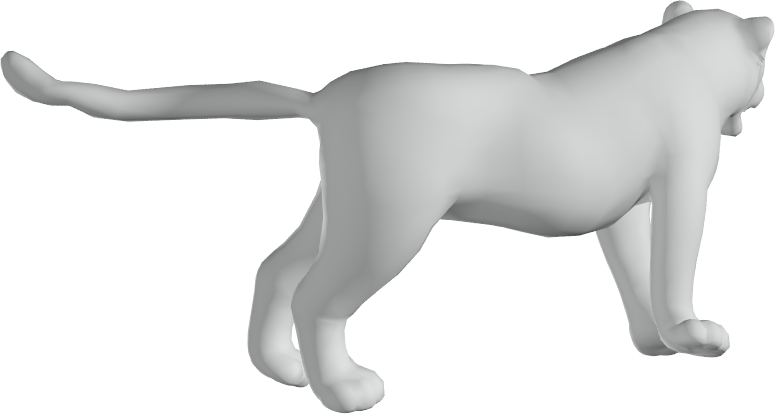}\\\hline
\includegraphics[width=0.15\textwidth]{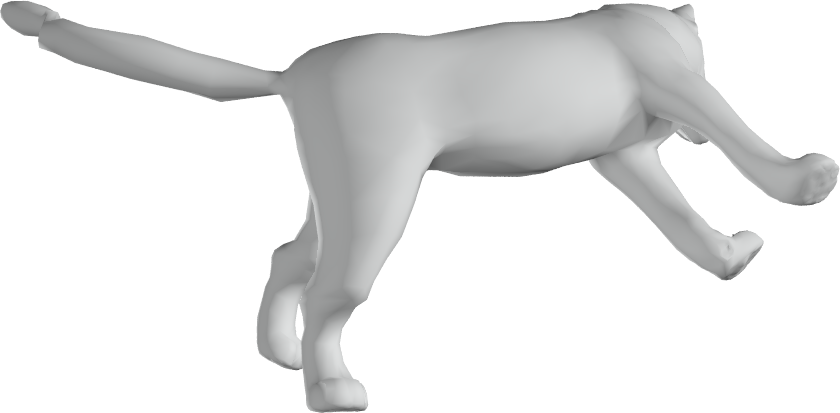}&
\includegraphics[width=0.15\textwidth]{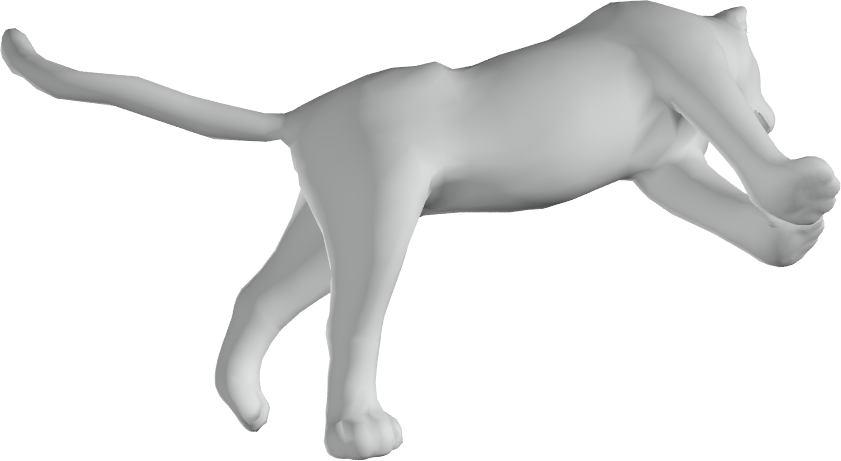}&
\includegraphics[width=0.15\textwidth]{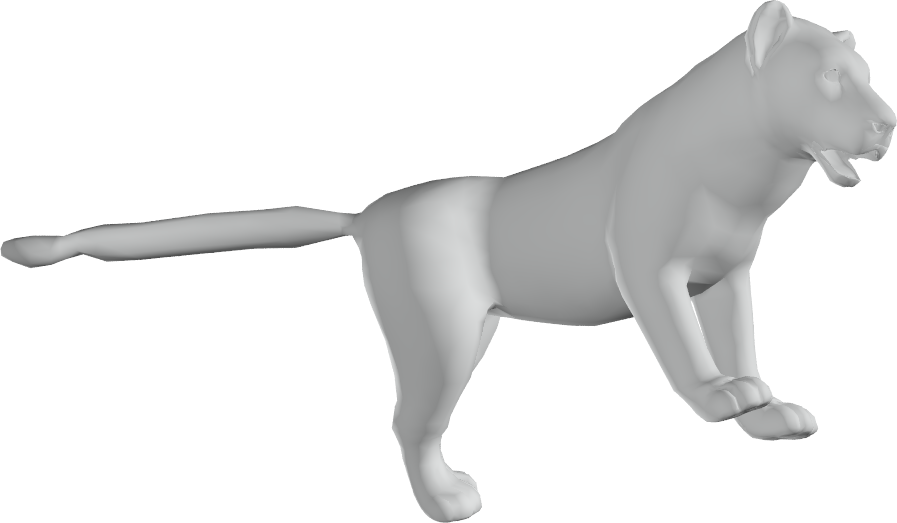}&
\includegraphics[width=0.15\textwidth]{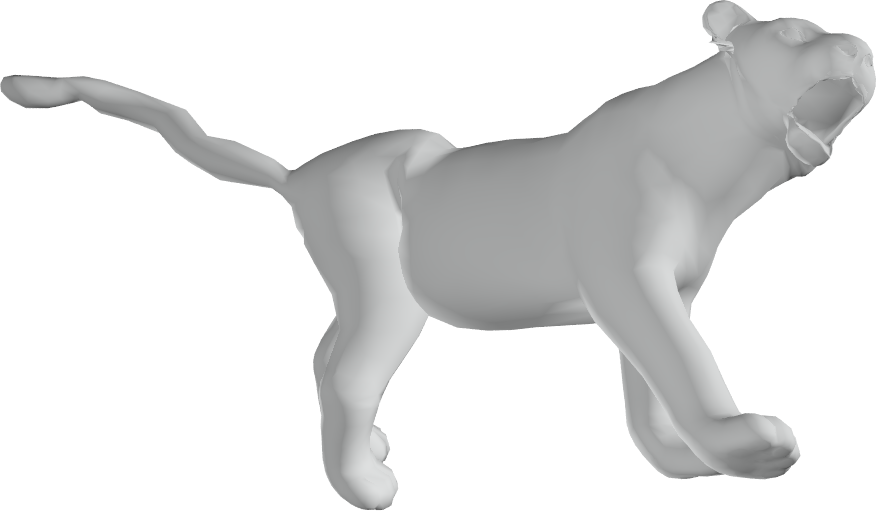}&
\includegraphics[width=0.15\textwidth]{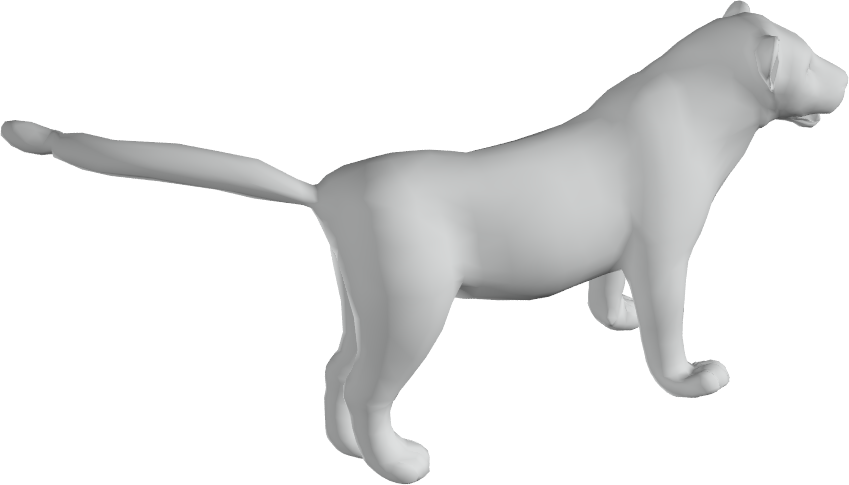}&
\includegraphics[width=0.15\textwidth]{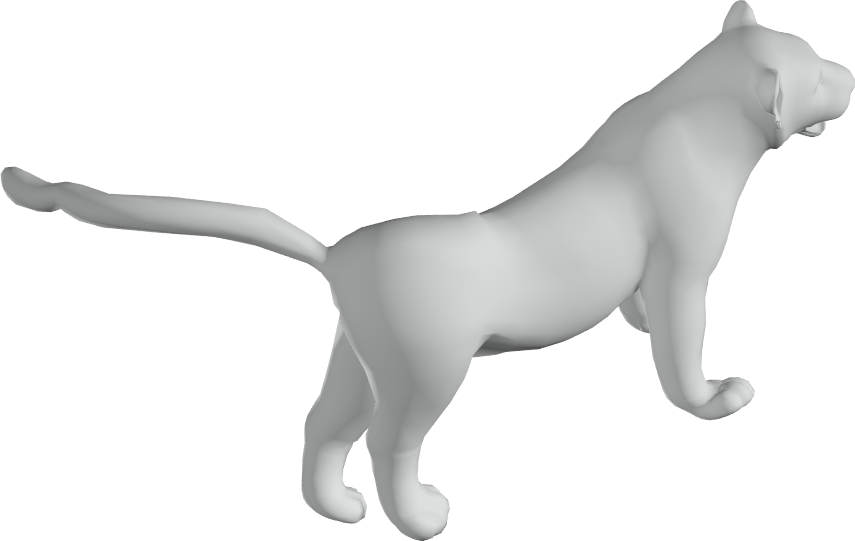}\\\hline
\includegraphics[width=0.15\textwidth]{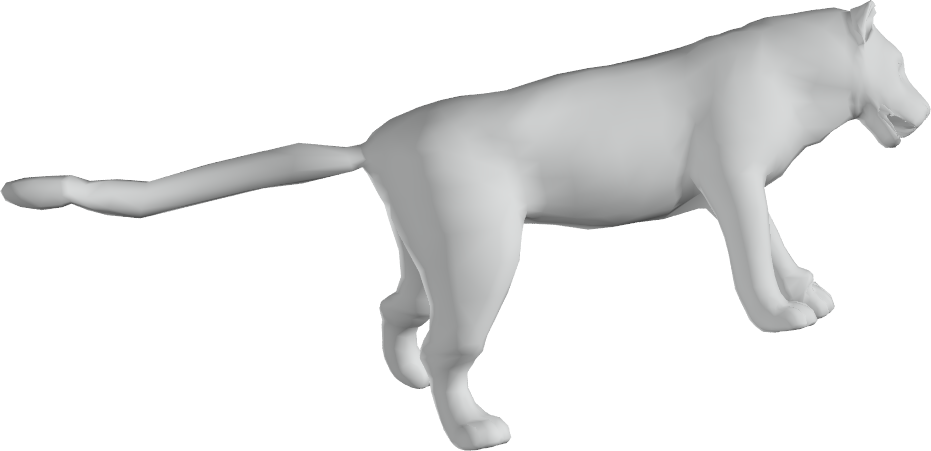}&
\includegraphics[width=0.15\textwidth]{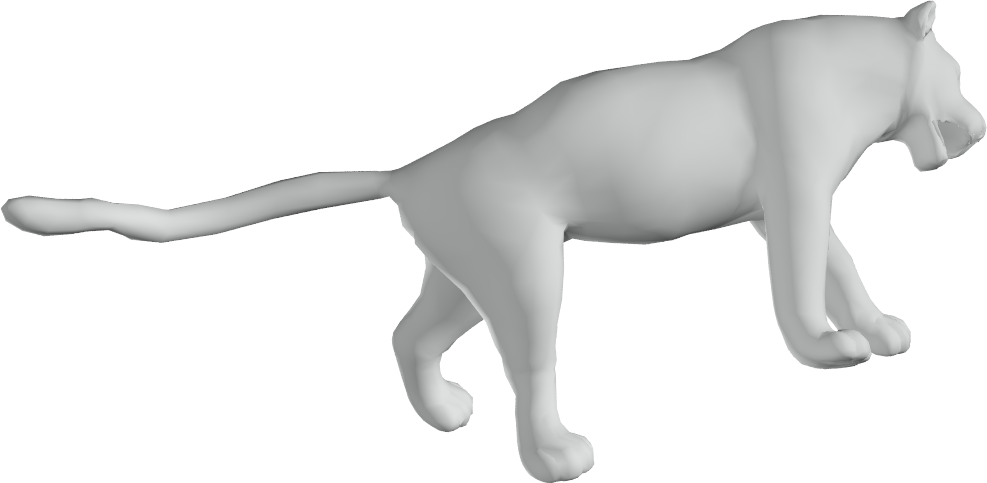}&
\includegraphics[width=0.15\textwidth]{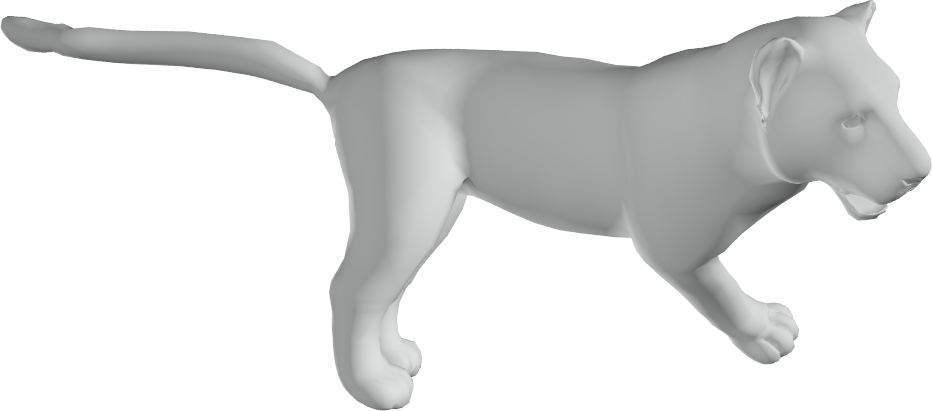}&
\includegraphics[width=0.15\textwidth]{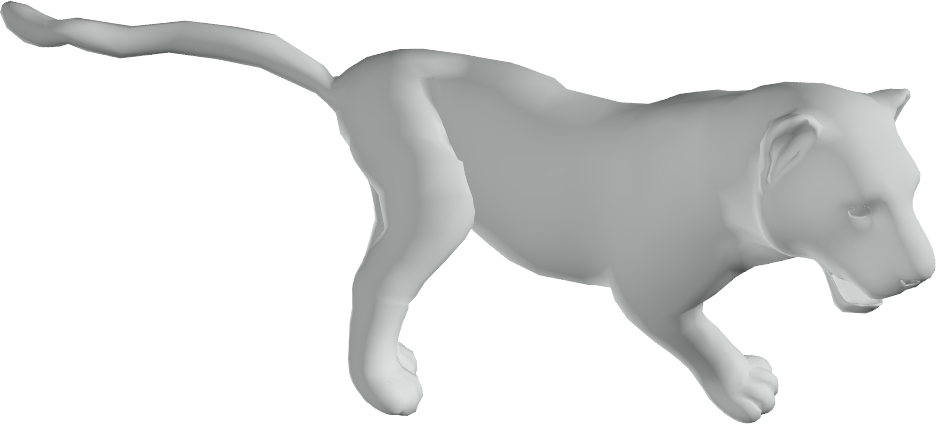}&
\includegraphics[width=0.15\textwidth]{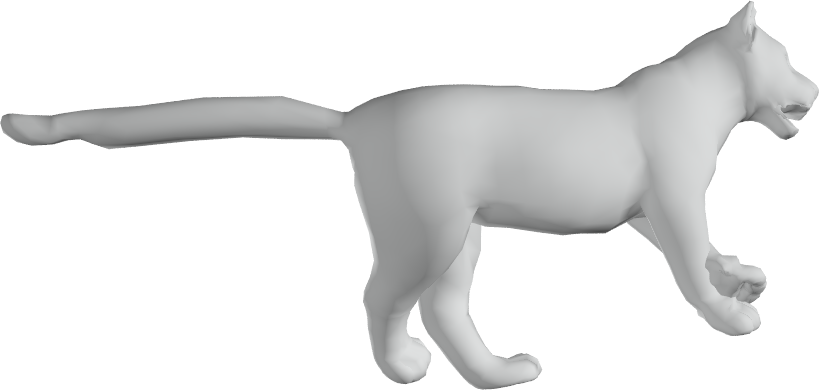}&
\includegraphics[width=0.15\textwidth]{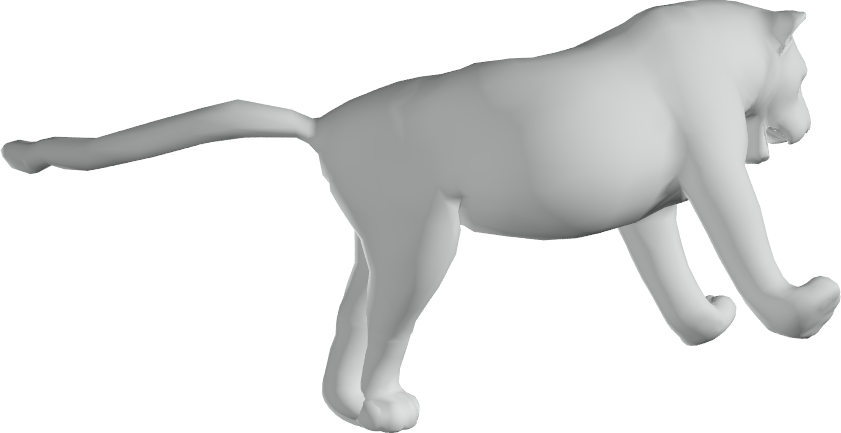}
\end{tabular}
\caption{The closest training shape of each generated shape. In each cell, the generated shape is shown on the left, while the closest training shape is shown on the right. }
\label{Figure:SMAL:Mesh:ARAPDiff:Closest:Training}    
\end{figure*}

\section{Additional Conditional Generation Results}

Fig.~\ref{Fig:Human:CG} and Fig.~\ref{Fig:Animal:CG} show qualitative conditional generation results. Compared to baseline approaches, our approach leads to results that with significantly improved alignments with the ground-truth. Even if our results do not align with the ground-truth due to limited observations, our results are still plausible. 

\begin{figure*}
\centering
\setlength\tabcolsep{1pt}
\begin{tabular}{ccccc|ccccc}
Input & GeoLatent& BRESA & ARAPDiff. & GT& Input & GeoLatent& BRESA & ARAPDiff. & GT\\
\includegraphics[height=0.18\textwidth]{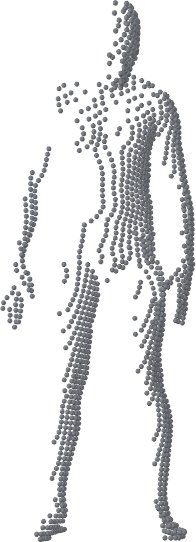}
&  
\includegraphics[height=0.18\textwidth]{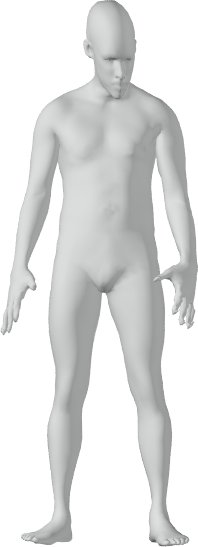}
&
\includegraphics[height=0.18\textwidth]{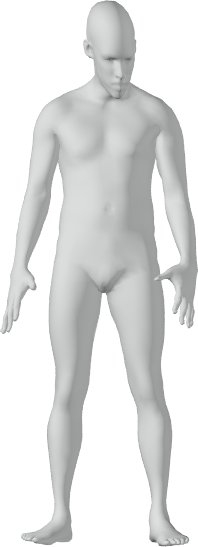}
& 
\includegraphics[height=0.18\textwidth]{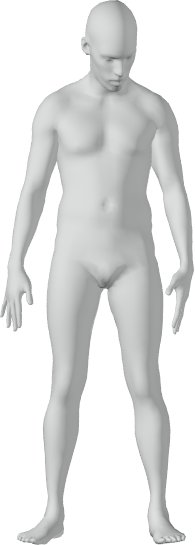}
& 
\includegraphics[height=0.18\textwidth]{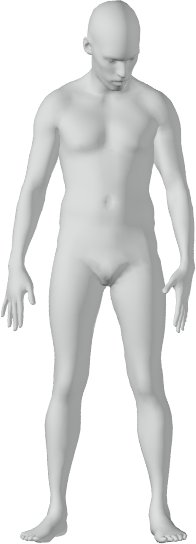} & \includegraphics[height=0.18\textwidth]{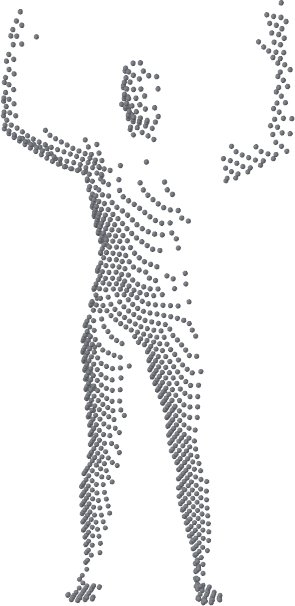}
&  
\includegraphics[height=0.18\textwidth]{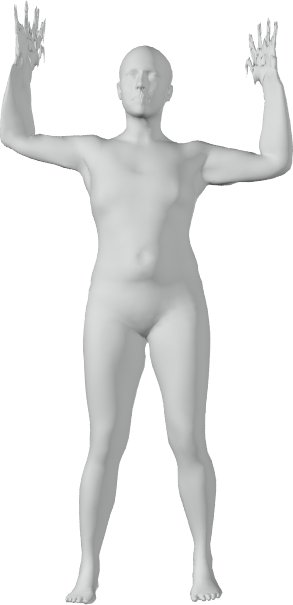}
&
\includegraphics[height=0.18\textwidth]{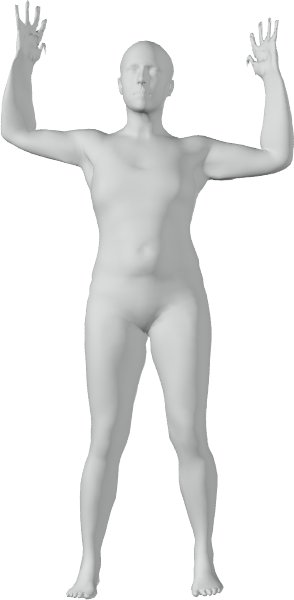}
& 
\includegraphics[height=0.18\textwidth]{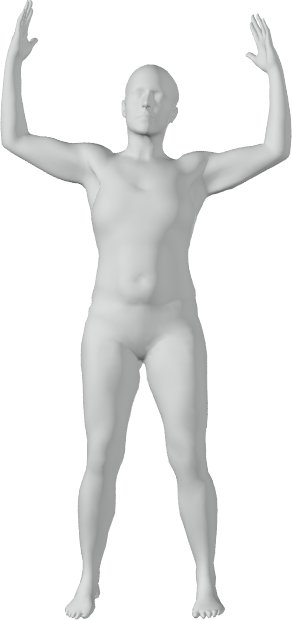}
& 
\includegraphics[height=0.18\textwidth]{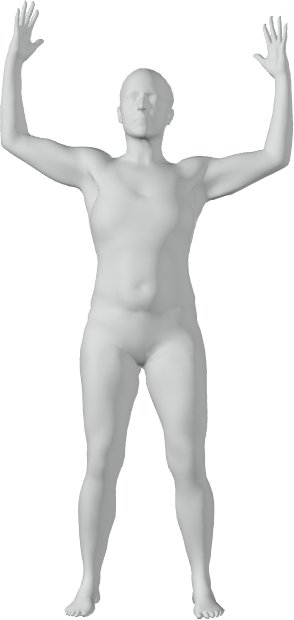}\\ 
\includegraphics[height=0.18\textwidth]{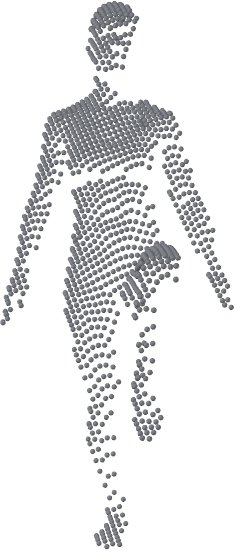}
&  
\includegraphics[height=0.18\textwidth]{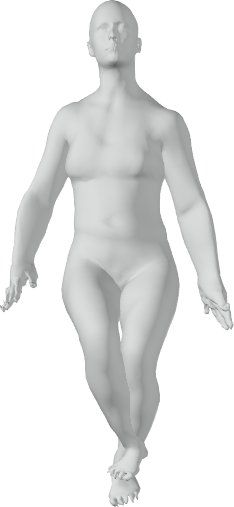}
&
\includegraphics[height=0.18\textwidth]{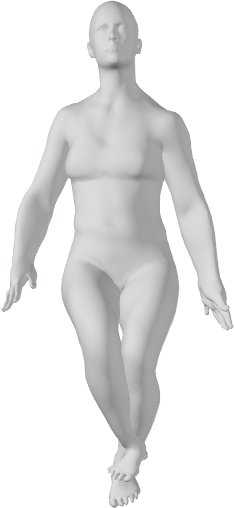}
& 
\includegraphics[height=0.18\textwidth]{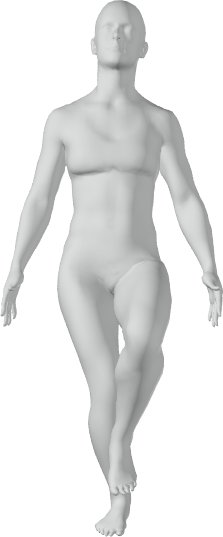}
& 
\includegraphics[height=0.18\textwidth]{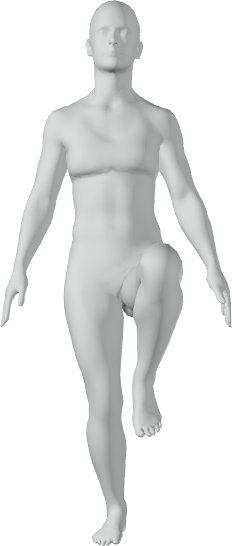} & \includegraphics[height=0.18\textwidth]{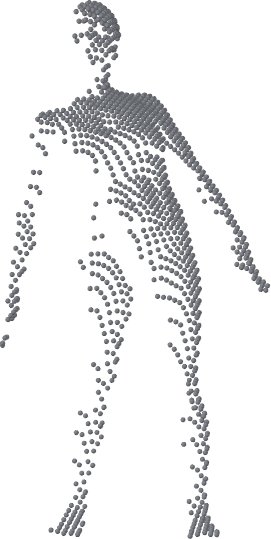}
&  
\includegraphics[height=0.18\textwidth]{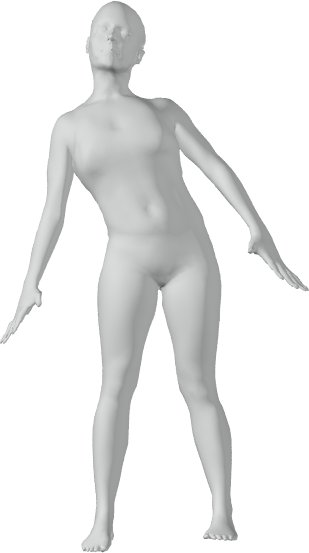}
&
\includegraphics[height=0.18\textwidth]{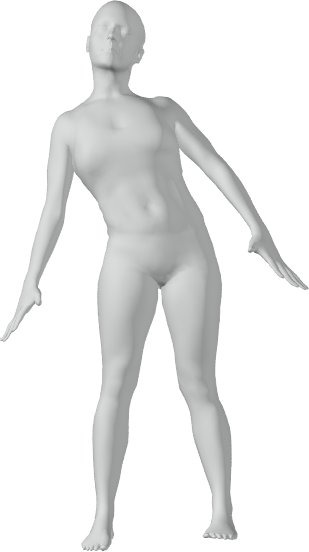}
& 
\includegraphics[height=0.18\textwidth]{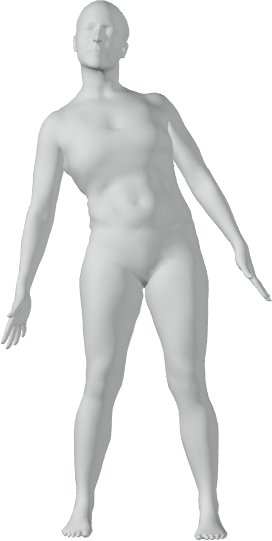}
& 
\includegraphics[height=0.18\textwidth]{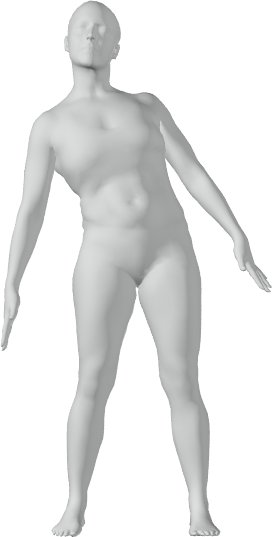}\\ 
\includegraphics[height=0.18\textwidth]{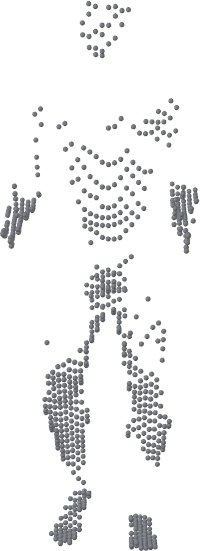}
&  
\includegraphics[height=0.18\textwidth]{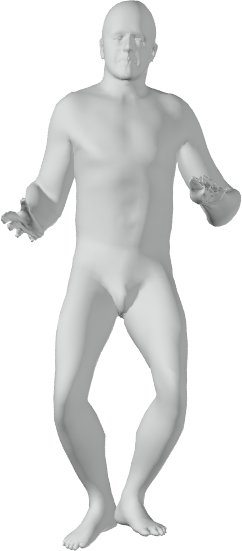}
&
\includegraphics[height=0.18\textwidth]{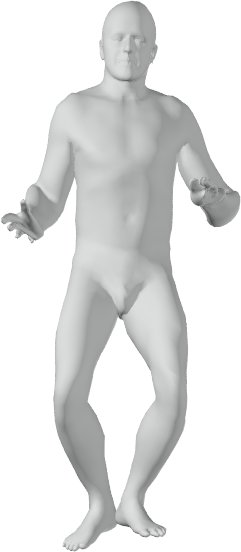}
& 
\includegraphics[height=0.18\textwidth]{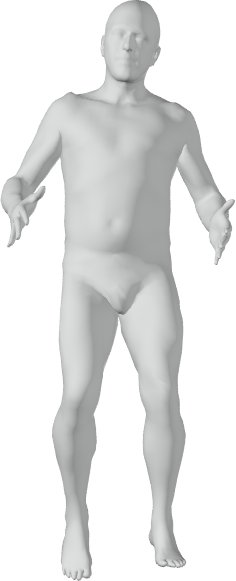}
& 
\includegraphics[height=0.18\textwidth]{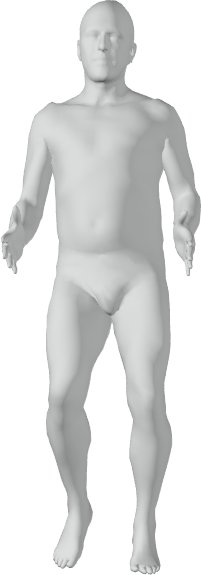} & \includegraphics[height=0.18\textwidth]{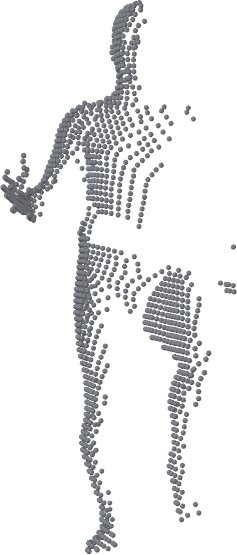}
&  
\includegraphics[height=0.18\textwidth]{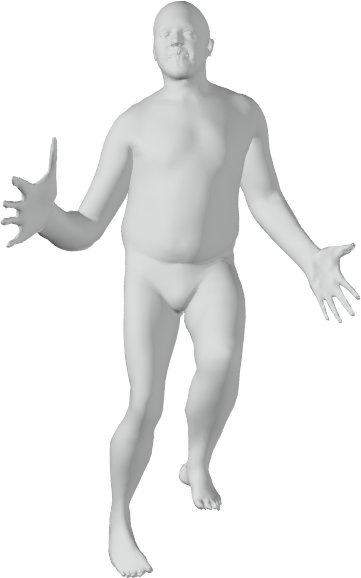}
&
\includegraphics[height=0.18\textwidth]{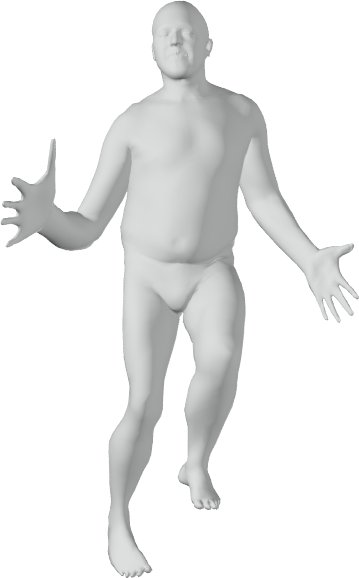}
& 
\includegraphics[height=0.18\textwidth]{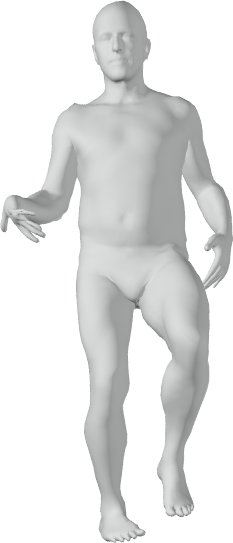}
& 
\includegraphics[height=0.18\textwidth]{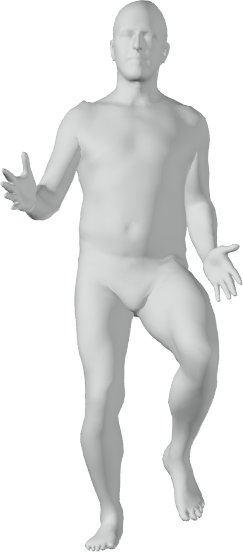}\\
\includegraphics[height=0.18\textwidth]{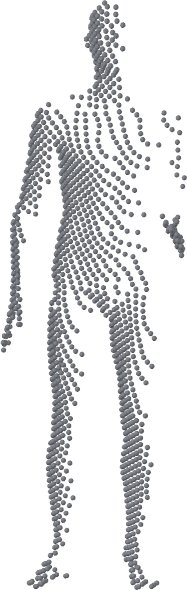}
&  
\includegraphics[height=0.18\textwidth]{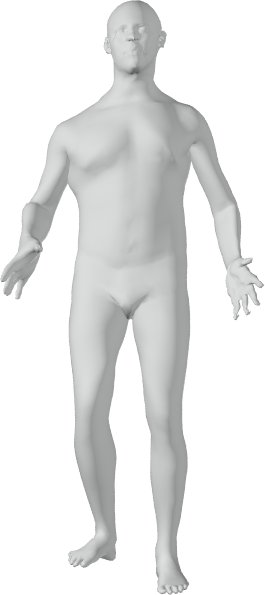}
&
\includegraphics[height=0.18\textwidth]{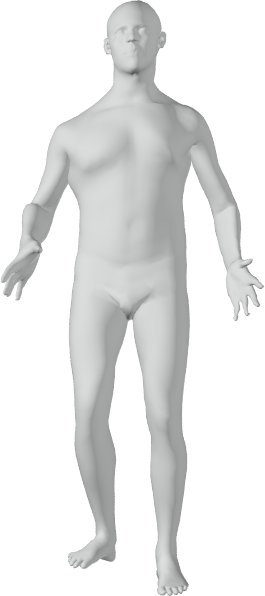}
& 
\includegraphics[height=0.18\textwidth]{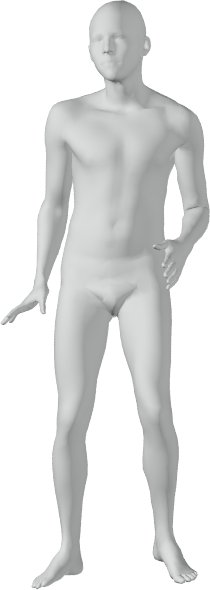}
& 
\includegraphics[height=0.18\textwidth]{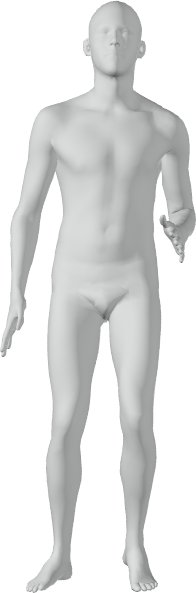} & \includegraphics[height=0.18\textwidth]{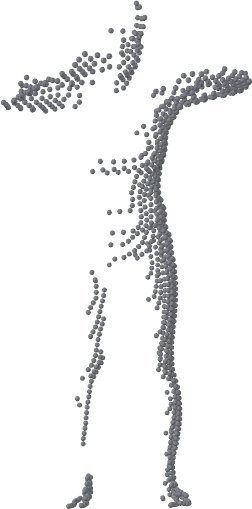}
&  
\includegraphics[height=0.18\textwidth]{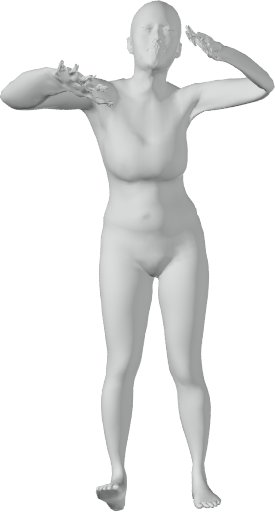}
&
\includegraphics[height=0.18\textwidth]{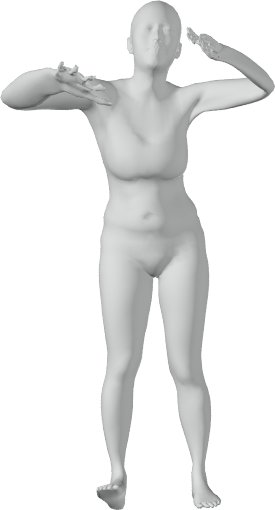}
& 
\includegraphics[height=0.18\textwidth]{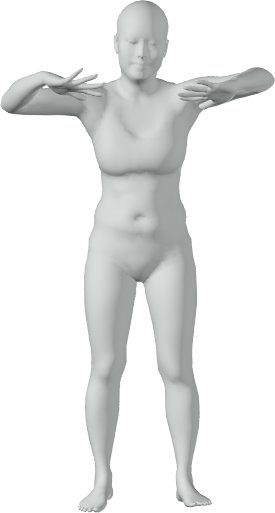}
& 
\includegraphics[height=0.18\textwidth]{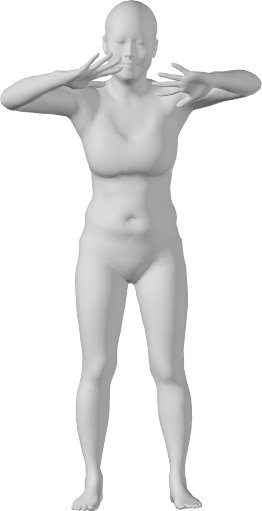} \\
\includegraphics[height=0.18\textwidth]{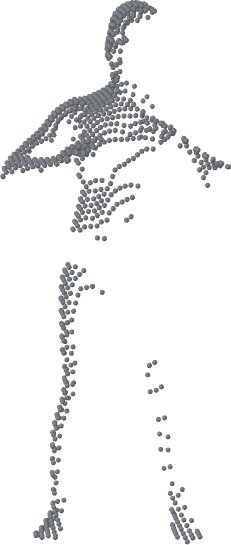}
&  
\includegraphics[height=0.18\textwidth]{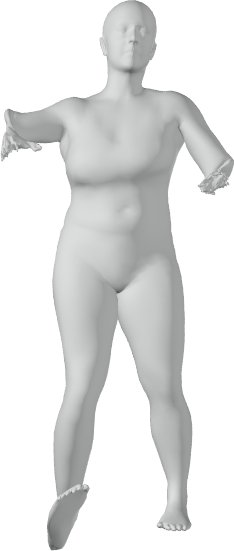}
&
\includegraphics[height=0.18\textwidth]{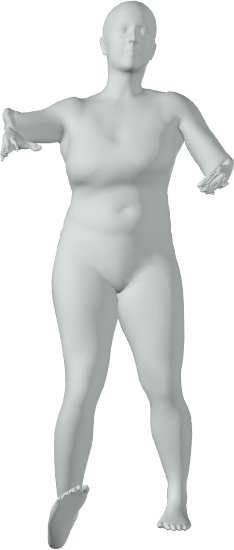}
& 
\includegraphics[height=0.18\textwidth]{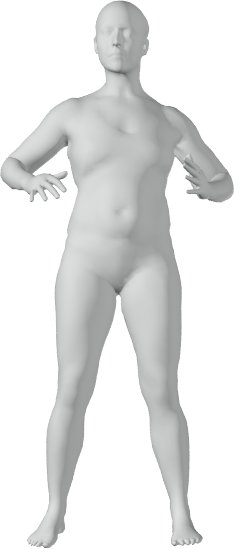}
& 
\includegraphics[height=0.18\textwidth]{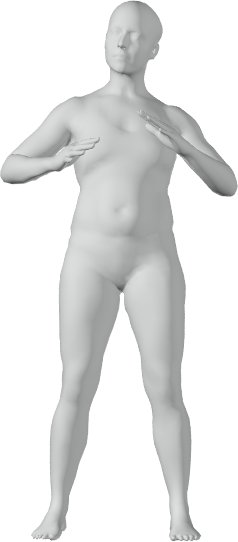} & \includegraphics[height=0.18\textwidth]{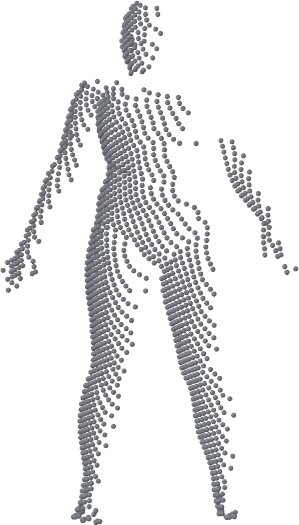}
&  
\includegraphics[height=0.18\textwidth]{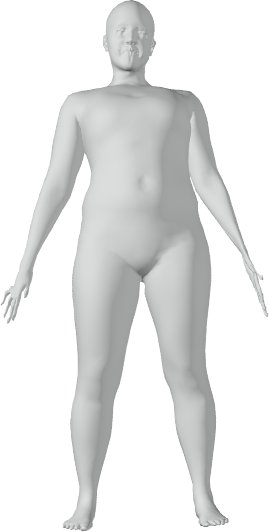}
&
\includegraphics[height=0.18\textwidth]{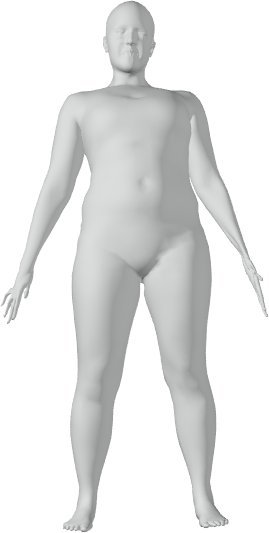}
& 
\includegraphics[height=0.18\textwidth]{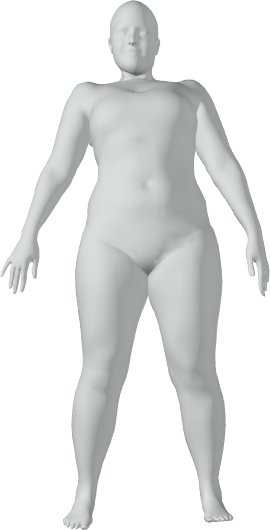}
& 
\includegraphics[height=0.18\textwidth]{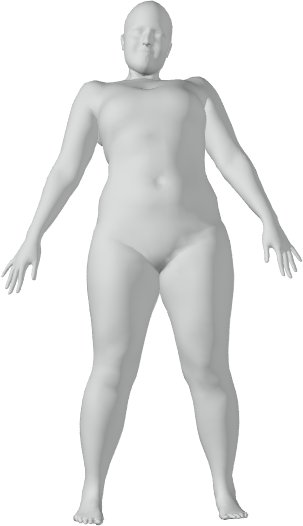} \\
\includegraphics[height=0.18\textwidth]{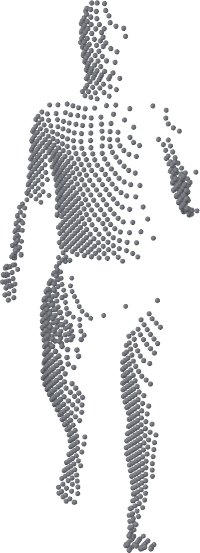}
&  
\includegraphics[height=0.18\textwidth]{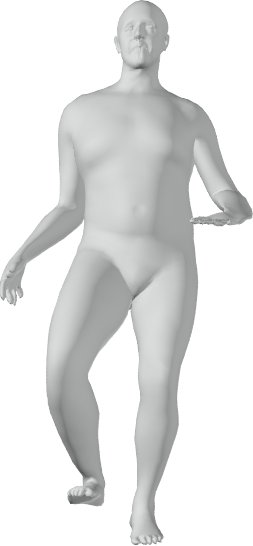}
&
\includegraphics[height=0.18\textwidth]{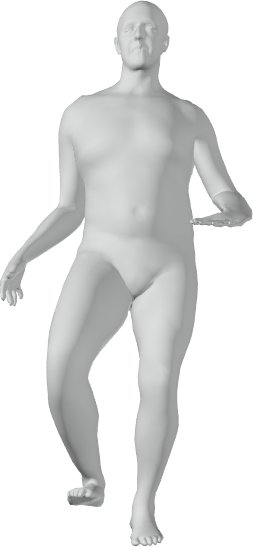}
& 
\includegraphics[height=0.18\textwidth]{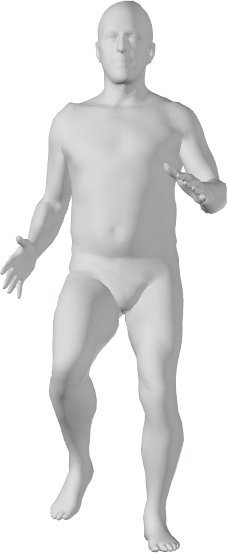}
& 
\includegraphics[height=0.18\textwidth]{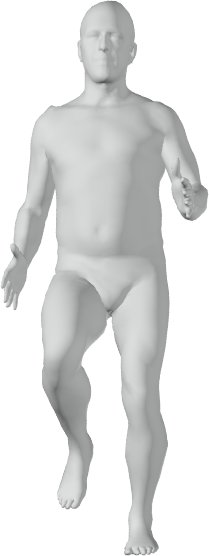} & \includegraphics[height=0.18\textwidth]{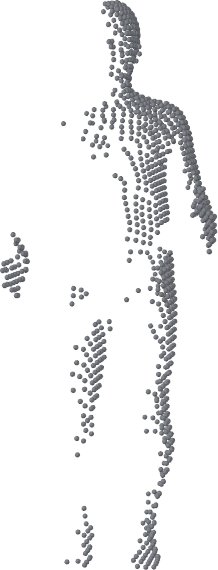}
&  
\includegraphics[height=0.18\textwidth]{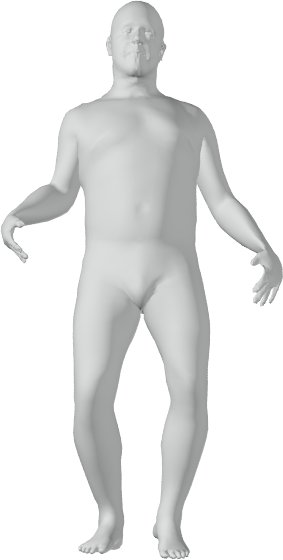}
&
\includegraphics[height=0.18\textwidth]{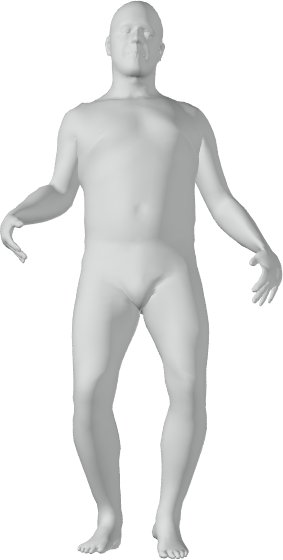}
& 
\includegraphics[height=0.18\textwidth]{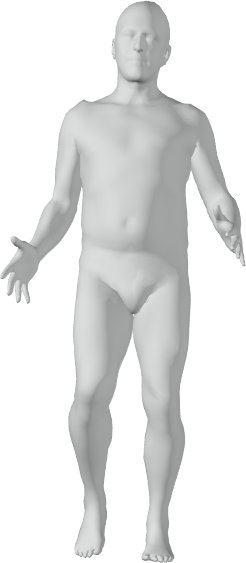}
& 
\includegraphics[height=0.18\textwidth]{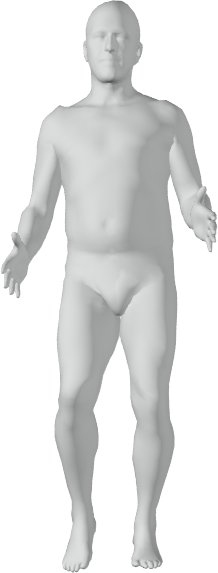} \\
\includegraphics[height=0.18\textwidth]{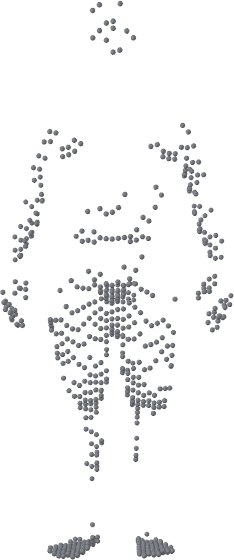}
&  
\includegraphics[height=0.18\textwidth]{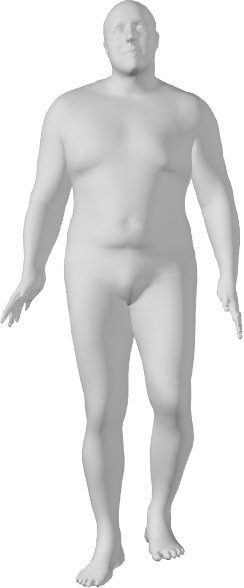}
&
\includegraphics[height=0.18\textwidth]{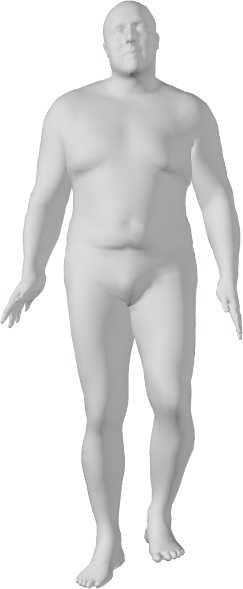}
& 
\includegraphics[height=0.18\textwidth]{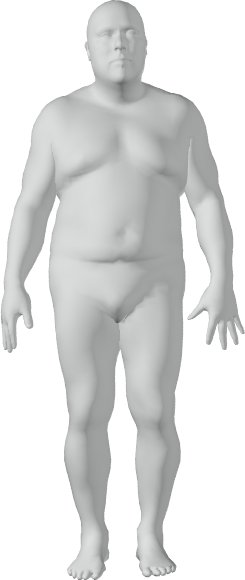}
& 
\includegraphics[height=0.18\textwidth]{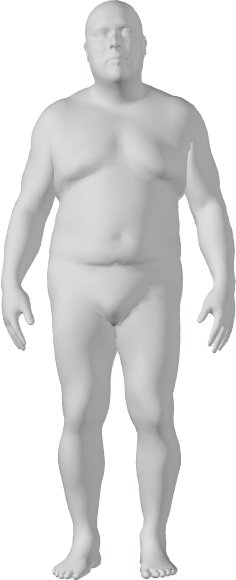} & \includegraphics[height=0.18\textwidth]{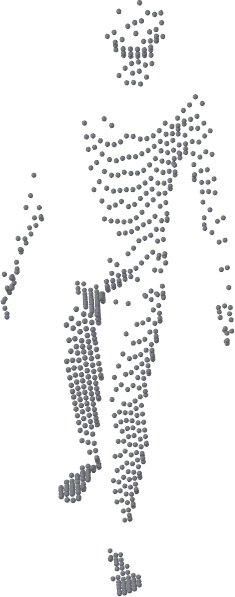}
&  
\includegraphics[height=0.18\textwidth]{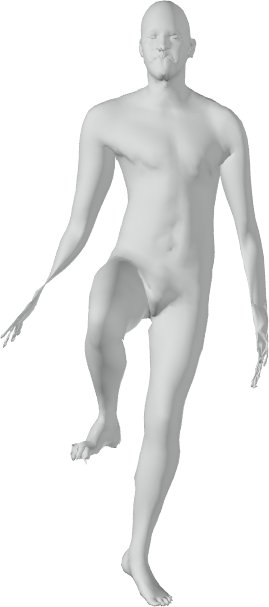}
&
\includegraphics[height=0.18\textwidth]{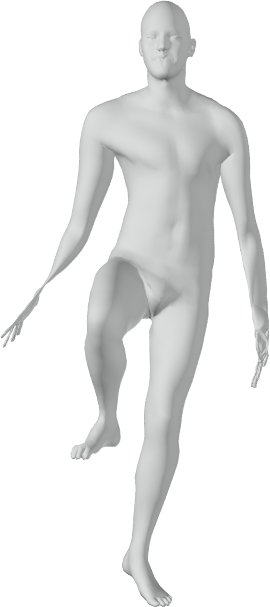}
& 
\includegraphics[height=0.18\textwidth]{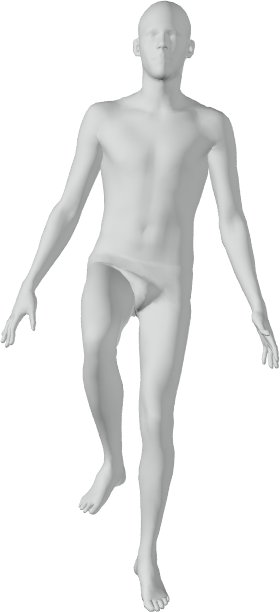}
& 
\includegraphics[height=0.18\textwidth]{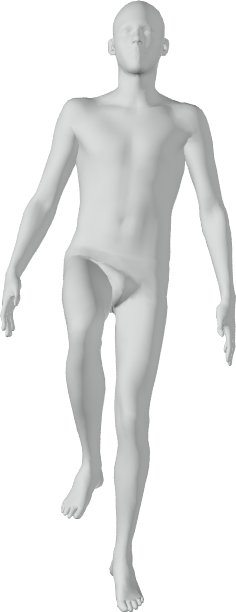} 
\end{tabular}
\caption{Human conditional generation results. Our results align with the ground-truth better than baseline approaches. }
\label{Fig:Human:CG}
\end{figure*}

\begin{figure*}
\setlength\tabcolsep{2pt}
\begin{tabular}{ccccc}
Input & GeoLatent& BRESA & ARAPDiff. & GT\\
\includegraphics[width=0.19\textwidth]{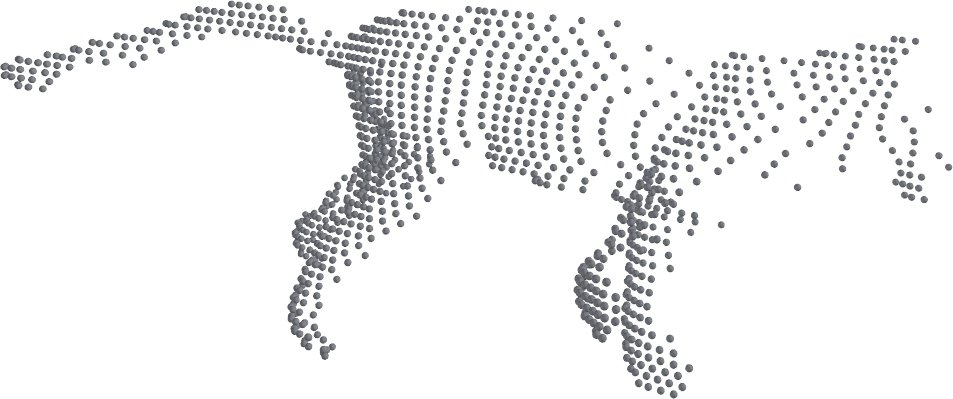}
&  
\includegraphics[width=0.19\textwidth]{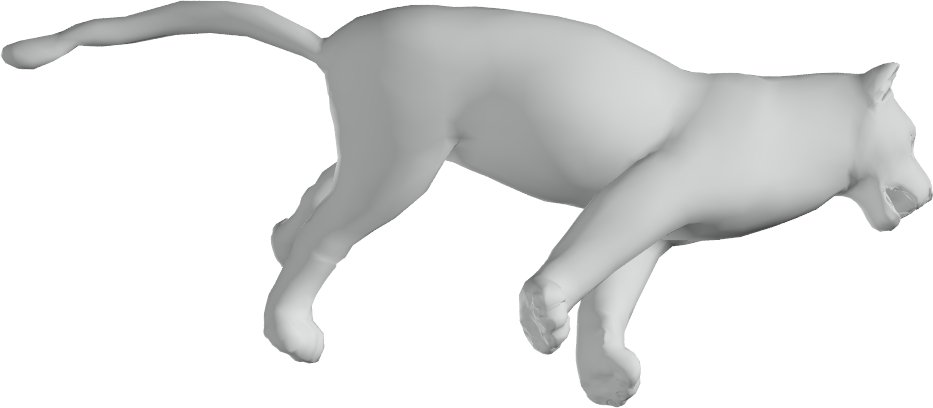}
&
\includegraphics[width=0.19\textwidth]{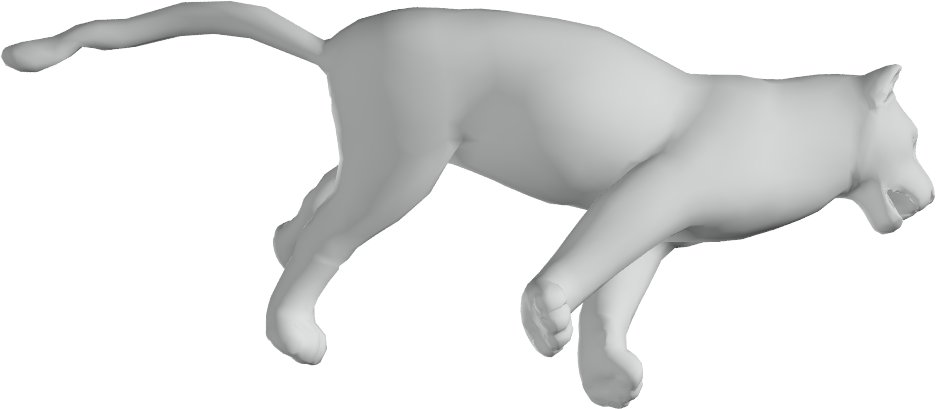}
& 
\includegraphics[width=0.19\textwidth]{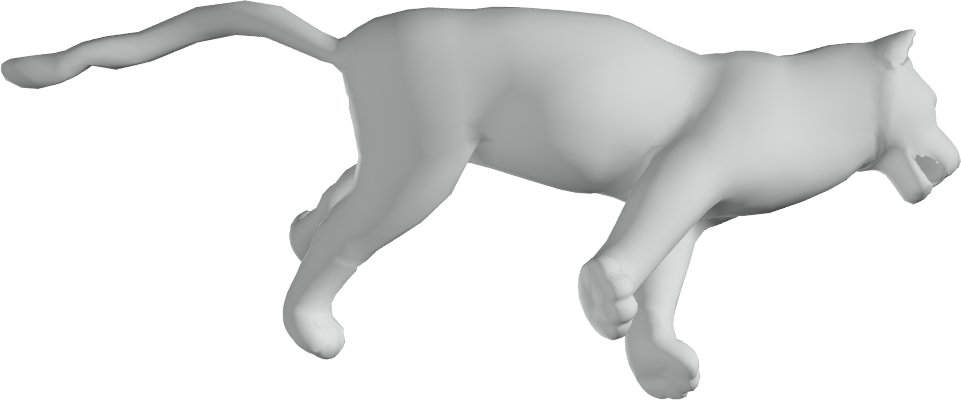}
& 
\includegraphics[width=0.19\textwidth]{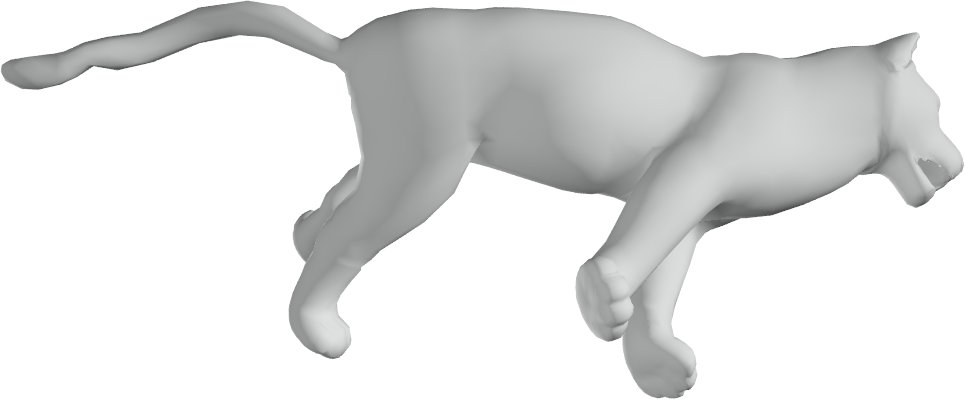}
\\
\includegraphics[width=0.19\textwidth]{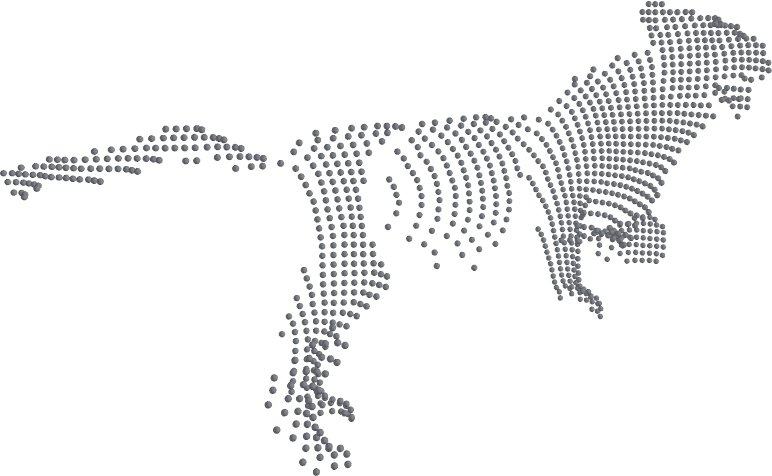}
&  
\includegraphics[width=0.19\textwidth]{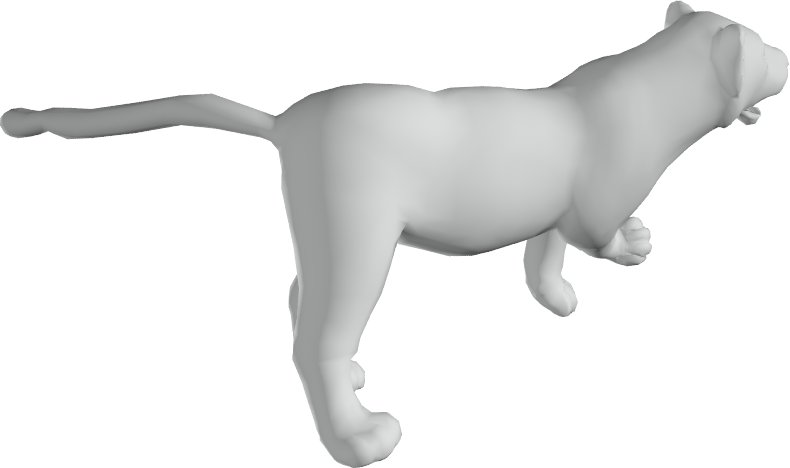}
&
\includegraphics[width=0.19\textwidth]{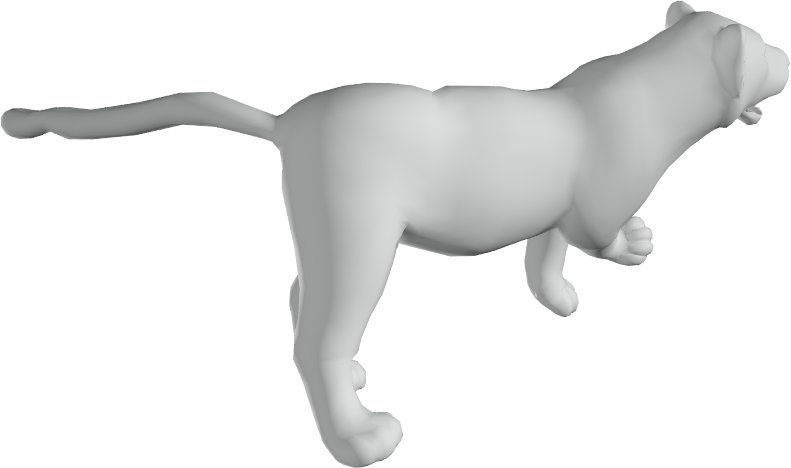}
& 
\includegraphics[width=0.19\textwidth]{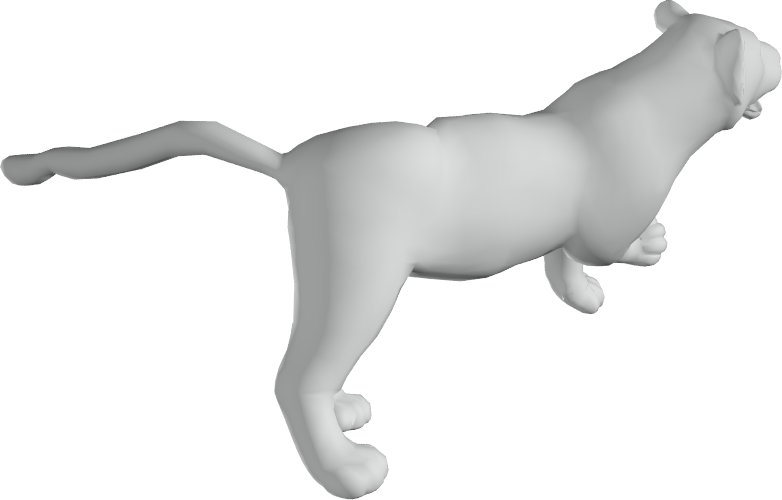}
& 
\includegraphics[width=0.19\textwidth]{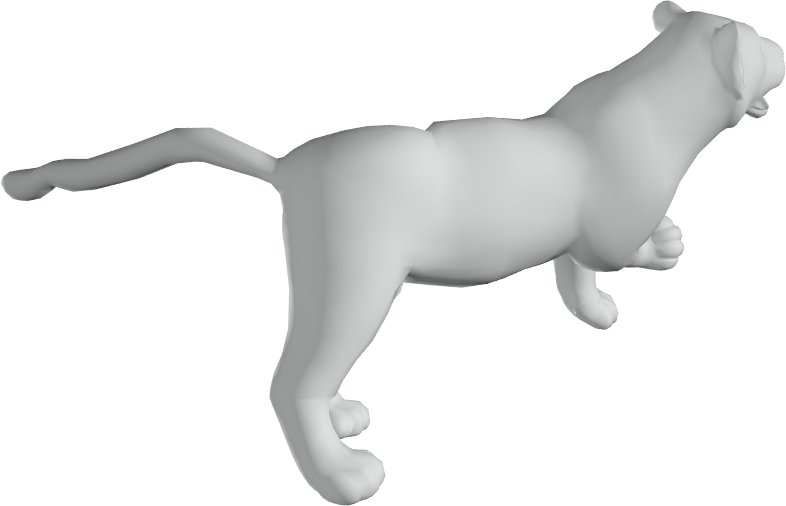}
\\
\includegraphics[width=0.19\textwidth]{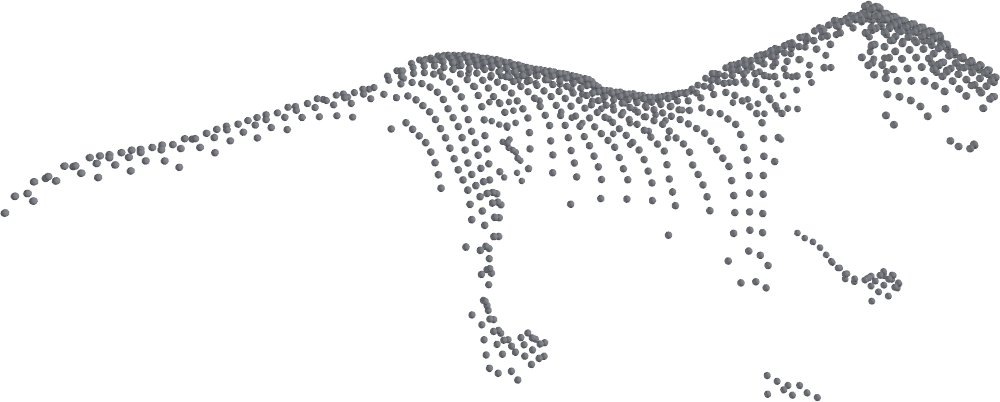}
&  
\includegraphics[width=0.19\textwidth]{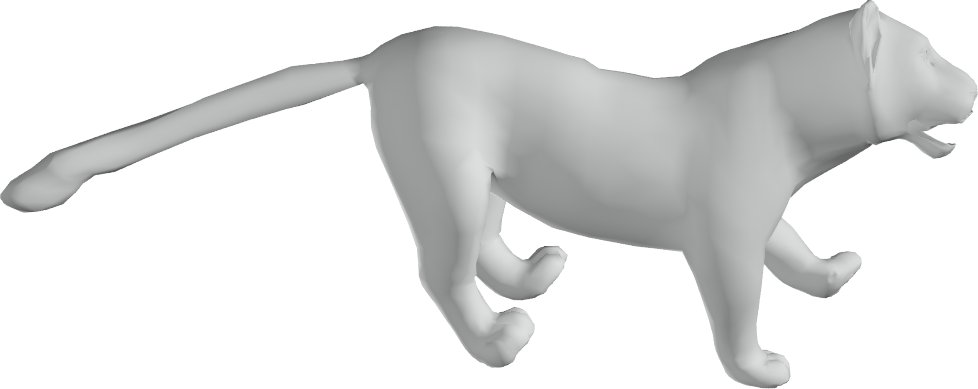}
&
\includegraphics[width=0.19\textwidth]{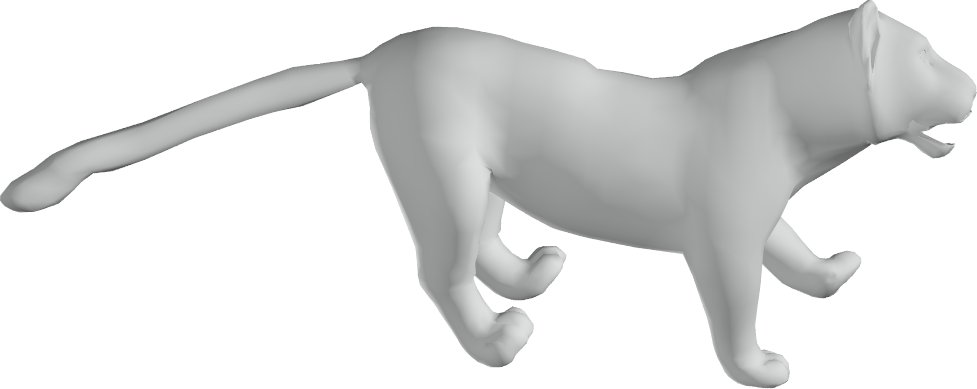}
& 
\includegraphics[width=0.19\textwidth]{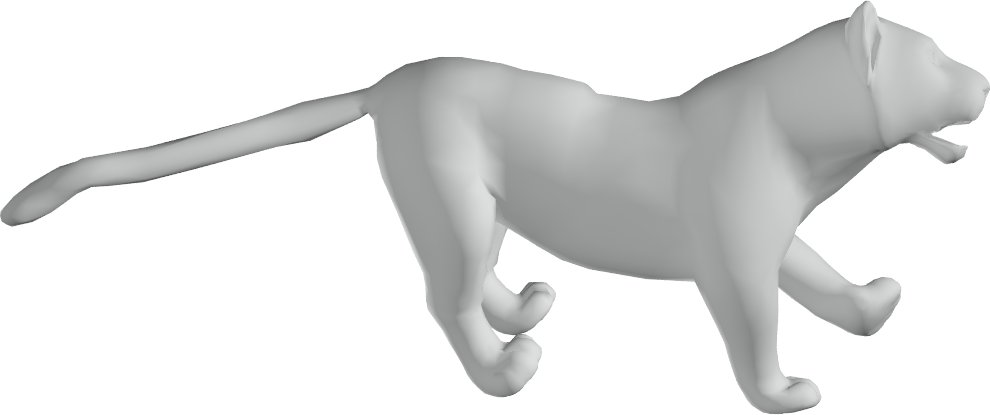}
& 
\includegraphics[width=0.19\textwidth]{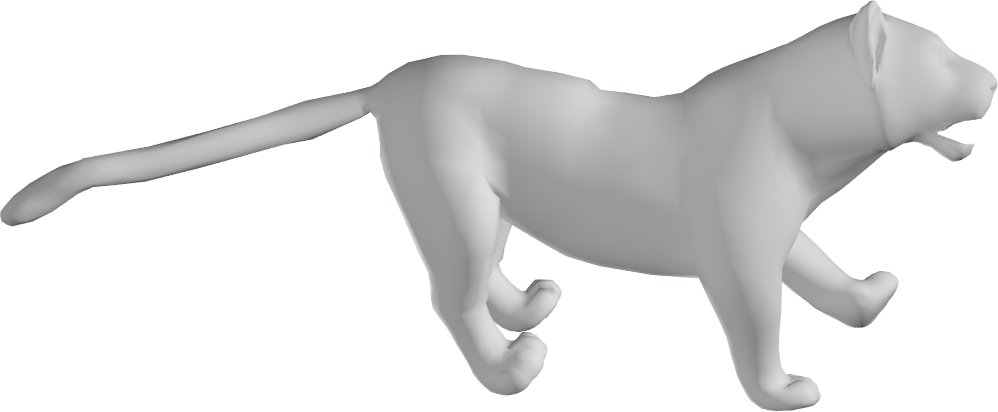}\\
\includegraphics[width=0.19\textwidth]{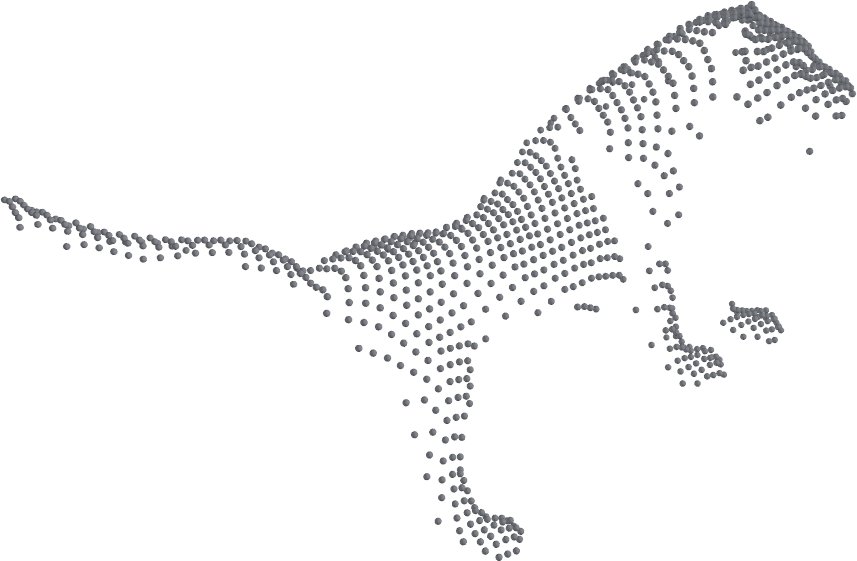}
&  
\includegraphics[width=0.19\textwidth]{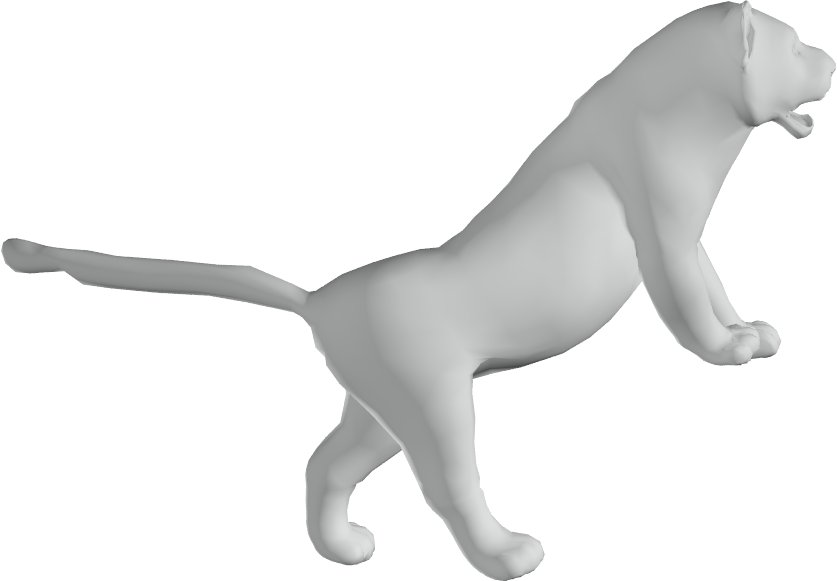}
&
\includegraphics[width=0.19\textwidth]{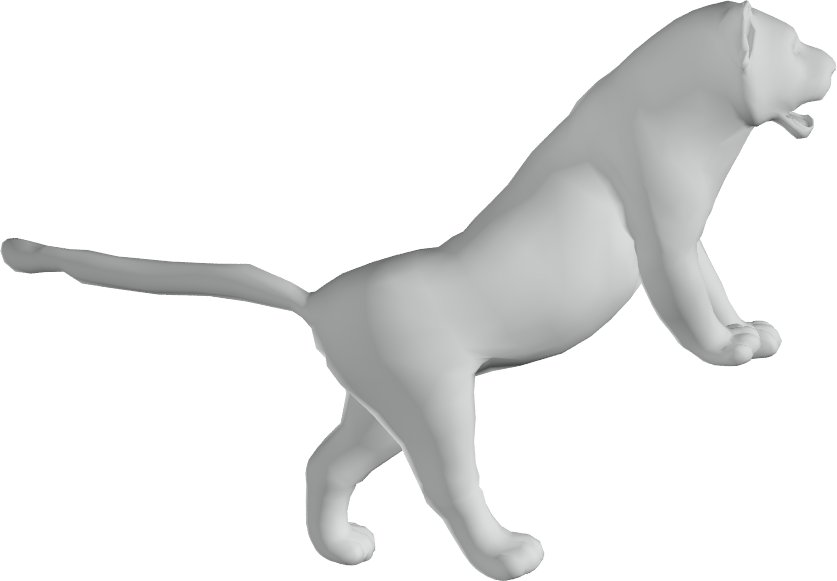}
& 
\includegraphics[width=0.19\textwidth]{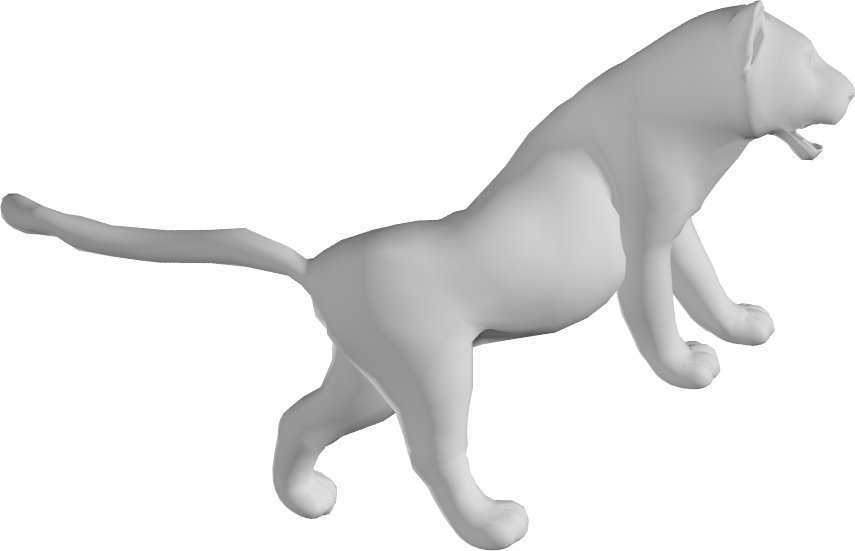}
& 
\includegraphics[width=0.19\textwidth]{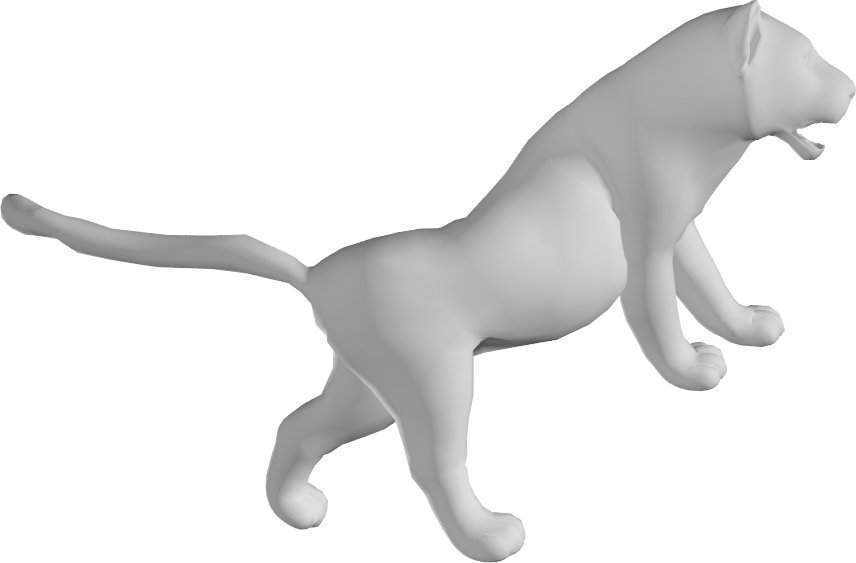}
\\
\includegraphics[width=0.19\textwidth]{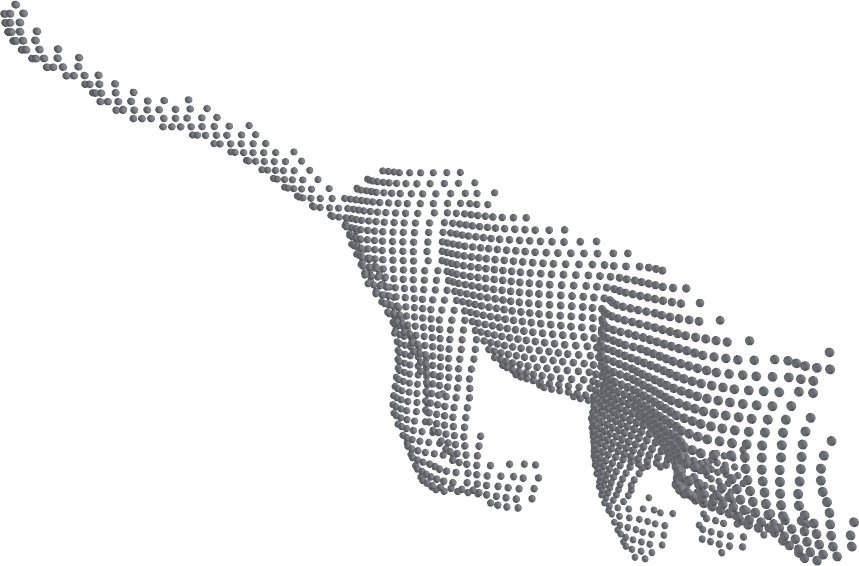}
&  
\includegraphics[width=0.19\textwidth]{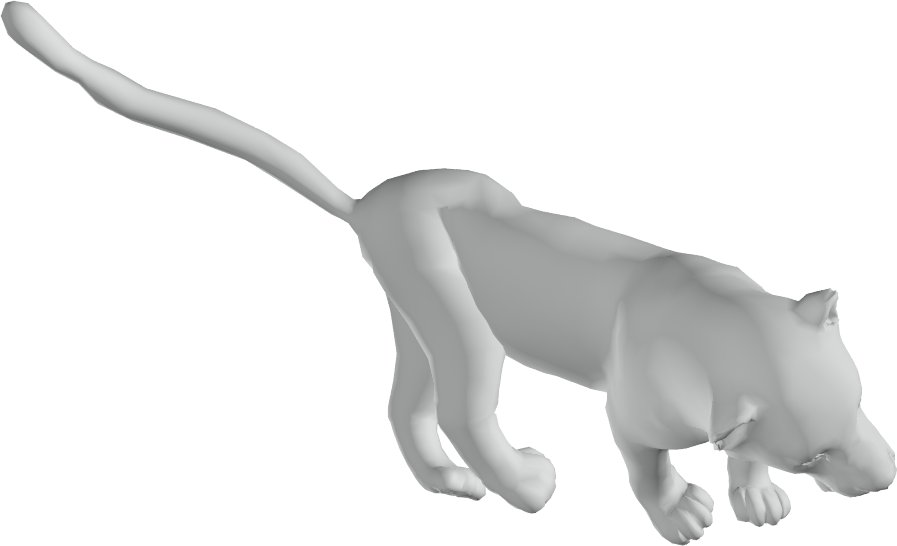}
&
\includegraphics[width=0.19\textwidth]{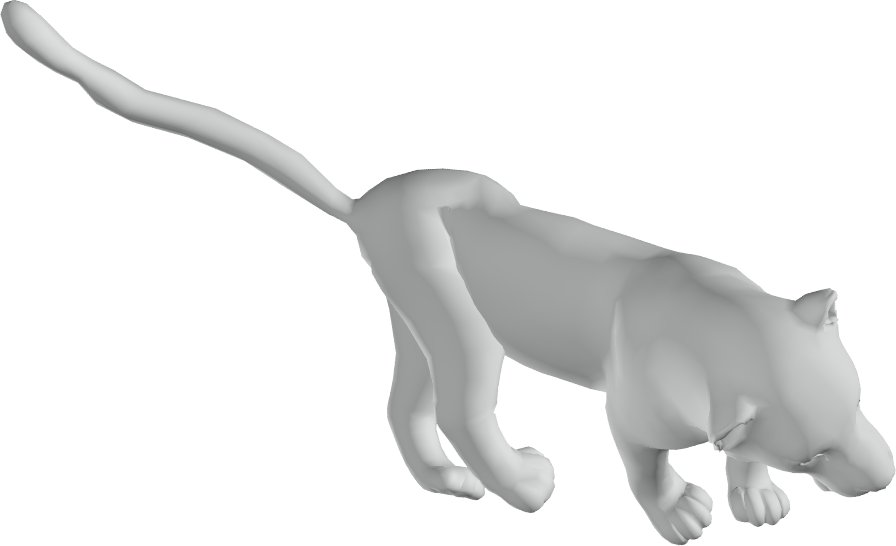}
& 
\includegraphics[width=0.19\textwidth]{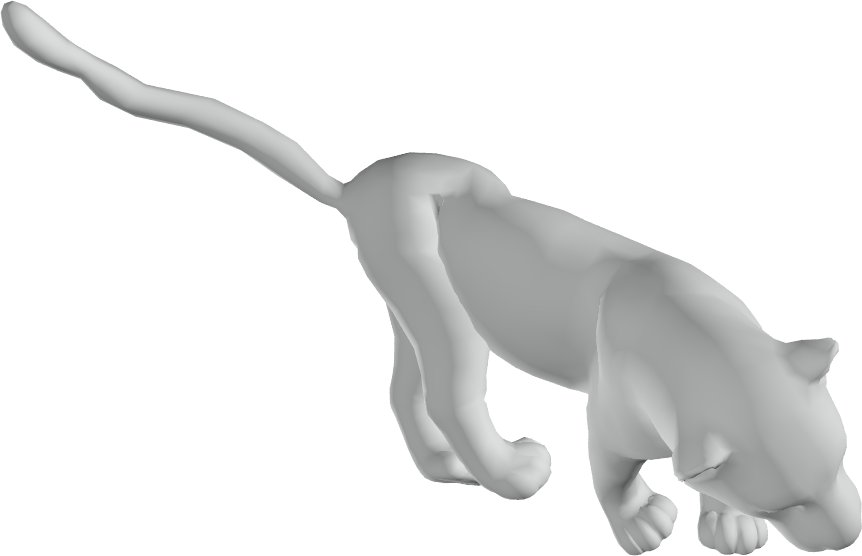}
& 
\includegraphics[width=0.19\textwidth]{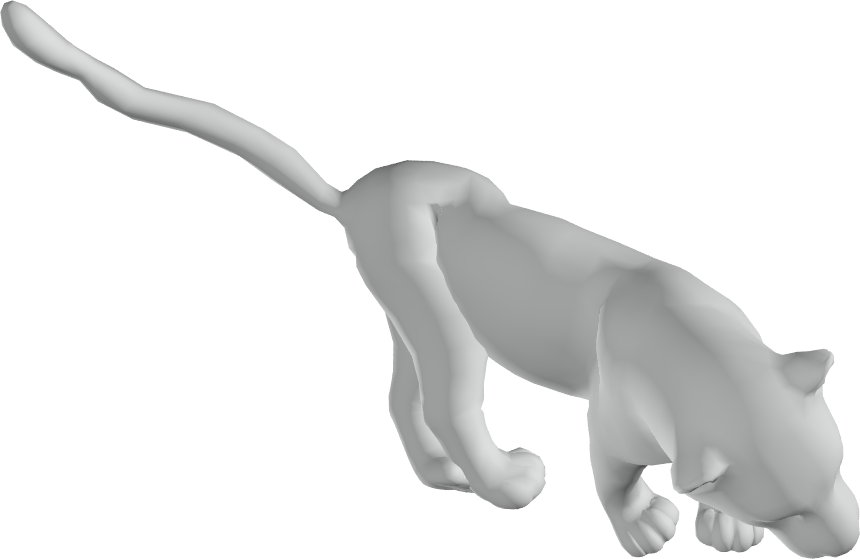}\\
\includegraphics[width=0.19\textwidth]{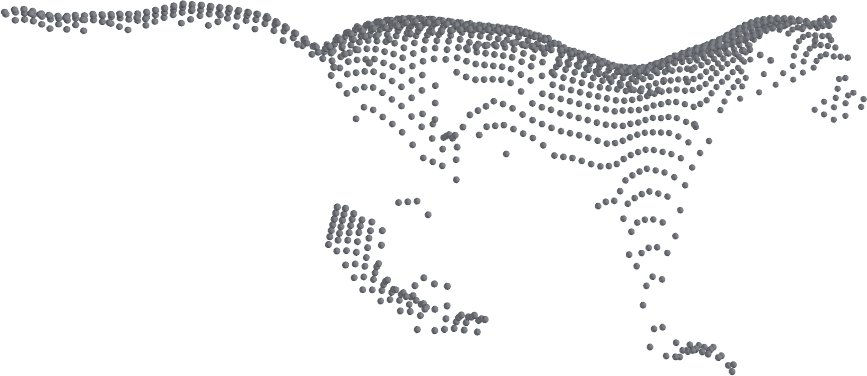}
&  
\includegraphics[width=0.19\textwidth]{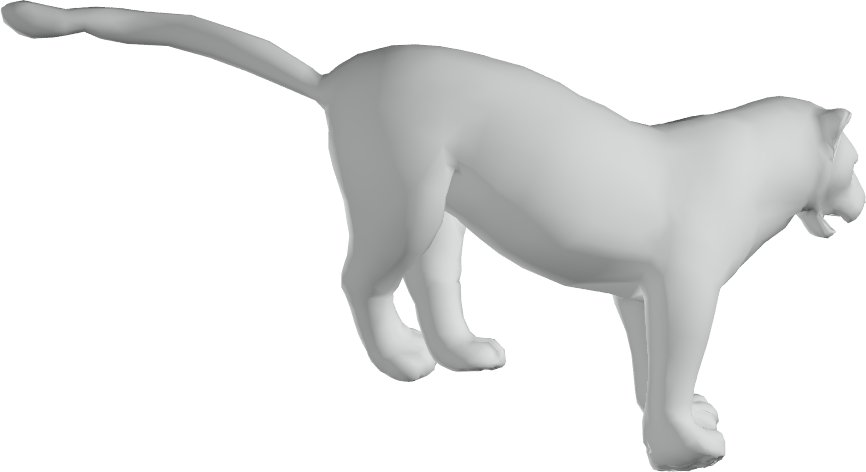}
&
\includegraphics[width=0.19\textwidth]{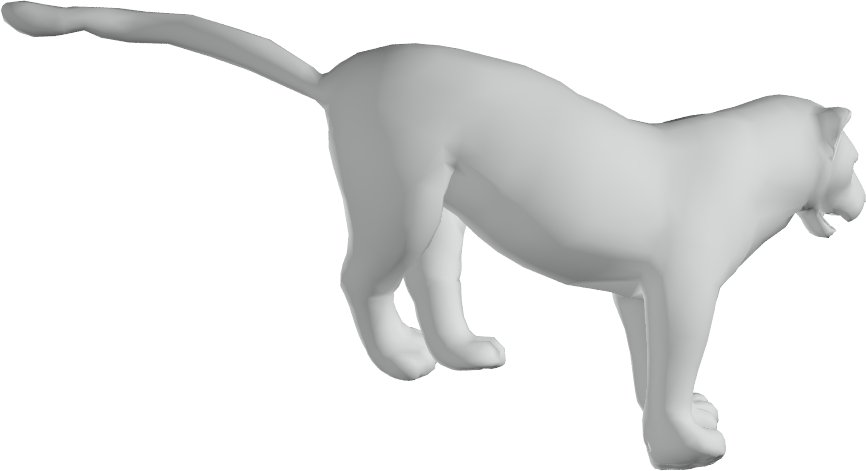}
& 
\includegraphics[width=0.19\textwidth]{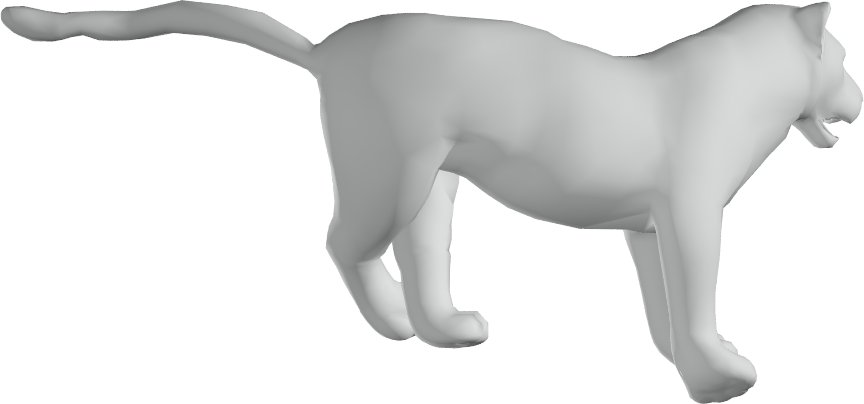}
& 
\includegraphics[width=0.19\textwidth]{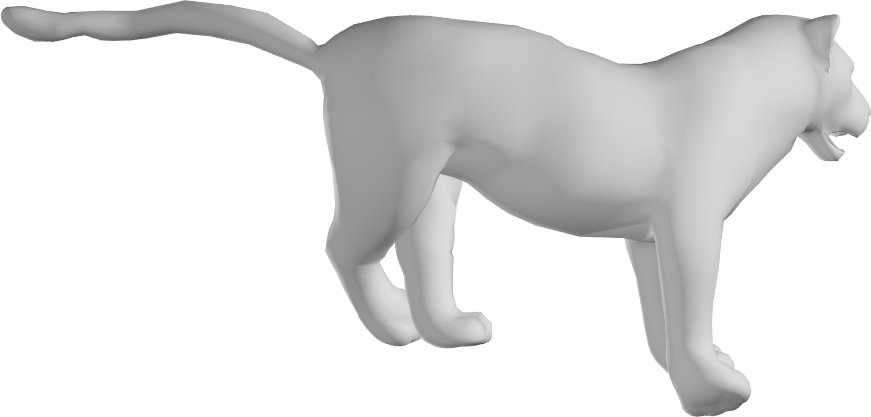}
\\
\includegraphics[width=0.19\textwidth]{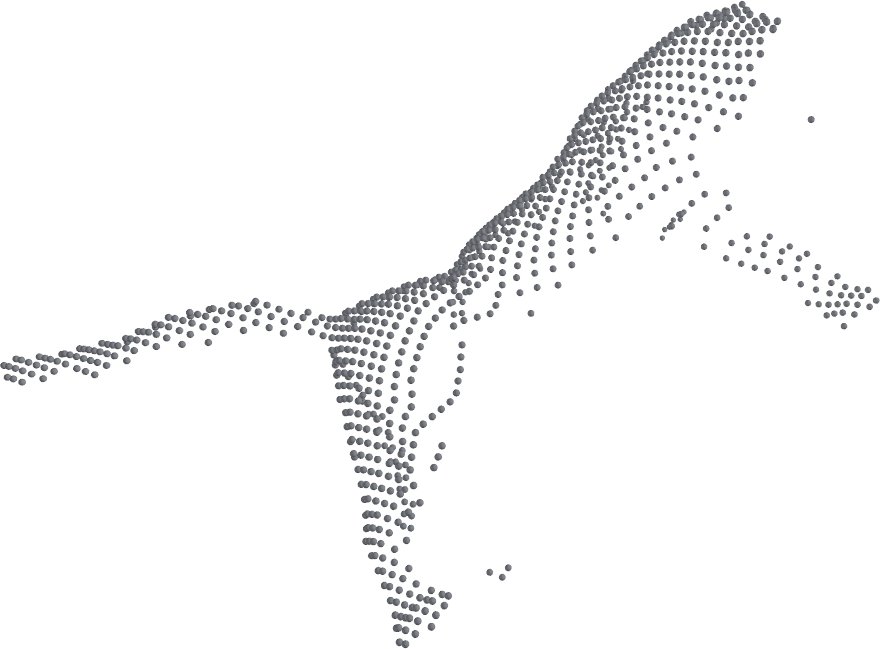}
&  
\includegraphics[width=0.19\textwidth]{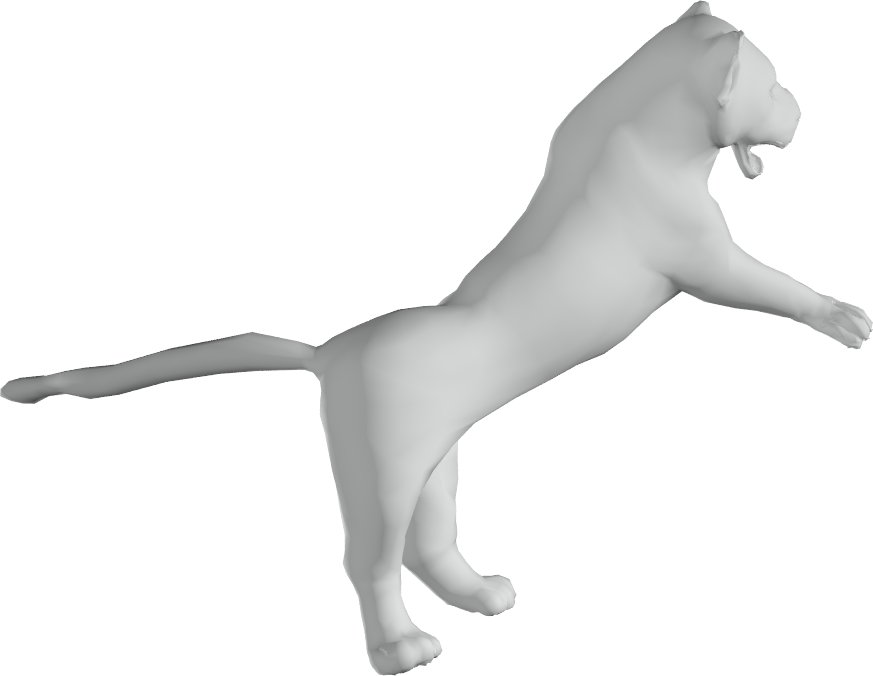}
&
\includegraphics[width=0.19\textwidth]{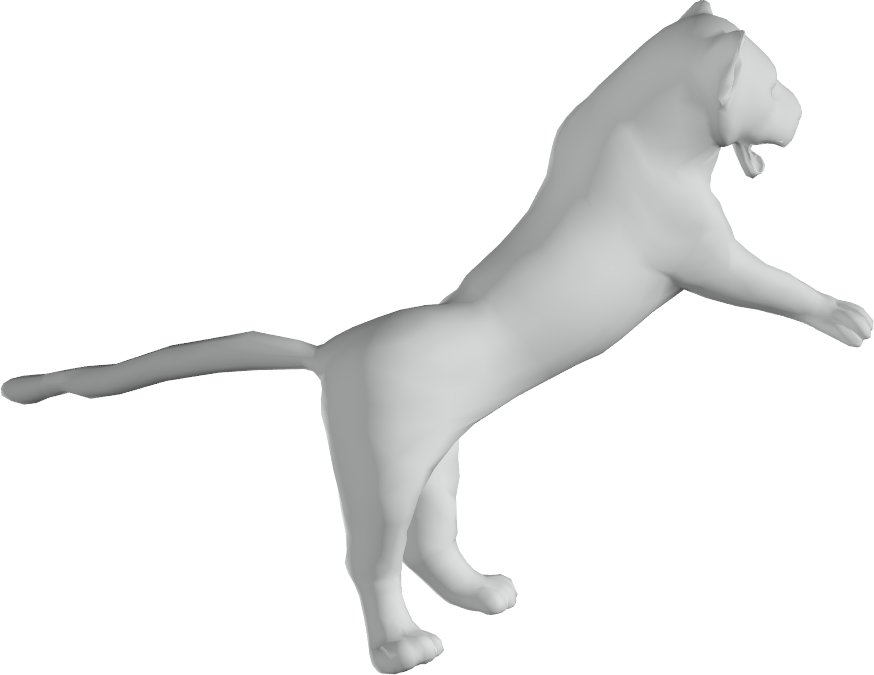}
& 
\includegraphics[width=0.19\textwidth]{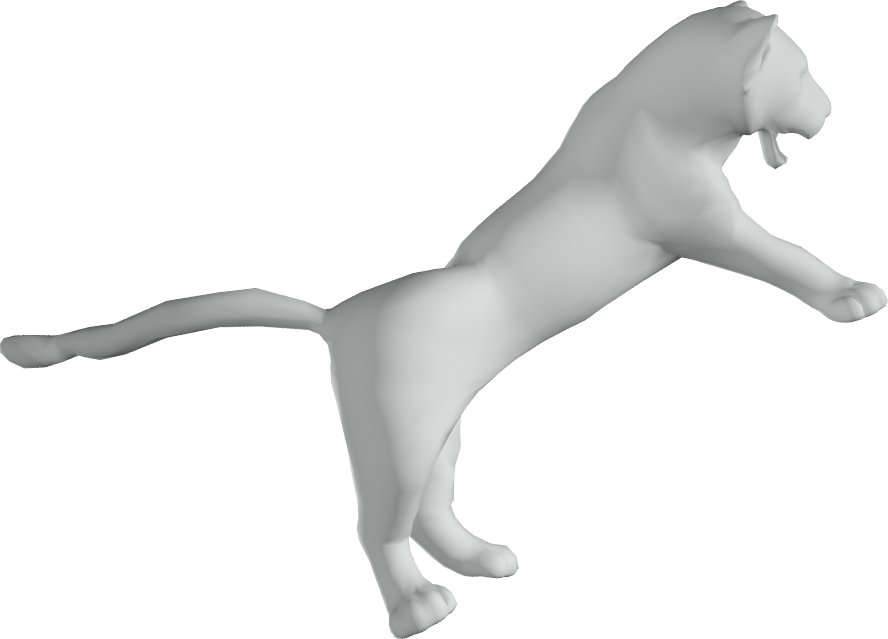}
& 
\includegraphics[width=0.19\textwidth]{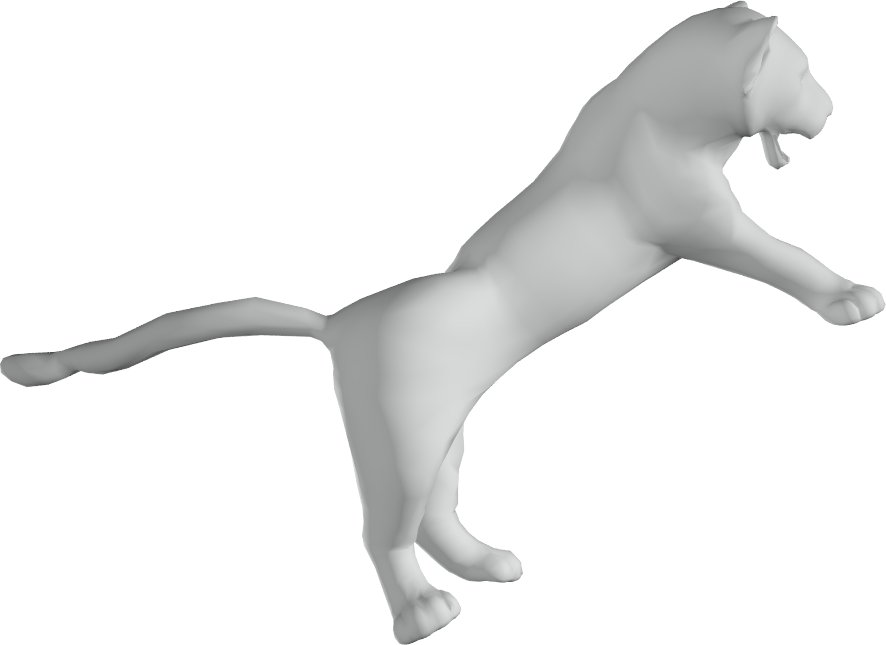}
\\
\includegraphics[width=0.19\textwidth]{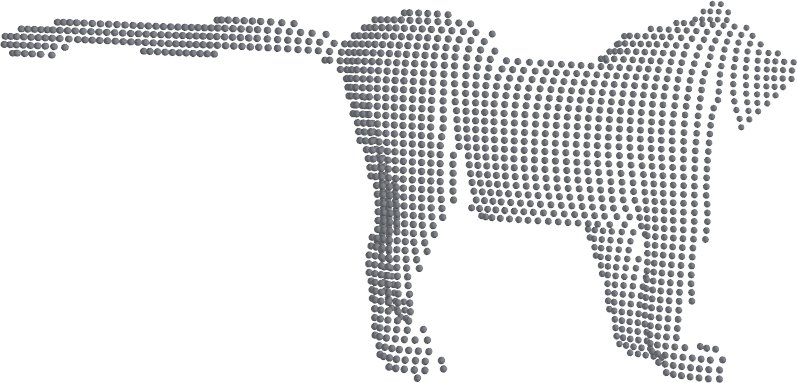}
&  
\includegraphics[width=0.19\textwidth]{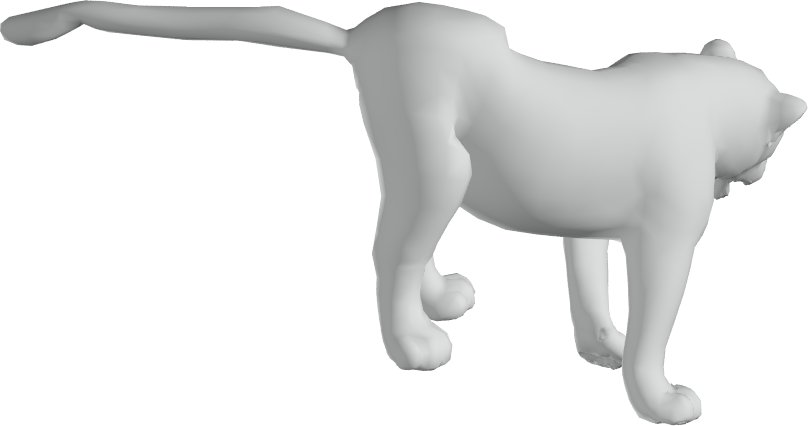}
&
\includegraphics[width=0.19\textwidth]{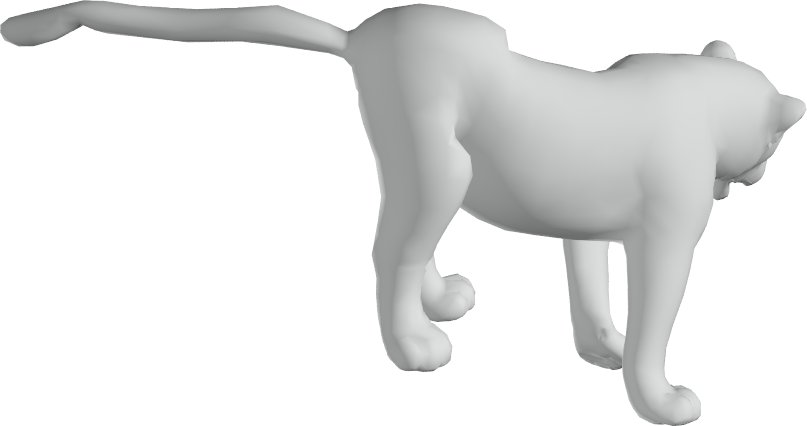}
& 
\includegraphics[width=0.19\textwidth]{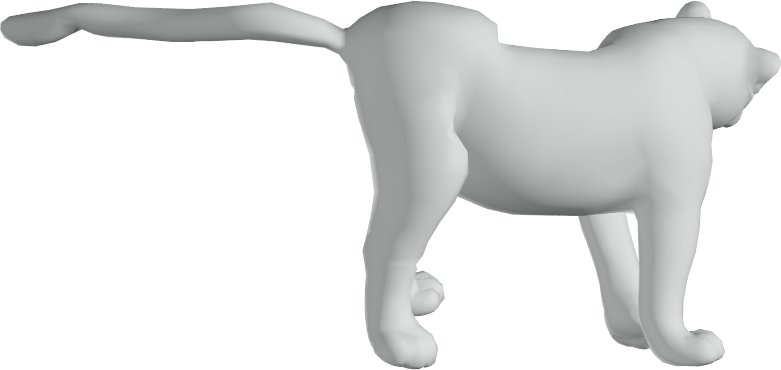}
& 
\includegraphics[width=0.19\textwidth]{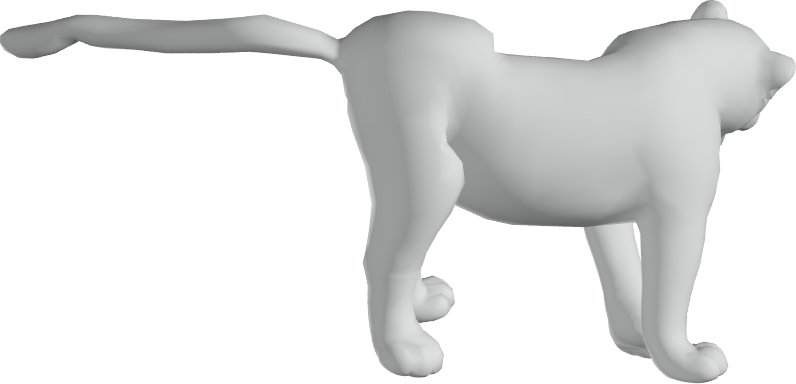}
\\
\includegraphics[width=0.19\textwidth]{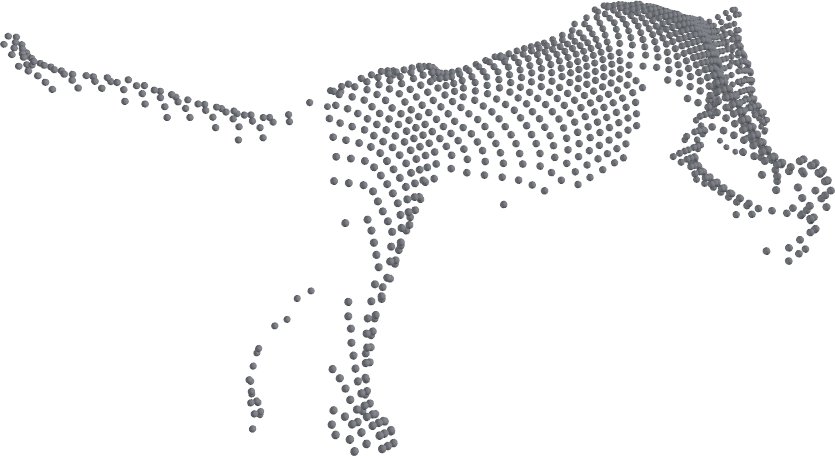}
&  
\includegraphics[width=0.19\textwidth]{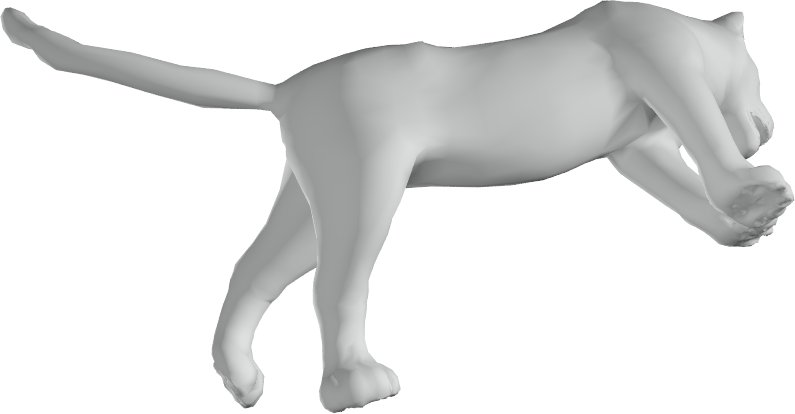}
&
\includegraphics[width=0.19\textwidth]{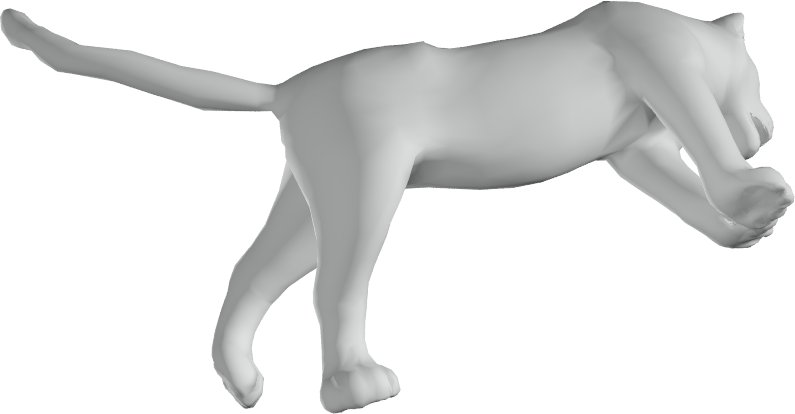}
& 
\includegraphics[width=0.19\textwidth]{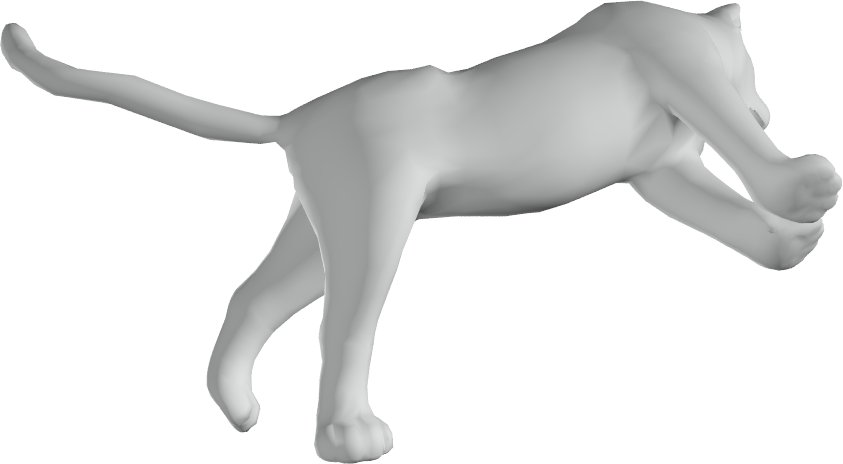}
& 
\includegraphics[width=0.19\textwidth]{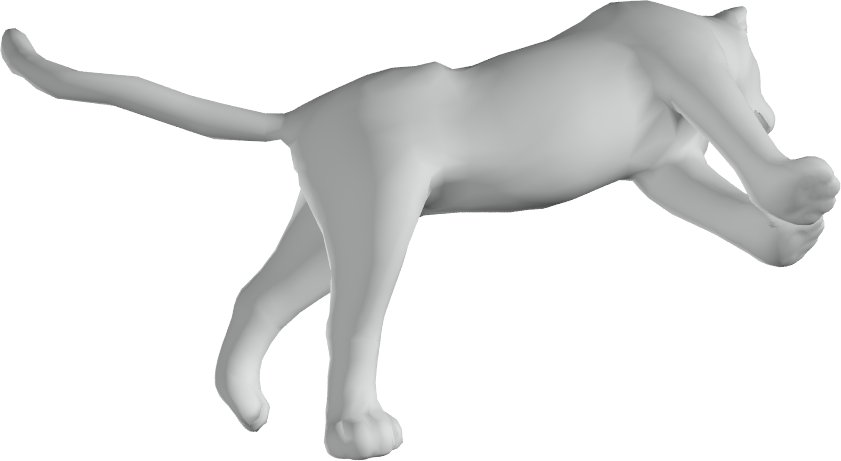}
\\
\includegraphics[width=0.19\textwidth]{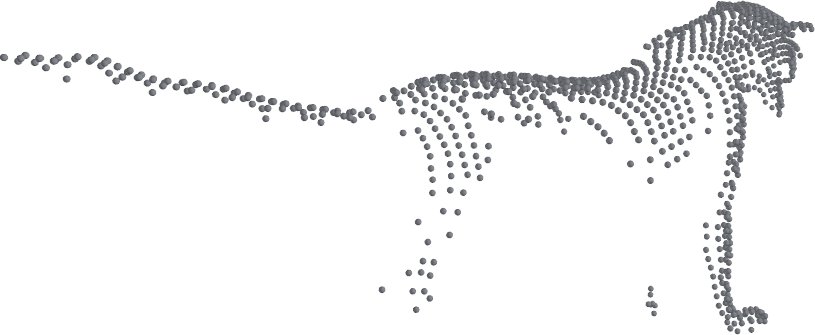}
&  
\includegraphics[width=0.19\textwidth]{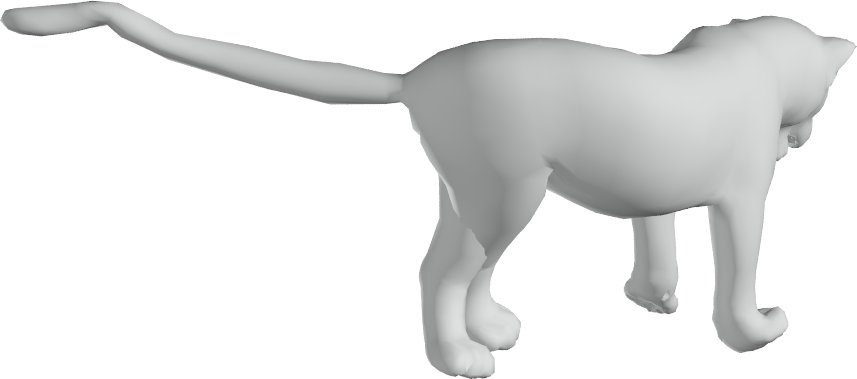}
&
\includegraphics[width=0.19\textwidth]{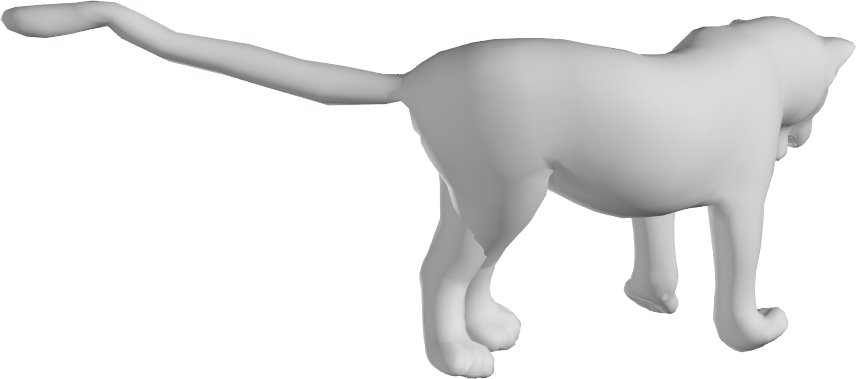}
& 
\includegraphics[width=0.19\textwidth]{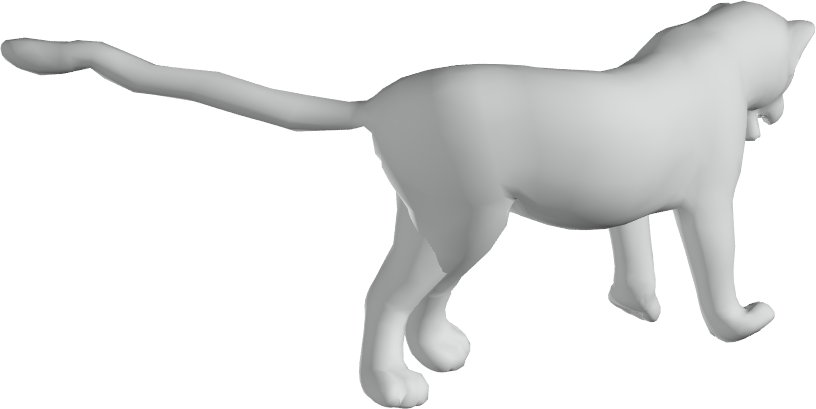}
& 
\includegraphics[width=0.19\textwidth]{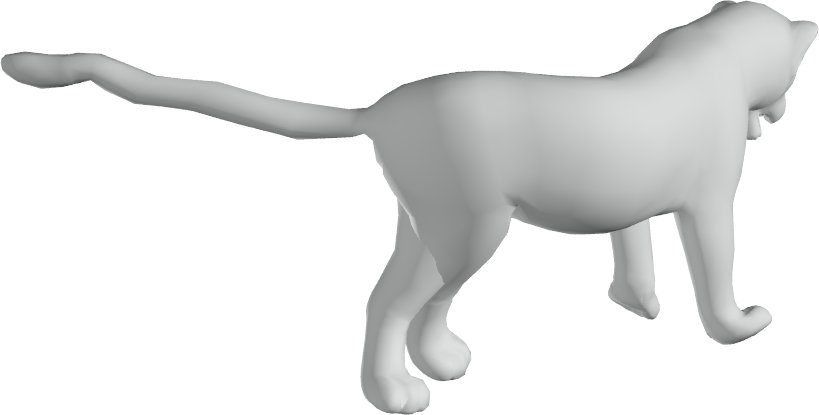}
\\
\includegraphics[width=0.19\textwidth]{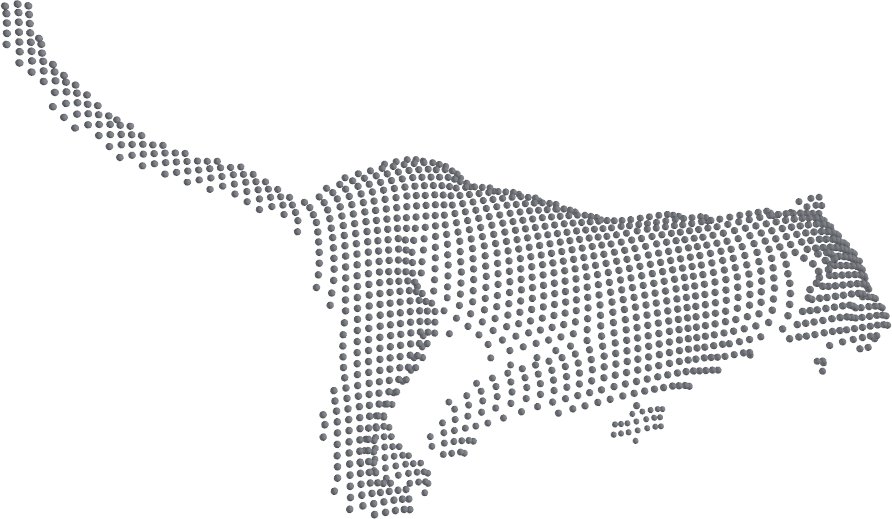}
&  
\includegraphics[width=0.19\textwidth]{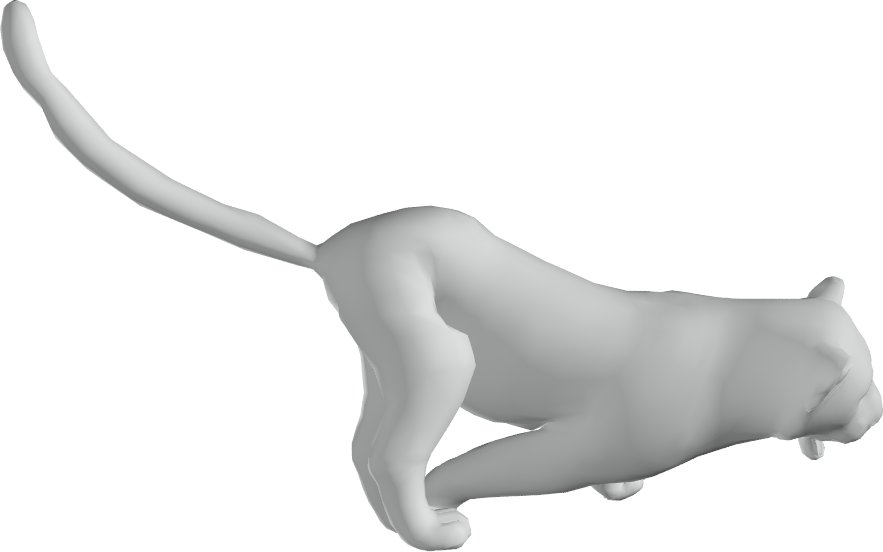}
&
\includegraphics[width=0.19\textwidth]{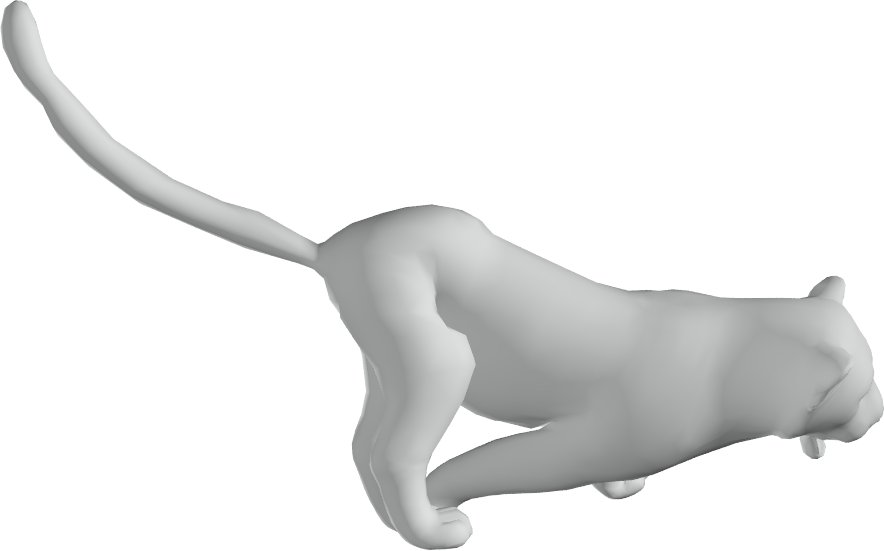}
& 
\includegraphics[width=0.19\textwidth]{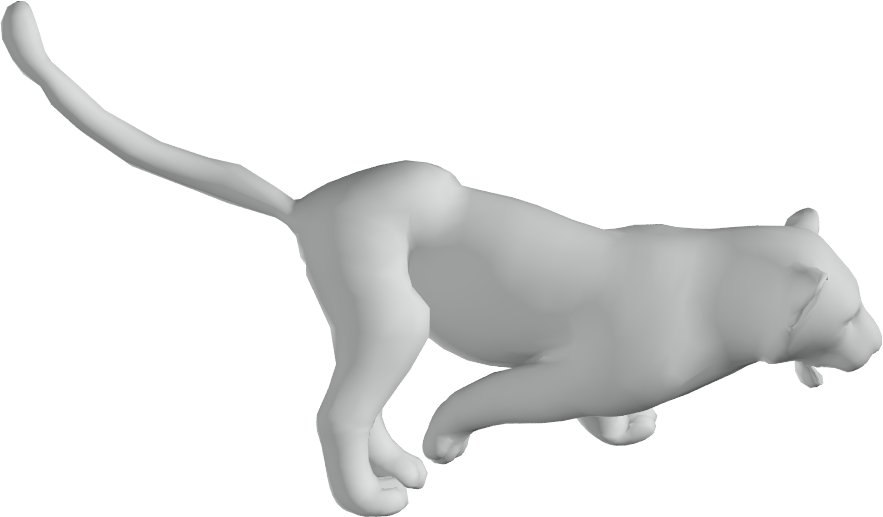}
& 
\includegraphics[width=0.19\textwidth]{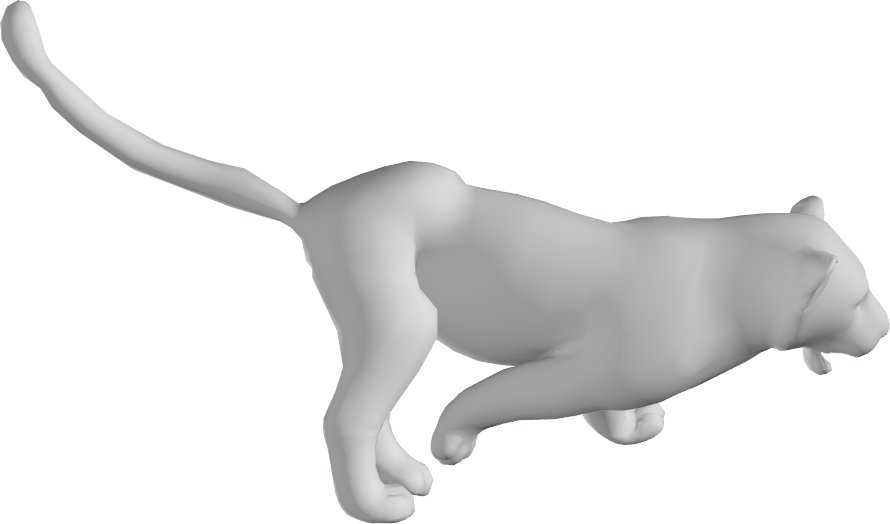}
\end{tabular}
\caption{Animal conditional generation results. Our results align with the ground-truth better than baseline approaches. }
\label{Fig:Animal:CG}
\end{figure*}

%\newpage
%\input{checklist.tex}

\end{document}